\newcommand{\norm}[1]{\left\| #1 \right\|}
\newcommand{\inp}[2]{\left\langle#1,#2\right\rangle} 
\newcommand{\R}{\mathbb{R}} 
\newcommand{\Exp}[1]{{\rm E}\left[#1\right]}
\newcommand{\ExpSub}[2]{{\rm E}_{#1}\left[#2\right]}
\newcommand{\cC}{\mathcal{C}}
\newcommand{\cO}{\mathcal{O}}
\newcommand{\mA}{\mathbf{A}}
\newcommand{\mI}{\mathbf{I}}
\theoremstyle{plain}
\newtheorem{theorem}{Theorem}[section]
\newtheorem{lemma}[theorem]{Lemma}
\newtheorem{corollary}[theorem]{Corollary}
\theoremstyle{definition}
\newtheorem{definition}[theorem]{Definition}
\newtheorem{assumption}[theorem]{Assumption}
\theoremstyle{remark}
\newcommand{\eqdef}{:=} 
\newcommand{\vast}{\bBigg@{4}}
\newcommand{\improvement}[1]{{\color{orange}#1}}
\newcommand{\degeneration}[1]{{\color{red}#1}}
\newcommand{\algorithmname}{DASHA}
\newcommand{\BZ}{B}
\definecolor{mydarkgreen}{RGB}{39,130,67}
\definecolor{mydarkred}{RGB}{192,47,25}
\newcommand{\green}{\color{mydarkgreen}}
\newcommand{\algname}[1]{{\green\small \sf #1}}
\newcommand{\algnamesmall}[1]{{\green\scriptsize \sf #1}}
\newcommand{\algnamebold}[1]{{\green\small \sf #1}}
\newcommand{\tablescriptsize}[1]{{\scriptsize #1}}
\newcommand{\tablesmall}[1]{{\small #1}}
\newcommand{\alglinelabel}{%
  \addtocounter{ALC@line}{-1}
  \refstepcounter{ALC@line}
  \label
}
\newcommand{\scriptsizeformula}{}
\newcommand{\scaleboxformula}[2]{\scalebox{1.0}{#2}}
\newcommand{\textstyleformula}{}
\title{\algorithmname: Distributed Nonconvex Optimization with Communication Compression, Optimal Oracle Complexity, and No Client Synchronization}
\author{%
  Alexander Tyurin\\
  KAUST\\
  Saudi Arabia\\
  \texttt{alexandertiurin@gmail.com} \\
  \And
  Peter Richt\'{a}rik \\
  KAUST\\
  Saudi Arabia\\
  \texttt{richtarik@gmail.com} \\
}
\begin{document}

\maketitle

\begin{abstract}
  We develop and analyze  \algname{\algorithmname}: a new family of methods for nonconvex distributed optimization problems. When the local functions at the nodes have a finite-sum or an expectation form, our new methods, \algname{\algorithmname-PAGE}, \algname{\algorithmname-MVR} and \algname{\algorithmname-SYNC-MVR}, improve the theoretical oracle and communication complexity of the previous state-of-the-art method \algname{MARINA} by Gorbunov et al. (2020). In particular, to achieve an $\varepsilon$-stationary point, and considering the random sparsifier Rand$K$ as an example, our methods compute the optimal number of gradients $\cO\left(\nicefrac{\sqrt{m}}{\varepsilon\sqrt{n}}\right)$ and $\cO\left(\nicefrac{\sigma}{\varepsilon^{\nicefrac{3}{2}}n}\right)$ in finite-sum and expectation form cases, respectively, while maintaining the SOTA communication complexity $\cO\left(\nicefrac{d}{\varepsilon \sqrt{n}}\right)$. Furthermore, unlike \algname{MARINA}, the new methods \algname{\algorithmname}, \algname{\algorithmname-PAGE} and \algname{\algorithmname-MVR} send compressed vectors only and never synchronize the nodes, which makes them more practical for federated learning. We extend our results to the case when the functions satisfy the Polyak-\L ojasiewicz condition. Finally, our theory is corroborated in practice: we see a significant improvement in experiments with nonconvex classification and training of deep learning models.
\end{abstract}

\section{Introduction}
\label{submission}
Nonconvex optimization problems are widespread in modern machine learning tasks, especially with the rise of the popularity of deep neural networks \citep{goodfellow2016deep}. In the past years, the dimensionality of such problems has increased because this leads to better quality  \citep{brown2020language} and robustness \citep{bubeck2021universal} of  the deep neural networks trained this way. Such huge-dimensional nonconvex problems need special treatment and efficient optimization methods \citep{danilova2020recent}.

Because of their high dimensionality,  training such models is a computationally intensive undertaking that requires massive training datasets \citep{hestness2017deep}, and  parallelization among several compute nodes\footnote{Alternatively, we sometimes use the terms:  machines, workers and clients.} \citep{ramesh2021zero}. Also, the distributed learning paradigm is a necessity in federated learning \citep{konevcny2016federated}, where, among other things, there is an explicit desire to secure the private data of each client.

Unlike in the case of classical optimization problems, where the performance  of algorithms is defined by their computational complexity \citep{nesterov2018lectures}, distributed optimization algorithms are typically measured in terms of the communication overhead between the nodes since such communication is often the bottleneck in practice~\citep{konevcny2016federated,wang2021field}.
Many approaches tackle the problem, including managing communication delays \citep{vogels2021relaysum}, fighting with stragglers \citep{li2020federated}, and optimization over time-varying directed graphs \citep{nedic2014distributed}. Another popular way to alleviate the communication bottleneck is to use lossy compression of communicated messages \citep{alistarh2017qsgd, DIANA,MARINA,szlendak2021permutation}. In this paper, we focus on this last approach.

\subsection{Problem formulation}
In this work, we consider the optimization problem
\begin{align}
    \label{eq:main_problem}
   \textstyleformula   \min \limits_{x \in \R^d}\left\{f(x) \eqdef \frac{1}{n}\sum \limits_{i=1}^n f_i(x)\right\},
\end{align}
where $f_i\,:\,\R^d \rightarrow \R$ is a smooth nonconvex  function for all $i \in [n] \eqdef \{1, \dots, n\}.$ 
Moreover, we assume that the problem is solved by $n$ compute nodes, with the $i$\textsuperscript{th} node having access to function $f_i$ only, via an \textit{oracle}. Communication is facilitated by an orchestrating server able to communicate with all nodes. Our goal is to find an $\varepsilon$-solution ($\varepsilon$-stationary point) of \eqref{eq:main_problem}: a (possibly random) point $\widehat{x}\in \R^d$, such that $\Exp{\norm{\nabla f(\widehat{x})}^2} \leq \varepsilon.$

\subsection{Gradient oracles}
We consider all of  the following structural assumptions about the functions $\{f_i\}_{i=1}^n$, each with its own natural gradient oracle:

1. \textbf{Gradient Setting.}
The $i$\textsuperscript{th} node has access to the \textit{gradient} $\nabla f_i \,:\,\R^d \rightarrow \R^d$ of function $f_i$.

2. \textbf{Finite-Sum Setting.}
The functions $\{f_i\}_{i=1}^n$ have the finite-sum form
\begin{align}
    \label{eq:task_minibatch}
  \textstyleformula  f_i(x) = \frac{1}{n}\sum \limits_{j=1}^m f_{ij}(x), \qquad \forall i \in [n],
\end{align}
where $f_{ij} :\R^d  \rightarrow \R$ is a smooth nonconvex  function for all $j \in [m].$
For all $i \in [n],$ the $i$\textsuperscript{th} node has access to a \textit{mini-batch of $B$ gradients}, $\frac{1}{\BZ}\sum_{j \in I_i} \nabla f_{ij}(\cdot),$ where $I_i$ is a multi-set of i.i.d.\,samples of the set $[m],$ and $|I_i| = \BZ$.

3. \textbf{Stochastic Setting.}
The function $f_i$ is an expectation of a stochastic function, 
\begin{align}
    \label{eq:task_staochastic}
    f_i(x) = \ExpSub{\xi}{f_i(x;\xi)}, \qquad \forall i \in [n],
\end{align}
where $f_i :\R^d \times \Omega_{\xi} \rightarrow \R.$ For a fixed $x \in \R,$ $f_i(x;\xi)$ is a random variable over some distribution $\mathcal{D}_i$,
and, for a fixed $\xi \in \Omega_{\xi},$ $f_i(x;\xi)$ is a smooth  nonconvex function.
The $i$\textsuperscript{th} node has access to a \textit{mini-batch of $B$ stochastic gradients} $ \frac{1}{\BZ}\sum_{j=1}^B \nabla f_i(\cdot; \xi_{ij})$ 
of the function $f_i$ through the distribution $\mathcal{D}_i,$ where $\{\xi_{ij}\}_{j=1}^B$ is a collection of i.i.d.\,samples from $\mathcal{D}_i.$


\subsection{Oracle complexity} In this paper, the  \textit{oracle complexity} of a method is the number of (stochastic) gradient calculations per node to achieve an $\varepsilon$-solution. Every considered method performs some number  $T$ of \textit{communications rounds} to get an $\varepsilon$-solution; thus, if every node (on average) calculates $B$ gradients in each communication round, then the oracle complexity equals $\cO\left(B_{\textnormal{init}} + BT\right),$ where $B_{\textnormal{init}}$ is the number of gradient calculations in the initialization phase of a method.

\subsection{Unbiased compressors}
The method proposed in this paper is based on \textit{unbiased compressors} -- a family of stochastic mappings with special properties that we define now. 
\begin{definition}
    A stochastic mapping $\cC\,:\,\R^d \rightarrow \R^d$ is an \textit{unbiased compressor} if
    there exists $\omega \in \R$ such that
    \begin{align}
        \label{eq:compressor}
        \hspace{-0.2cm}\Exp{\cC(x)} = x, \qquad \Exp{\norm{\cC(x) - x}^2} \leq \omega \norm{x}^2, \qquad \forall x \in \R^d.
    \end{align}
    We denote this class of unbiased compressors  as $\mathbb{U}(\omega).$
\end{definition}
One can find more information about unbiased compressors in \citep{beznosikov2020biased, Cnat}. The purpose of such compressors is to quantize or sparsify the communicated vectors in order to increase the communication speed between the nodes and the server. Our methods will work collection of stochastic mappings $\{\cC_i\}_{i=1}^n$ satisfying the following assumption. 
\begin{assumption}
\label{ass:compressors}
 $\cC_i \in \mathbb{U}(\omega)$ for all $i\in [n]$, and the compressors are  \textit{independent}.
\end{assumption}

\subsection{Communication complexity}
The quantity below characterizes the number of nonzero coordinates that a compressor $\cC$ returns. This notion is useful in case of sparsification compressors.
\begin{definition}
    \label{def:expected_density}
    The expected density of the compressor $\cC_i$ is $\zeta_{\cC_i} \eqdef \sup_{x \in \R^d} \Exp{\norm{\cC_i(x)}_0}$, where $\norm{x}_0$ is the number of nonzero components of $x \in \R^d.$ Let  $\zeta_{\cC} = \max_{i \in [n]} \zeta_{\cC_i}.$
\end{definition}
In this paper, \textit{the communication complexity} of a method is the number of coordinates sent to the server per node to achieve an $\varepsilon$-solution. If every node (on average) sends $\zeta$ coordinates in each communication round, then the communication complexity equals $\cO\left(\zeta_{\textnormal{init}} + \zeta T\right), $ where $T$ is the number of communication rounds, and $\zeta_{\textnormal{init}}$ is the number of coordinates sent  in the initialization phase.

\begin{table}
    \caption{\textbf{General Nonconvex Case.} The number of communication rounds (iterations) and the oracle complexity of algorithms to get an $\varepsilon$-solution ($\Exp{\norm{\nabla f(\widehat{x})}^2} \leq \varepsilon$), and the necessity (or not) of algorithms to synchronize all nodes periodically (see Section~\ref{sec:contributions}). }
    \label{table:general_case}
    \centering 
    \small
    \begin{adjustbox}{width=\columnwidth,center}
    \begin{threeparttable}
      \begin{tabular}{cccccc}
        \midrule
      Setting & Method & $T \eqdef $ \# Communication Rounds\tnote{\color{blue}(a)}& 
      Oracle Complexity & Sync?\tnote{\color{blue}(b)} \\
       \midrule\midrule
       \multirow{2}{*}{\tablesmall{Gradient}} & \algnamesmall{MARINA} &   $\frac{1 + \nicefrac{\omega}{\sqrt{n}}}{\varepsilon}$ & $T$ & Yes \\
                                 & \algnamesmall{\algorithmname} \tablescriptsize{(Cor.~\ref{cor:gradient_oracle})} &  $\frac{1 + \nicefrac{\omega}{\sqrt{n}}}{\varepsilon}$ & $T$ & \improvement{No} \\
        \cmidrule(l  r ){1-5}
        \multirow{2}{*}{\tablesmall{\begin{tabular}{c}Finite-Sum\\ \eqref{eq:task_minibatch}\end{tabular}}} & \algnamesmall{VR-MARINA} &   $\frac{1 + \nicefrac{\omega}{\sqrt{n}}}{\varepsilon} + \frac{\sqrt{(1 + \omega) m}}{\varepsilon \sqrt{n} \BZ}$ & $m + B T$ & Yes \\
                                    & \algnamesmall{\algorithmname-PAGE} \tablescriptsize{(Cor.~\ref{cor:mini_batch_oracle})} &  $\frac{1 + \nicefrac{\omega}{\sqrt{n}}}{\varepsilon} + \improvement{\frac{\sqrt{m}}{\varepsilon \sqrt{n} \BZ}}$ & $m + B T$ & \improvement{No} \\
      \cmidrule(l  r ){1-5}
      \multirow{3}{*}{\tablesmall{\begin{tabular}{c}Stochastic\\ \eqref{eq:task_staochastic}\end{tabular}}} & \algnamesmall{VR-MARINA (online)} &   $\frac{1 + \nicefrac{\omega}{\sqrt{n}}}{\varepsilon} + \frac{\sigma^2}{\varepsilon n B} + \frac{\sqrt{1 + \omega}\sigma}{\varepsilon^{\nicefrac{3}{2}} n B}$ & $B\omega + BT$ & Yes \\
                                  & \algnamesmall{\algorithmname-MVR} \tablescriptsize{(Cor.~\ref{cor:stochastic})} &  $\frac{1 + \nicefrac{\omega}{\sqrt{n}}}{\varepsilon} + \frac{\sigma^2}{\varepsilon n B} + \improvement{\frac{\sigma}{\varepsilon^{\nicefrac{3}{2}} n B}}$ & $\degeneration{B\omega\sqrt{\frac{\sigma^2}{\varepsilon n B}}\textnormal{\raisebox{0.6em}{\tnote{\color{blue}(c)}}}} + BT$ & \improvement{No} \\
                                  & \algnamesmall{\algorithmname-SYNC-MVR} \tablescriptsize{(Cor.~\ref{cor:sync_stochastic})} &  $\frac{1 + \nicefrac{\omega}{\sqrt{n}}}{\varepsilon} + \frac{\sigma^2}{\varepsilon n B} + \improvement{\frac{\sigma}{\varepsilon^{\nicefrac{3}{2}} n B}}$ & $B\omega + BT$ & Yes \\
      \midrule\midrule
      \end{tabular}
  \begin{tablenotes}
  \item [{\color{blue}(a)}] Only dependencies w.r.t.\,the following variables are shown: $\omega = $ quantization parameter, $n = \#$ of nodes, $m = \#$ of local functions (only in finite-sum case \eqref{eq:task_minibatch}), $\sigma^2 = $ variance of stochastic gradients (only in stochastic case \eqref{eq:task_staochastic}), $B = $ batch size (only in finite-sum and stochastic case). To simplify bounds, we assume that $\omega + 1 = \Theta\left(\nicefrac{d}{\zeta_{\cC}}\right),$ where $d$ is dimension of $x$ in \eqref{eq:main_problem} and $\zeta_{\cC}$ is the expected number of nonzero coordinates that each compressor $\cC_i$ returns (see Definition~\ref{def:expected_density}).
  \item [{\color{blue}(b)}] Does the algorithm require periodic synchronization between nodes? (see Section~\ref{sec:contributions})
  \item [{\color{blue}(c)}] One can always choose the parameter of Rand$K$ such that this term does not dominate (see Section~\ref{sec:comparison}).
  \end{tablenotes}          
  \end{threeparttable}
  \end{adjustbox}
\end{table}

\begin{table*}
  \caption{\textbf{Polyak-\L ojasiewicz Case.} The number of communications rounds (iterations) and oracle complexity of algorithms to get an $\varepsilon$-solution ($\Exp{f(\widehat{x})} - f^* \leq \varepsilon$), and the necessity (or not) of algorithms to synchronize all nodes periodically.  }
  \label{table:pl_case}
  \centering 
\small      
\begin{adjustbox}{width=\columnwidth,center}
  \begin{threeparttable}
    \begin{tabular}{cccccc}
      \midrule
Setting & Method & $T \eqdef $ \# Communication Rounds \tnote{\color{blue}(a)}& \hspace{-0.5cm}Oracle Complexity & Sync?\tnote{\color{blue}(b)}  \\
     \midrule\midrule
     \multirow{2}{*}{\tablesmall{Gradient}} & \algnamesmall{MARINA} &   $\omega + \frac{L(1 + \nicefrac{\omega}{\sqrt{n}})}{\mu}$ & \hspace{-0.3cm}$T$ &Yes \\
                               & \algnamesmall{\algorithmname} \tablescriptsize{(Cor.~\ref{cor:gradient_oracle_pl})} &  $\omega + \frac{L(1 + \nicefrac{\omega}{\sqrt{n}})}{\mu}$ & \hspace{-0.3cm}$T$ & \improvement{No} \\
      \cmidrule(l  r ){1-5}
      \multirow{2}{*}{\tablesmall{\begin{tabular}{c}Finite-Sum \\ \eqref{eq:task_minibatch}\end{tabular}}} 
      & \algnamesmall{VR-MARINA} &   $\omega + \frac{m}{B} + \frac{L(1 + \nicefrac{\omega}{\sqrt{n}})}{\mu} + \frac{L\sqrt{\left(1 + \omega\right)m}}{\mu \sqrt{n} B}$ & \hspace{-0.3cm}$BT$ & Yes \\
                                  & \algnamesmall{\algorithmname-PAGE} \tablescriptsize{(Cor.~\ref{cor:mini_batch_oracle_pl})} &  $\omega + \frac{m}{B} + \frac{L(1 + \nicefrac{\omega}{\sqrt{n}})}{\mu} + \improvement{\frac{L\sqrt{m}}{\mu \sqrt{n} B}}$ & \hspace{-0.3cm}$BT$ & \improvement{No} \\
    \cmidrule(l  r ){1-5}
    \multirow{3}{*}{\tablesmall{\begin{tabular}{c}Stochastic \\ \eqref{eq:task_staochastic}\end{tabular}}} & \algnamesmall{VR-MARINA (online)} &   $\omega + \frac{L(1 + \nicefrac{\omega}{\sqrt{n}})}{\mu} + \frac{\sigma^2}{\mu \varepsilon n B} + \frac{\sqrt{1 + \omega} L \sigma}{\mu^{\nicefrac{3}{2}}\sqrt{\varepsilon}n B}$ & \hspace{-0.3cm}$BT$ & Yes \\
                                & \algnamesmall{\algorithmname-MVR} \tablescriptsize{(Cor.~\ref{cor:stochastic_pl})} & $\omega + \degeneration{\omega\sqrt{\frac{\sigma^2}{\mu \varepsilon n B}}\textnormal{\raisebox{0.6 em}{\tnote{\color{blue}(c)}}}} + \frac{L(1 + \nicefrac{\omega}{\sqrt{n}})}{\mu} + \frac{\sigma^2}{\mu \varepsilon n B} + \improvement{\frac{L \sigma}{\mu^{\nicefrac{3}{2}}\sqrt{\varepsilon}n B}}$ & \hspace{-0.3cm}$BT$ & \improvement{No} \\
                                & \algnamesmall{\algorithmname-SYNC-MVR} \tablescriptsize{(Cor.~\ref{cor:sync_stochastic_pl})} &  $\omega + \frac{L(1 + \nicefrac{\omega}{\sqrt{n}})}{\mu} + \frac{\sigma^2}{\mu \varepsilon n B} + \improvement{\frac{L \sigma}{\mu^{\nicefrac{3}{2}}\sqrt{\varepsilon}n B}}$ & \hspace{-0.3cm}$BT$ & Yes \\
    \midrule\midrule
    \end{tabular}
\begin{tablenotes}
\item  [{\color{blue}(a)}] Logarithmic factors are omitted and only dependencies w.r.t.\,the following variables are shown: $L = $ the worst case smoothness constant, $\mu = $ {P\L} constant, $\omega = $ quantization parameter, $n = \#$ of nodes, $m = \#$ of local functions (only in finite-sum case \eqref{eq:task_minibatch}), $\sigma^2 = $ variance of stochastic gradients (only in stochastic case \eqref{eq:task_staochastic}), $B = $ batch size (only in finite-sum and stochastic case). To simplify bounds, we assume that $\omega + 1 = \Theta\left(\nicefrac{d}{\zeta_{\cC}}\right),$ where $d$ is dimension of $x$ in \eqref{eq:main_problem} and $\zeta_{\cC}$ is the expected number of nonzero coordinates that each compressor $\cC_i$ returns (see Definition~\ref{def:expected_density}).
\item [{\color{blue}(b)}] Does the algorithm require periodic synchronization between nodes? (see Section~\ref{sec:contributions})
\item [{\color{blue}(c)}] One can always choose the parameter of Rand$K$ such that this term does not dominate (see Section~\ref{sec:comparison}).
\end{tablenotes}      
\end{threeparttable}
\end{adjustbox}

\end{table*}

\section{Related Work}
\label{sec:related_work}
\textbf{$\bullet$ Uncompressed communication.}
This line of work is characterized by methods in which the nodes send messages (vectors) to the server without any compression. In the \textbf{finite-sum setting}, the current state-of-the-art methods were proposed by \citet{sharma2019parallel, li2021zerosarah}, showing that after $\cO\left(\nicefrac{1}{\varepsilon}\right)$ communication rounds and 
\begin{align}
    \label{eq:optimal_minibatch}
   \textstyleformula   \cO\left(m + \frac{\sqrt{m}}{\varepsilon\sqrt{n}}\right)
\end{align}
calculations of $\nabla f_{ij}$ per node, these methods can return an $\varepsilon$-solution. Moreover, \citet{sharma2019parallel} show that the same can be done in the \textbf{stochastic setting} after 
\begin{align}
    \label{eq:optimal_stochastic}
   \textstyleformula   \cO\left(\frac{\sigma^2}{\varepsilon n} + \frac{\sigma}{\varepsilon^{\nicefrac{3}{2}}n}\right)
\end{align}
 stochastic gradient calculations per node. Note that complexities \eqref{eq:optimal_minibatch} and \eqref{eq:optimal_stochastic} are optimal \citep{arjevani2019lower, SPIDER, PAGE}.
An adaptive variant was proposed by \citet{khanduri2020distributed} based on the work of \citet{cutkosky2019momentum}. See also \citep{khanduri2021stem, murata2021bias}.

\textbf{$\bullet$ Compressed communication.} In practice, it is rarely affordable to send uncompressed messages (vectors) from the nodes to the server due to limited communication bandwidth. Because of this, researchers started to develop methods keeping in mind the communication complexity: the total number of coordinates/floats/bits that the nodes send to the server to find an $\varepsilon$-solution. Two important families of compressors are investigated in the literature to reduce communication bottleneck: biased and unbiased compressors. While  unbiased compressors are superior in theory \citep{DIANA, ADIANA, MARINA},  biased compressors often enjoy better performance in practice \citep{beznosikov2020biased, xu2020compressed}.  Recently, \citet{EF21} developed \algname{EF21}, which is the first method capable of working with biased compressors an having the theoretical iteration complexity of gradient descent (\algname{GD}), up to constant factors.

\textbf{$\bullet$ Unbiased compressors.} The theory around unbiased compressors is much more optimistic.
\citet{alistarh2017qsgd} developed the \algname{QSGD} method providing convergence rates of stochastic gradient method with quantized vectors. However, the nonstrongly convex case was analyzed under the strong assumption that all nodes have identical functions, and the stochastic gradients have bounded second moment. Next, \citet{DIANA, horvath2019stochastic} proposed the \algname{DIANA} method and proved  convergence rates without these restrictive assumptions. Also, distributed nonconvex optimization methods with compression were developed by \citet{haddadpour2021federated, das2020improved}. Finally, \citet{MARINA} proposed \algname{MARINA}~-- the current state-of-the-art distributed method in terms of theoretical communication complexity, inspired by the \algname{PAGE} method of \citet{PAGE}.



\section{Contributions}
\label{sec:contributions}

We develop a new family of distributed optimization methods \algname{\algorithmname} for nonconvex optimization problems with unbiased compressors. Compared to \algname{MARINA}, our methods make more practical and simpler optimization steps. In particular, 
in \algname{MARINA}, all nodes \textit{simultaneously} send either compressed vectors, with some probability $p,$ or the gradients of functions $\{f_i\}_{i=1}^n$ (uncompressed vectors), with probability $1 - p$. In other words, the server periodically synchronizes all nodes. In federated learning, where some nodes can be inaccessible for a long time, such periodic synchronization is intractable.

Our method \algname{\algorithmname} solves both problems: i) {\em the server never synchronizes all nodes}, and ii) {\em the nodes always send compressed vectors}.

Further, a simple tweak in the compressors (see Appendix~\ref{sec:partial_participation}) results in support for {\em partial participation}, which makes \algname{\algorithmname} more practical for federated learning tasks. Let us summarize our most important theoretical and practical contributions:

\textbf{$\bullet$ New theoretical SOTA complexity in the finite-sum setting.} Using our novel approach to compress gradients, we improve the theoretical complexities of \algname{VR-MARINA} (see Tables~\ref{table:general_case}~and~\ref{table:pl_case}) in the \textit{finite-sum setting}. Indeed, if the number of functions $m$ is large, our algorithm \algname{\algorithmname-PAGE} needs $\sqrt{\omega + 1}$ times fewer communications rounds, while communicating compressed vectors only.

\textbf{$\bullet$ New theoretical SOTA complexity in the stochastic setting.} We develop a new method, \algname{\algorithmname-SYNC-MVR}, improving upon the previous state of the art (see Table~\ref{table:general_case}). When $\varepsilon$ is small, the number of communication rounds is reduced by a factor of $\sqrt{\omega + 1}$. Indeed, we improve the dominant term which depends on $\varepsilon^{\nicefrac{3}{2}}$ (the other terms depend on $\varepsilon$ only). However, \algname{\algorithmname-SYNC-MVR} needs to periodically send uncompressed  vectors with the same rate as \algname{VR-MARINA (online)}. Nevertheless, we show that \algname{\algorithmname-MVR} also improves the dominant term when $\varepsilon$ is small, and this method sends compressed vectors only.

\textbf{$\bullet$ Closing the gap between uncompressed and compressed methods.} In Section~\ref{sec:related_work}, we mentioned that the optimal oracle complexities of methods without compression in the finite-sum and stochastic settings are \eqref{eq:optimal_minibatch} and \eqref{eq:optimal_stochastic}, respectively. Considering the Rand$K$ compressor (see Definition~\ref{def:rand_k}), we show that \algname{\algorithmname-PAGE}, \algname{\algorithmname-MVR} and \algname{\algorithmname-SYNC-MVR} attain these optimal oracle complexities while attainting the state-of-the-art communication complexity as  \algname{MARINA}, which needs to use the stronger  gradient oracle! Therefore, our new methods close the gap between results from \citep{MARINA} and \citep{sharma2019parallel, li2021zerosarah}.


\textbf{$\bullet$ Experiments.} We provide detailed experiments on practical machine learning tasks: training nonconvex generalized linear models and deep neural networks, showing improvements predicted by our theory. See Appendix~\ref{sec:experiments}.

\section{Algorithm Description}
\label{sec:algorithm}

We now describe our proposed family of optimization methods, \algname{\algorithmname} (see Algorithm~\ref{alg:main_algorithm}). 
\algname{\algorithmname} is inspired by \algname{MARINA} and momentum variance reduction methods (\algname{MVR}) \citep{cutkosky2019momentum, tran2021hybrid, liu2020optimal}: the general structure repeats \algname{MARINA} except for the variance reduction strategy, which we borrow from \algname{MVR}. Unlike \algname{MARINA}, our algorithm never sends uncompressed vectors, and the number of bits that every node sends is always the same. Moreover, we reduce the variance from the oracle and the compressor \textit{separately}, which helps us to improve the theoretical convergence rates in the stochastic and finite-sum cases.

First, using the gradient estimator $g^{t}$, the server in each communication round calculates the next point $x^{t+1}$  and broadcasts it to the nodes. Subsequently, all nodes in parallel calculate vectors $h^{t+1}_i$ in one of three ways, depending on the available oracle. For the the gradient, finite-sum, and the stochastic settings, we use \algname{GD}-like, \algname{PAGE}-like, and \algname{MVR}-like strategies, respectively. Next, each node compresses their message and uploads it to the server. Finally, the server aggregates all received messages and calculates the next vector $g^{t+1}$.

We note that in the stochastic setting, our analysis of \algname{\algorithmname-MVR} (Algorithm~\ref{alg:main_algorithm}) provides a suboptimal oracle complexity w.r.t.\,$\omega$ (see Tables \ref{table:general_case} and \ref{table:pl_case}). In Appendix~\ref{sec:extra_experiments} we provide experimental evidence that our analysis is tight. For this reason, we developed \algname{\algorithmname-SYNC-MVR} (see Algorithm~\ref{alg:main_algorithm_mvr_sync} in Appendix~\ref{sec:main_algorithm_mvr_sync}) that improves the previous state-of-the-art results and does synchronizations among the nodes with the same rate as \algname{VR-MARINA (online)}. Note that \algname{\algorithmname-MVR} still enjoys 
the optimal oracle and SOTA communication complexity (see Section~\ref{sec:comparison}); and this can be seen it in experiments.

\begin{algorithm}
    \caption{\algname{\algorithmname}}
    \label{alg:main_algorithm}
    \begin{algorithmic}[1]
    \STATE \textbf{Input:} starting point $x^0 \in \R^d$, stepsize $\gamma > 0$, momentum $a \in (0, 1]$, 
    momentum $b \in (0, 1]$ (only in \algnamebold{\algorithmname-MVR}), probability $p \in (0, 1]$ (only in \algnamebold{\algorithmname-PAGE}), batch size $\BZ$ (only in \algnamebold{\algorithmname-PAGE} and \algnamebold{\algorithmname-MVR}), number of iterations $T \geq 1$
    \STATE Initialize $g^0_i\in \R^d$, $h^0_i\in \R^d$ on the nodes and  $g^0 = \frac{1}{n}\sum_{i=1}^n g^0_i$ on the server
    \FOR{$t = 0, 1, \dots, T - 1$}
    \STATE $x^{t+1} = x^t - \gamma g^t$ \alglinelabel{alg:main_algorithm:x_update} 
    \STATE Flip a coin $c^{t+1} = \begin{cases}
      1,& \textnormal{with probability $p$} \\
      0, & \textnormal{with probability $1 - p$} 
    \end{cases} \textnormal{(only in \algnamebold{\algorithmname-PAGE})}$
    \STATE Broadcast $x^{t+1}$ to all nodes
    \FOR{$i = 1, \dots, n$ in parallel}
    \STATE $h^{t+1}_i = 
    \begin{cases}
        \vspace{0.15cm}
        \nabla f_{i}(x^{t+1})\hfill \textnormal{\algnamesmall{(\algorithmname)}} \\
        \vspace{0.15cm}
        \begin{cases}
            \nabla f_{i}(x^{t+1}) & \textnormal{if } c^{t+1} = 1\\
            h^{t}_i + \frac{1}{\BZ}\sum_{j \in I^t_i}\left(\nabla f_{ij}(x^{t+1}) - \nabla f_{ij}(x^{t})\right) & \textnormal{if } c^{t+1} = 0
        \end{cases} \hfill \textnormal{\algnamesmall{(\algorithmname-PAGE)}} \\
        \frac{1}{B}\sum_{j=1}^B \nabla f_i(x^{t+1};\xi^{t+1}_{ij}) + \left(1 - b\right) \left(h^{t}_i - \frac{1}{B}\sum_{j=1}^B \nabla f_i(x^{t};\xi^{t+1}_{ij})\right) \hfill \hspace{-0.1cm} \textnormal{\algnamesmall{(\algorithmname-MVR)}}
    \end{cases}$ 
    \alglinelabel{alg:main_algorithm:h_defined} 
    \STATE $m^{t+1}_i = \cC_i\left(h^{t+1}_i - h^{t}_i - a \left(g^t_i - h^{t}_i\right)\right)$ 
    \STATE $g^{t+1}_i = g^t_i + m^{t+1}_i$
    \STATE Send $m^{t+1}_i$ to the server
    \ENDFOR
    \STATE $g^{t+1} = g^t + \frac{1}{n} \sum_{i=1}^{n} m^{t+1}_i$
    \ENDFOR
    \STATE \textbf{Output:} $\hat{x}^T$ chosen uniformly at random from $\{x^t\}_{k=0}^{T-1}$ (or $x^T$ under the P\L-condition)
    \end{algorithmic}
\end{algorithm}

\section{Assumptions}
\label{sec:assumptions}

We now provide the assumptions used throughout our paper.
\begin{assumption}
    \label{ass:lower_bound}
    There exists $f^* \in \R$ such that $f(x) \geq f^*$ for all $x \in \R$.
\end{assumption}
\begin{assumption}
    \label{ass:lipschitz_constant}
    The function $f$ is $L$--smooth, i.e., $$\norm{\nabla f(x) - \nabla f(y)} \leq L \norm{x - y} $$ for all $x, y \in \R^d.$
\end{assumption}
\begin{assumption} \leavevmode
    \label{ass:nodes_lipschitz_constant}
    For all $i \in [n],$ the function $f_i$ is $L_i$--smooth.\footnote{Note that one can always take $L^2 = \widehat{L}^2 \eqdef \frac{1}{n} \sum_{i=1}^{n} L_i^2.$ However, the optimal constant $L$ can be much better because $L^2 \leq \left(\frac{1}{n}\sum_{i=1}^n L_i\right)^2 \leq \frac{1}{n}\sum_{i=1}^n L_i^2.$} We define $\widehat{L}^2 \eqdef \frac{1}{n} \sum_{i=1}^{n} L_i^2.$
\end{assumption}
The next assumption is used in the finite-sum setting \eqref{eq:task_minibatch}.
\begin{assumption}
    \label{ass:max_lipschitz_constant}
    For all $i \in [n], j \in [m],$ the function $f_{ij}$ is $L_{ij}$-smooth. Let $L_{\max} \eqdef \max_{i \in [n], j \in [m]} L_{ij}.$
\end{assumption}
The two assumptions below are provided for the stochastic setting \eqref{eq:task_staochastic}.
\begin{assumption}
    \label{ass:stochastic_unbiased_and_variance_bounded}
    For all $i \in [n]$ and for all $x \in \R^d,$ the stochastic gradient $\nabla f_i(x;\xi)$ is unbiased and has bounded variance, i.e.,
    $$\ExpSub{\xi}{\nabla f_i(x;\xi)} = \nabla f_i(x), \qquad \text{and} \qquad \ExpSub{\xi}{\norm{\nabla f_i(x;\xi) - \nabla f_i(x)}^2} \leq \sigma^2,$$ 
    where $\sigma^2 \geq 0.$
\end{assumption}
\begin{assumption}
    \label{ass:mean_square_smoothness}
    For all $i \in [n]$ and for all $x, y \in \R,$ the stochastic gradient $\nabla f_i(x;\xi)$ satisfies the mean-squared smoothness property, i.e.,
    $$\ExpSub{\xi}{\norm{\nabla f_i(x;\xi) - \nabla f_i(y;\xi) - \left(\nabla f_i(x) - \nabla f_i(y)\right)}^2} \leq L_{\sigma}^2 \norm{x - y}^2.$$
\end{assumption}

\section{Theoretical Convergence Rates}
\label{sec:theoretical_convergence_rates}
Now, we provide convergence rate theorems for \algname{\algorithmname}, \algname{\algorithmname-PAGE} and \algname{\algorithmname-MVR}. All three methods are listed in Algorithm~\ref{alg:main_algorithm} and differ in Line~\ref{alg:main_algorithm:h_defined} only. At the end of the section, we provide a theorem for \algname{\algorithmname-SYNC-MVR}. 

\subsection{Gradient Setting (\algname{\algorithmname})}
\label{sec:gradient_oracle}

\begin{restatable}{theorem}{CONVERGENCE}
\label{theorem:gradient_oracle}
Suppose that Assumptions \ref{ass:lower_bound}, \ref{ass:lipschitz_constant}, \ref{ass:nodes_lipschitz_constant} and \ref{ass:compressors} hold. Let us take $a = 1 / \left(2\omega + 1\right)$ {\IfAppendix{and}{,}}
{\IfAppendix{}{\scriptsizeformula}$\gamma \leq \left(L + \sqrt{\frac{16 \omega \left(2 \omega + 1\right)}{n}} \widehat{L}\right)^{-1},$} 
{\IfAppendix{and $h^{0}_i = \nabla f_i(x^0)$ for all $i \in [n]$}{and $g^{0}_i = h^{0}_i = \nabla f_i(x^0)$ for all $i \in [n]$}}
in Algorithm~\ref{alg:main_algorithm} \algname{(\algorithmname)}, then
{\IfAppendix{
    \begin{align*}
        &\Exp{\norm{\nabla f(\widehat{x}^T)}^2} \leq \frac{1}{T}\Bigg[2 \left(f(x^0) - f^*\right)\left(L + \sqrt{\frac{16 \omega \left(2 \omega + 1\right)}{n}} \widehat{L}\right) \\
        &+ 2\left(2 \omega + 1\right) \norm{g^{0} - \nabla f(x^0)}^2 + \frac{4 \omega}{n} \left(\frac{1}{n}\sum_{i=1}^n\norm{g^0_i - \nabla f_i(x^0)}^2\right)\Bigg].
    \end{align*}
}{
    $\Exp{\norm{\nabla f(\widehat{x}^T)}^2} \leq \frac{2\left(f(x^0) - f^*\right)}{\gamma T}.$
}
}
\end{restatable}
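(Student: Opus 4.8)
The plan is to run a descent-plus-error-control argument in the gradient setting, tracking two coupled error quantities and folding them into a single Lyapunov potential. Throughout I use that $h^t_i = \nabla f_i(x^t)$ holds deterministically here, and the invariant $g^t = \frac1n\sum_{i=1}^n g^t_i$, which follows from $g^0 = \frac1n\sum_i g^0_i$ together with the updates $g^{t+1}_i = g^t_i + m^{t+1}_i$ and $g^{t+1} = g^t + \frac1n\sum_i m^{t+1}_i$. First, applying Assumption~\ref{ass:lipschitz_constant} ($L$-smoothness) to $x^{t+1} = x^t - \gamma g^t$ and the identity $\inp{\nabla f(x^t)}{g^t} = \tfrac12\sqnorm{\nabla f(x^t)} + \tfrac12\sqnorm{g^t} - \tfrac12\sqnorm{g^t - \nabla f(x^t)}$ gives
$$f(x^{t+1}) \le f(x^t) - \tfrac{\gamma}{2}\sqnorm{\nabla f(x^t)} - \tfrac{\gamma(1-L\gamma)}{2}\sqnorm{g^t} + \tfrac{\gamma}{2}\sqnorm{g^t - \nabla f(x^t)}.$$
This isolates the server-side error $E^t \eqdef \sqnorm{g^t - \nabla f(x^t)}$ and leaves a negative reservoir $-\tfrac{\gamma(1-L\gamma)}{2}\sqnorm{g^t}$ that will absorb the error growth.

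Second, I derive recursions for $E^t$ and the averaged node error $\Delta^t \eqdef \frac1n\sum_i\sqnorm{g^t_i - \nabla f_i(x^t)}$. Writing $v^t_i = h^{t+1}_i - h^t_i - a(g^t_i - h^t_i)$ and conditioning on the past just before the compression step, unbiasedness of $\cC_i$ yields $\Exp{g^{t+1}} - \nabla f(x^{t+1}) = (1-a)(g^t - \nabla f(x^t))$, while a bias–variance split using independence of the $\cC_i$ and \eqref{eq:compressor} gives
$$\Exp{E^{t+1}} \le (1-a)^2 E^t + \tfrac{\omega}{n^2}\textstyle\sum_i\sqnorm{v^t_i}, \qquad \Exp{\Delta^{t+1}} \le (1-a)^2\Delta^t + \tfrac{\omega}{n}\textstyle\sum_i\sqnorm{v^t_i},$$
the second coming from $g^{t+1}_i - \nabla f_i(x^{t+1}) = (1-a)(g^t_i - \nabla f_i(x^t)) + (\cC_i(v^t_i) - v^t_i)$. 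The crucial step is bounding $\sum_i\sqnorm{v^t_i}$: via $\sqnorm{u-w}\le 2\sqnorm{u}+2\sqnorm{w}$, Assumption~\ref{ass:nodes_lipschitz_constant} ($L_i$-smoothness), and $\sqnorm{x^{t+1}-x^t} = \gamma^2\sqnorm{g^t}$, one gets $\tfrac1n\sum_i\sqnorm{v^t_i} \le 2\gamma^2\widehat{L}^2\sqnorm{g^t} + 2a^2\Delta^t$ since $\tfrac1n\sum_i L_i^2 = \widehat{L}^2$.

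Third, the whole scheme hinges on the algebraic identity $(1-a)^2 + 2\omega a^2 = 1-a$, which holds exactly for the prescribed $a = 1/(2\omega+1)$: substituting the bound above collapses the node recursion to a genuine contraction $\Exp{\Delta^{t+1}} \le (1-a)\Delta^t + 2\omega\gamma^2\widehat{L}^2\sqnorm{g^t}$ and controls the $\Delta^t$ feedback appearing in the $E^t$ recursion. I then form $\Psi^t = f(x^t) - f^* + c_1 E^t + c_2\Delta^t$ with $c_1 = \tfrac{\gamma}{2a} = \tfrac{\gamma(2\omega+1)}{2}$ and $c_2 = \tfrac{\omega\gamma}{n}$. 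Coefficient matching closes cleanly: the $E^t$ terms balance because $\tfrac{\gamma}{2} + c_1(1-a)^2 \le c_1 \Leftrightarrow a \le 1$; the $\Delta^t$ terms balance because $c_1\tfrac{2\omega a^2}{n} = c_2 a$ with this choice of $c_2$; and the residual $\sqnorm{g^t}$ coefficient is nonpositive precisely when $L\gamma + \tfrac{2\gamma^2\widehat{L}^2\omega(4\omega+1)}{n} \le 1$, which is implied by the stated stepsize bound $\gamma \le (L + \sqrt{16\omega(2\omega+1)/n}\,\widehat{L})^{-1}$, the factor $16$ supplying the needed slack.

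This yields $\Exp{\Psi^{t+1}} \le \Psi^t - \tfrac{\gamma}{2}\Exp{\sqnorm{\nabla f(x^t)}}$. Telescoping over $t = 0,\dots,T-1$, using $E^0 = \Delta^0 = 0$ from the initialization $g^0_i = h^0_i = \nabla f_i(x^0)$ so that $\Psi^0 = f(x^0) - f^*$, and averaging (which equals the expectation at the uniformly chosen $\widehat{x}^T$) gives the claimed bound $\Exp{\sqnorm{\nabla f(\widehat{x}^T)}} \le \tfrac{2(f(x^0)-f^*)}{\gamma T}$; the general appendix version follows identically by keeping the nonzero $c_1 E^0 + c_2\Delta^0$ term. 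The main obstacle is setting up the two coupled recursions and the bound on $\sum_i\sqnorm{v^t_i}$ with exactly the right constants so that the identity $(1-a)^2 + 2\omega a^2 = 1-a$ can be exploited to turn the node error into a contraction; everything after that is bookkeeping in the coefficient matching.
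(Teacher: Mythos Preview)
Your proof is correct and follows essentially the same Lyapunov approach as the paper: combine the descent lemma with coupled recursions for the server error $\|g^t - h^t\|^2$ and the averaged node error $\frac{1}{n}\sum_i\|g^t_i - h^t_i\|^2$, exploit the choice $a = 1/(2\omega+1)$ so that $(1-a)^2 + 2\omega a^2 = 1-a$ yields a contraction, and telescope. Your weights $c_1 = \gamma(2\omega+1)/2$ and $c_2 = \omega\gamma/n$ are exactly half of the paper's (which uses $\gamma(2\omega+1)$ and $2\gamma\omega/n$), so your residual condition $L\gamma + \tfrac{2\gamma^2\widehat{L}^2\omega(4\omega+1)}{n}\le 1$ is slightly weaker than the paper's $\tfrac{1}{2\gamma} - \tfrac{L}{2} - \tfrac{8\gamma\omega(2\omega+1)}{n}\widehat{L}^2 \ge 0$, but both are comfortably implied by the stated stepsize bound; this is a cosmetic tightening, not a different argument.
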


The corollary below simplifies the previous theorem and reveals the communication complexity of \algname{\algorithmname}.

\begin{restatable}{corollary}{COROLLARYGRADIENTORACLE}
    \label{cor:gradient_oracle}
    Suppose that assumptions from Theorem~\ref{theorem:gradient_oracle} hold, and $g^{0}_i = h^{0}_i = \nabla f_i(x^0)$ for all $i \in [n],$ then
    \algname{\algorithmname}
    needs
    {\IfAppendix{}{\scriptsizeformula}
    $T \eqdef \cO\Bigg(\frac{1}{\varepsilon}\Bigg[\left(f(x^0) - f^*\right)\left(L + \frac{\omega}{\sqrt{n}} \widehat{L}\right)\Bigg]\Bigg)$
    }
    communication rounds to get an $\varepsilon$-solution and the communication complexity is equal to $\cO\left(d + \zeta_{\cC} T\right),$ where $\zeta_{\cC}$ is the expected density from Definition~\ref{def:expected_density}.
\end{restatable}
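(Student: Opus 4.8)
The plan is to derive the corollary directly from Theorem~\ref{theorem:gradient_oracle} by selecting the largest admissible stepsize and using the initialization $g^0_i = h^0_i = \nabla f_i(x^0)$ to annihilate the initial-error terms. First I would set $\gamma$ equal to its upper bound, $\gamma = \left(L + \sqrt{\frac{16\omega(2\omega+1)}{n}}\,\widehat{L}\right)^{-1}$, which is permitted by the theorem and makes the right-hand side of the convergence estimate as small as possible. Next, since $g^0_i = \nabla f_i(x^0)$ for every $i$, averaging gives $g^0 = \frac{1}{n}\sum_{i=1}^n \nabla f_i(x^0) = \nabla f(x^0)$, so both $\norm{g^0 - \nabla f(x^0)}^2$ and $\frac{1}{n}\sum_{i=1}^n \norm{g^0_i - \nabla f_i(x^0)}^2$ vanish. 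This collapses the (appendix) form of the theorem's bound to $\Exp{\norm{\nabla f(\widehat{x}^T)}^2} \leq \frac{2(f(x^0)-f^*)}{\gamma T}$.

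I would then impose $\frac{2(f(x^0)-f^*)}{\gamma T} \leq \varepsilon$ and solve for $T$, which after substituting $\gamma^{-1}$ gives $T \geq \frac{2(f(x^0)-f^*)}{\varepsilon}\left(L + \sqrt{\frac{16\omega(2\omega+1)}{n}}\,\widehat{L}\right)$. The remaining step is cosmetic: since $16\omega(2\omega+1) = \mathcal{O}(\omega^2)$ in the regime of interest (where $\omega$ is bounded away from zero, so the constant and the lower-order $\sqrt{\omega}$ contribution are absorbed into the big-$\cO$ and the leading $L$ term), we have $\sqrt{\frac{16\omega(2\omega+1)}{n}} = \mathcal{O}\!\left(\frac{\omega}{\sqrt{n}}\right)$. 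This yields the claimed $T = \mathcal{O}\!\left(\frac{1}{\varepsilon}\left[(f(x^0)-f^*)\left(L + \frac{\omega}{\sqrt{n}}\,\widehat{L}\right)\right]\right)$.

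Finally, for the communication complexity I would count the coordinates each node sends to the server. Forming $g^0 = \frac{1}{n}\sum_{i=1}^n g^0_i$ with $g^0_i = \nabla f_i(x^0)$ requires each node to transmit one uncompressed $d$-dimensional gradient, costing $\zeta_{\textnormal{init}} = \mathcal{O}(d)$ coordinates. In each of the $T$ subsequent rounds every node sends only the compressed message $m^{t+1}_i = \cC_i(\cdot)$, whose expected number of nonzero entries is at most $\zeta_{\cC_i} \leq \zeta_{\cC}$ by Definition~\ref{def:expected_density}. Summing the initialization cost with $\mathcal{O}(\zeta_{\cC})$ per round over $T$ rounds gives the stated $\mathcal{O}(d + \zeta_{\cC} T)$, matching the general template $\mathcal{O}(\zeta_{\textnormal{init}} + \zeta T)$.

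I expect no genuine conceptual obstacle here; the argument is essentially bookkeeping on top of Theorem~\ref{theorem:gradient_oracle}. The only points requiring care are (i) verifying that the initialization truly zeroes the two initial-variance terms in the theorem's bound, and (ii) justifying the big-$\cO$ simplification $\sqrt{\omega(2\omega+1)} = \mathcal{O}(\omega)$, which relies on the standard convention that $\omega$ is not vanishingly small so that the $\sqrt{\omega}$ contribution is dominated. Everything else follows by direct substitution.
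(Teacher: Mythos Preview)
Your proposal is correct and follows essentially the same approach as the paper: invoke Theorem~\ref{theorem:gradient_oracle}, use the initialization $g^0_i = h^0_i = \nabla f_i(x^0)$ to kill the initial-error terms, choose the largest admissible $\gamma$, and then count $d$ coordinates for the one-time transmission of $g^0_i$ plus $\zeta_{\cC}$ per round. The paper's own proof is much terser (essentially just the communication-counting sentence), so your write-up is in fact more complete; the only minor wrinkle you flag, $\sqrt{\omega(2\omega+1)} = \cO(\omega)$, is handled the same implicit way in the paper.
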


In the previous corollary, we have free parameters $\omega$ and $\zeta_{\cC}$. Now, we consider the Rand$K$ compressor (see Definition~\ref{def:rand_k}) and choose its parameters to get the communication complexity w.r.t.\,only $d$ and $n$.

\begin{restatable}{corollary}{COROLLARYGRADIENTORACLERANDK}
    \label{cor:gradient_oracle:rand_k}
    Suppose that assumptions of Corollary~\ref{cor:gradient_oracle} hold. We take the unbiased compressor Rand$K$ with $K = \zeta_{\cC} \leq d / \sqrt{n},$ then
    the communication complexity equals 
    {\IfAppendix{}{\scriptsizeformula}
    $
        \cO\left(d + \frac{\widehat{L} \left(f(x^0) - f^*\right) d}{\varepsilon \sqrt{n}}\right).
    $
    }
\end{restatable}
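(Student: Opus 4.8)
The plan is to substitute the specific parameters of the Rand$K$ compressor into the general bounds from Corollary~\ref{cor:gradient_oracle} and then collect terms. First I would recall the two facts about Rand$K$ with sparsity level $K$: it is an unbiased compressor with $\omega = \nicefrac{d}{K} - 1$, and its expected density (Definition~\ref{def:expected_density}) equals $\zeta_{\cC} = K$ exactly, since it always returns precisely $K$ nonzero coordinates. Substituting these into the communication-round count $T = \cO(\frac{1}{\varepsilon}(f(x^0)-f^*)(L + \frac{\omega}{\sqrt{n}}\widehat{L}))$ and into the communication complexity $\cO(d + \zeta_{\cC} T)$ reduces the whole statement to an algebraic simplification.

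Carrying out the substitution, the communication complexity becomes
\begin{align*}
\cO\left(d + \frac{K(f(x^0)-f^*)}{\varepsilon}\left(L + \frac{\nicefrac{d}{K} - 1}{\sqrt{n}}\widehat{L}\right)\right) = \cO\left(d + \frac{K(f(x^0)-f^*)L}{\varepsilon} + \frac{(d-K)(f(x^0)-f^*)\widehat{L}}{\varepsilon\sqrt{n}}\right),
\end{align*}
where the leading factor $K$ cancels the $\nicefrac{1}{K}$ hidden inside $\omega$, leaving $d - K$ in the last term. Since $d - K \leq d$, that last term is at most $\frac{d(f(x^0)-f^*)\widehat{L}}{\varepsilon\sqrt{n}}$, which is exactly the target expression.

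It then remains to absorb the middle term $\frac{K(f(x^0)-f^*)L}{\varepsilon}$ into this dominant one, and this is precisely where the hypothesis $K \leq \nicefrac{d}{\sqrt{n}}$ enters. Applying it gives $\frac{KL}{\varepsilon} \leq \frac{dL}{\varepsilon\sqrt{n}}$, after which the elementary inequality $L \leq \widehat{L}$ --- guaranteed by Assumption~\ref{ass:nodes_lipschitz_constant} and its footnote, which shows $L^2 \leq \frac{1}{n}\sum_{i=1}^n L_i^2 = \widehat{L}^2$ --- upgrades this to $\frac{dL}{\varepsilon\sqrt{n}} \leq \frac{d\widehat{L}}{\varepsilon\sqrt{n}}$. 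Hence the middle term is dominated by the target term and may be dropped inside the $\cO$-notation, yielding the claimed $\cO(d + \frac{\widehat{L}(f(x^0)-f^*)d}{\varepsilon\sqrt{n}})$. I do not expect a genuine obstacle here; the only subtlety worth flagging is that one must use both hypotheses in tandem, since a priori the single-function constant $L$ need not be comparable to $\nicefrac{\widehat{L}}{\sqrt{n}}$, and it is the combination of $K \leq \nicefrac{d}{\sqrt{n}}$ with $L \leq \widehat{L}$ that lets the $L$-term be folded into the $\widehat{L}$-term.
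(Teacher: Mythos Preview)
Your proposal is correct and follows essentially the same route as the paper's proof: both substitute $\omega + 1 = d/K$ and $\zeta_{\cC} = K$ into the bound from Corollary~\ref{cor:gradient_oracle}, use $K\omega = d - K \leq d$ to handle the $\widehat{L}$ term, invoke $K \leq d/\sqrt{n}$ to bound the $KL$ term by $\frac{d}{\sqrt{n}}L$, and then apply $L \leq \widehat{L}$ to merge the two. Your write-up is simply more explicit about the intermediate $(d-K)$ step than the paper's.
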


\subsection{Finite-Sum Setting (\algname{\algorithmname-PAGE})}
\label{sec:mini_batch}
Next, we provide the complexity bounds for \algname{\algorithmname-PAGE}.
\begin{restatable}{theorem}{CONVERGENCEPAGE}
    \label{theorem:page}
    Suppose that Assumptions \ref{ass:lower_bound}, \ref{ass:lipschitz_constant}, \ref{ass:nodes_lipschitz_constant}, \ref{ass:max_lipschitz_constant}, and \ref{ass:compressors} hold. Let us take $a = 1 / \left(2\omega + 1\right)$, probability $p \in (0, 1]$, {\IfAppendix{and}{}}
    {\IfAppendix{}{\scriptsizeformula}$$\gamma \leq \left(L + \sqrt{\frac{48 \omega \left(2 \omega + 1\right)}{n}\left(\frac{(1 - p) L_{\max}^2}{\BZ} + \widehat{L}^2\right) + \frac{2\left(1 - p\right)L_{\max}^2}{pn\BZ}}\right)^{-1}$$} 
    {\IfAppendix{}{and $g^{0}_i = h^{0}_i = \nabla f_i(x^0)$ for all $i \in [n]$}}
    in Algorithm~\ref{alg:main_algorithm} \algname{(\algorithmname-PAGE)}
    then 
    {\IfAppendix{
        \begin{align*}
            &\Exp{\norm{\nabla f(\widehat{x}^T)}^2} \leq \frac{1}{T}\vast[2 \left(f(x^0) - f^*\right) \\
            &\times \left(L + \sqrt{\frac{48 \omega \left(2 \omega + 1\right)}{n}\left(\frac{(1 - p) L_{\max}^2}{\BZ} + \widehat{L}^2\right) + \frac{2\left(1 - p\right)L_{\max}^2}{pn\BZ}}\right) \\
            &+ 2\left(2 \omega + 1\right) \norm{g^{0} - h^0}^2 + \frac{4 \omega}{n} \left(\frac{1}{n} \sum_{i=1}^n\norm{g^0_i - h^{0}_i}^2\right) \\
            &+ \frac{2}{p} \norm{h^{0} - \nabla f(x^{0})}^2 + \frac{32 \omega \left(2 \omega + 1\right)}{n} \left(\frac{1}{n}\sum_{i=1}^n\norm{h^{0}_i - \nabla f_i(x^{0})}^2\right)\vast].
        \end{align*}
    }{
      $\Exp{\norm{\nabla f(\widehat{x}^T)}^2} \leq \frac{2\left(f(x^0) - f^*\right)}{\gamma T}.$
    }}
\end{restatable}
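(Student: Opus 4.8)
The plan is to run a Lyapunov (potential function) argument of the \algname{MARINA}/\algname{PAGE} type, but with the variance coming from compression and the variance coming from the stochastic oracle tracked by \emph{separate} error terms, which is the whole point of the \algname{\algorithmname} construction. First I would apply the descent lemma from $L$-smoothness (Assumption~\ref{ass:lipschitz_constant}) to the update $x^{t+1} = x^t - \gamma g^t$ and use the polarization identity $\inp{\nabla f(x^t)}{g^t} = \frac{1}{2}(\norm{\nabla f(x^t)}^2 + \norm{g^t}^2 - \norm{g^t - \nabla f(x^t)}^2)$ to obtain
$$f(x^{t+1}) \leq f(x^t) - \frac{\gamma}{2}\norm{\nabla f(x^t)}^2 - \frac{\gamma}{2}(1 - \gamma L)\norm{g^t}^2 + \frac{\gamma}{2}\norm{g^t - \nabla f(x^t)}^2.$$
The remaining task is to control $\norm{g^t - \nabla f(x^t)}^2$, which I split as $\norm{g^t - \nabla f(x^t)}^2 \leq 2\norm{g^t - h^t}^2 + 2\norm{h^t - \nabla f(x^t)}^2$ with $h^t \eqdef \frac{1}{n}\sum_{i=1}^n h^t_i$: the first piece is the \emph{compression error} (how well the aggregated compressed messages track $h^t$) and the second is the \emph{\algname{PAGE} error} (how well $h^t$ tracks the true gradient).

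Next I would derive four one-step recursions, taking conditional expectations over the compressors, the coin $c^{t+1}$, and the minibatches. Writing $e^t_i \eqdef g^t_i - h^t_i$ and $v^t_i \eqdef h^{t+1}_i - h^t_i - a e^t_i$, the update in Lines 9--10 gives $g^{t+1}_i - h^{t+1}_i = (1-a)e^t_i + (\cC_i(v^t_i) - v^t_i)$; unbiasedness and independence of the $\cC_i$ (Assumption~\ref{ass:compressors}) then yield a contraction by $(1-a)^2$ for both the server error $\Exp{\norm{g^{t+1}-h^{t+1}}^2}$ and the averaged node error $\frac{1}{n}\sum_i\Exp{\norm{g^{t+1}_i - h^{t+1}_i}^2}$, each picking up an additive $\frac{\omega}{n}$- or $\omega$-multiple of $\frac{1}{n}\sum_i\norm{v^t_i}^2$. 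Because the shared coin makes $h^{t+1} = \nabla f(x^{t+1})$ exactly when $c^{t+1}=1$, the \algname{PAGE} errors contract by $(1-p)$ and only pick up the minibatch variance, which Assumption~\ref{ass:max_lipschitz_constant} bounds by $\frac{(1-p)L_{\max}^2}{n\BZ}\norm{x^{t+1}-x^t}^2$ at the server level and $\frac{(1-p)L_{\max}^2}{\BZ}\norm{x^{t+1}-x^t}^2$ at the node level, using $\norm{x^{t+1}-x^t}^2 = \gamma^2\norm{g^t}^2$.

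I would then assemble the potential $\Phi^t = \Exp{f(x^t)} - f^* + \gamma\big(c_1\Exp{\norm{g^t-h^t}^2} + c_2\Exp{\norm{h^t-\nabla f(x^t)}^2} + \tfrac{c_3}{n}\sum_i\Exp{\norm{g^t_i-h^t_i}^2} + \tfrac{c_4}{n}\sum_i\Exp{\norm{h^t_i-\nabla f_i(x^t)}^2}\big)$ and choose the weights to match the contraction rates: $c_1,c_3 \sim 1/a = 2\omega+1$ for the compression terms and $c_2,c_4 \sim 1/p$ for the \algname{PAGE} terms, so that the residual of each geometric recursion is absorbed. Summing $\Phi^{t+1} - \Phi^t$ over $t$ and requiring the net coefficient of $\norm{g^t}^2$ to be nonpositive is exactly what forces the stated stepsize: the $\widehat{L}^2$ and $\frac{(1-p)L_{\max}^2}{\BZ}$ contributions enter through the $\frac{48\omega(2\omega+1)}{n}$ factor coming from $c_1,c_3$, while the $\frac{2(1-p)L_{\max}^2}{pn\BZ}$ term comes from the $1/p$-weighted server \algname{PAGE} variance. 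Telescoping then gives $\frac{1}{T}\sum_{t=0}^{T-1}\Exp{\norm{\nabla f(x^t)}^2} \leq \frac{2\Phi^0}{\gamma T}$, where $\Phi^0 = f(x^0) - f^*$ plus the weighted error terms at $t=0$; the initialization $g^0_i = h^0_i = \nabla f_i(x^0)$ kills all four error terms, leaving $\frac{2(f(x^0)-f^*)}{\gamma T}$, and the uniformly-random output $\widehat{x}^T$ converts the average into $\Exp{\norm{\nabla f(\widehat{x}^T)}^2}$.

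The main obstacle is the coupling term $\norm{h^{t+1}_i - h^t_i}^2$ inside $\norm{v^t_i}^2$ that feeds the compression recursions. In the event $c^{t+1}=1$ it equals $\norm{\nabla f_i(x^{t+1}) - h^t_i}^2$, which cannot be bounded by a curvature-times-stepsize quantity alone; splitting it as $2\norm{\nabla f_i(x^{t+1}) - \nabla f_i(x^t)}^2 + 2\norm{\nabla f_i(x^t) - h^t_i}^2$ is what drags the \emph{node-level} \algname{PAGE} error $\frac{1}{n}\sum_i\norm{h^t_i - \nabla f_i(x^t)}^2$ into the potential and couples it back through the $\frac{48\omega(2\omega+1)}{n}$ factor. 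Pinning down the constants in $c_1,\dots,c_4$ so as to close this loop while keeping every $\norm{g^t}^2$ coefficient nonpositive under a single explicit $\gamma$ is the delicate bookkeeping step; the rest is the routine unbiasedness/variance expansion.
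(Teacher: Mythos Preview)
Your proposal is correct and matches the paper's proof almost line for line: the paper packages your descent lemma plus the two compression-error recursions into Lemma~\ref{lemma:main_lemma}, your three \algname{PAGE} bounds into Lemma~\ref{lemma:page_h}, and then combines them with exactly the four-term potential you describe before telescoping via Lemma~\ref{lemma:good_recursion}. The one refinement worth flagging is that the coupling you identify in your last paragraph actually \emph{determines} $c_4$: the paper takes $c_4 = \frac{16\omega(2\omega+1)}{n}$ (not $\sim 1/p$), because the $2p$-sized feedback of $\norm{h^{t}_i - \nabla f_i(x^t)}^2$ into the compression recursion via $\norm{h^{t+1}_i - h^t_i}^2$ is what has to be absorbed, and this is independent of the intrinsic $(1-p)$ contraction of the node-level \algname{PAGE} error.
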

Let us simplify the statement of Theorem~\ref{theorem:page} by choosing particular parameters.
\begin{restatable}{corollary}{COROLLARYPAGE}
    \label{cor:mini_batch_oracle}
Let the assumptions from Theorem~\ref{theorem:page} hold,  $p = \nicefrac{\BZ}{(m + \BZ)},$ and $g^{0}_i = h^{0}_i = \nabla f_i(x^0)$ for all $i \in [n].$ Then
    \algname{\algorithmname-PAGE}
    needs
    \[ \scaleboxformula{.8}{$\displaystyle T \eqdef \cO\vast(\frac{1}{\varepsilon}\vast[\left(f(x^0) - f^*\right) \left(L + \frac{\omega}{\sqrt{n}}\widehat{L} + \left(\frac{\omega}{\sqrt{n}} + \sqrt{\frac{m}{n \BZ}}\right)\frac{L_{\max}}{\sqrt{\BZ}}\right) \vast]\vast)$} \]
    communication rounds to get an $\varepsilon$-solution, the communication complexity is equal to $\cO\left(d + \zeta_{\cC} T\right),$ and the expected \# of gradient calculations per node equals $\cO\left(m + \BZ T\right),$ where $\zeta_{\cC}$ is the expected density from Definition~\ref{def:expected_density}.
\end{restatable}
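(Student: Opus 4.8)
The plan is to specialize Theorem~\ref{theorem:page} under the stated parameter choices and then read off the three claimed quantities. First I would observe that the initialization $g^0_i = h^0_i = \nabla f_i(x^0)$ forces $g^0 = h^0 = \frac{1}{n}\sum_{i=1}^n \nabla f_i(x^0) = \nabla f(x^0)$, so that every residual term in the appendix form of the bound---namely $\norm{g^0 - h^0}^2$, $\frac{1}{n}\sum_i \norm{g^0_i - h^0_i}^2$, $\norm{h^0 - \nabla f(x^0)}^2$, and $\frac{1}{n}\sum_i \norm{h^0_i - \nabla f_i(x^0)}^2$---vanishes. What remains is the clean bound $\Exp{\norm{\nabla f(\widehat{x}^T)}^2} \leq \frac{2(f(x^0) - f^*)}{\gamma T}$. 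Taking $\gamma$ equal to the largest admissible stepsize from the theorem (to minimize $T$) and requiring the right-hand side to be at most $\varepsilon$ then gives $T = \cO\big(\frac{f(x^0) - f^*}{\gamma \varepsilon}\big)$, so the entire task reduces to simplifying $\frac{1}{\gamma}$.

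The bulk of the work is algebraic. Substituting $p = \nicefrac{\BZ}{(m + \BZ)}$ gives $1 - p = \frac{m}{m+\BZ}$ and $\frac{1-p}{p} = \frac{m}{\BZ}$, so the term $\frac{2(1-p)L_{\max}^2}{pn\BZ}$ becomes $\frac{2m L_{\max}^2}{n \BZ^2}$, while $\frac{(1-p)L_{\max}^2}{\BZ} \leq \frac{L_{\max}^2}{\BZ}$. I would then bound $\sqrt{\omega(2\omega+1)}$ by $\cO(\omega)$ and apply the subadditivity $\sqrt{u + v + w} \leq \sqrt{u} + \sqrt{v} + \sqrt{w}$ to split the square root into three pieces, which up to constants equal $\frac{\omega}{\sqrt{n}}\widehat{L}$, $\frac{\omega}{\sqrt{n}}\frac{L_{\max}}{\sqrt{\BZ}}$, and $\sqrt{\frac{m}{n\BZ}}\frac{L_{\max}}{\sqrt{\BZ}}$; the last uses $\sqrt{\frac{m L_{\max}^2}{n\BZ^2}} = \sqrt{\frac{m}{n\BZ}}\frac{L_{\max}}{\sqrt{\BZ}}$. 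Adding back the leading $L$ and factoring yields $\frac{1}{\gamma} = \cO\big(L + \frac{\omega}{\sqrt{n}}\widehat{L} + (\frac{\omega}{\sqrt{n}} + \sqrt{\frac{m}{n\BZ}})\frac{L_{\max}}{\sqrt{\BZ}}\big)$, which after multiplying by $\frac{f(x^0) - f^*}{\varepsilon}$ is exactly the stated bound for $T$.

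Finally I would account for the two cost metrics. For communication, each node sends only the compressed vector $m^{t+1}_i = \cC_i(\cdots)$ per round, contributing $\zeta_{\cC}$ coordinates on average, while the initialization transmits the full $d$-dimensional $g^0_i$; hence the communication complexity is $\cO(d + \zeta_{\cC} T)$. For the oracle cost, computing $\nabla f_i(x^0)$ in the initialization costs $m$ gradients, and in each round the \algname{PAGE} update in Line~\ref{alg:main_algorithm:h_defined} costs $m$ gradients with probability $p$ and $\cO(\BZ)$ gradients with probability $1-p$; the expected per-round cost is $pm + (1-p)\cO(\BZ) = \cO\big(\frac{m\BZ}{m+\BZ}\big) = \cO(\BZ)$, giving $\cO(m + \BZ T)$ in total. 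The only delicate point is the stepsize simplification: one must keep the mixed term $\frac{2m L_{\max}^2}{n\BZ^2}$ from collapsing incorrectly and verify that it reproduces the $\sqrt{\frac{m}{n\BZ}}$ factor rather than being absorbed into the other pieces; everything else is bookkeeping.
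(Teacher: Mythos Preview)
Your proposal is correct and follows the same approach as the paper. The paper's own proof is extremely terse---it simply says the argument proceeds ``in the same way as Corollary~\ref{cor:gradient_oracle}'' and records only the one new ingredient, namely that the expected per-round gradient cost is $pm + (1-p)B = \frac{2mB}{m+B} \leq 2B$; you have filled in the stepsize simplification and the bookkeeping for $T$, communication, and oracle cost that the paper leaves implicit.
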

The corollary below reveals the communication and oracle complexities of Algorithm~\ref{alg:main_algorithm} \algname{(\algorithmname-PAGE)} with Rand$K$.
\begin{restatable}{corollary}{COROLLARYPAGERANDK}
    Suppose that assumptions of Corollary~\ref{cor:mini_batch_oracle} hold, $\BZ \leq \sqrt{\nicefrac{m}{n}},$ and we use the unbiased compressor Rand$K$ with $K = \zeta_{\cC} = \Theta\left(\nicefrac{B d}{\sqrt{m}}\right).$ Then
    the communication complexity of Algorithm~\ref{alg:main_algorithm}  is
    \begin{align}
        \textstyleformula
        \label{eq:complexity:mini_batch_oracle:communication}
        \cO\left(d + \frac{L_{\max} \left(f(x^0) - f^*\right) d}{\varepsilon \sqrt{n}}\right),
    \end{align}
and the expected \# of gradient calculations per node equals
    \begin{align}
        \textstyleformula
        \label{eq:complexity:mini_batch_oracle}
        \cO\left(m + \frac{L_{\max} \left(f(x^0) - f^*\right) \sqrt{m}}{\varepsilon \sqrt{n}} \right).
    \end{align}
\end{restatable}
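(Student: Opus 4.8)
The plan is to specialize Corollary~\ref{cor:mini_batch_oracle} by substituting the two defining quantities of the Rand$K$ compressor and then collapsing the resulting bracket using the hypotheses $B \le \sqrt{m/n}$ and the smoothness ordering $L \le \widehat{L} \le L_{\max}$ (the first inequality is the footnote to Assumption~\ref{ass:nodes_lipschitz_constant}; the second holds because each $f_i$ is an average of $L_{ij}$-smooth functions with $L_{ij} \le L_{\max}$, so $L_i \le L_{\max}$ and hence $\widehat{L}^2 = \frac{1}{n}\sum_i L_i^2 \le L_{\max}^2$). First I would invoke the properties of Rand$K$ (Definition~\ref{def:rand_k}): it lies in $\mathbb{U}(\omega)$ with $\omega + 1 = d/K$ and has expected density $\zeta_{\cC} = K$. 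Plugging in the prescribed choice $K = \zeta_{\cC} = \Theta(Bd/\sqrt{m})$ gives $\omega + 1 = \Theta(\sqrt{m}/B)$; since $B \le \sqrt{m/n} \le \sqrt{m}$ this quantity is $\Omega(\sqrt{n}) = \Omega(1)$, so $\omega = \Theta(\sqrt{m}/B)$ and the choice is admissible, i.e. $1 \le K \le d$ (using $Bd/\sqrt{m} \le d$).

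Next I would substitute $\omega = \Theta(\sqrt{m}/B)$ into the communication-round count $T$ from Corollary~\ref{cor:mini_batch_oracle}. The three data-dependent terms in the bracket become $L$, the term $\frac{\omega}{\sqrt{n}}\widehat{L} = \Theta(\frac{\sqrt{m}\,\widehat{L}}{B\sqrt{n}})$, and the Rand$K$-specific term $(\frac{\omega}{\sqrt{n}}+\sqrt{\frac{m}{nB}})\frac{L_{\max}}{\sqrt{B}}$. The key observation is that inside this last term the contribution of $\omega/\sqrt{n}$ scales as $\frac{\sqrt{m}\,L_{\max}}{B^{3/2}\sqrt{n}}$ and, because $B \ge 1$, is dominated by the contribution of $\sqrt{m/(nB)}$, namely $\frac{\sqrt{m}\,L_{\max}}{B\sqrt{n}}$. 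Hence the whole bracket collapses to $\Theta(L + \frac{\sqrt{m}\,\widehat{L}}{B\sqrt{n}} + \frac{\sqrt{m}\,L_{\max}}{B\sqrt{n}})$.

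For the oracle complexity I would multiply the bracket by $B$, as the count $\cO(m + BT)$ requires, and bound each of the three resulting terms by $\frac{\sqrt{m}}{\sqrt{n}}L_{\max}$: the term $BL$ via $B \le \sqrt{m/n}$ together with $L \le L_{\max}$, and the terms $\frac{\sqrt{m}\,\widehat{L}}{\sqrt{n}}$, $\frac{\sqrt{m}\,L_{\max}}{\sqrt{n}}$ via $\widehat{L} \le L_{\max}$. This yields \eqref{eq:complexity:mini_batch_oracle}. For the communication complexity I would instead multiply the bracket by $\zeta_{\cC} T = KT$ with $K = \Theta(Bd/\sqrt{m})$; the three terms become $\frac{BdL}{\sqrt{m}}$, $\frac{d\widehat{L}}{\sqrt{n}}$, $\frac{d L_{\max}}{\sqrt{n}}$, and the same two inequalities ($B \le \sqrt{m/n}$ and $\widehat{L} \le L_{\max}$) bound all of them by $\frac{d L_{\max}}{\sqrt{n}}$, giving \eqref{eq:complexity:mini_batch_oracle:communication}.

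The computation is entirely routine once the substitutions are made; the only point requiring care, and the main (mild) obstacle, is the bookkeeping of which term dominates. In particular one must verify that the $\omega$-dependent piece of the Rand$K$ term is absorbed by the $\sqrt{m/(nB)}$ piece, and that the constraint $B \le \sqrt{m/n}$ (rather than merely $B \le \sqrt{m}$) is exactly what forces $BL$ and $\frac{BdL}{\sqrt{m}}$ down to the stated $L_{\max}$-scaled bounds, so that the smoothness constant appearing in the final complexities is $L_{\max}$ and not a larger batch-inflated quantity.
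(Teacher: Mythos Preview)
Your proposal is correct and follows essentially the same route as the paper's proof: both substitute $\omega+1=d/K=\Theta(\sqrt{m}/B)$ from the Rand$K$ property, use $L\le\widehat L\le L_{\max}$ and $B\le\sqrt{m/n}$, and then collapse the bracket of Corollary~\ref{cor:mini_batch_oracle} term by term after multiplying by $B$ (for the oracle count) and by $K$ (for the communication count). The only cosmetic difference is that you first argue the $\omega/\sqrt{n}$ sub-term is dominated by the $\sqrt{m/(nB)}$ sub-term and then multiply, whereas the paper multiplies first and bounds each product directly by $d L_{\max}/\sqrt{n}$ (resp.\ $\sqrt{m/n}\,L_{\max}$); the arithmetic is the same.
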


Up to Lipschitz constants factors, bound \eqref{eq:complexity:mini_batch_oracle} is optimal \citep{SPIDER, PAGE}, and unlike \algname{VR-MARINA}, we recover the optimal bound with compression! At the same time, the communication complexity \eqref{eq:complexity:mini_batch_oracle:communication} is the same as in \algname{\algorithmname} (see Corollary~\ref{cor:gradient_oracle:rand_k}) or \algname{MARINA}.

\subsection{Stochastic Setting (\algname{\algorithmname-MVR})}

Let  $h^t \eqdef \frac{1}{n}\sum_{i=1}^n h^t_i$. This vector is not used in Algorithm~\ref{alg:main_algorithm}, but appears in the theoretical results.

\begin{restatable}{theorem}{CONVERGENCEMVR}
    \label{theorem:stochastic}
    Suppose that Assumptions \ref{ass:lower_bound}, \ref{ass:lipschitz_constant}, \ref{ass:nodes_lipschitz_constant}, \ref{ass:stochastic_unbiased_and_variance_bounded}, \ref{ass:mean_square_smoothness} and \ref{ass:compressors} hold. Let us take $a = \frac{1}{2\omega + 1}$, $b \in (0, 1]$, {\IfAppendix{and}{}}
    {\IfAppendix{}{\scriptsizeformula}$\gamma \leq \left(L + \sqrt{\frac{96 \omega \left(2 \omega + 1\right)}{n}\left(\frac{\left(1 - b\right)^2 L_\sigma^2}{B} + \widehat{L}^2\right) + \frac{4 \left(1 - b\right)^2 L_\sigma^2}{b n B}}\right)^{-1},$} 
    {\IfAppendix{}{and $g^{0}_i = h^{0}_i$ for all $i \in [n]$}}
    in Algorithm~\ref{alg:main_algorithm} \algname{(\algorithmname-MVR)}.
    Then 
    {\IfAppendix{
        \begin{align*}
            \Exp{\norm{\nabla f(\widehat{x}^T)}^2} &\leq \frac{1}{T}\vast[2 \left(f(x^0) - f^*\right) \\
            &\times \left(L + \sqrt{\frac{96 \omega \left(2 \omega + 1\right)}{n}\left(\frac{\left(1 - b\right)^2 L_\sigma^2}{B} + \widehat{L}^2\right) + \frac{4 \left(1 - b\right)^2 L_\sigma^2}{b n B}}\right)\\
            &+ 2\left(2 \omega + 1\right) \norm{g^{0} - h^0}^2 + \frac{4 \omega}{n} \left(\frac{1}{n} \sum_{i=1}^n\norm{g^0_i - h^{0}_i}^2\right) \\
            &+ \frac{2}{b} \norm{h^{0} - \nabla f(x^{0})}^2 + \frac{32 b \omega \left(2 \omega + 1\right)}{n} \left(\frac{1}{n}\sum_{i=1}^n\norm{h^{0}_i - \nabla f_i(x^{0})}^2\right)\vast] \\
            &+ \left(\frac{96 \omega \left(2 \omega + 1\right)}{n B} + \frac{4}{b n B}\right) b^2 \sigma^2.
        \end{align*}
    }{\scriptsizeformula
    \begin{align*}
        &\Exp{\norm{\nabla f(\widehat{x}^T)}^2} \leq \frac{1}{T}\vast[\frac{2 \left(f(x^0) - f^*\right)}{\gamma} + \frac{2}{b} \norm{h^{0} - \nabla f(x^{0})}^2 \\
        &\qquad + \frac{32 b \omega \left(2 \omega + 1\right)}{n} \left(\frac{1}{n}\sum_{i=1}^n\norm{h^{0}_i - \nabla f_i(x^{0})}^2\right)\vast] + \left(\frac{96 \omega \left(2 \omega + 1\right)}{n B} + \frac{4}{b n B}\right) b^2 \sigma^2.
    \end{align*}
    }
    }
\end{restatable}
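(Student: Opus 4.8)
The plan is to build a Lyapunov function that couples the objective gap with the errors of the two nested estimators, and to prove a one-step descent inequality of the form $\Exp{\Phi^{t+1} \mid \mathcal{F}^t} \leq \Phi^t - \sqnorm{\nabla f(x^t)} + N$, where $N$ is the constant stochastic-noise floor appearing in the theorem. Concretely, I would track four error quantities: the averaged compression error $\bar{e}^t \eqdef g^t - h^t$, the per-node compression error $\frac{1}{n}\sum_{i=1}^n \sqnorm{g^t_i - h^t_i}$, the averaged variance-reduction error $\sqnorm{h^t - \nabla f(x^t)}$, and its per-node analogue $\frac{1}{n}\sum_{i=1}^n \sqnorm{h^t_i - \nabla f_i(x^t)}$. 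The weights attached to these four terms can be read off from the initial-value coefficients in the statement, namely $2(2\omega+1)$, $\nicefrac{4\omega}{n}$, $\nicefrac{2}{b}$, and $\nicefrac{32 b\omega(2\omega+1)}{n}$. Summing the one-step inequality over $t = 0,\dots,T-1$, telescoping, using $f(x^T) \geq f^*$ and nonnegativity of the error terms, and finally invoking that $\widehat{x}^T$ is drawn uniformly from $\{x^t\}_{t=0}^{T-1}$ yields the claimed bound; note that $N$ is a per-step constant, so after division by $T$ it survives as the additive $\sigma^2$ term rather than vanishing.

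First I would apply the descent lemma. Since $f$ is $L$-smooth (Assumption~\ref{ass:lipschitz_constant}) and $x^{t+1} = x^t - \gamma g^t$, the polarization identity $\inp{a}{b} = \frac{1}{2}(\sqnorm{a} + \sqnorm{b} - \sqnorm{a-b})$ gives, deterministically given $\mathcal{F}^t$,
\begin{align*}
f(x^{t+1}) \leq f(x^t) - \frac{\gamma}{2}\sqnorm{\nabla f(x^t)} - \frac{\gamma}{2}(1 - \gamma L)\sqnorm{g^t} + \frac{\gamma}{2}\sqnorm{g^t - \nabla f(x^t)}.
\end{align*}
Multiplying by $\nicefrac{2}{\gamma}$ and splitting $\sqnorm{g^t - \nabla f(x^t)} \leq 2\sqnorm{\bar{e}^t} + 2\sqnorm{h^t - \nabla f(x^t)}$ isolates the gradient term with coefficient $-1$, a negative $-(1-\gamma L)\sqnorm{g^t}$ reservoir, and the two averaged error terms that the Lyapunov potential must absorb.

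The core of the argument is the set of recursions for the four error quantities. For the compression errors I would use Assumption~\ref{ass:compressors}: writing $e^{t+1}_i = (1-a)e^t_i + (\cC_i(u^t_i) - u^t_i)$ with $u^t_i \eqdef h^{t+1}_i - h^t_i - a e^t_i$, unbiasedness kills the cross term and the variance bound contributes $\omega\sqnorm{u^t_i}$; the algebraic cancellation $(1-a)^2 + 2a^2\omega = 1-a$ under $a = \nicefrac{1}{(2\omega+1)}$ produces the contraction factor, while independence of the $\cC_i$ across nodes yields the $\nicefrac{1}{n}$ in the averaged recursion. For the MVR errors I would expand $h^{t+1}_i - \nabla f_i(x^{t+1}) = (1-b)(h^t_i - \nabla f_i(x^t)) + \delta^t_i$ with $\Exp{\delta^t_i \mid \mathcal{F}^t} = 0$; bounding $\Exp{\sqnorm{\delta^t_i}}$ splits into a pure-variance piece of size $\nicefrac{b^2\sigma^2}{B}$ (Assumption~\ref{ass:stochastic_unbiased_and_variance_bounded}) and a difference piece controlled by the mean-squared smoothness Assumption~\ref{ass:mean_square_smoothness}, giving $\nicefrac{(1-b)^2 L_\sigma^2}{B}\sqnorm{x^{t+1}-x^t}$. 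Cross-node independence again supplies an extra $\nicefrac{1}{n}$ in the recursion for the averaged term, which is exactly the source of the $\nicefrac{1}{n}$ in the final noise floor. A subsidiary estimate I would need is $\frac{1}{n}\sum_i\Exp{\sqnorm{h^{t+1}_i - h^t_i}}$, bounded by combining mean-squared smoothness, $L_i$-smoothness, and the sampling variance; this feeds the compression recursions.

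The final and most delicate step is assembling these pieces. With $\Phi^t = \frac{2}{\gamma}(f(x^t)-f^*)$ plus the four weighted error terms, I would verify four simultaneous coefficient-matching inequalities ensuring that each error quantity's total coefficient in $\Exp{\Phi^{t+1} \mid \mathcal{F}^t}$ does not exceed its coefficient in $\Phi^t$; for $\bar{e}^t$ one checks $c_1(1-a)^2 + 2 \leq c_1$, which holds because $c_1 a(2-a) = 2(2-a) \geq 2$, and the per-node MVR term is the tightest constraint, pinning the constant $32$. All the $\sqnorm{x^{t+1}-x^t} = \gamma^2\sqnorm{g^t}$ contributions generated by the recursions must then be dominated by the $-(1-\gamma L)\sqnorm{g^t}$ reservoir; collecting their aggregate coefficient $C$ and requiring $\gamma^2 C \leq 1 - \gamma L$ is precisely what produces $\gamma \leq (L + \sqrt{C})^{-1}$ with $C = \frac{96\omega(2\omega+1)}{n}\big(\frac{(1-b)^2 L_\sigma^2}{B} + \widehat{L}^2\big) + \frac{4(1-b)^2 L_\sigma^2}{bnB}$. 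I expect this simultaneous balancing — keeping all four coupled error recursions contractive while the $\sqnorm{g^t}$ terms stay absorbed and the constants align with the stated stepsize — to be the main obstacle; the MVR variance decomposition with reused samples and the use of cross-node independence are standard but must be tracked precisely to obtain the correct $\nicefrac{1}{n}$ and $\nicefrac{1}{B}$ factors.
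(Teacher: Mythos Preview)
Your proposal is correct and follows essentially the same route as the paper: the paper also builds a Lyapunov function with exactly the four error terms you identify (with the same weights, up to the $\nicefrac{2}{\gamma}$ rescaling you introduce), derives the compression recursions via the identity $(1-a)^2+2a^2\omega = 1-a$ at $a=\nicefrac{1}{(2\omega+1)}$, derives the MVR recursions via the mean-squared smoothness and variance assumptions (their Lemma~\ref{lemma:mvr}), and then chooses $\nu=\nicefrac{\gamma}{b}$, $\rho=\nicefrac{16b\gamma\omega(2\omega+1)}{n}$ to close the telescoping while the stepsize bound absorbs the $\sqnorm{x^{t+1}-x^t}$ residuals. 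The only cosmetic difference is that the paper packages the compression part into a standalone lemma (Lemma~\ref{lemma:main_lemma}) before combining with the MVR bounds, whereas you do both layers in one pass.
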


\begin{restatable}{corollary}{COROLLARYSTOCHASTIC}
    \label{cor:stochastic}
    Suppose that assumptions from Theorem~\ref{theorem:stochastic} hold, momentum $b = \Theta\left(\min \left\{\ \frac{1}{\omega}\sqrt{\frac{n \varepsilon B}{\sigma^2}}, \frac{n \varepsilon B}{\sigma^2}\right\}\right),$ 
    and $g^{0}_i = h^{0}_i = \frac{1}{B_{\textnormal{init}}} \sum_{k = 1}^{B_{\textnormal{init}}} \nabla f_i(x^0; \xi^0_{ik})$ for all $i \in [n],$ and batch size $B_{\textnormal{init}} = \Theta\left(\nicefrac{B}{b}\right),$ then Algorithm~\ref{alg:main_algorithm} \algname{(\algorithmname-MVR)} needs
    \[ \scaleboxformula{.7}{$\displaystyle T \eqdef \cO\vast(\frac{1}{\varepsilon}\vast[\left(f(x^0) - f^*\right)\left(L + \frac{\omega}{\sqrt{n}}\widehat{L} + \left(\frac{\omega}{\sqrt{n}} + \sqrt{\frac{\sigma^2}{\varepsilon n^2 B}}\right)\frac{L_\sigma}{\sqrt{B}}\right)\vast] + \frac{\sigma^2}{n \varepsilon B}\vast)$} \]
    communication rounds to get an $\varepsilon$-solution, the communication complexity is equal to $\cO\left(d + \zeta_{\cC} T\right),$ and the number of stochastic gradient calculations per node equals $\cO(B_{\textnormal{init}} + BT),$ where $\zeta_{\cC}$ is the expected density from Definition~\ref{def:expected_density}.
\end{restatable}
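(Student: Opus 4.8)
The plan is to start from the general guarantee of Theorem~\ref{theorem:stochastic} and specialize it to the stated parameters, then read off how large $T$ must be for the right-hand side to fall below $\varepsilon$. The initialization $g^{0}_i = h^{0}_i$ immediately annihilates the terms $2(2\omega+1)\norm{g^{0}-h^0}^2$ and $\frac{4\omega}{n}(\frac1n\sum_i\norm{g^0_i-h^0_i}^2)$. The remaining initialization-dependent terms involve $\norm{h^{0}-\nabla f(x^0)}^2$ and $\frac1n\sum_i\norm{h^0_i-\nabla f_i(x^0)}^2$. Since each $h^0_i$ is an average of $B_{\textnormal{init}}$ i.i.d.\ stochastic gradients, Assumption~\ref{ass:stochastic_unbiased_and_variance_bounded} together with independence across the $n$ nodes yields $\Exp{\norm{h^{0}-\nabla f(x^0)}^2}\le \sigma^2/(nB_{\textnormal{init}})$ and $\Exp{\frac1n\sum_i\norm{h^0_i-\nabla f_i(x^0)}^2}\le\sigma^2/B_{\textnormal{init}}$. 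Substituting $B_{\textnormal{init}}=\Theta(B/b)$ turns the $\frac2b\norm{h^0-\nabla f(x^0)}^2$ term into $\cO(\sigma^2/(nB))$ and the $\frac{32b\omega(2\omega+1)}{n}(\cdots)$ term into $\cO(b^2\omega^2\sigma^2/(nB))$.

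Next I would split the resulting bound into the pieces that vanish as $T\to\infty$ and the genuinely non-vanishing noise floor $(\frac{96\omega(2\omega+1)}{nB}+\frac{4}{bnB})b^2\sigma^2=\frac{96\omega(2\omega+1)b^2\sigma^2}{nB}+\frac{4b\sigma^2}{nB}$. Forcing this floor below $\varepsilon$ gives the two constraints $\frac{\omega^2 b^2\sigma^2}{nB}\lesssim\varepsilon$ and $\frac{b\sigma^2}{nB}\lesssim\varepsilon$, whose tightest common solution is exactly $b=\Theta(\min\{\frac1\omega\sqrt{\tfrac{n\varepsilon B}{\sigma^2}},\ \tfrac{n\varepsilon B}{\sigma^2}\})$, as claimed. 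With this $b$, the $\cO(b^2\omega^2\sigma^2/(nB))$ contribution is already $\cO(\varepsilon)$, so after division by $T\ge 1$ it is harmless; collecting the surviving $\frac1T$ terms leaves the requirement $T=\cO(\frac1\varepsilon[\frac1\gamma(f(x^0)-f^*)+\frac{\sigma^2}{nB}])$.

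The last and most delicate step is simplifying the stepsize bound $\frac1\gamma$. Using $(1-b)^2\le 1$ and subadditivity of the square root, $\frac1\gamma$ splits into $L+\frac{\omega}{\sqrt n}\widehat L+\frac{\omega}{\sqrt n}\frac{L_\sigma}{\sqrt B}+\frac{L_\sigma}{\sqrt{bnB}}$, whose first three summands already match the target. The obstacle is the term $\frac{L_\sigma}{\sqrt{bnB}}$, where $\frac1b=\Theta(\frac1A+\frac1C)$ with $A=\frac1\omega\sqrt{\tfrac{n\varepsilon B}{\sigma^2}}$ and $C=\tfrac{n\varepsilon B}{\sigma^2}$ the two branches of the minimum defining $b$. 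Bounding $\sqrt{1/b}\le\sqrt{1/A}+\sqrt{1/C}$, the $C$-branch directly produces $\frac{L_\sigma}{\sqrt{nB}}\sqrt{1/C}=\frac{\sigma L_\sigma}{nB\sqrt\varepsilon}=\sqrt{\tfrac{\sigma^2}{\varepsilon n^2 B}}\frac{L_\sigma}{\sqrt B}$. For the $A$-branch I would write $\sqrt{1/A}=\sqrt{\omega}\,(\tfrac{\sigma^2}{nB\varepsilon})^{1/4}$ and apply AM--GM, $\sqrt{\omega\cdot\sqrt{\sigma^2/(nB\varepsilon)}}\le\tfrac12(\omega+\sqrt{\sigma^2/(nB\varepsilon)})$, so that this piece is reabsorbed into $\frac{\omega}{\sqrt n}\frac{L_\sigma}{\sqrt B}$ and $\sqrt{\tfrac{\sigma^2}{\varepsilon n^2 B}}\frac{L_\sigma}{\sqrt B}$. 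This reconciles $\frac{L_\sigma}{\sqrt{bnB}}$ with the $(\frac{\omega}{\sqrt n}+\sqrt{\tfrac{\sigma^2}{\varepsilon n^2 B}})\frac{L_\sigma}{\sqrt B}$ factor in the claimed rate. Finally, the communication complexity $\cO(d+\zeta_{\cC}T)$ and the oracle complexity $\cO(B_{\textnormal{init}}+BT)$ follow from the per-round accounting already established in the paper, with the $d$ and $B_{\textnormal{init}}$ terms arising from the single initialization round.
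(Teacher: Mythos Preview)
Your proposal is correct and follows essentially the same route as the paper's proof: kill the $g^0_i-h^0_i$ terms by initialization, bound $\norm{h^0-\nabla f(x^0)}^2$ and $\frac1n\sum_i\norm{h^0_i-\nabla f_i(x^0)}^2$ by $\sigma^2/(nB_{\textnormal{init}})$ and $\sigma^2/B_{\textnormal{init}}$, plug in $B_{\textnormal{init}}=\Theta(B/b)$, choose $b$ to force the noise floor below $\varepsilon$, and then simplify $1/\gamma$. The only cosmetic difference is in how you control $\frac{1}{\sqrt{bn}}\frac{L_\sigma}{\sqrt B}$: the paper observes that $\frac{1}{b}=\Theta\bigl(\max\{\omega\sqrt{\sigma^2/(n\varepsilon B)},\,\sigma^2/(n\varepsilon B)\}\bigr)\le\Theta\bigl(\max\{\omega^2,\,\sigma^2/(n\varepsilon B)\}\bigr)$ and takes a square root, whereas you split $\sqrt{1/b}\le\sqrt{1/A}+\sqrt{1/C}$ and apply AM--GM to the $A$-branch; both yield the same $(\frac{\omega}{\sqrt n}+\sqrt{\sigma^2/(\varepsilon n^2 B)})\frac{L_\sigma}{\sqrt B}$ up to constants.
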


The following corollary reveals the communication and oracle complexity of \algname{\algorithmname-MVR}.

\begin{restatable}{corollary}{COROLLARYSTOCHASTICRANDK}
  \label{cor:stochastic:randk}
  Suppose that assumptions of Corollary~\ref{cor:stochastic} hold, batch size $\BZ \leq \frac{\sigma}{\sqrt{\varepsilon} n},$ we take Rand$K$ with $K = \zeta_{\cC} = \Theta\left(\frac{B d \sqrt{\varepsilon n}}{\sigma}\right),$ and $\widetilde{L} \eqdef \max\{L, L_{\sigma}, \widehat{L}\}.$ Then
  the communication complexity equals 
  \begin{align}
      \textstyleformula
      \label{eq:complexity:mvr:stochastic:communication}
      \cO\left(\frac{d \sigma}{\sqrt{n \varepsilon}} + \frac{\widetilde{L} \left(f(x^0) - f^*\right) d}{\sqrt{n} \varepsilon} \right),
  \end{align}
  and the expected \# of stochastic gradient calculations per node equals
  \begin{align}
      \textstyleformula
      \label{eq:complexity:mvr:stochastic_oracle}
      \cO\left(\frac{\sigma^2}{n \varepsilon} + \frac{\widetilde{L} \left(f(x^0) - f^*\right) \sigma}{\varepsilon^{\nicefrac{3}{2}} n} \right).
  \end{align}
\end{restatable}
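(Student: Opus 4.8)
The plan is to treat this corollary as a direct specialization of Corollary~\ref{cor:stochastic}: substitute the Rand$K$ quantization parameter and the prescribed batch size, then simplify the communication bound $\cO(d + \zeta_{\cC}T)$ and the oracle bound $\cO(B_{\textnormal{init}} + BT)$ using the stated constraints. The first step is to pin down $\omega$ for Rand$K$. Using the convention $\omega + 1 = \Theta(d/\zeta_{\cC})$ with $\zeta_{\cC} = K = \Theta(Bd\sqrt{\varepsilon n}/\sigma)$ gives $\omega + 1 = \Theta(\sigma/(B\sqrt{\varepsilon n}))$; the hypothesis $B \leq \sigma/(\sqrt{\varepsilon}n)$ then guarantees $\omega + 1 \gtrsim \sqrt{n} \geq 1$, so $\omega = \Theta(\sigma/(B\sqrt{\varepsilon n}))$ is a legitimate nonnegative value and we may treat $\omega$ and $\omega + 1$ interchangeably up to constants.

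Next I would compute $B_{\textnormal{init}}$. Since $B_{\textnormal{init}} = \Theta(B/b)$ with $b = \Theta(\min\{\omega^{-1}\sqrt{n\varepsilon B/\sigma^2},\, n\varepsilon B/\sigma^2\})$, I must identify which branch of the minimum is active. Plugging $\omega = \Theta(\sigma/(B\sqrt{\varepsilon n}))$ into the first branch and comparing with the second reduces, after cancellation, to comparing $1/B$ with $1/\sqrt{B}$; for $B \geq 1$ the second branch is the smaller one, so $b = \Theta(n\varepsilon B/\sigma^2)$ and hence $B_{\textnormal{init}} = \Theta(\sigma^2/(n\varepsilon))$, which is exactly the first term of the claimed oracle complexity.

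The core of the argument is simplifying $T$. Substituting $\omega/\sqrt{n} = \Theta(\sigma/(nB\sqrt{\varepsilon}))$ into the bracket of Corollary~\ref{cor:stochastic}, the term $(\omega/\sqrt{n} + \sqrt{\sigma^2/(\varepsilon n^2 B)})L_\sigma/\sqrt{B}$ collapses because, for $B \geq 1$, the $\sqrt{\sigma^2/(\varepsilon n^2 B)}$ piece dominates $\omega/\sqrt{n}$, giving $\Theta(\sigma L_\sigma/(nB\sqrt{\varepsilon}))$; the same scale governs $(\omega/\sqrt{n})\widehat{L}$. Bounding $L, L_\sigma, \widehat{L}$ by $\widetilde{L}$ and using $B \leq \sigma/(\sqrt{\varepsilon}n)$ so that $\sigma/(nB\sqrt{\varepsilon}) \geq 1$ absorbs the standalone $L$, the bracket becomes $\Theta(\widetilde{L}\sigma/(nB\sqrt{\varepsilon}))$, whence $T = \cO(\widetilde{L}(f(x^0)-f^*)\sigma/(nB\varepsilon^{3/2}) + \sigma^2/(n\varepsilon B))$. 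Multiplying by $B$ and adding $B_{\textnormal{init}}$ yields \eqref{eq:complexity:mvr:stochastic_oracle}; multiplying $T$ by $\zeta_{\cC} = \Theta(Bd\sqrt{\varepsilon n}/\sigma)$ and adding the $\cO(d)$ initialization cost yields \eqref{eq:complexity:mvr:stochastic:communication}, where $d$ is absorbed because $d\sigma/\sqrt{n\varepsilon} \geq d$ under the same regime.

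The main obstacle is purely bookkeeping: keeping the constraint $B \leq \sigma/(\sqrt{\varepsilon}n)$ threaded through every step so that the lower-order contributions (the standalone $L$, the $\omega/\sqrt{n}$ piece inside the parenthesis, and the $\cO(d)$ initialization term) are each dominated, and correctly identifying the active branch of the $\min$ defining $b$. No new inequalities are needed beyond these $\Theta$-comparisons.
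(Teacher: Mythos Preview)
Your proposal is correct and follows essentially the same substitution-and-simplification route as the paper's proof. The only cosmetic difference is ordering: the paper multiplies $T$ term-by-term by $K$ (using $K\omega = \Theta(d)$ and $K = \cO(d/\sqrt{n})$) and by $B$ directly, rather than first solving for $\omega$ and simplifying $T$ as you do; and it handles $B_{\textnormal{init}}$ by bounding the max as a sum instead of identifying the active branch of the $\min$ defining $b$, but both lead to the same $\Theta(\sigma^2/(n\varepsilon))$.
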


Up to Lipschitz constant factors, the bound \eqref{eq:complexity:mvr:stochastic_oracle} is optimal \citep{arjevani2019lower, sharma2019parallel}, and unlike \algname{VR-MARINA (online)}, we recover the optimal bound with compression! At the same time, the communication complexity \eqref{eq:complexity:mvr:stochastic:communication} is the same as in \algname{\algorithmname} (see Corollary~\ref{cor:gradient_oracle:rand_k}) or \algname{MARINA} for small enough $\varepsilon.$

\subsection{Stochastic Setting (\algname{\algorithmname-SYNC-MVR})}
\label{sec:mini_stochastic_batch}
We now provide the complexities of Algorithm~\ref{alg:main_algorithm_mvr_sync} (\algname{\algorithmname-SYNC-MVR}) presented in Appendix~\ref{sec:main_algorithm_mvr_sync}. The main convergence rate Theorem~\ref{theorem:sync_stochastic} is in the appendix.

\begin{restatable}{corollary}{COROLLARYSYNCSTOCHASTIC}
    \label{cor:sync_stochastic}
    Suppose that assumptions from Theorem~\ref{theorem:sync_stochastic} hold, probability $p = \min \left\{\ \frac{\zeta_{\cC}}{d}, \frac{n \varepsilon B}{\sigma^2}\right\},$ batch size 
    $B' = \Theta\left(\frac{\sigma^2}{n \varepsilon}\right)$
    and $h^{0}_i = g^{0}_i = \frac{1}{B_{\textnormal{init}}} \sum_{k = 1}^{B_{\textnormal{init}}} \nabla f_i(x^0; \xi^0_{ik})$ for all $i \in [n],$ initial batch size 
    $B_{\textnormal{init}} = \Theta\left(\max\left\{\frac{\sigma^2}{n \varepsilon}, B\frac{d}{\zeta_{\cC}}\right\}\right),$ 
    then \algname{\algorithmname-SYNC-MVR}
    needs
    \[ \scaleboxformula{.63}{$\displaystyle T \eqdef \cO\vast(\frac{1}{\varepsilon}\vast[\left(f(x^0) - f^*\right)\left(L + \frac{\omega}{\sqrt{n}}\widehat{L} + \left(\frac{\omega}{\sqrt{n}} + \sqrt{\frac{d}{\zeta_{\cC} n}} + \sqrt{\frac{\sigma^2}{\varepsilon n^2 B}}\right)\frac{L_{\sigma}}{\sqrt{B}}\right)\vast] + \frac{\sigma^2}{n \varepsilon B}\vast)$} \]

    communication rounds to get an $\varepsilon$-solution, the communication complexity is equal to $\cO\left(d + \zeta_{\cC} T\right),$ and the number of stochastic gradient calculations per node equals $\cO(B_{\textnormal{init}} + BT),$ where $\zeta_{\cC}$ is the expected density from Definition~\ref{def:expected_density}.
\end{restatable}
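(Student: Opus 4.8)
The plan is to obtain Corollary~\ref{cor:sync_stochastic} as a parameter-tuning consequence of the appendix convergence Theorem~\ref{theorem:sync_stochastic}, mirroring the way Corollary~\ref{cor:stochastic} is read off from Theorem~\ref{theorem:stochastic}. Reasoning by analogy with Theorem~\ref{theorem:stochastic}, I expect Theorem~\ref{theorem:sync_stochastic} to yield a bound of the shape
\[
\Exp{\norm{\nabla f(\widehat{x}^T)}^2} \leq \frac{2\left(f(x^0)-f^*\right)}{\gamma T} + \frac{\mathcal{I}}{T} + C,
\]
where $\mathcal{I}$ collects squared-distance initialization terms (among them a $\tfrac{2}{p}\norm{h^0 - \nabla f(x^0)}^2$ piece and an $\omega$-weighted $\tfrac1n\sum_i\norm{h_i^0 - \nabla f_i(x^0)}^2$ piece, plus terms in $\norm{g^0-h^0}^2$), $C$ is a variance floor of order $\sigma^2/(nB')$, and $\gamma^{-1}$ is a square root whose summands include the synchronization term $(1-p)L_\sigma^2/(pnB)$. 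First I would use the initialization $g_i^0 = h_i^0$ to annihilate every term containing $\norm{g^0-h^0}^2$ or $\tfrac1n\sum_i\norm{g_i^0-h_i^0}^2$.

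Second, I would take total expectation and invoke Assumption~\ref{ass:stochastic_unbiased_and_variance_bounded}: since each $h_i^0$ is an average of $B_{\textnormal{init}}$ i.i.d.\,stochastic gradients, $\tfrac1n\sum_i\Exp{\norm{h_i^0-\nabla f_i(x^0)}^2}\leq\sigma^2/B_{\textnormal{init}}$, and averaging the independent node errors gives $\Exp{\norm{h^0-\nabla f(x^0)}^2}\leq\sigma^2/(nB_{\textnormal{init}})$. Choosing $B'=\Theta(\sigma^2/(n\varepsilon))$ forces $C=\cO(\varepsilon)$. Then I would substitute $\gamma^{-1}$ and split the square root via $\sqrt{a+b+c}\leq\sqrt{a}+\sqrt{b}+\sqrt{c}$, using $2\omega+1=\Theta(\omega+1)$; the crucial piece $(1-p)L_\sigma^2/(pnB)$ with $p=\zeta_{\cC}/d$ produces exactly the new term $\sqrt{d/(\zeta_{\cC}n)}\,L_\sigma/\sqrt{B}$ that distinguishes this bound from \algname{\algorithmname-MVR}. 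Requiring $2(f(x^0)-f^*)/(\gamma T)\leq\varepsilon/4$ and each initialization-over-$T$ term $\leq\varepsilon/4$, then taking $T$ to be the maximum of the resulting lower bounds, yields the claimed $T$; here $B_{\textnormal{init}}=\Theta(\max\{\sigma^2/(n\varepsilon),\,Bd/\zeta_{\cC}\})$ is exactly what guarantees that the $\tfrac2p\Exp{\norm{h^0-\nabla f(x^0)}^2}/T$ contribution collapses to the floor $\sigma^2/(n\varepsilon B)$ already present in $T$ rather than enlarging it (in the branch $p=\zeta_{\cC}/d$ the $Bd/\zeta_{\cC}$ factor cancels the $d/\zeta_{\cC}$ from $1/p$, and in the branch $p=n\varepsilon B/\sigma^2$ the $\sigma^2/(n\varepsilon)$ factor does the same).

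For the two complexities I would simply average the per-round costs. A synchronization round (probability $p$) sends $d$ coordinates and a compressed round sends $\zeta_{\cC}$, so the expected per-round communication is $pd+(1-p)\zeta_{\cC}=\cO(\zeta_{\cC})$ because $p\leq\zeta_{\cC}/d$; adding the one-off $\cO(d)$ broadcast gives $\cO(d+\zeta_{\cC}T)$. Likewise a synchronization round costs $B'$ gradients and a regular round $\cO(B)$, so the expected per-round oracle cost is $pB'+(1-p)\cO(B)=\cO(B)$: in the regime $p=n\varepsilon B/\sigma^2$ one has $pB'=B$, and in the regime $p=\zeta_{\cC}/d<n\varepsilon B/\sigma^2$ one still has $pB'=(\zeta_{\cC}/d)(\sigma^2/(n\varepsilon))<B$; with the $\cO(B_{\textnormal{init}})$ start this gives $\cO(B_{\textnormal{init}}+BT)$.

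The main obstacle I anticipate is the bookkeeping around $p=\min\{\zeta_{\cC}/d,\,n\varepsilon B/\sigma^2\}$: both branches of the minimum must be checked to confirm that the stated $T$ is a valid upper bound and that the oracle and communication averages collapse to $\cO(B)$ and $\cO(\zeta_{\cC})$ in each case. Together with verifying that the deliberately chosen $B_{\textnormal{init}}$ absorbs the $1/p$-scaled initialization error into the existing $\sigma^2/(n\varepsilon B)$ term, this case analysis — rather than any single inequality — is where the care is required.
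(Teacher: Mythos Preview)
Your proposal is correct and follows essentially the same approach as the paper's proof: invoke Theorem~\ref{theorem:sync_stochastic}, kill the $g^0-h^0$ terms via the initialization $g_i^0=h_i^0$, bound $\Exp{\norm{h^0-\nabla f(x^0)}^2}\leq\sigma^2/(nB_{\textnormal{init}})$, choose $B'$ to make the variance floor $\cO(\varepsilon)$, expand $1/p$ as $\cO(d/\zeta_{\cC}+\sigma^2/(n\varepsilon B))$ and absorb the resulting initialization contribution via the stated $B_{\textnormal{init}}$, then check that the expected per-round communication and oracle costs are $\cO(\zeta_{\cC})$ and $\cO(B)$ using $p\leq\zeta_{\cC}/d$ and $p\leq n\varepsilon B/\sigma^2$ respectively. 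One minor note: Theorem~\ref{theorem:sync_stochastic} in fact contains \emph{no} $\omega$-weighted $\tfrac1n\sum_i\norm{h_i^0-\nabla f_i(x^0)}^2$ term (unlike Theorem~\ref{theorem:stochastic}), so your anticipated $\mathcal{I}$ is slightly too pessimistic, but this is harmless since you would simply find that term absent.
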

\begin{restatable}{corollary}{COROLLARYSTOCHASTICSYNCRANDK}
    Suppose that assumptions of Corollary~\ref{cor:sync_stochastic} hold, batch size $\BZ \leq \frac{\sigma}{\sqrt{\varepsilon} n},$ we take Rand$K$ with $K = \zeta_{\cC} = \Theta\left(\frac{B d \sqrt{\varepsilon n}}{\sigma}\right),$ and $\widetilde{L} \eqdef \max\{L, L_{\sigma}, \widehat{L}\}.$ Then
    the communication complexity equals 
    \begin{align}
        \textstyleformula
        \label{eq:complexity:stochastic:communication}
        \cO\left(\frac{d \sigma}{\sqrt{n \varepsilon}} + \frac{\widetilde{L} \left(f(x^0) - f^*\right) d}{\sqrt{n} \varepsilon} \right),
    \end{align}
    and the expected \# of stochastic gradient calculations per node equals
    \begin{align}
        \textstyleformula
        \label{eq:complexity:stochastic_oracle}
        \cO\left(\frac{\sigma^2}{n \varepsilon} + \frac{\widetilde{L} \left(f(x^0) - f^*\right) \sigma}{\varepsilon^{\nicefrac{3}{2}} n} \right).
    \end{align}
\end{restatable}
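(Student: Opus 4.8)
The plan is to treat this as a specialization of Corollary~\ref{cor:sync_stochastic}: I would substitute the Rand$K$ parameters into the generic communication complexity $\cO(d + \zeta_{\cC}T)$ and oracle complexity $\cO(B_{\textnormal{init}} + BT)$ stated there, and simplify. The two facts I need about Rand$K$ (from Definition~\ref{def:rand_k} together with the normalization $\omega + 1 = \Theta(\nicefrac{d}{\zeta_{\cC}})$ used throughout) are $\zeta_{\cC} = K$ and $\omega + 1 = \Theta(\nicefrac{d}{K})$. With the prescribed $K = \Theta\left(\nicefrac{B d \sqrt{\varepsilon n}}{\sigma}\right)$ this gives $\omega + 1 = \Theta\left(\nicefrac{\sigma}{(B\sqrt{\varepsilon n})}\right)$; moreover the batch-size hypothesis $B \leq \nicefrac{\sigma}{(\sqrt{\varepsilon}n)}$ is equivalent to $\nicefrac{\sigma^2}{(n\varepsilon)} \geq 1$ and forces $\omega + 1 = \Omega(\sqrt{n})$, so $\omega + 1 = \Theta(\omega)$. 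I would carry $\widetilde{L} = \max\{L, L_\sigma, \widehat{L}\}$ throughout so that the three smoothness constants appearing in $T$ can be merged at the end.

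For the communication complexity I would multiply the expression for $T$ from Corollary~\ref{cor:sync_stochastic} by $\zeta_{\cC} = K$, term by term. The additive piece $\nicefrac{\sigma^2}{(n\varepsilon B)}$ becomes $\nicefrac{d\sigma}{\sqrt{n\varepsilon}}$, the first target term. Using $K\cdot\nicefrac{\omega}{\sqrt{n}} = \nicefrac{d}{\sqrt{n}}$, the two $\nicefrac{\omega}{\sqrt{n}}$ contributions collapse to $\nicefrac{\widehat{L}d}{\sqrt{n}}$ and $\nicefrac{L_\sigma d}{\sqrt{nB}}$, both bounded by the second target term $\nicefrac{\widetilde{L}(f(x^0)-f^*)d}{(\sqrt{n}\varepsilon)}$. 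It then remains to check that the three stray contributions — the bare $KL$ term, the $\sqrt{\nicefrac{d}{(\zeta_{\cC}n)}}$ term, and the $\sqrt{\nicefrac{\sigma^2}{(\varepsilon n^2 B)}}$ term — are all dominated; here the batch condition does the work: $B \leq \nicefrac{\sigma}{(\sqrt{\varepsilon}n)}$ kills the $KL$ term, $\nicefrac{\sigma^2}{(n\varepsilon)} \geq 1$ kills the square-root-density term, and the last simplifies directly to $\nicefrac{L_\sigma d}{\sqrt{n}}$. Finally $\nicefrac{\sigma^2}{(n\varepsilon)} \geq 1$ also lets the leading $d$ be absorbed into $\nicefrac{d\sigma}{\sqrt{n\varepsilon}}$, yielding \eqref{eq:complexity:stochastic:communication}.

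For the oracle complexity I would first simplify $B_{\textnormal{init}} = \Theta\left(\max\left\{\nicefrac{\sigma^2}{(n\varepsilon)}, \nicefrac{Bd}{\zeta_{\cC}}\right\}\right)$: since $\nicefrac{Bd}{\zeta_{\cC}} = \nicefrac{\sigma}{\sqrt{n\varepsilon}} \leq \nicefrac{\sigma^2}{(n\varepsilon)}$ (again by $\nicefrac{\sigma^2}{(n\varepsilon)} \geq 1$), we get $B_{\textnormal{init}} = \Theta\left(\nicefrac{\sigma^2}{(n\varepsilon)}\right)$. Multiplying $T$ by $B$, the additive term contributes $\nicefrac{\sigma^2}{(n\varepsilon)}$ and the momentum/variance terms contribute $\nicefrac{\widetilde{L}(f(x^0)-f^*)\sigma}{(\varepsilon^{3/2}n)}$ after using $B\cdot\nicefrac{\omega}{\sqrt{n}} = \nicefrac{\sigma}{(\sqrt{\varepsilon}n)}$; the same batch condition shows the stray $BL$, density, and inner-variance sub-terms are subdominant. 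Adding $B_{\textnormal{init}}$ gives \eqref{eq:complexity:stochastic_oracle}. The main obstacle is purely the book-keeping: several sub-terms are hidden inside $T$, and one must confirm each is dominated by one of the two surviving terms. The most delicate is the $\sqrt{\nicefrac{d}{(\zeta_{\cC}n)}}$ contribution, whose domination requires the inequality $\sigma^2 \geq n\varepsilon$ rather than merely $B \geq 1$, whereas everything else follows from $B \leq \nicefrac{\sigma}{(\sqrt{\varepsilon}n)}$. Since this mirrors exactly the computation behind Corollary~\ref{cor:stochastic:randk}, the two methods end with identical complexities.
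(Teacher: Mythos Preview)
Your proposal is correct and follows essentially the same approach as the paper: substitute $K=\zeta_{\cC}=\Theta(Bd\sqrt{\varepsilon n}/\sigma)$ and $\omega+1=d/K$ into the bounds of Corollary~\ref{cor:sync_stochastic} and simplify term-by-term, using $K=\cO(d/\sqrt{n})$ and the batch hypothesis to absorb the subdominant contributions. One small wording issue: the batch condition $B\leq \sigma/(\sqrt{\varepsilon}\,n)$ is not \emph{equivalent} to $\sigma^2/(n\varepsilon)\geq 1$ but (together with $B\geq 1$) \emph{implies} it, which is the direction you actually use.
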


Up to Lipschitz constant factors, the bound \eqref{eq:complexity:stochastic_oracle} is optimal \citep{arjevani2019lower, sharma2019parallel}, and unlike \algname{VR-MARINA (online)}, we recover the optimal bound with compression! At the same time, the communication complexity \eqref{eq:complexity:stochastic:communication} is the same as in \algname{\algorithmname} (see Corollary~\ref{cor:gradient_oracle:rand_k}) or \algname{MARINA} for small enough $\varepsilon.$

\subsection{Comparison of \algname{\algorithmname-MVR} and \algname{\algorithmname-SYNC-MVR}}
\label{sec:comparison}
Let us consider Rand$K$ (note that $\omega + 1 = \nicefrac{d}{\zeta_{\cC}}$). Comparing Corollary~\ref{cor:stochastic} to Corollary~\ref{cor:sync_stochastic} (see Table~\ref{table:general_case}), we see that \algname{\algorithmname-SYNC-MVR} improved the size of the initial batch from $\Theta\left(\max\left\{\frac{\sigma^2}{n \varepsilon}, B\omega\sqrt{\frac{\sigma^2}{n \varepsilon B}}\right\}\right)$ to $\Theta\left(\max\left\{\frac{\sigma^2}{n \varepsilon}, B\omega\right\}\right).$ Fortunately, we can control the parameter $K$ in Rand$K$, and Corollary~\ref{cor:stochastic:randk} reveals that we can take $K = \Theta\left(\frac{B d \sqrt{\varepsilon n}}{\sigma}\right),$ and get $B\omega\sqrt{\frac{\sigma^2}{n \varepsilon B}} \leq \frac{\sigma^2}{n \varepsilon}.$ As a result, the ``bad term'' does not dominate the oracle complexity, and \algname{\algorithmname-MVR} attains the optimal oracle and SOTA communication complexity. The same reasoning applies to optimization problems under P\L-condition with $K = \Theta\left(\frac{B d \sqrt{\mu \varepsilon n}}{\sigma}\right).$

\bibliographystyle{apalike}
\bibliography{dasha_neurips_2022}

\begin{thebibliography}{}

\bibitem[Alistarh et~al., 2017]{alistarh2017qsgd}
Alistarh, D., Grubic, D., Li, J., Tomioka, R., and Vojnovic, M. (2017).
\newblock {QSGD}: {C}ommunication-efficient {SGD} via gradient quantization and
  encoding.
\newblock In {\em Advances in Neural Information Processing Systems (NIPS)},
  pages 1709--1720.

\bibitem[Arjevani et~al., 2019]{arjevani2019lower}
Arjevani, Y., Carmon, Y., Duchi, J.~C., Foster, D.~J., Srebro, N., and
  Woodworth, B. (2019).
\newblock Lower bounds for non-convex stochastic optimization.
\newblock {\em arXiv preprint arXiv:1912.02365}.

\bibitem[Beznosikov et~al., 2020]{beznosikov2020biased}
Beznosikov, A., Horv{\'a}th, S., Richt{\'a}rik, P., and Safaryan, M. (2020).
\newblock On biased compression for distributed learning.
\newblock {\em arXiv preprint arXiv:2002.12410}.

\bibitem[Brown et~al., 2020]{brown2020language}
Brown, T.~B., Mann, B., Ryder, N., Subbiah, M., Kaplan, J., Dhariwal, P.,
  Neelakantan, A., Shyam, P., Sastry, G., Askell, A., et~al. (2020).
\newblock Language models are few-shot learners.
\newblock {\em arXiv preprint arXiv:2005.14165}.

\bibitem[Bubeck and Sellke, 2021]{bubeck2021universal}
Bubeck, S. and Sellke, M. (2021).
\newblock A universal law of robustness via isoperimetry.
\newblock {\em arXiv preprint arXiv:2105.12806}.

\bibitem[Chang and Lin, 2011]{chang2011libsvm}
Chang, C.-C. and Lin, C.-J. (2011).
\newblock {LIBSVM}: a library for support vector machines.
\newblock {\em {ACM} {T}ransactions on {I}ntelligent {S}ystems and {T}echnology
  (TIST)}, 2(3):1--27.

\bibitem[Cutkosky and Orabona, 2019]{cutkosky2019momentum}
Cutkosky, A. and Orabona, F. (2019).
\newblock Momentum-based variance reduction in non-convex {SGD}.
\newblock {\em arXiv preprint arXiv:1905.10018}.

\bibitem[Danilova et~al., 2020]{danilova2020recent}
Danilova, M., Dvurechensky, P., Gasnikov, A., Gorbunov, E., Guminov, S.,
  Kamzolov, D., and Shibaev, I. (2020).
\newblock Recent theoretical advances in non-convex optimization.
\newblock {\em arXiv preprint arXiv:2012.06188}.

\bibitem[Das et~al., 2020]{das2020improved}
Das, R., Hashemi, A., Sanghavi, S., and Dhillon, I.~S. (2020).
\newblock Improved convergence rates for non-convex federated learning with
  compression.
\newblock {\em arXiv e-prints}, pages arXiv--2012.

\bibitem[Fang et~al., 2018]{SPIDER}
Fang, C., Li, C.~J., Lin, Z., and Zhang, T. (2018).
\newblock {SPIDER}: Near-optimal non-convex optimization via stochastic path
  integrated differential estimator.
\newblock In {\em NeurIPS Information Processing Systems}.

\bibitem[Goodfellow et~al., 2016]{goodfellow2016deep}
Goodfellow, I., Bengio, Y., Courville, A., and Bengio, Y. (2016).
\newblock {\em Deep learning}, volume~1.
\newblock MIT Press.

\bibitem[Gorbunov et~al., 2021]{MARINA}
Gorbunov, E., Burlachenko, K., Li, Z., and Richt\'{a}rik, P. (2021).
\newblock {MARINA}: {F}aster non-convex distributed learning with compression.
\newblock In {\em 38th International Conference on Machine Learning}.

\bibitem[Haddadpour et~al., 2021]{haddadpour2021federated}
Haddadpour, F., Kamani, M.~M., Mokhtari, A., and Mahdavi, M. (2021).
\newblock Federated learning with compression: Unified analysis and sharp
  guarantees.
\newblock In {\em International Conference on Artificial Intelligence and
  Statistics}, pages 2350--2358. PMLR.

\bibitem[He et~al., 2016]{he2016deep}
He, K., Zhang, X., Ren, S., and Sun, J. (2016).
\newblock Deep residual learning for image recognition.
\newblock In {\em Proceedings of the IEEE Conference on Computer Vision and
  Pattern Recognition (CVPR)}, pages 770--778.

\bibitem[Hestness et~al., 2017]{hestness2017deep}
Hestness, J., Narang, S., Ardalani, N., Diamos, G., Jun, H., Kianinejad, H.,
  Patwary, M., Ali, M., Yang, Y., and Zhou, Y. (2017).
\newblock Deep learning scaling is predictable, empirically.
\newblock {\em arXiv preprint arXiv:1712.00409}.

\bibitem[Horv\'{a}th et~al., 2019]{Cnat}
Horv\'{a}th, S., Ho, C.-Y., \v{L}udov\'{i}t Horv\'{a}th, Sahu, A.~N., Canini,
  M., and Richt\'{a}rik, P. (2019).
\newblock Natural compression for distributed deep learning.
\newblock {\em arXiv preprint arXiv:1905.10988}.

\bibitem[Horv{\'a}th et~al., 2019]{horvath2019stochastic}
Horv{\'a}th, S., Kovalev, D., Mishchenko, K., Stich, S., and Richt{\'a}rik, P.
  (2019).
\newblock Stochastic distributed learning with gradient quantization and
  variance reduction.
\newblock {\em arXiv preprint arXiv:1904.05115}.

\bibitem[Khanduri et~al., 2020]{khanduri2020distributed}
Khanduri, P., Sharma, P., Kafle, S., Bulusu, S., Rajawat, K., and Varshney,
  P.~K. (2020).
\newblock Distributed stochastic non-convex optimization: Momentum-based
  variance reduction.
\newblock {\em arXiv preprint arXiv:2005.00224}.

\bibitem[Khanduri et~al., 2021]{khanduri2021stem}
Khanduri, P., Sharma, P., Yang, H., Hong, M., Liu, J., Rajawat, K., and
  Varshney, P. (2021).
\newblock {STEM}: A stochastic two-sided momentum algorithm achieving
  near-optimal sample and communication complexities for federated learning.
\newblock {\em Advances in Neural Information Processing Systems}, 34.

\bibitem[Kone{\v{c}}n{\'y} et~al., 2016]{konevcny2016federated}
Kone{\v{c}}n{\'y}, J., McMahan, H.~B., Yu, F.~X., Richt{\'a}rik, P., Suresh,
  A.~T., and Bacon, D. (2016).
\newblock Federated learning: Strategies for improving communication
  efficiency.
\newblock {\em arXiv preprint arXiv:1610.05492}.

\bibitem[Krizhevsky et~al., 2009]{krizhevsky2009learning}
Krizhevsky, A., Hinton, G., et~al. (2009).
\newblock Learning multiple layers of features from tiny images.
\newblock Technical report, University of Toronto, Toronto.

\bibitem[Li et~al., 2020a]{li2020federated}
Li, T., Sahu, A.~K., Zaheer, M., Sanjabi, M., Talwalkar, A., and Smith, V.
  (2020a).
\newblock Federated optimization in heterogeneous networks.
\newblock {\em Proceedings of Machine Learning and Systems}, 2:429--450.

\bibitem[Li et~al., 2021a]{PAGE}
Li, Z., Bao, H., Zhang, X., and Richt{\'a}rik, P. (2021a).
\newblock {PAGE}: A simple and optimal probabilistic gradient estimator for
  nonconvex optimization.
\newblock In {\em International Conference on Machine Learning}, pages
  6286--6295. PMLR.

\bibitem[Li et~al., 2021b]{li2021zerosarah}
Li, Z., Hanzely, S., and Richt{\'a}rik, P. (2021b).
\newblock {ZeroSARAH}: Efficient nonconvex finite-sum optimization with zero
  full gradient computation.
\newblock {\em arXiv preprint arXiv:2103.01447}.

\bibitem[Li et~al., 2020b]{ADIANA}
Li, Z., Kovalev, D., Qian, X., and Richt\'{a}rik, P. (2020b).
\newblock Acceleration for compressed gradient descent in distributed and
  federated optimization.
\newblock In {\em International Conference on Machine Learning}.

\bibitem[Liu et~al., 2020]{liu2020optimal}
Liu, D., Nguyen, L.~M., and Tran-Dinh, Q. (2020).
\newblock An optimal hybrid variance-reduced algorithm for stochastic composite
  nonconvex optimization.
\newblock {\em arXiv preprint arXiv:2008.09055}.

\bibitem[Mishchenko et~al., 2019]{DIANA}
Mishchenko, K., Gorbunov, E., Tak{\'a}{\v{c}}, M., and Richt{\'a}rik, P.
  (2019).
\newblock Distributed learning with compressed gradient differences.
\newblock {\em arXiv preprint arXiv:1901.09269}.

\bibitem[Murata and Suzuki, 2021]{murata2021bias}
Murata, T. and Suzuki, T. (2021).
\newblock Bias-variance reduced local {SGD} for less heterogeneous federated
  learning.
\newblock {\em arXiv preprint arXiv:2102.03198}.

\bibitem[Nedi{\'c} and Olshevsky, 2014]{nedic2014distributed}
Nedi{\'c}, A. and Olshevsky, A. (2014).
\newblock Distributed optimization over time-varying directed graphs.
\newblock {\em IEEE Transactions on Automatic Control}, 60(3):601--615.

\bibitem[Nesterov, 2018]{nesterov2018lectures}
Nesterov, Y. (2018).
\newblock {\em Lectures on convex optimization}, volume 137.
\newblock Springer.

\bibitem[Paszke et~al., 2019]{paszke2019pytorch}
Paszke, A., Gross, S., Massa, F., Lerer, A., Bradbury, J., Chanan, G., Killeen,
  T., Lin, Z., Gimelshein, N., Antiga, L., et~al. (2019).
\newblock Pytorch: An imperative style, high-performance deep learning library.
\newblock In {\em Advances in Neural Information Processing Systems (NeurIPS)}.

\bibitem[Ramesh et~al., 2021]{ramesh2021zero}
Ramesh, A., Pavlov, M., Goh, G., Gray, S., Voss, C., Radford, A., Chen, M., and
  Sutskever, I. (2021).
\newblock Zero-shot text-to-image generation.
\newblock {\em arXiv preprint arXiv:2102.12092}.

\bibitem[Richt\'{a}rik et~al., 2021]{EF21}
Richt\'{a}rik, P., Sokolov, I., and Fatkhullin, I. (2021).
\newblock {EF21}: A new, simpler, theoretically better, and practically faster
  error feedback.
\newblock {\em arXiv preprint arXiv:2106.05203}.

\bibitem[Sharma et~al., 2019]{sharma2019parallel}
Sharma, P., Kafle, S., Khanduri, P., Bulusu, S., Rajawat, K., and Varshney,
  P.~K. (2019).
\newblock Parallel restarted {SPIDER}--communication efficient distributed
  nonconvex optimization with optimal computation complexity.
\newblock {\em arXiv preprint arXiv:1912.06036}.

\bibitem[Szlendak et~al., 2021]{szlendak2021permutation}
Szlendak, R., Tyurin, A., and Richt{\'a}rik, P. (2021).
\newblock Permutation compressors for provably faster distributed nonconvex
  optimization.
\newblock {\em arXiv preprint arXiv:2110.03300}.

\bibitem[Tran-Dinh et~al., 2021]{tran2021hybrid}
Tran-Dinh, Q., Pham, N.~H., Phan, D.~T., and Nguyen, L.~M. (2021).
\newblock A hybrid stochastic optimization framework for composite nonconvex
  optimization.
\newblock {\em Mathematical Programming}, pages 1--67.

\bibitem[Vogels et~al., 2021]{vogels2021relaysum}
Vogels, T., He, L., Koloskova, A., Karimireddy, S.~P., Lin, T., Stich, S.~U.,
  and Jaggi, M. (2021).
\newblock {RelaySum} for decentralized deep learning on heterogeneous data.
\newblock {\em Advances in Neural Information Processing Systems}, 34.

\bibitem[Wang et~al., 2021]{wang2021field}
Wang, J., Charles, Z., Xu, Z., Joshi, G., McMahan, H.~B., Al-Shedivat, M.,
  Andrew, G., Avestimehr, S., Daly, K., Data, D., et~al. (2021).
\newblock A field guide to federated optimization.
\newblock {\em arXiv preprint arXiv:2107.06917}.

\bibitem[Xu et~al., 2020]{xu2020compressed}
Xu, H., Ho, C.-Y., Abdelmoniem, A.~M., Dutta, A., Bergou, E.~H., Karatsenidis,
  K., Canini, M., and Kalnis, P. (2020).
\newblock Compressed communication for distributed deep learning: Survey and
  quantitative evaluation.
\newblock Technical report.

\end{thebibliography}

\ifdef{\shortpaper}{}{

\newpage
\appendix
\tableofcontents

\newpage
\section{Experiments}
\label{sec:experiments}

We have tested all developed algorithms  on practical machine learnings problems\footnote{Code: \url{https://github.com/mysteryresearcher/dasha}}. Note that the goal of our experiments is to justify the theoretical convergence rates from our paper. We compare the new methods with \algname{MARINA} on LIBSVM datasets \citep{chang2011libsvm} (under the 3-clause BSD license) because \algname{MARINA} is the only previous state-of-the-art method for the problem \eqref{eq:main_problem}. Moreover, we show the advantage of our method on an image recognition task with CIFAR10 \citep{krizhevsky2009learning} and a deep neural network. In all experiments, we take parameters of algorithms predicted by the theory (stated in the convergence rate theorems our paper and in \citep{MARINA}), except for the step sizes -- we fine-tune them using a set of powers of two $\{2^i\,|\,i \in [-10, 10]\}$ -- and use the Rand$K$ compressor. We evaluate communication complexity; thus, each plot represents the relation between the norm of a gradient or function value (vertical axis), and the total number of transmitted bits per node (horizontal axis).

\subsection{Gradient setting}
\label{sec:experiments:gradient_oracle}
We consider nonconvex functions 
\[ \scaleboxformula{.68}{$\displaystyle f_i(x) \eqdef \frac{1}{m}\sum_{j=1}^m \left(1 - \frac{1}{1 + \exp(y_{ij} a_{ij}^\top x)}\right)^2$} \] to solve a classification problem.
Here, $a_{ij} \in \R^{d}$ is the feature vector of a sample on the $i$\textsuperscript{th} node, $y_{ij} \in \{-1, 1\}$ is the corresponding label,  and $m$ is the number of samples on the $i$\textsuperscript{th} node. All nodes calculate full gradients. We take the \textit{mushrooms} dataset (dimension $d = 112$, number of samples equals $\num[group-separator={,}]{8124}$) from LIBSVM, randomly split the dataset between $5$ nodes and take $K = 10$ in Rand$K$. One can see in Figure~\ref{fig:mushrooms_grad} that \algname{\algorithmname} converges approximately $2$ times faster. 

\newcommand{\widthfirst}{0.5}
\newcommand{\widthsecond}{0.7}

\begin{figure}[!h]
    \centering
    \includegraphics[width=0.8\linewidth]{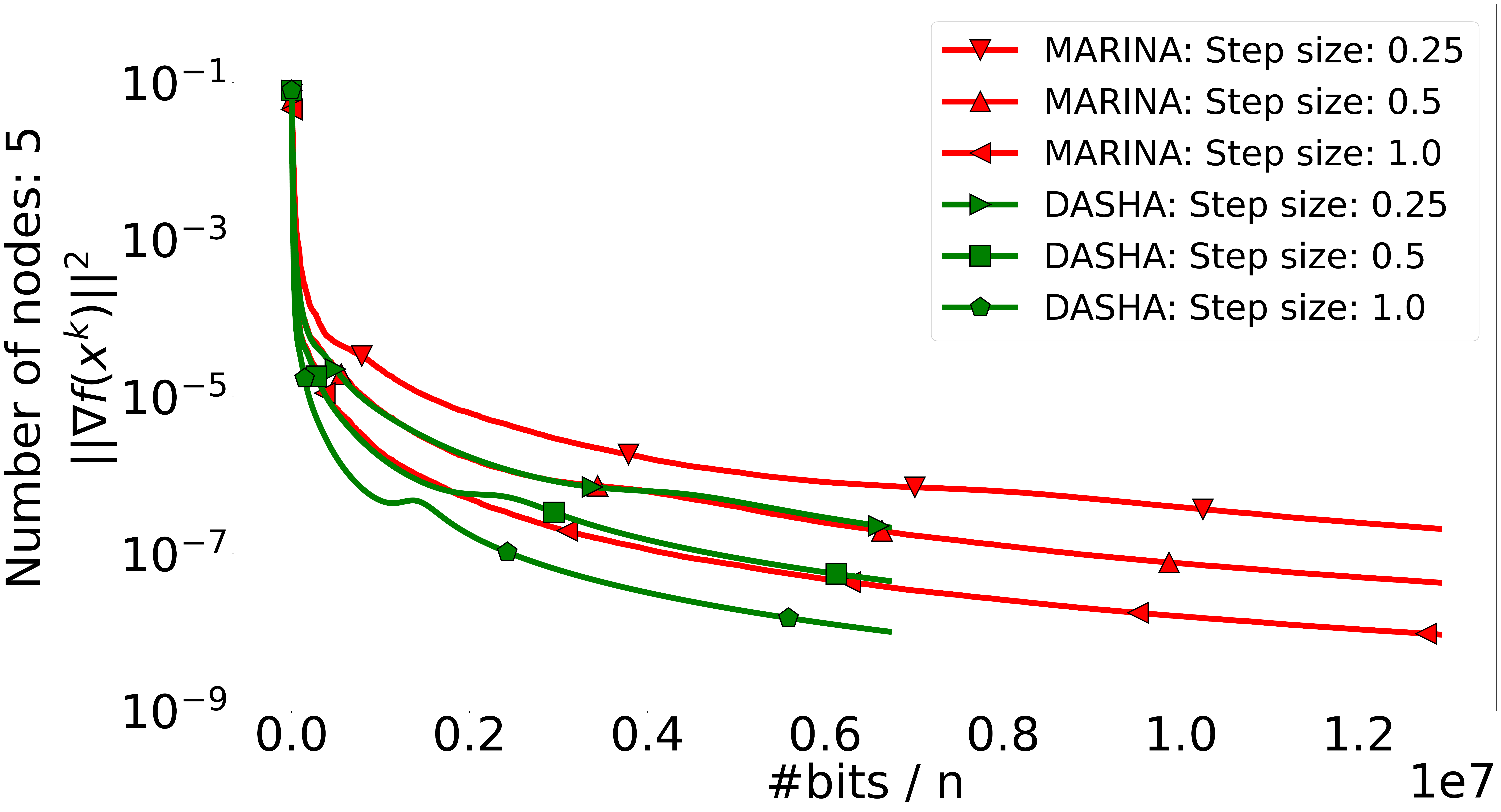}
    \caption{Classification task with the {\em mushrooms} dataset and gradient oracle.}
    \label{fig:mushrooms_grad}
\end{figure}

\newpage
\subsection{Finite-sum setting}
\label{sec:experiments:minibatch_oracle}
Now, we conduct the same experiments as in Section~\ref{sec:experiments:gradient_oracle} with \textit{real-sim} dataset (dimension $d = \num[group-separator={,}]{20958}$, number of samples equals $\num[group-separator={,}]{72309}$) from LIBSVM in the finite-sum setting; moreover, we compare \algname{VR-MARINA} versus \algname{\algorithmname-PAGE} with batch size $B = 1$ in both algorithms. Results in Figure~\ref{fig:real_sim_minibatch} coincide with Table~\ref{table:general_case} -- our new method \algname{\algorithmname-PAGE} converges faster than \algname{MARINA}. When $K = 100$, the improvement is not significant because $\frac{1 + \nicefrac{\omega}{\sqrt{n}}}{\varepsilon}$ dominates $\frac{\sqrt{m}}{\varepsilon \sqrt{n} \BZ}$ (see Table~\ref{table:general_case}), and both algorithms get the same theoretical convergence complexity.

\begin{figure}[!h]
  \centering
  \includegraphics[width=\linewidth]{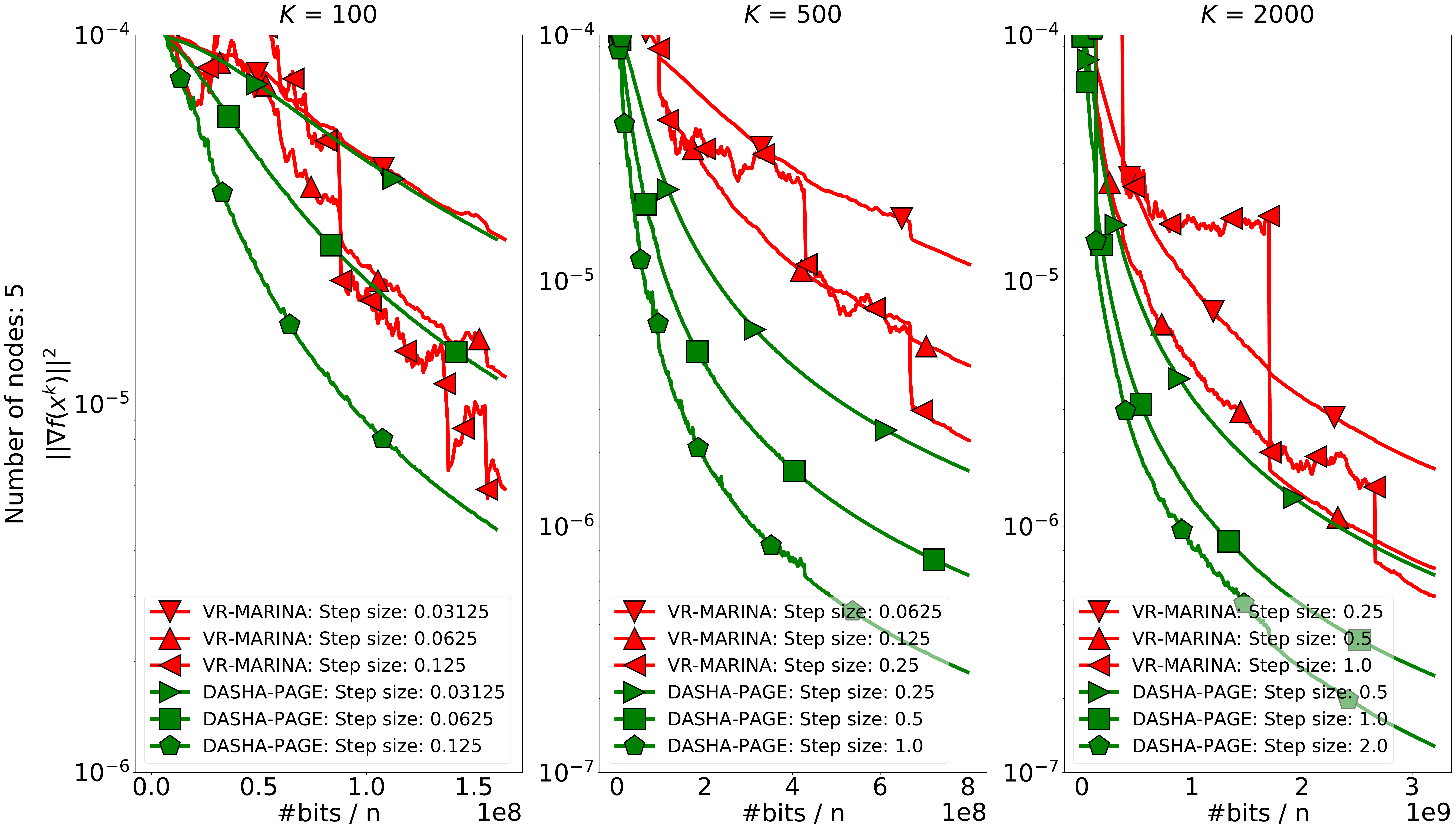}
  \caption{Classification task with the {\em real-sim} dataset and $K \in \{100; 500; 2,000\}$ in Rand$K$ in the finite-sum setting.}
  \label{fig:real_sim_minibatch}
\end{figure}

\newpage
\subsection{Stochastic setting}
In this experiment, we consider the following logistic regression functions with nonconvex regularizer $\{f_i\}_{i=1}^n$ to solve a classification problem:
\[ \scaleboxformula{.99}{$\displaystyle f_i(x_1, x_2) \eqdef {\rm E}_{j \sim [m]}\vast[-\log\left(\frac{\exp\left(a_{ij}^\top x_{y_{ij}}\right)}{\sum_{y \in \{1, 2\}} \exp\left(a_{ij}^\top x_{y}\right)}\right) + \lambda \sum_{y \in \{1, 2\}} \sum_{k = 1}^d \frac{\{x_{y}\}_k^2}{1 + \{x_{y}\}_k^2}\vast],$} \]
where $x_1, x_2 \in \R^{d}$, $\{\cdot\}_k$ is an indexing operation,
$a_{ij} \in \R^{d}$ is a feature of a sample on the $i$\textsuperscript{th} node, $y_{ij} \in \{1, 2\}$ is a corresponding label, $m$ is the number of samples located on the $i$\textsuperscript{th} node, constant $\lambda = 0.001.$ We take batch size $B = 1$ and compare \algname{VR-MARINA (online)}, \algname{\algorithmname-MVR}, and \algname{\algorithmname-SYNC-MVR} that depend on a common ratio $\nicefrac{\sigma^2}{n \varepsilon B}$~\footnote{Indeed, in \algname{\algorithmname-SYNC-MVR} and \algname{MARINA}, the probability $p = \min\{\nicefrac{K}{d}, \nicefrac{n \varepsilon B}{\sigma^2}\}.$ In \algname{\algorithmname-MVR}, the momentum $b = \min\{\nicefrac{K}{d} \sqrt{\nicefrac{n \varepsilon B}{\sigma^2}}, \nicefrac{n \varepsilon B}{\sigma^2}\}.$}. We fix $\nicefrac{\sigma^2}{n \varepsilon B} \in \{10^4, 10^5\}$ and $K \in \{200, 2000\}$ in Rand$K$ compressors. We consider \textit{real-sim} dataset from LIBSVM splitted between $5$ nodes. When we increase $\nicefrac{\sigma^2}{n \varepsilon B}$ from $10^4$ to $10^5$, we implicitly decrease $\varepsilon$ because other parameters are fixed. In Figure~\ref{fig:real_sim_stochastic}, when $\varepsilon$ is small, \algname{\algorithmname-MVR} and \algname{\algorithmname-SYNC-MVR} converge faster than \algname{VR-MARINA (online)}.

\begin{figure}[h]
    \centering
    \includegraphics[width=\linewidth]{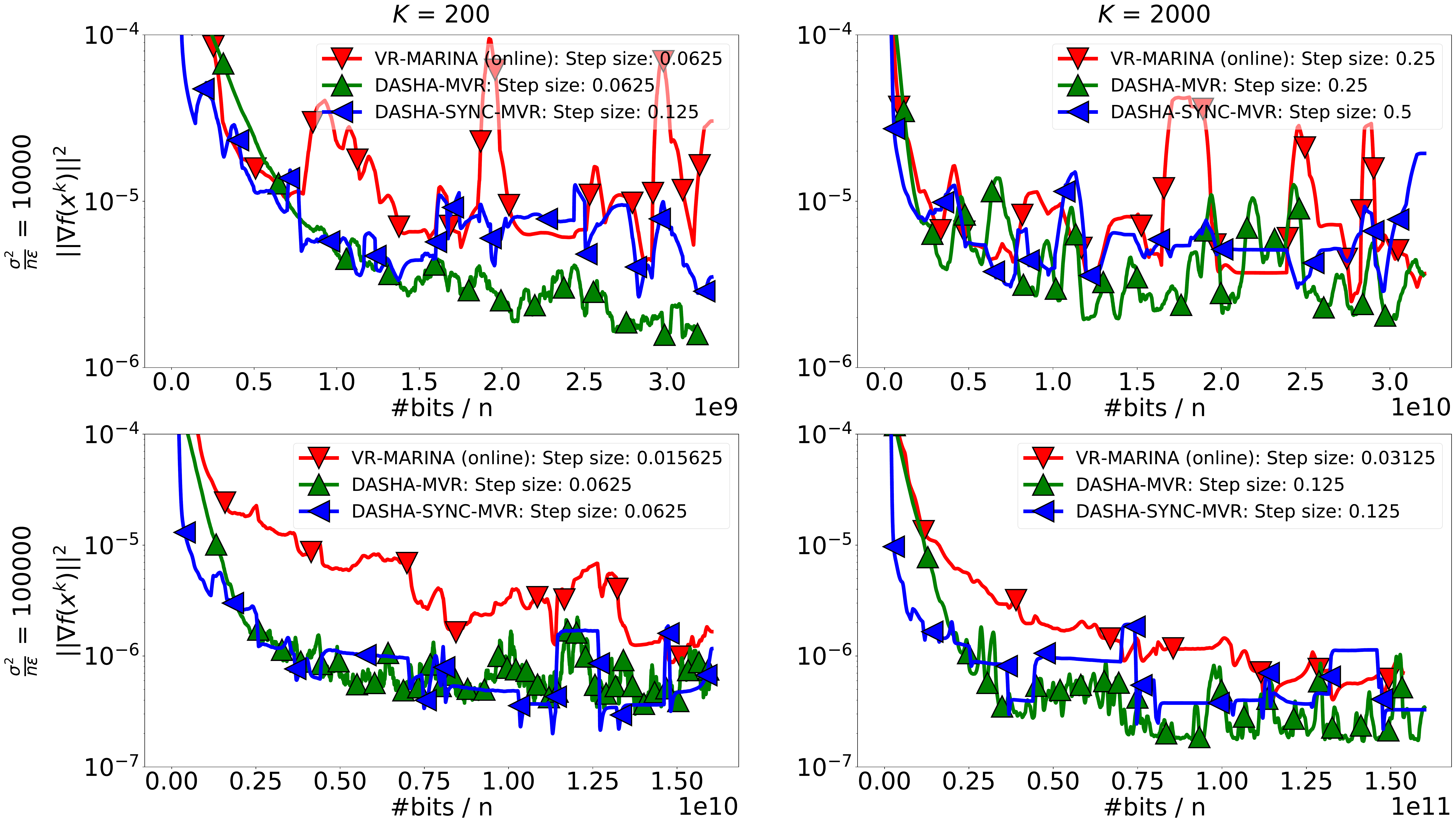}
    \caption{Classification task with the real-sim dataset, $\nicefrac{\sigma^2}{n \varepsilon B} \in \{10^4, 10^5\},$ and $K \in \{200, 2000\}$ in Rand$K$ in the stochastic setting.}
    \label{fig:real_sim_stochastic}
\end{figure}

\newpage
\subsection{Deep neural network training}
Finally, we test our algorithms on an image recognition task, CIFAR10 \citep{krizhevsky2009learning}, with the ResNet-18 \citep{he2016deep} deep neural network (the number of parameters $d \approx 10^7$). We split CIFAR10 among $5$ nodes, and take $K \approx 2 \cdot 10^6$ in Rand$K$. In all methods we fine-tune two parameters: step size $\gamma \in \{0.05, 0.01, 0.005, 0.001\}$ and ratio $\nicefrac{\sigma^2}{n \varepsilon B} \in \{2, 10, 20, 100\}$. Moreover, we trained the neural network with \algname{SGD} without compression as a baseline, with step size $\gamma \in \{1.0, 0.5, 0.1, 0.05, 0.01, 0.001\}.$ All nodes have batch size $B = 25.$ 

Results are provided in Figure~\ref{fig:deep_learning}. We see that \algname{\algorithmname-MVR} converges significantly faster than other algorithms in the terms of communication complexity. Moreover, \algname{\algorithmname-SYNC-MVR} works better than \algname{VR-MARINA (online)} and \algname{SGD}.

\begin{figure}[!h]
  \centering
  \includegraphics[width=0.8\linewidth]{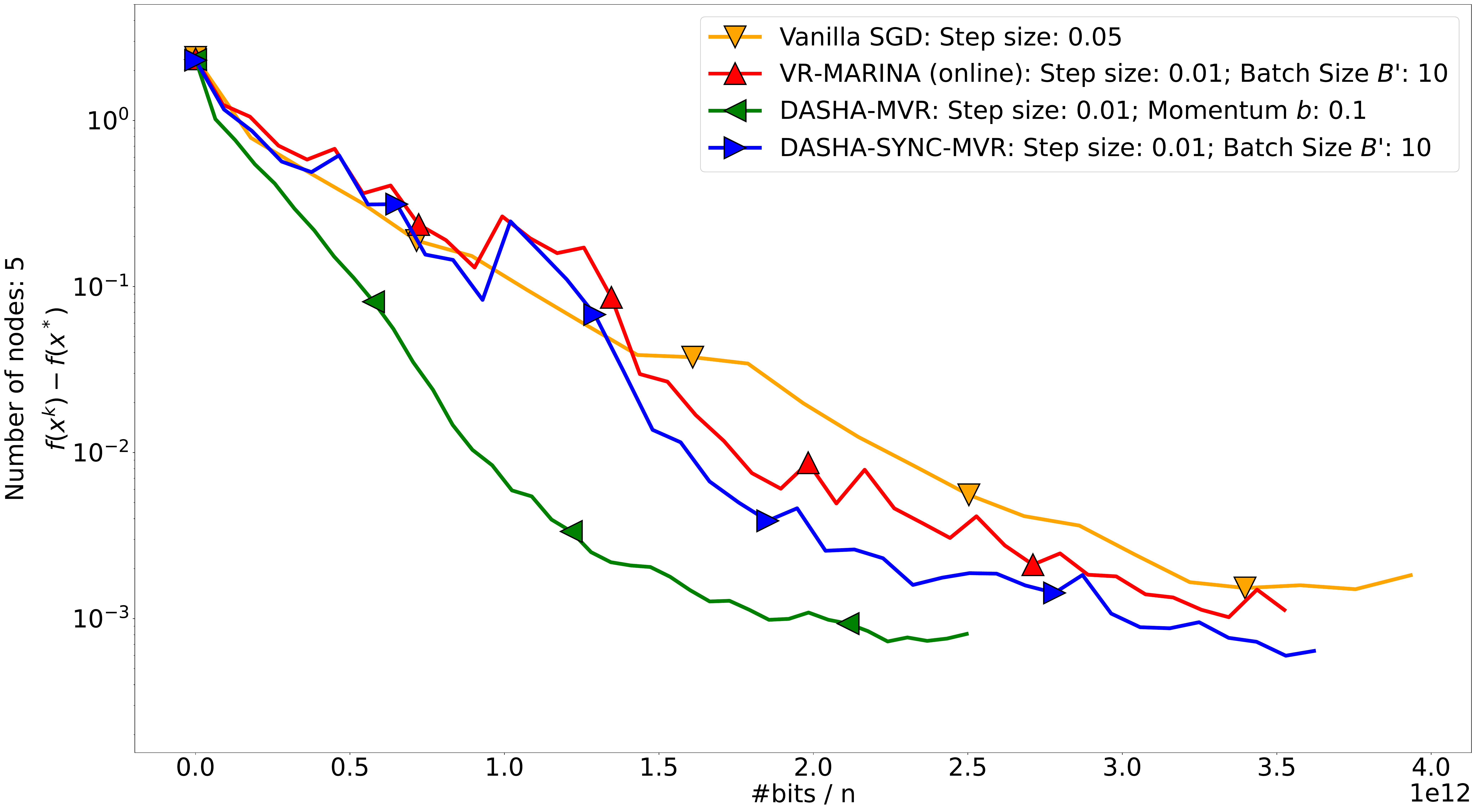}
  \caption{Classification task with CIFAR10 dataset and ResNet-18 deep neural network. Dimension $d 
  \approx 10^7$ and $K \approx 2 \cdot 10^6$ in Rand$K.$}
  \label{fig:deep_learning}
\end{figure}

\section{Experiments Details}
\label{sec:exper_details}
The code was written in Python 3.6.8 using PyTorch 1.9 \citep{paszke2019pytorch}. A distributed environment was emulated on a machine with Intel(R) Xeon(R) Gold 6226R CPU @ 2.90GHz and 64 cores. Deep learning experiments were conducted with NVIDIA A100 GPU with 40GB memory (each deep learning experiment uses at most 5GB of this memory).

When the number of nodes $n$ does not divide the number of samples $N$ in a dataset, we randomly ignore $N\ \mathrm{mod} \ n$ samples from a dataset (up to 4 when $n = 5$).

\newpage
\section{Description of \algname{\algorithmname-SYNC-MVR}}
\label{sec:main_algorithm_mvr_sync}
In this section, we provide a description of \algname{\algorithmname-SYNC-MVR} (see Algorithm~\ref{alg:main_algorithm_mvr_sync}). This algorithm is closely related to \algname{\algorithmname-MVR} (Algorithm~\ref{alg:main_algorithm}), but \algname{\algorithmname-SYNC-MVR} synchronizes all nodes with some probability $p$. This synchronization procedure enabled us to fix the convergence rate suboptimality of \algname{\algorithmname-MVR} w.r.t.\,$\omega$.

\begin{algorithm}[h]
    \caption{\algname{\algorithmname-SYNC-MVR}}
    \label{alg:main_algorithm_mvr_sync}
    \begin{algorithmic}[1]
    \STATE \textbf{Input:} starting point $x^0 \in R^d$, stepsize $\gamma > 0$, momentum $a \in (0, 1]$, probability $p \in (0, 1]$, batch size $B'$, number of iterations $T \geq 1$.
    \STATE Initialize $g^0_i$, $h^0_i$ on the nodes and $g^0 = \frac{1}{n}\sum_{i=1}^n g^0_i$ on the server
    \FOR{$t = 0, 1, \dots, T - 1$}
    \STATE $x^{t+1} = x^t - \gamma g^t$
    \STATE $c^{t+1} = 
    \begin{cases}
        1, \textnormal{with probability $p$}, \\
        0, \textnormal{with probability $1 - p$}
    \end{cases}$
    \STATE Broadcast $x^{t+1}$ to all nodes
    \FOR{$i = 1, \dots, n$ in parallel}
    \IF{$c^{t+1} = 1$}
        \STATE $h^{t+1}_i = \frac{1}{B'} \sum_{k=1}^{B'} \nabla f_i(x^{t+1};\xi_{ik}^{t+1})$ \alglinelabel{alg:main_algorithm_mvr_sync:h_mega_batch_defined}
        \STATE $m^{t+1}_i = g^{t+1}_i = h^{t+1}_i$ 
    \ELSE
        \STATE $h^{t+1}_i = \frac{1}{B} \sum_{j=1}^{B}\nabla f_i(x^{t+1};\xi^{t+1}_{ij}) + h^{t}_i - \frac{1}{B} \sum_{j=1}^{B}\nabla f_i(x^{t};\xi^{t+1}_{ij})$ \alglinelabel{alg:main_algorithm_mvr_sync:h_batch_defined} 
        \STATE $m^{t+1}_i = \cC_i\left(h^{t+1}_i - h^{t}_i - a \left(g^t_i - h^{t}_i\right)\right)$ 
        \STATE $g^{t+1}_i = g^t_i + m^{t+1}_i$
    \ENDIF
    \STATE Send $m^{t+1}_i$ to the server
    \ENDFOR
    \IF{$c^{t+1} = 1$}
        \STATE $g^{t+1} = \frac{1}{n} \sum_{i=1}^{n} m^{t+1}_i$
    \ELSE
        \STATE $g^{t+1} = g^t + \frac{1}{n} \sum_{i=1}^{n} m^{t+1}_i$
    \ENDIF
    \ENDFOR
    \STATE \textbf{Output:} $\hat{x}^T$ chosen uniformly at random from $\{x^t\}_{k=0}^{T-1}$
    \end{algorithmic}
\end{algorithm}

\section{Partial Participation}
\label{sec:partial_participation}
A partial participation mechanism, important for federated learning applications, can be easily implemented in \algname{\algorithmname}. Let us assume that the $i$\textsuperscript{th} node either participates in a communication round with  probability $p',$ or sends nothing. From the view of unbiased compressors, it can mean that instead of using a compressor $\cC$, we have use the following new stochastic mapping $\cC_{p'}:$
\begin{eqnarray}
    \label{eq:partial_participation_compressor}
    \cC_{p'}(x) = 
    \begin{cases}
        \frac{1}{p'}\cC(x), \quad \textnormal{with probability $p'$}, \\
        \hspace{0.3cm}0, \quad \hspace{0.52cm} \textnormal{with probability $1 - p'$}.
    \end{cases}
\end{eqnarray}
The following simple result states that the new mapping $\cC_{p'}$ is also an unbiased compressor, which means that our theory applies to this choice as well.
\begin{restatable}{theorem}{PARTIALPARTICIPATIONCOMPRESSOR}
    \label{theorem:partial_participation_compressor}
    If $\cC \in \mathbb{U}(\omega),$ then $\cC_{p'} \in \mathbb{U}\left(\frac{\omega + 1}{p'} - 1\right).$
\end{restatable}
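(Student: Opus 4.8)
The plan is to verify directly that $\cC_{p'}$ satisfies the two defining properties of an unbiased compressor in \eqref{eq:compressor}, with the claimed parameter $\omega' \eqdef \frac{\omega + 1}{p'} - 1$. The key structural observation is that $\cC_{p'}$ carries two independent sources of randomness: the Bernoulli participation coin (which equals $1$ with probability $p'$) and the internal randomness of the base compressor $\cC$. Throughout I would use the tower rule, conditioning on the coin first.

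For \emph{unbiasedness}, conditioning on the coin and using the independence of the two randomness sources, I would compute
\begin{equation*}
\Exp{\cC_{p'}(x)} = p' \cdot \Exp{\tfrac{1}{p'}\cC(x)} + (1 - p') \cdot 0 = \Exp{\cC(x)} = x,
\end{equation*}
where the last equality is the unbiasedness of $\cC \in \mathbb{U}(\omega)$.

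For the \emph{variance bound}, the cleanest route is to bound the raw second moment first and then convert. Since $\cC$ is unbiased, the identity $\Exp{\norm{Z - x}^2} = \Exp{\norm{Z}^2} - \norm{x}^2$ (valid for any $Z$ with $\Exp{Z} = x$) turns the hypothesis $\Exp{\norm{\cC(x) - x}^2} \leq \omega \norm{x}^2$ into $\Exp{\norm{\cC(x)}^2} \leq (\omega + 1)\norm{x}^2$. Conditioning again on the coin gives
\begin{equation*}
\Exp{\norm{\cC_{p'}(x)}^2} = p' \cdot \frac{1}{(p')^2}\Exp{\norm{\cC(x)}^2} = \frac{1}{p'}\Exp{\norm{\cC(x)}^2} \leq \frac{\omega + 1}{p'}\norm{x}^2.
\end{equation*}
Applying the same second-moment identity to $Z = \cC_{p'}(x)$, which I just showed is unbiased, then yields
\begin{equation*}
\Exp{\norm{\cC_{p'}(x) - x}^2} = \Exp{\norm{\cC_{p'}(x)}^2} - \norm{x}^2 \leq \left(\frac{\omega + 1}{p'} - 1\right)\norm{x}^2,
\end{equation*}
which is precisely the required bound with parameter $\omega'$.

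The computation is short and there is no genuine obstacle; the only points demanding care are the bookkeeping of the two independent randomness sources, so that the $\tfrac{1}{p'}$ scaling interacts correctly with the probability $p'$ and produces the net $\tfrac{1}{p'}$ inflation of the second moment, and the repeated use of the bias/variance identity to pass between $\Exp{\norm{\cdot - x}^2}$ and $\Exp{\norm{\cdot}^2}$. I would also remark that $\omega' \geq \omega \geq 0$ since $p' \leq 1$, so the resulting parameter is admissible.
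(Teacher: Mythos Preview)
Your proof is correct and follows essentially the same route as the paper: both arguments condition on the Bernoulli coin, reduce the variance bound to the second-moment inequality $\Exp{\norm{\cC(x)}^2}\le(\omega+1)\norm{x}^2$, scale by $1/p'$, and convert back via the bias--variance identity (the paper expands the square directly, you invoke the identity twice, but the computation is the same).
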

In the case of partial participation, all theorems from Section~\ref{sec:theoretical_convergence_rates} will hold with $\omega$ replaced by $ \nicefrac{(\omega + 1)}{p'} - 1.$

\section{Auxiliary Facts}
In this section, we recall well--known auxiliary facts that we use in the proofs.
\begin{enumerate}
    \item 
        For all $x, y \in \R^d,$ we have
        \begin{align}
            \norm{x + y}^2 \leq 2\norm{x}^2 + 2\norm{y}^2
            \label{auxiliary:jensen_inequality}
        \end{align}
    \item
        Let us take a \textit{random vector} $\xi \in \R^d$, then
        \begin{align}
            \Exp{\norm{\xi}^2} = \Exp{\norm{\xi - \Exp{\xi}}^2} + \norm{\Exp{\xi}}^2.
            \label{auxiliary:variance_decomposition}
        \end{align}
\end{enumerate}

\section{Compressors Facts}
\begin{definition}
    \label{def:rand_k}
    Let us take a random subset $S$ from $[d],$ $|S| = K,$ $K \in [d].$ We say that a stochastic mapping $\cC\,:\, \R^d \rightarrow \R^d$ is Rand$K$ if
    $$\cC(x) = \frac{d}{K} \sum_{j \in S} x_j e_j,$$ where $\{e_i\}_{i=1}^d$ is the standard unit basis.
\end{definition}

Informally, Rand$K$ randomly keeps $K$ coordinates and zeroes out the other.

\begin{theorem}
    \label{theorem:rand_k}
    If $\cC$ is Rand$K$, then $\cC \in \mathbb{U}\left(\frac{d}{k} - 1\right).$
\end{theorem}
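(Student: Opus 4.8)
The plan is to verify directly the two defining properties of the class $\mathbb{U}(\omega)$ from \eqref{eq:compressor} with $\omega = \frac{d}{K} - 1$, exploiting the fact that Rand$K$ acts diagonally on the coordinates of $x$. The single computation everything rests on is the marginal inclusion probability of a fixed coordinate: since $S$ is drawn uniformly among the $\binom{d}{K}$ subsets of $[d]$ of size $K$, by a counting argument (or by symmetry) $\Pr[j \in S] = \binom{d-1}{K-1}/\binom{d}{K} = \frac{K}{d}$ for every $j \in [d]$. This is the only genuinely probabilistic ingredient; the rest is linear algebra.

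First I would establish unbiasedness. Writing $\cC(x) = \frac{d}{K}\sum_{j\in S} x_j e_j$ and taking expectations coordinate by coordinate, the $j$\textsuperscript{th} coordinate of $\Exp{\cC(x)}$ equals $\frac{d}{K}\, x_j\, \Pr[j \in S] = \frac{d}{K}\cdot \frac{K}{d}\, x_j = x_j$, so $\Exp{\cC(x)} = x$, giving the first condition in \eqref{eq:compressor}.

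Next I would control the second moment. Because $\{e_j\}_{j=1}^d$ is orthonormal, $\sqnorm{\cC(x)} = \left(\frac{d}{K}\right)^2 \sum_{j\in S} x_j^2$, and introducing the indicator of the event $\{j \in S\}$ together with linearity of expectation gives $\Exp{\sqnorm{\cC(x)}} = \left(\frac{d}{K}\right)^2 \sum_{j=1}^d x_j^2\, \Pr[j\in S] = \frac{d}{K}\sqnorm{x}$. I would then apply the variance decomposition \eqref{auxiliary:variance_decomposition} to $\xi = \cC(x)$: since $\Exp{\cC(x)} = x$, we get $\Exp{\sqnorm{\cC(x) - x}} = \Exp{\sqnorm{\cC(x)}} - \sqnorm{x} = \left(\frac{d}{K} - 1\right)\sqnorm{x}$, which is exactly the second condition of \eqref{eq:compressor} (in fact with equality). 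Hence $\cC \in \mathbb{U}\!\left(\frac{d}{K} - 1\right)$.

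There is no serious obstacle here; the only point requiring a moment of care is the marginal probability $\Pr[j \in S] = \frac{K}{d}$, which must be computed correctly for a size-$K$ subset (rather than, say, independent coordinate selection), as it enters linearly in the unbiasedness step and through the same factor in the second-moment step. Everything else is a direct consequence of the orthonormality of the basis and the variance identity \eqref{auxiliary:variance_decomposition}.
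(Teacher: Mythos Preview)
Your proof is correct and is the standard direct verification; the paper itself does not supply a proof but defers to \citep{beznosikov2020biased}, so there is nothing to compare against beyond noting that your argument is exactly the elementary one that reference would contain.
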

See the proof in \citep{beznosikov2020biased}.

In the next theorem, we show that $\cC_{p'}(x)$ from \eqref{eq:partial_participation_compressor} is an unbiased compressor.
\PARTIALPARTICIPATIONCOMPRESSOR*
\begin{proof}
    First, we proof the unbiasedness:
    \begin{align*}
        \Exp{\cC_{p'}(x)} = p' \left(\frac{1}{p'}\cC(x)\right) + (1 - p') 0 = \cC(x), \quad \forall x \in \R^d.
    \end{align*}
    Next, we get a bound for the variance:
    \begin{eqnarray*}
        \Exp{\norm{\cC_{p'}(x) - x}^2} &=& p'\Exp{\norm{\frac{1}{p'}\cC(x) - x}^2} + (1 - p')\norm{x}^2 \\
        &=& p'\Exp{\left(\frac{1}{p'^2}\norm{\cC(x)}^2 - 2 \inp{\frac{1}{p'}\cC(x)}{x} + \norm{x}^2\right)} + (1 - p')\norm{x}^2\\
        &=& \frac{1}{p'}\Exp{\norm{\cC(x)}^2} - \left(2 - p'\right)\norm{x}^2 + (1 - p')\norm{x}^2\\
        &=& \frac{1}{p'}\Exp{\norm{\cC(x)}^2} - \norm{x}^2.
    \end{eqnarray*}
    From $\cC \in \mathbb{U}(\omega),$ we have
    \begin{eqnarray*}
        \Exp{\norm{\cC_{p'}(x) - x}^2} &\leq& \frac{\omega + 1}{p'}\norm{x}^2 - \norm{x}^2 = \left(\frac{\omega + 1}{p'} - 1\right)\norm{x}^2.
    \end{eqnarray*}
\end{proof}

\section{Polyak-\L ojasiewicz Condition}

In this section, we discuss our convergence rates under the (Polyak-\L ojasiewicz) P\L-condition:

\begin{assumption}
    \label{ass:pl_condition}
    A functions $f$ satisfy (Polyak-\L ojasiewicz) P\L-condition:
    \begin{align}
        \label{eq:pl_condition}
        \norm{\nabla f(x)}^2 \geq 2\mu(f(x) - f^*), \quad \forall x \in \R,
    \end{align}
    where $f^* = \inf_{x \in \R^d} f(x) > -\infty.$
\end{assumption}

Here we use a different notion of an $\varepsilon$-solution: it is a (random) point $\widehat{x}$, such that $\Exp{f(\widehat{x})} - f^* \leq \varepsilon.$

Under this assumption, Algorithm~\ref{alg:main_algorithm} achieves a linear convergence rate $\cO\left(\ln\left(\nicefrac{1}{\varepsilon}\right)\right)$ instead of a sublinear convergence rate $\cO\left(\nicefrac{1}{\varepsilon}\right)$ in the gradient and finite-sum settings. Moreover, in the stochastic setting, Algorithms~\ref{alg:main_algorithm} and ~\ref{alg:main_algorithm_mvr_sync} also improve dependence on $\varepsilon$. Related Theorems \ref{theorem:gradient_oracle_pl}, \ref{theorem:page_pl}, \ref{theorem:stochastic_pl} and \ref{theorem:sync_stochastic_pl} are stated in Appendix~\ref{sec:missing_proofs}. Note that in the finite-sum and stochastic settings, Theorems~\ref{theorem:page_pl} and \ref{theorem:sync_stochastic_pl} provide new SOTA theoretical convergence rates (see Table~\ref{table:pl_case}).

\section{Theorems with Proofs}
\label{sec:missing_proofs}
\begin{lemma}
    \label{lemma:page_lemma}
    Suppose that Assumption~\ref{ass:lipschitz_constant} holds and let $x^{t+1} = x^{t} - \gamma g^{t}$. Then for any $g^{t} \in \R^d$ and $\gamma > 0$, we have
    \begin{eqnarray}
      \label{eq:page_lemma}
      f(x^{t + 1}) \leq f(x^t) - \frac{\gamma}{2}\norm{\nabla f(x^t)}^2 - \left(\frac{1}{2\gamma} - \frac{L}{2}\right)
      \norm{x^{t+1} - x^t}^2 + \frac{\gamma}{2}\norm{g^{t} - \nabla f(x^t)}^2.
    \end{eqnarray}
\end{lemma}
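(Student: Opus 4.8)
The plan is to apply the standard quadratic upper bound for $L$-smooth functions and then rewrite the linear term using the update rule, so that the gradient norm and the ``variance'' term $\norm{g^t - \nabla f(x^t)}^2$ both appear with the right coefficients. First I would invoke Assumption~\ref{ass:lipschitz_constant}, which yields the descent inequality
\[
f(x^{t+1}) \leq f(x^t) + \inp{\nabla f(x^t)}{x^{t+1} - x^t} + \frac{L}{2}\norm{x^{t+1} - x^t}^2.
\]

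The key step is to decompose the inner product so that it produces a $-\frac{\gamma}{2}\norm{\nabla f(x^t)}^2$ term. Using the update $x^{t+1} - x^t = -\gamma g^t$ (so that $\norm{x^{t+1}-x^t}^2 = \gamma^2 \norm{g^t}^2$), I would verify the polarization-type identity
\[
\inp{\nabla f(x^t)}{x^{t+1} - x^t} = -\frac{\gamma}{2}\norm{\nabla f(x^t)}^2 - \frac{1}{2\gamma}\norm{x^{t+1} - x^t}^2 + \frac{\gamma}{2}\norm{g^t - \nabla f(x^t)}^2.
\]
To check it, I would expand $\frac{\gamma}{2}\norm{g^t - \nabla f(x^t)}^2 = \frac{\gamma}{2}\norm{g^t}^2 - \gamma\inp{\nabla f(x^t)}{g^t} + \frac{\gamma}{2}\norm{\nabla f(x^t)}^2$ and use $\frac{1}{2\gamma}\norm{x^{t+1}-x^t}^2 = \frac{\gamma}{2}\norm{g^t}^2$; both sides then collapse to $-\gamma\inp{\nabla f(x^t)}{g^t} = \inp{\nabla f(x^t)}{-\gamma g^t}$, which is precisely $\inp{\nabla f(x^t)}{x^{t+1}-x^t}$.

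Finally, I would substitute this identity back into the descent inequality and merge the two terms proportional to $\norm{x^{t+1}-x^t}^2$, namely $-\frac{1}{2\gamma}$ from the identity and $+\frac{L}{2}$ from smoothness, into the single factor $-\left(\frac{1}{2\gamma} - \frac{L}{2}\right)$. This reproduces the claimed bound verbatim. I do not expect a genuine obstacle here: the statement is a purely deterministic descent lemma valid for an \emph{arbitrary} $g^t$, with no probabilistic content, so the only care needed is the sign bookkeeping in the polarization identity, and everything else is mechanical algebra.
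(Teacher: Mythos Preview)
Your proof is correct and is precisely the standard argument for this descent lemma; the paper itself does not reprove it but simply defers to \citep{PAGE}, where the same smoothness-plus-polarization derivation appears.
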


The proof of Lemma \ref{lemma:page_lemma} is provided in \citep{PAGE}.

There are two different sources of randomness in Algorithm~\ref{alg:main_algorithm}: the first one from vectors $\{h_i^{t+1}\}_{i=1}^n$ and the second one from compressors $\{\cC_i\}_{i=1}^n$. In this section, we define $\ExpSub{h}{\cdot}$ and $\ExpSub{\cC}{\cdot}$ to be conditional expectations w.r.t.\,$\{h_i^{t+1}\}_{i=1}^n$ and $\{\cC_i\}_{i=1}^n$, accordingly, conditioned on all previous randomness.

\begin{lemma}
    Suppose that Assumption~\ref{ass:compressors} holds and let us consider sequences $g^{t+1}_i$ and $h^{t+1}_i$ from Algorithm~\ref{alg:main_algorithm}, then
    \begin{align}
        \label{eq:compressor_global_error}
        \ExpSub{\cC}{\norm{g^{t+1} - h^{t+1}}^2} 
        \leq \frac{2\omega}{n^2} \sum_{i=1}^n\norm{h^{t+1}_i - h^{t}_i}^2 + \frac{2 a^2 \omega}{n^2} \sum_{i=1}^n\norm{g^t_i - h^{t}_i}^2 + \left(1 - a\right)^2\norm{g^t - h^{t}}^2,
    \end{align}
    and
    \begin{align}
        \label{eq:compressor_local_error}
        \ExpSub{\cC}{\norm{g^{t+1}_i - h^{t+1}_i}^2} \leq 2 \omega \norm{h^{t+1}_i - h^{t}_i}^2 + \left(2 a^2 \omega + \left(1 - a\right)^2\right) \norm{g^t_i - h^{t}_i}^2, \quad \forall i \in [n].
    \end{align}
\end{lemma}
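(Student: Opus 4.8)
The plan is to reduce both inequalities to a single algebraic identity for the per-node error and then apply the variance decomposition \eqref{auxiliary:variance_decomposition} twice. First I would introduce the shorthand $v_i \eqdef h^{t+1}_i - h^{t}_i - a(g^t_i - h^{t}_i)$, so that by the definitions in Algorithm~\ref{alg:main_algorithm} we have $m^{t+1}_i = \cC_i(v_i)$ and $g^{t+1}_i = g^t_i + \cC_i(v_i)$. Adding and subtracting $v_i$ inside the compressed term, I would write $g^{t+1}_i - h^{t+1}_i = \left(\cC_i(v_i) - v_i\right) + \left(g^t_i + v_i - h^{t+1}_i\right)$ and then simplify the last bracket. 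The computation $g^t_i + v_i - h^{t+1}_i = g^t_i - h^t_i - a(g^t_i - h^t_i)$ collapses it to the deterministic contraction $(1-a)(g^t_i - h^t_i)$, yielding the key identity $g^{t+1}_i - h^{t+1}_i = \left(\cC_i(v_i) - v_i\right) + (1-a)(g^t_i - h^t_i)$.

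For the local bound \eqref{eq:compressor_local_error}, I would condition on all past randomness and apply \eqref{auxiliary:variance_decomposition} to the right-hand side of this identity. Since $\ExpSub{\cC}{\cC_i(v_i) - v_i} = 0$ by unbiasedness, the conditional mean is exactly $(1-a)(g^t_i - h^t_i)$, and the decomposition gives $\ExpSub{\cC}{\norm{g^{t+1}_i - h^{t+1}_i}^2} = \ExpSub{\cC}{\norm{\cC_i(v_i) - v_i}^2} + (1-a)^2 \norm{g^t_i - h^t_i}^2$. I would then bound the first term by $\omega \norm{v_i}^2$ using \eqref{eq:compressor}, and finally use \eqref{auxiliary:jensen_inequality} to get $\norm{v_i}^2 \leq 2\norm{h^{t+1}_i - h^t_i}^2 + 2a^2\norm{g^t_i - h^t_i}^2$. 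Collecting the coefficients of $\norm{g^t_i - h^t_i}^2$ produces the factor $2a^2\omega + (1-a)^2$ and establishes \eqref{eq:compressor_local_error}.

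For the global bound \eqref{eq:compressor_global_error}, I would first observe that the server iterate satisfies $g^t = \frac{1}{n}\sum_{i=1}^n g^t_i$ for all $t$ (by induction from the initialization $g^0 = \frac{1}{n}\sum_i g^0_i$ and the update $g^{t+1} = g^t + \frac{1}{n}\sum_i m^{t+1}_i$), so that averaging the key identity over $i$ gives $g^{t+1} - h^{t+1} = \frac{1}{n}\sum_{i=1}^n \left(\cC_i(v_i) - v_i\right) + (1-a)(g^t - h^t)$. Applying \eqref{auxiliary:variance_decomposition} once more, the zero-mean part is $\frac{1}{n}\sum_i (\cC_i(v_i) - v_i)$ and the deterministic part is $(1-a)(g^t - h^t)$. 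Here the essential ingredient is the independence of the compressors in Assumption~\ref{ass:compressors}: it kills the cross terms, so $\ExpSub{\cC}{\norm{\frac{1}{n}\sum_i(\cC_i(v_i) - v_i)}^2} = \frac{1}{n^2}\sum_i \ExpSub{\cC}{\norm{\cC_i(v_i) - v_i}^2}$. Reusing the per-node estimate $\ExpSub{\cC}{\norm{\cC_i(v_i) - v_i}^2} \leq 2\omega\norm{h^{t+1}_i - h^t_i}^2 + 2a^2\omega\norm{g^t_i - h^t_i}^2$ then gives \eqref{eq:compressor_global_error}.

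I do not expect a genuinely hard obstacle here; the proof is essentially a careful bookkeeping exercise. The one step deserving attention is the simplification of the deterministic bracket to the contraction $(1-a)(g^t_i - h^t_i)$, which is what makes the whole recursion work, and the use of independence to obtain the $\frac{1}{n^2}$ scaling (rather than $\frac{1}{n}$) in the global estimate — this improved scaling is precisely what drives the $\nicefrac{\omega}{\sqrt{n}}$ dependence in the final complexities.
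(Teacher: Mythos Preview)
Your proposal is correct and follows essentially the same approach as the paper: both proofs reduce to the identity $g^{t+1}_i - h^{t+1}_i = (\cC_i(v_i)-v_i) + (1-a)(g^t_i-h^t_i)$, apply the variance decomposition \eqref{auxiliary:variance_decomposition} with the unbiasedness of $\cC_i$, use \eqref{eq:compressor} to bound the compressor variance by $\omega\|v_i\|^2$, and finish with \eqref{auxiliary:jensen_inequality}; the global bound additionally uses the independence in Assumption~\ref{ass:compressors} to obtain the $1/n^2$ scaling. The only cosmetic difference is that the paper treats the global estimate first and writes the variance decomposition directly on the expanded expression rather than isolating the identity beforehand.
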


\begin{proof}
    First, we estimate $\ExpSub{\cC}{\norm{g^{t+1} - h^{t+1}}^2}$:
    \begin{align*}
        &\ExpSub{\cC}{\norm{g^{t+1} - h^{t+1}}^2} \\
        &=\ExpSub{\cC}{\norm{g^t + \frac{1}{n}\sum_{i=1}^n \cC_i\left(h^{t+1}_i - h^{t}_i - a \left(g^t_i - h^{t}_i\right)\right) - h^{t+1}}^2} \\
        &\overset{\eqref{eq:compressor},\eqref{auxiliary:variance_decomposition}}{=}\ExpSub{\cC}{\norm{\frac{1}{n}\sum_{i=1}^n \cC_i\left(h^{t+1}_i - h^{t}_i - a \left(g^t_i - h^{t}_i\right)\right) - \frac{1}{n}\sum_{i=1}^n \left( h^{t+1}_i - h^{t}_i - a \left(g^t_i - h^{t}_i\right)\right)}^2} \\
        &\quad + \left(1 - a\right)^2\norm{g^t - h^{t}}^2.
    \end{align*}
    Using the independence of compressors and \eqref{eq:compressor}, we get
    \begin{align*}
        &\ExpSub{\cC}{\norm{g^{t+1} - h^{t+1}}^2} \\
        &=\frac{1}{n^2} \sum_{i=1}^n\ExpSub{\cC}{\norm{\cC_i\left(h^{t+1}_i - h^{t}_i - a \left(g^t_i - h^{t}_i\right)\right) - \left(h^{t+1}_i - h^{t}_i - a \left(g^t_i - h^{t}_i\right)\right)}^2} \\
        &\quad + \left(1 - a\right)^2\norm{g^t - h^{t}}^2\\
        &\leq \frac{\omega}{n^2} \sum_{i=1}^n\norm{h^{t+1}_i - h^{t}_i - a \left(g^t_i - h^{t}_i\right)}^2 + \left(1 - a\right)^2\norm{g^t - h^{t}}^2 \\
        &\leq \frac{2\omega}{n^2} \sum_{i=1}^n\norm{h^{t+1}_i - h^{t}_i}^2 + \frac{2 a^2 \omega}{n^2} \sum_{i=1}^n\norm{g^t_i - h^{t}_i}^2 + \left(1 - a\right)^2\norm{g^t - h^{t}}^2.
    \end{align*}
    Analogously, we can get the bound for $\ExpSub{\cC}{\norm{g^{t+1}_i - h^{t+1}_i}^2}$:
    \begin{align*}
        &\ExpSub{\cC}{\norm{g^{t+1}_i - h^{t+1}_i}^2} \\
        &=\ExpSub{\cC}{\norm{g^t_i + \cC_i\left(h^{t+1}_i - h^{t}_i - a \left(g^t_i - h^{t}_i\right)\right) - h^{t+1}_i}^2} \\
        &=\ExpSub{\cC}{\norm{\cC_i\left(h^{t+1}_i - h^{t}_i - a \left(g^t_i - h^{t}_i\right)\right) - \left( h^{t+1}_i - h^{t}_i - a \left(g^t_i - h^{t}_i\right)\right)}^2} \\
        &\quad + \left(1 - a\right)^2\norm{g^t_i - h^{t}_i}^2 \\
        &\leq \omega \norm{h^{t+1}_i - h^{t}_i - a \left(g^t_i - h^{t}_i\right)}^2 + \left(1 - a\right)^2\norm{g^t_i - h^{t}_i}^2 \\
        &\leq 2 \omega \norm{h^{t+1}_i - h^{t}_i}^2 + 2 a^2 \omega \norm{g^t_i - h^{t}_i}^2 + \left(1 - a\right)^2\norm{g^t_i - h^{t}_i}^2 \\
        &= 2 \omega \norm{h^{t+1}_i - h^{t}_i}^2 + \left(2 a^2 \omega + \left(1 - a\right)^2\right) \norm{g^t_i - h^{t}_i}^2.
    \end{align*}
\end{proof}

\begin{lemma}
    \label{lemma:main_lemma}
    Suppose that Assumptions \ref{ass:lipschitz_constant} and \ref{ass:compressors} hold and let us take $a = 1 / \left(2 \omega + 1\right),$ then
    \begin{align*}
        &\Exp{f(x^{t + 1})} + \gamma \left(2 \omega + 1\right) \Exp{\norm{g^{t+1} - h^{t+1}}^2} + \frac{2 \gamma \omega}{n} \Exp{\frac{1}{n}\sum_{i=1}^n\norm{g^{t+1}_i - h^{t+1}_i}^2}\\
        &\leq \Exp{f(x^t) - \frac{\gamma}{2}\norm{\nabla f(x^t)}^2 - \left(\frac{1}{2\gamma} - \frac{L}{2}\right)
        \norm{x^{t+1} - x^t}^2 + \gamma \norm{h^{t} - \nabla f(x^t)}^2}\\
        &\quad + \gamma \left(2 \omega + 1\right) \Exp{\norm{g^{t} - h^t}^2} + \frac{2 \gamma \omega}{n} \Exp{\frac{1}{n} \sum_{i=1}^n\norm{g^t_i - h^{t}_i}^2} + \frac{8 \gamma \omega \left(2 \omega + 1\right)}{n}\Exp{\frac{1}{n} \sum_{i=1}^n\norm{h^{t+1}_i - h^{t}_i}^2}.
    \end{align*}
\end{lemma}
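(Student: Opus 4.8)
The plan is to establish a one-step recursion for the Lyapunov-type quantity
\[
\Phi^{t} \eqdef f(x^t) + \gamma\left(2\omega+1\right)\norm{g^t - h^t}^2 + \frac{2\gamma\omega}{n}\cdot\frac{1}{n}\sum_{i=1}^n\norm{g^t_i - h^t_i}^2,
\]
by bounding each of the three quantities appearing on the left-hand side after one iteration and then carefully collecting the resulting ``carry-over'' terms back into the prescribed weights. I would first apply the descent Lemma~\ref{lemma:page_lemma} to $f(x^{t+1})$ (recall $x^{t+1} = x^t - \gamma g^t$), which immediately supplies the terms $f(x^t)$, $-\frac{\gamma}{2}\norm{\nabla f(x^t)}^2$ and $-\left(\frac{1}{2\gamma}-\frac{L}{2}\right)\norm{x^{t+1}-x^t}^2$, together with $\frac{\gamma}{2}\norm{g^t - \nabla f(x^t)}^2$. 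Since the target features $\gamma\norm{h^t - \nabla f(x^t)}^2$ instead, I would write $g^t - \nabla f(x^t) = (g^t - h^t) + (h^t - \nabla f(x^t))$ and invoke \eqref{auxiliary:jensen_inequality} to get $\frac{\gamma}{2}\norm{g^t - \nabla f(x^t)}^2 \le \gamma\norm{g^t-h^t}^2 + \gamma\norm{h^t - \nabla f(x^t)}^2$; this produces the wanted $\gamma\norm{h^t-\nabla f(x^t)}^2$ and leaves behind a $\gamma\norm{g^t-h^t}^2$ to be absorbed.

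Next I would control the two compressor quantities on the left by taking the conditional expectation $\ExpSub{\cC}{\cdot}$ through the compressor bounds \eqref{eq:compressor_global_error}--\eqref{eq:compressor_local_error}, then passing to the full expectation by the tower property. Substituting $a = 1/(2\omega+1)$ is the crux of the argument: one checks $1 - a = \nicefrac{2\omega}{(2\omega+1)}$, so that $2a^2\omega + (1-a)^2 = 1 - a$ and $(2\omega+1)\cdot 2a^2\omega = 1 - a$. These identities are precisely what makes the recursion close, collapsing the awkward rational constants from the compressor lemma into clean multiples of $\omega$.

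The main work, and the step where care is required, is the coefficient bookkeeping for the terms carried from step $t$. For $\frac{1}{n}\sum_i\norm{g^t_i - h^t_i}^2$, the contributions from the global bound (weight $\gamma(1-a)/n$, using $(2\omega+1)2a^2\omega = 1-a$) and the local bound (weight $2\gamma\omega(1-a)/n$) sum to $\frac{\gamma(1-a)(2\omega+1)}{n} = \frac{2\gamma\omega}{n}$, matching the target coefficient exactly. For $\norm{g^t-h^t}^2$, the descent-lemma leftover $\gamma$ plus the global-bound contribution $(1-a)^2\gamma(2\omega+1) = \nicefrac{4\gamma\omega^2}{(2\omega+1)}$ combine to $\gamma\cdot\frac{4\omega^2+2\omega+1}{2\omega+1}$, which is $\le \gamma(2\omega+1)$ since $(2\omega+1)^2 = 4\omega^2+4\omega+1 \ge 4\omega^2+2\omega+1$; this is exactly where the potential weight $\gamma(2\omega+1)$ is justified.

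Finally, the $\frac{1}{n}\sum_i\norm{h^{t+1}_i - h^t_i}^2$ contributions from the global bound ($\frac{2\gamma\omega(2\omega+1)}{n}$) and the local bound ($\frac{4\gamma\omega^2}{n}$) total $\frac{\gamma(8\omega^2+2\omega)}{n}$, which is comfortably dominated by the stated $\frac{8\gamma\omega(2\omega+1)}{n} = \frac{\gamma(16\omega^2+8\omega)}{n}$, so this term is kept deliberately loose (it will be handled by the subsequent lemma that expands $\norm{h^{t+1}_i - h^t_i}^2$). Collecting all contributions and taking the full expectation then yields the claimed inequality. I expect no single step to be genuinely hard, but the delicate part is the constant-chasing: every carried term must be shown to stay within its prescribed weight, and this hinges entirely on the choice $a = 1/(2\omega+1)$ through the identities above.
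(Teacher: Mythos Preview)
Your proposal is correct and follows essentially the same route as the paper's proof: apply Lemma~\ref{lemma:page_lemma}, split $g^t-\nabla f(x^t)$ through $h^t$ via \eqref{auxiliary:jensen_inequality}, add the compressor recursions \eqref{eq:compressor_global_error}--\eqref{eq:compressor_local_error} with weights $\kappa=\gamma(2\omega+1)$ and $\eta=2\gamma\omega/n$, and then verify that with $a=1/(2\omega+1)$ all carry-over coefficients fit. The only cosmetic difference is that the paper keeps $\kappa,\eta$ undetermined and then solves the constraints $\gamma+\kappa(1-a)^2\le\kappa$ and $\frac{2\kappa a^2\omega}{n}+\eta(2a^2\omega+(1-a)^2)\le\eta$, whereas you plug in the final weights from the outset and check the same inequalities via the identities $2a^2\omega+(1-a)^2=1-a$ and $(2\omega+1)\cdot 2a^2\omega=1-a$; the content is identical.
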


\begin{proof}
    Due to Lemma \ref{lemma:page_lemma} and the update step from Line~\ref{alg:main_algorithm:x_update} in Algorithm~\ref{alg:main_algorithm}, we have
    \begin{eqnarray}
        \Exp{f(x^{t + 1})} &\leq& \Exp{f(x^t) - \frac{\gamma}{2}\norm{\nabla f(x^t)}^2 - \left(\frac{1}{2\gamma} - \frac{L}{2}\right)
        \norm{x^{t+1} - x^t}^2 + \frac{\gamma}{2}\norm{g^{t} - \nabla f(x^t)}^2} \nonumber \\
        &=& \Exp{f(x^t) - \frac{\gamma}{2}\norm{\nabla f(x^t)}^2 - \left(\frac{1}{2\gamma} - \frac{L}{2}\right)
        \norm{x^{t+1} - x^t}^2 + \frac{\gamma}{2}\norm{g^{t} - h^t + h^t - \nabla f(x^t)}^2} \label{eq:page_lemma_decompose} \\
        &\leq& \Exp{f(x^t) - \frac{\gamma}{2}\norm{\nabla f(x^t)}^2 - \left(\frac{1}{2\gamma} - \frac{L}{2}\right)
        \norm{x^{t+1} - x^t}^2 + \gamma\left(\norm{g^{t} - h^t}^2 + \norm{h^t - \nabla f(x^t)}^2}\right). \nonumber \\
        \nonumber
    \end{eqnarray}
    In the last inequality we use Jensen's inequality \eqref{auxiliary:jensen_inequality}.
    Let us fix some constants $\kappa, \eta \in [0,\infty)$ that we will define later. Combining bounds \eqref{eq:page_lemma_decompose}, \eqref{eq:compressor_global_error}, \eqref{eq:compressor_local_error} and using the law of total expectation, we get
    \begin{align}
        &\Exp{f(x^{t + 1})} \nonumber \\
        &\quad  + \kappa \Exp{\norm{g^{t+1} - h^{t+1}}^2} + \eta \Exp{\frac{1}{n}\sum_{i=1}^n\norm{g^{t+1}_i - h^{t+1}_i}^2} \nonumber\\
        &\leq \Exp{f(x^t) - \frac{\gamma}{2}\norm{\nabla f(x^t)}^2 - \left(\frac{1}{2\gamma} - \frac{L}{2}\right)
        \norm{x^{t+1} - x^t}^2 + \gamma\left(\norm{g^{t} - h^t}^2 + \norm{h^{t} - \nabla f(x^t)}^2\right)} \nonumber\\
        &\quad +\kappa \Exp{\frac{2\omega}{n^2} \sum_{i=1}^n\norm{h^{t+1}_i - h^{t}_i}^2 + \frac{2 a^2 \omega}{n^2} \sum_{i=1}^n\norm{g^t_i - h^{t}_i}^2 + \left(1 - a\right)^2\norm{g^t - h^{t}}^2} \nonumber\\
        &\quad +\eta \Exp{\frac{2\omega}{n} \sum_{i=1}^n \norm{h^{t+1}_i - h^{t}_i}^2 + \left(2 a^2 \omega + \left(1 - a\right)^2\right) \frac{1}{n} \sum_{i=1}^n\norm{g^t_i - h^{t}_i}^2} \nonumber\\
        &= \Exp{f(x^t) - \frac{\gamma}{2}\norm{\nabla f(x^t)}^2 - \left(\frac{1}{2\gamma} - \frac{L}{2}\right)
        \norm{x^{t+1} - x^t}^2 + \gamma \norm{h^{t} - \nabla f(x^t)}^2}\nonumber\\
        &\quad + \left(\gamma + \kappa \left(1 - a\right)^2\right)\Exp{\norm{g^{t} - h^t}^2}\nonumber\\
        &\quad + \left(\frac{2 \kappa a^2 \omega}{n} + \eta\left(2 a^2 \omega + \left(1 - a\right)^2\right)\right)\Exp{\frac{1}{n} \sum_{i=1}^n\norm{g^t_i - h^{t}_i}^2}\nonumber\\
        &\quad + \left(\frac{2 \kappa \omega}{n} + 2 \eta \omega\right)\Exp{\frac{1}{n} \sum_{i=1}^n\norm{h^{t+1}_i - h^{t}_i}^2} \label{eq:use_ineqaulity_in_main_lemma_pl}.
    \end{align}
    Now, by taking $\kappa = \frac{\gamma}{a}$, we can see that $\gamma + \kappa \left(1 - a\right)^2 \leq \kappa, $ and thus
    \begin{align*}
        &\Exp{f(x^{t + 1})} \\
        &\quad  + \frac{\gamma}{a} \Exp{\norm{g^{t+1} - h^{t+1}}^2} + \eta \Exp{\frac{1}{n}\sum_{i=1}^n\norm{g^{t+1}_i - h^{t+1}_i}^2}\\
        &\leq \Exp{f(x^t) - \frac{\gamma}{2}\norm{\nabla f(x^t)}^2 - \left(\frac{1}{2\gamma} - \frac{L}{2}\right)
        \norm{x^{t+1} - x^t}^2 + \gamma \norm{h^{t} - \nabla f(x^t)}^2}\\
        &\quad + \frac{\gamma}{a} \Exp{\norm{g^{t} - h^t}^2}\\
        &\quad + \left(\frac{2 \gamma a \omega}{n} + \eta\left(2 a^2 \omega + \left(1 - a\right)^2\right)\right)\Exp{\frac{1}{n} \sum_{i=1}^n\norm{g^t_i - h^{t}_i}^2}\\
        &\quad + \left(\frac{2 \gamma \omega}{a n} + 2 \eta \omega\right)\Exp{\frac{1}{n} \sum_{i=1}^n\norm{h^{t+1}_i - h^{t}_i}^2}.
    \end{align*}
    Next, by taking $\eta = \frac{2 \gamma \omega}{n}$ and considering the choice of $a$, one can show that $\left(\frac{2 \gamma a \omega}{n} + \eta\left(2 a^2 \omega + \left(1 - a\right)^2\right)\right) \leq \eta.$ Thus
    \begin{align*}
        &\Exp{f(x^{t + 1})} \\
        &\quad  + \gamma \left(2 \omega + 1\right) \Exp{\norm{g^{t+1} - h^{t+1}}^2} + \frac{2 \gamma \omega}{n} \Exp{\frac{1}{n}\sum_{i=1}^n\norm{g^{t+1}_i - h^{t+1}_i}^2}\\
        &\leq \Exp{f(x^t) - \frac{\gamma}{2}\norm{\nabla f(x^t)}^2 - \left(\frac{1}{2\gamma} - \frac{L}{2}\right)
        \norm{x^{t+1} - x^t}^2 + \gamma \norm{h^{t} - \nabla f(x^t)}^2}\\
        &\quad + \gamma \left(2 \omega + 1\right) \Exp{\norm{g^{t} - h^t}^2} + \frac{2 \gamma \omega}{n} \Exp{\frac{1}{n} \sum_{i=1}^n\norm{g^t_i - h^{t}_i}^2}\\
        &\quad + \left(\frac{2 \gamma \omega \left(2 \omega + 1\right)}{n} + \frac{4 \gamma \omega^2}{n}\right)\Exp{\frac{1}{n} \sum_{i=1}^n\norm{h^{t+1}_i - h^{t}_i}^2}\\
        &\leq \Exp{f(x^t) - \frac{\gamma}{2}\norm{\nabla f(x^t)}^2 - \left(\frac{1}{2\gamma} - \frac{L}{2}\right)
        \norm{x^{t+1} - x^t}^2 + \gamma \norm{h^{t} - \nabla f(x^t)}^2}\\
        &\quad + \gamma \left(2 \omega + 1\right) \Exp{\norm{g^{t} - h^t}^2} + \frac{2 \gamma \omega}{n} \Exp{\frac{1}{n} \sum_{i=1}^n\norm{g^t_i - h^{t}_i}^2}\\
        &\quad + \frac{8 \gamma \omega \left(2 \omega + 1\right)}{n}\Exp{\frac{1}{n} \sum_{i=1}^n\norm{h^{t+1}_i - h^{t}_i}^2}.
    \end{align*} 
\end{proof}

The following lemma almost repeats the previous one. We will use it in the theorems with Assumption \ref{ass:pl_condition}.

\begin{lemma}
    \label{lemma:main_lemma_pl}
    Suppose that Assumptions \ref{ass:lipschitz_constant}, \ref{ass:compressors} and \ref{ass:pl_condition} hold and let us take $a = 1 / \left(2 \omega + 1\right)$ and $\gamma \leq \frac{a}{2\mu},$ then
    \begin{align*}
        &\Exp{f(x^{t + 1})} + 2\gamma(2\omega + 1) \Exp{\norm{g^{t+1} - h^{t+1}}^2} + \frac{8 \gamma \omega}{n} \Exp{\frac{1}{n}\sum_{i=1}^n\norm{g^{t+1}_i - h^{t+1}_i}^2}\\
        &\leq \Exp{f(x^t) - \frac{\gamma}{2}\norm{\nabla f(x^t)}^2 - \left(\frac{1}{2\gamma} - \frac{L}{2}\right) \norm{x^{t+1} - x^t}^2 + \gamma \norm{h^{t} - \nabla f(x^t)}^2}\\
        &\quad + \left(1 - \gamma \mu\right)2\gamma(2\omega + 1) \Exp{\norm{g^{t} - h^t}^2} + \left(1 - \gamma \mu\right) \frac{8 \gamma \omega}{n} \Exp{\frac{1}{n} \sum_{i=1}^n\norm{g^t_i - h^{t}_i}^2}\\
        &\quad + \frac{20 \gamma \omega (2 \omega + 1)}{n}\Exp{\frac{1}{n} \sum_{i=1}^n\norm{h^{t+1}_i - h^{t}_i}^2}.
    \end{align*}
\end{lemma}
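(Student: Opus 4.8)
The plan is to mirror the proof of Lemma~\ref{lemma:main_lemma} almost verbatim, the only change being that I place \emph{larger} weights on the two error terms of the Lyapunov function, so that the stepsize restriction $\gamma \leq a/(2\mu)$ lets me pull out a contraction factor $(1-\gamma\mu)$. Notice that the P\L{} inequality itself never enters this one-step estimate; Assumption~\ref{ass:pl_condition} is listed essentially so that the constant $\mu$ is well defined. It will be used later, in the theorem, to turn $-\frac{\gamma}{2}\norm{\nabla f(x^t)}^2$ into $-\gamma\mu(f(x^t)-f^*)$ and obtain linear convergence.

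First I would start exactly as before: apply Lemma~\ref{lemma:page_lemma} with $x^{t+1}=x^t-\gamma g^t$, write $g^t-\nabla f(x^t)=(g^t-h^t)+(h^t-\nabla f(x^t))$, and use \eqref{auxiliary:jensen_inequality} to reach \eqref{eq:page_lemma_decompose}. Then I add $\kappa\,\Exp{\norm{g^{t+1}-h^{t+1}}^2}+\eta\,\Exp{\frac1n\sum_i\norm{g^{t+1}_i-h^{t+1}_i}^2}$ to both sides and bound the two new terms with the compressor estimates \eqref{eq:compressor_global_error} and \eqref{eq:compressor_local_error}, collecting on the right-hand side the coefficients of $\norm{g^t-h^t}^2$, of $\frac1n\sum_i\norm{g^t_i-h^t_i}^2$, and of $\frac1n\sum_i\norm{h^{t+1}_i-h^t_i}^2$, precisely as in the display \eqref{eq:use_ineqaulity_in_main_lemma_pl}.

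The core of the argument is the weight selection. I would take $\kappa=2\gamma(2\omega+1)=2\gamma/a$ and $\eta=8\gamma\omega/n$, i.e.\ twice and four times the weights of Lemma~\ref{lemma:main_lemma}. For the global term I need $\gamma+\kappa(1-a)^2\leq(1-\gamma\mu)\kappa$; dividing by $\gamma$ this collapses to $2\gamma\mu/a\leq 3-2a$, which holds because $\gamma\leq a/(2\mu)$ forces $2\gamma\mu/a\leq1$ while $a\leq1$ gives $3-2a\geq1$. For the local term I would first record the identity $2a^2\omega+(1-a)^2=1-a$, valid at $a=1/(2\omega+1)$; then the requirement $\frac{2\kappa a^2\omega}{n}+\eta(2a^2\omega+(1-a)^2)\leq(1-\gamma\mu)\eta$ reduces to $\frac{4\gamma a\omega}{n}\leq\frac{8\gamma\omega}{n}(a-\gamma\mu)$, i.e.\ exactly $2\gamma\mu\leq a$, which is the stepsize hypothesis. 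Finally, the coefficient of $\frac1n\sum_i\norm{h^{t+1}_i-h^t_i}^2$ is $\frac{2\kappa\omega}{n}+2\eta\omega=\frac{4\gamma\omega(6\omega+1)}{n}$, and this is at most $\frac{20\gamma\omega(2\omega+1)}{n}$ since $4(6\omega+1)\leq 20(2\omega+1)$.

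The main obstacle is purely bookkeeping: checking that both contraction inequalities reduce to the single hypothesis $\gamma\leq a/(2\mu)$, which hinges on the clean cancellation $2a^2\omega+(1-a)^2=1-a$ at $a=1/(2\omega+1)$, the same simplification that drives Lemma~\ref{lemma:main_lemma}. Once the two weight inequalities are verified, the rest is a routine rearrangement of the terms already assembled in \eqref{eq:use_ineqaulity_in_main_lemma_pl}.
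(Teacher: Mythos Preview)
Your proposal is correct and follows essentially the same route as the paper: start from the display \eqref{eq:use_ineqaulity_in_main_lemma_pl}, choose $\kappa=2\gamma/a$ and $\eta=8\gamma\omega/n$, and verify the resulting coefficient inequalities. The only cosmetic difference is that the paper first establishes contraction by the factor $(1-a/2)$ and then invokes $\gamma\leq a/(2\mu)$ once to pass to $(1-\gamma\mu)$, whereas you aim for $(1-\gamma\mu)$ directly; both checks reduce to the same condition $2\gamma\mu\leq a$, and your observation that the clean identity $2a^2\omega+(1-a)^2=1-a$ makes the local-term verification collapse to exactly this hypothesis is spot on.
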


\begin{proof}
    Up to \eqref{eq:use_ineqaulity_in_main_lemma_pl} we can follow the proof of Lemma~\ref{lemma:main_lemma} to get
    \begin{align*}
        &\Exp{f(x^{t + 1})} \\
        &\quad  + \kappa \Exp{\norm{g^{t+1} - h^{t+1}}^2} + \eta \Exp{\frac{1}{n}\sum_{i=1}^n\norm{g^{t+1}_i - h^{t+1}_i}^2}\\
        &\leq \Exp{f(x^t) - \frac{\gamma}{2}\norm{\nabla f(x^t)}^2 - \left(\frac{1}{2\gamma} - \frac{L}{2}\right)
        \norm{x^{t+1} - x^t}^2 + \gamma \norm{h^{t} - \nabla f(x^t)}^2} \\
        &\quad + \left(\gamma + \kappa \left(1 - a\right)^2\right)\Exp{\norm{g^{t} - h^t}^2} \\
        &\quad + \left(\frac{2 \kappa a^2 \omega}{n} + \eta\left(2 a^2 \omega + \left(1 - a\right)^2\right)\right)\Exp{\frac{1}{n} \sum_{i=1}^n\norm{g^t_i - h^{t}_i}^2} \\
        &\quad + \left(\frac{2 \kappa \omega}{n} + 2 \eta \omega\right)\Exp{\frac{1}{n} \sum_{i=1}^n\norm{h^{t+1}_i - h^{t}_i}^2}.
    \end{align*}
    Now, by taking $\kappa = \frac{2\gamma}{a}$, we can see that $\gamma + \kappa \left(1 - a\right)^2 \leq \left(1 - \frac{a}{2}\right)\kappa, $ and thus
    \begin{align*}
        &\Exp{f(x^{t + 1})} \\
        &\quad  + \frac{2\gamma}{a} \Exp{\norm{g^{t+1} - h^{t+1}}^2} + \eta \Exp{\frac{1}{n}\sum_{i=1}^n\norm{g^{t+1}_i - h^{t+1}_i}^2}\\
        &\leq \Exp{f(x^t) - \frac{\gamma}{2}\norm{\nabla f(x^t)}^2 - \left(\frac{1}{2\gamma} - \frac{L}{2}\right)
        \norm{x^{t+1} - x^t}^2 + \gamma \norm{h^{t} - \nabla f(x^t)}^2}\\
        &\quad + \left(1 - \frac{a}{2}\right)\frac{2\gamma}{a} \Exp{\norm{g^{t} - h^t}^2}\\
        &\quad + \left(\frac{4 \gamma a \omega}{n} + \eta\left(2 a^2 \omega + \left(1 - a\right)^2\right)\right)\Exp{\frac{1}{n} \sum_{i=1}^n\norm{g^t_i - h^{t}_i}^2}\\
        &\quad + \left(\frac{4 \gamma \omega}{a n} + 2 \eta \omega\right)\Exp{\frac{1}{n} \sum_{i=1}^n\norm{h^{t+1}_i - h^{t}_i}^2}.
    \end{align*}
    Next, by taking $\eta = \frac{8 \gamma \omega}{n}$ and considering the choice of $a$, one can show that $\left(\frac{4 \gamma a \omega}{n} + \eta\left(2 a^2 \omega + \left(1 - a\right)^2\right)\right) \leq \left(1 - \frac{a}{2}\right)\eta.$ Thus
    \begin{align*}
        &\Exp{f(x^{t + 1})} \\
        &\quad  + 2\gamma(2\omega + 1) \Exp{\norm{g^{t+1} - h^{t+1}}^2} + \frac{8 \gamma \omega}{n} \Exp{\frac{1}{n}\sum_{i=1}^n\norm{g^{t+1}_i - h^{t+1}_i}^2}\\
        &\leq \Exp{f(x^t) - \frac{\gamma}{2}\norm{\nabla f(x^t)}^2 - \left(\frac{1}{2\gamma} - \frac{L}{2}\right)
        \norm{x^{t+1} - x^t}^2 + \gamma \norm{h^{t} - \nabla f(x^t)}^2}\\
        &\quad + \left(1 - \frac{a}{2}\right)2\gamma(2\omega + 1) \Exp{\norm{g^{t} - h^t}^2}\\
        &\quad + \left(1 - \frac{a}{2}\right) \frac{8 \gamma \omega}{n} \Exp{\frac{1}{n} \sum_{i=1}^n\norm{g^t_i - h^{t}_i}^2}\\
        &\quad + \left(\frac{4 \gamma \omega (2 \omega + 1)}{n} + \frac{16 \gamma \omega^2}{n} \right)\Exp{\frac{1}{n} \sum_{i=1}^n\norm{h^{t+1}_i - h^{t}_i}^2} \\
        &\leq \Exp{f(x^t) - \frac{\gamma}{2}\norm{\nabla f(x^t)}^2 - \left(\frac{1}{2\gamma} - \frac{L}{2}\right) \norm{x^{t+1} - x^t}^2 + \gamma \norm{h^{t} - \nabla f(x^t)}^2}\\
        &\quad + \left(1 - \frac{a}{2}\right)2\gamma(2\omega + 1) \Exp{\norm{g^{t} - h^t}^2}\\
        &\quad + \left(1 - \frac{a}{2}\right) \frac{8 \gamma \omega}{n} \Exp{\frac{1}{n} \sum_{i=1}^n\norm{g^t_i - h^{t}_i}^2}\\
        &\quad + \frac{20 \gamma \omega (2 \omega + 1)}{n}\Exp{\frac{1}{n} \sum_{i=1}^n\norm{h^{t+1}_i - h^{t}_i}^2}.
    \end{align*}
    Finally, the assumption $\gamma \leq \frac{a}{2\mu}$ implies an inequality $1 - \frac{a}{2} \leq 1 - \gamma \mu.$
\end{proof}

\begin{lemma}
    \label{lemma:good_recursion}
    Suppose that Assumption \ref{ass:lower_bound} holds and
    \begin{align}
        \label{eq:private:good_recursion}
        \Exp{f(x^{t+1})} + \gamma \Psi^{t+1} \leq \Exp{f(x^t)} - \frac{\gamma}{2}\Exp{\norm{\nabla f(x^t)}^2} + \gamma \Psi^{t} + \gamma C,
    \end{align}
    where $\Psi^{t}$ is a sequence of numbers, $\Psi^{t} \geq 0$ for all $t \in [T]$, constant $C \geq 0$, and constant $\gamma > 0.$ Then 
    \begin{align}
        \label{eq:good_recursion}
        \Exp{\norm{\nabla f(\widehat{x}^T)}^2} \leq \frac{2 \left(f(x^0) - f^*\right)}{\gamma T} + \frac{2\Psi^{0}}{T} + 2 C,
    \end{align}
    where a point $\widehat{x}^T$ is chosen uniformly from a set of points $\{x^t\}_{t=0}^{T-1}.$
\end{lemma}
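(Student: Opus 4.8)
The plan is to telescope a one-step Lyapunov inequality. First I would define the Lyapunov sequence $\Phi^{t} \eqdef \Exp{f(x^t)} + \gamma \Psi^{t}$, so that hypothesis \eqref{eq:private:good_recursion} reads compactly as $\Phi^{t+1} \leq \Phi^{t} - \frac{\gamma}{2}\Exp{\norm{\nabla f(x^t)}^2} + \gamma C$. Rearranging isolates the gradient term: $\frac{\gamma}{2}\Exp{\norm{\nabla f(x^t)}^2} \leq \Phi^{t} - \Phi^{t+1} + \gamma C$.

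Next I would sum this over $t = 0, \dots, T-1$. The right-hand side telescopes to $\Phi^{0} - \Phi^{T} + \gamma C T$, giving $\frac{\gamma}{2}\sum_{t=0}^{T-1}\Exp{\norm{\nabla f(x^t)}^2} \leq \Phi^{0} - \Phi^{T} + \gamma C T$. Then I would handle the two boundary terms using the sign hypotheses: since $x^0$ is deterministic, $\Phi^{0} = f(x^0) + \gamma \Psi^{0}$, and since Assumption~\ref{ass:lower_bound} gives $\Exp{f(x^T)} \geq f^*$ while $\Psi^{T} \geq 0$ by assumption, we have $\Phi^{T} \geq f^*$. Discarding $-\Phi^{T}$ upward yields $\frac{\gamma}{2}\sum_{t=0}^{T-1}\Exp{\norm{\nabla f(x^t)}^2} \leq f(x^0) - f^* + \gamma \Psi^{0} + \gamma C T$.

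Finally I would divide by $\gamma T / 2$ and identify the resulting time-average with the expectation over the random output. Because $\widehat{x}^T$ is drawn uniformly from $\{x^t\}_{t=0}^{T-1}$ independently of the trajectory, the tower property gives $\Exp{\norm{\nabla f(\widehat{x}^T)}^2} = \frac{1}{T}\sum_{t=0}^{T-1}\Exp{\norm{\nabla f(x^t)}^2}$, and substituting produces exactly \eqref{eq:good_recursion}.

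There is no genuine obstacle here; this is a routine telescoping estimate. The only two points that require care are (i) that the nonnegativity of $\Psi^{T}$ together with the lower bound $f^*$ is precisely what licenses dropping $\Phi^{T}$, so that neither the auxiliary potential nor the function value contributes an uncontrolled boundary term, and (ii) that the averaging step is legitimate, since the uniform random choice of $\widehat{x}^T$ is exactly what converts the time-average of the per-iterate quantities $\Exp{\norm{\nabla f(x^t)}^2}$ into the single expectation appearing on the left-hand side of the claim.
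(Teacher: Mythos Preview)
Your proposal is correct and follows essentially the same approach as the paper: telescope the one-step inequality, drop the terminal term using $\Exp{f(x^T)} \geq f^*$ and $\Psi^T \geq 0$, divide by $\gamma T/2$, and convert the time-average into $\Exp{\norm{\nabla f(\widehat{x}^T)}^2}$ via the uniform random output. The only cosmetic difference is that you package $\Exp{f(x^t)} + \gamma \Psi^t$ into a named Lyapunov quantity $\Phi^t$ before summing, whereas the paper works with the inequality directly.
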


\begin{proof}
    By unrolling \eqref{eq:private:good_recursion} for $t$ from $0$ to $T - 1$, we obtain
    \begin{align*}
        \frac{\gamma}{2}\sum_{t = 0}^{T - 1}\Exp{\norm{\nabla f(x^t)}^2} + \Exp{f(x^{T})} + \gamma \Psi^{T} \leq f(x^0) + \gamma \Psi^{0} + \gamma T C.
    \end{align*}
    We subtract $f^*$, divide inequality by $\frac{\gamma T}{2},$ and take into account that $f(x) \geq f^*$ for all $x \in \R$, and $\Psi^{t} \geq 0$ for all $t \in [T],$ to get the following inequality:
    \begin{align*}
        \frac{1}{T}\sum_{t = 0}^{T - 1}\Exp{\norm{\nabla f(x^t)}^2} \leq \frac{2 \left(f(x^0) - f^*\right)}{\gamma T} + \frac{2\Psi^{0}}{T} + 2 C.
    \end{align*}
    It is left to consider the choice of a point $\widehat{x}^T$ to complete the proof of the lemma.
\end{proof}

\begin{lemma}
    \label{lemma:good_recursion_pl}
    Suppose that Assumptions \ref{ass:lower_bound} and \ref{ass:pl_condition} hold and
    \begin{align*}
        \Exp{f(x^{t+1})} + \gamma \Psi^{t+1} \leq \Exp{f(x^t)} - \frac{\gamma}{2}\Exp{\norm{\nabla f(x^t)}^2} + (1 - \gamma \mu)\gamma \Psi^{t} + \gamma C,
    \end{align*}
    where $\Psi^{t}$ is a sequence of numbers, $\Psi^{t} \geq 0$ for all $t \in [T]$, constant $C \geq 0,$ constant $\mu > 0,$ and constant $\gamma \in (0, 1 / \mu).$ Then 
    \begin{align}
        \label{eq:good_recursion_pl}
        \Exp{f(x^{T}) - f^*} \leq (1 - \gamma \mu)^{T}\left(\left(f(x^0) - f^*\right) + \gamma \Psi^{0}\right) + \frac{C}{\mu}.
    \end{align}
\end{lemma}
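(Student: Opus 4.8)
The plan is to mimic the telescoping argument of Lemma~\ref{lemma:good_recursion}, but to exploit the P\L-condition in order to convert the merely summable accumulation into a genuine geometric contraction. First I would introduce the Lyapunov function $\Phi^t \eqdef \Exp{f(x^t) - f^*} + \gamma \Psi^t$, which is nonnegative because $f \geq f^*$ by Assumption~\ref{ass:lower_bound} and $\Psi^t \geq 0$ by hypothesis. Subtracting the constant $f^*$ from both sides of the assumed recursion rephrases it entirely in terms of $\Exp{f(x^{t+1}) - f^*}$, $\gamma\Psi^{t+1}$, and the corresponding quantities at time $t$.

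The key step is to invoke Assumption~\ref{ass:pl_condition}, i.e.\ \eqref{eq:pl_condition}, in the form $-\frac{\gamma}{2}\Exp{\norm{\nabla f(x^t)}^2} \leq -\gamma\mu \Exp{f(x^t) - f^*}$. Substituting this into the hypothesis and regrouping, the factor $1-\gamma\mu$ ends up multiplying both $\Exp{f(x^t)-f^*}$ and $\gamma\Psi^t$, so the whole recursion collapses to the one-line contraction $\Phi^{t+1} \leq (1-\gamma\mu)\Phi^t + \gamma C$. Since $\gamma \in (0, 1/\mu)$, the factor $q \eqdef 1-\gamma\mu$ lies in $(0,1)$, which is exactly what turns the recursion geometric rather than linear in $T$.

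I would then unroll this scalar recursion from $t=0$ to $t=T$, obtaining $\Phi^T \leq q^T \Phi^0 + \gamma C \sum_{k=0}^{T-1} q^k$. Bounding the finite geometric sum by its infinite counterpart, $\sum_{k=0}^{\infty} q^k = 1/(1-q) = 1/(\gamma\mu)$, cancels the leading $\gamma$ and leaves the additive term $C/\mu$. Finally, using $\Psi^T \geq 0$ to drop the nonnegative tail $\gamma\Psi^T$ gives $\Exp{f(x^T)-f^*} \leq \Phi^T$, and noting that $\Phi^0 = (f(x^0)-f^*) + \gamma\Psi^0$ since $x^0$ is deterministic, I recover exactly \eqref{eq:good_recursion_pl}.

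There is no serious obstacle here; the argument is elementary once the right potential $\Phi^t$ is identified. The only point demanding care is checking that the coefficient of $\gamma\Psi^t$ in the hypothesis is already $(1-\gamma\mu)$, so that after the P\L-step supplies the matching factor for $\Exp{f(x^t)-f^*}$ the two terms fuse into a single $(1-\gamma\mu)\Phi^t$; and that the geometric-sum bound $\sum_{k\ge0} q^k = 1/(\gamma\mu)$ is legitimate precisely because the constraint $\gamma < 1/\mu$ guarantees $q<1$.
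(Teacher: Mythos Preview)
Your proposal is correct and follows essentially the same route as the paper: subtract $f^*$, apply the P\L-inequality to obtain the contraction $\Phi^{t+1}\le(1-\gamma\mu)\Phi^t+\gamma C$ for the Lyapunov function $\Phi^t=\Exp{f(x^t)-f^*}+\gamma\Psi^t$, unroll, bound the geometric sum by $1/(\gamma\mu)$, and drop $\gamma\Psi^T\ge0$. The only cosmetic difference is that the paper does not name $\Phi^t$ explicitly.
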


\begin{proof}
    We subtract $f^*$ and use P\L-condition \eqref{eq:pl_condition} to get
    \begin{eqnarray*}
        \Exp{f(x^{t+1}) - f^*} + \gamma \Psi^{t+1} &\leq& \Exp{f(x^t) - f^*} - \frac{\gamma}{2}\Exp{\norm{\nabla f(x^t)}^2} + \gamma \Psi^{t} + \gamma C \\
        &\leq& (1 - \gamma \mu)\Exp{f(x^t) - f^*} + (1 - \gamma \mu)\gamma \Psi^{t} + \gamma C \\
        &=& (1 - \gamma \mu)\left(\Exp{f(x^t) - f^*} + \gamma \Psi^{t}\right) + \gamma C.
    \end{eqnarray*}
    Unrolling the inequality, we have
    \begin{eqnarray*}
        \Exp{f(x^{t+1}) - f^*} + \gamma \Psi^{t+1} &\leq& (1 - \gamma \mu)^{t+1}\left(\left(f(x^0) - f^*\right) + \gamma \Psi^{0}\right) + \gamma C \sum_{i = 0}^{t} (1 - \gamma \mu)^i \\
        &\leq& (1 - \gamma \mu)^{t+1}\left(\left(f(x^0) - f^*\right) + \gamma \Psi^{0}\right) + \frac{C}{\mu}.
    \end{eqnarray*}
    It is left to note that $\Psi^{t} \geq 0$ for all $t \in [T]$.
\end{proof}

\begin{lemma}
    \label{lemma:gamma}
    If $0 < \gamma \leq (L + \sqrt{A})^{-1},$ $L > 0$, and $A \geq 0,$ then $$\frac{1}{2\gamma} - \frac{L}{2} - \frac{\gamma A}{2} \geq 0.$$
\end{lemma}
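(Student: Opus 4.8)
The plan is to clear denominators and turn the claim into a statement about the sign of a quadratic in $\gamma$. Since $\gamma > 0$, multiplying the target inequality $\frac{1}{2\gamma} - \frac{L}{2} - \frac{\gamma A}{2} \geq 0$ through by $2\gamma$ shows it is equivalent to $1 - L\gamma - A\gamma^2 \geq 0$, that is, to $q(\gamma) \eqdef A\gamma^2 + L\gamma - 1 \leq 0$. So everything reduces to bounding $q$ on the admissible range $0 < \gamma \leq (L + \sqrt{A})^{-1}$.

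The key observation I would use is that $q$ is strictly increasing on $(0, \infty)$: its derivative $q'(\gamma) = 2A\gamma + L$ is positive there because $A \geq 0$ and $L > 0$. This monotonicity lets me avoid any reasoning about the roots of $q$ and instead check a single point; it suffices to verify $q(\gamma) \leq 0$ at the right endpoint $\gamma^* \eqdef (L + \sqrt{A})^{-1}$, after which the bound extends automatically to all $\gamma \in (0, \gamma^*]$.

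The remaining step is a direct evaluation of $q(\gamma^*)$, placing the terms over the common denominator $(L + \sqrt{A})^2$:
\begin{align*}
q(\gamma^*) = \frac{A + L(L + \sqrt{A}) - (L + \sqrt{A})^2}{(L + \sqrt{A})^2} = \frac{-L\sqrt{A}}{(L + \sqrt{A})^2} \leq 0,
\end{align*}
where the numerator simplification uses $(L + \sqrt{A})^2 = L^2 + 2L\sqrt{A} + A$, and the final sign follows from $L > 0$ and $\sqrt{A} \geq 0$. Dividing the resulting inequality $1 - L\gamma - A\gamma^2 \geq 0$ back by $2\gamma > 0$ then recovers the statement of the lemma.

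There is no genuine obstacle in this argument; the only mildly clever point is recognizing that monotonicity of $q$ on the positive axis reduces the whole claim to one endpoint evaluation. A pleasant side effect is that this endpoint computation handles the degenerate case $A = 0$ uniformly, since there $\gamma^* = 1/L$ and the formula gives $q(\gamma^*) = 0 \leq 0$, matching $q(\gamma) = L\gamma - 1 \leq 0$ directly. An alternative route would compare $\gamma^*$ with the positive root $\frac{-L + \sqrt{L^2 + 4A}}{2A}$ of $q$, but that is strictly more work and needs a separate $A = 0$ case, so I would prefer the endpoint approach above.
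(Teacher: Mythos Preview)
Your proof is correct and is precisely the kind of direct calculation the paper alludes to; the paper itself omits the details and simply states ``It is easy to verify with a direct calculation.'' Your reduction to $q(\gamma) = A\gamma^2 + L\gamma - 1 \leq 0$, the monotonicity argument, and the endpoint evaluation $q\big((L+\sqrt{A})^{-1}\big) = -L\sqrt{A}/(L+\sqrt{A})^2 \leq 0$ are all sound and cleanly handle the degenerate case $A=0$.
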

It is easy to verify with a direct calculation.

\subsection{Case of \algname{\algorithmname}}

Despite the triviality of the following lemma, we provide it for consistency with Lemma~\ref{lemma:mvr} and Lemma~\ref{lemma:page_h}.

\begin{lemma}
    \label{lemma:gradient}
    Suppose that Assumption~\ref{ass:nodes_lipschitz_constant} holds. Assuming that $h^{0}_i = \nabla f_i(x^0)$ for all $i \in [n]$, for $h^{t+1}_i$ from Algorithm~\ref{alg:main_algorithm} \algname{(\algorithmname)} we have
    \begin{enumerate}
        \item 
            \begin{align*}
                \ExpSub{h}{\norm{h^{t+1} - \nabla f(x^{t+1})}^2} = 0.
            \end{align*}
        \item 
            \begin{align*}
                \ExpSub{h}{\norm{h^{t+1}_i - \nabla f_i(x^{t+1})}^2} = 0, \quad \forall i \in [n].
            \end{align*}
        \item
            \begin{align*}
                \ExpSub{h}{\norm{h^{t+1}_i - h^{t}_i}^2} \leq L_{i}^2 \norm{x^{t+1} - x^t}^2, \quad \forall i \in [n].
            \end{align*}
    \end{enumerate}
\end{lemma}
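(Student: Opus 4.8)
The plan is to exploit the fact that in the gradient setting the update in Line~\ref{alg:main_algorithm:h_defined} is simply $h^{t+1}_i = \nabla f_i(x^{t+1})$, which is entirely deterministic given $x^{t+1}$. Consequently the conditional expectation $\ExpSub{h}{\cdot}$ acts only on deterministic quantities, and all three claims collapse to elementary identities together with a single application of smoothness.

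First I would dispatch the second claim. Since $h^{t+1}_i = \nabla f_i(x^{t+1})$ holds exactly, the difference $h^{t+1}_i - \nabla f_i(x^{t+1})$ vanishes pointwise, so $\norm{h^{t+1}_i - \nabla f_i(x^{t+1})}^2 = 0$ and taking $\ExpSub{h}{\cdot}$ preserves the zero. Next, for the first claim, I would average the identity $h^{t+1}_i = \nabla f_i(x^{t+1})$ over $i \in [n]$. Using the definition $h^{t+1} \eqdef \frac{1}{n}\sum_{i=1}^n h^{t+1}_i$ together with $f = \frac{1}{n}\sum_{i=1}^n f_i$, and hence $\nabla f(x^{t+1}) = \frac{1}{n}\sum_{i=1}^n \nabla f_i(x^{t+1})$, I obtain $h^{t+1} = \nabla f(x^{t+1})$, so the squared norm is again zero.

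The only place requiring a little care is the third claim, since there I also need to know that $h^{t}_i = \nabla f_i(x^{t})$. The plan is a short induction on $t$: the base case $t = 0$ is exactly the assumed initialization $h^{0}_i = \nabla f_i(x^0)$, and the inductive step is immediate because the update always sets $h^{t+1}_i = \nabla f_i(x^{t+1})$ irrespective of the previous value. With both $h^{t+1}_i = \nabla f_i(x^{t+1})$ and $h^{t}_i = \nabla f_i(x^{t})$ established, I would write $h^{t+1}_i - h^{t}_i = \nabla f_i(x^{t+1}) - \nabla f_i(x^{t})$ and apply Assumption~\ref{ass:nodes_lipschitz_constant} ($L_i$-smoothness of $f_i$) to get $\norm{\nabla f_i(x^{t+1}) - \nabla f_i(x^{t})}^2 \leq L_i^2 \norm{x^{t+1} - x^t}^2$; since everything is deterministic, $\ExpSub{h}{\cdot}$ passes through unchanged.

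There is effectively no substantive obstacle: the lemma is deliberately trivial, serving only to mirror the structure of the analogous (and genuinely nontrivial) \algname{\algorithmname-PAGE} and \algname{\algorithmname-MVR} lemmas (Lemma~\ref{lemma:page_h} and Lemma~\ref{lemma:mvr}). The only thing to get right is the bookkeeping that the initialization assumption propagates the identity $h^{t}_i = \nabla f_i(x^{t})$ along the entire trajectory, so that all the variance-type terms appearing in the main lemma indeed vanish in this setting.
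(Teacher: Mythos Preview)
Your proposal is correct and matches the paper's treatment: the paper itself does not even write out a proof, noting explicitly that the lemma is trivial and included only for structural parallelism with Lemmas~\ref{lemma:page_h} and~\ref{lemma:mvr}. Your observation that $h^{t+1}_i = \nabla f_i(x^{t+1})$ is deterministic, together with the one-line application of $L_i$-smoothness, is exactly the intended argument.
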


\CONVERGENCE*

\begin{proof}
    Considering Lemma~\ref{lemma:main_lemma}, Lemma~\ref{lemma:gradient}, and the law of total expectation, we obtain
    \begin{align*}
        &\Exp{f(x^{t + 1})} + \gamma \left(2 \omega + 1\right) \Exp{\norm{g^{t+1} - h^{t+1}}^2} + \frac{2 \gamma \omega}{n} \Exp{\frac{1}{n}\sum_{i=1}^n\norm{g^{t+1}_i - h^{t+1}_i}^2}\\
        &\leq \Exp{f(x^t) - \frac{\gamma}{2}\norm{\nabla f(x^t)}^2 - \left(\frac{1}{2\gamma} - \frac{L}{2}\right)
        \norm{x^{t+1} - x^t}^2}\\
        &\quad + \gamma \left(2 \omega + 1\right) \Exp{\norm{g^{t} - h^t}^2} + \frac{2 \gamma \omega}{n} \Exp{\frac{1}{n} \sum_{i=1}^n\norm{g^t_i - h^{t}_i}^2} + \frac{8 \gamma \omega \left(2 \omega + 1\right)}{n} \widehat{L}^2 \norm{x^{t+1} - x^t}^2 \\
        &= \Exp{f(x^t)} - \frac{\gamma}{2}\Exp{\norm{\nabla f(x^t)}^2} \\
        &\quad + \gamma \left(2 \omega + 1\right) \Exp{\norm{g^{t} - h^t}^2} + \frac{2 \gamma \omega}{n} \Exp{\frac{1}{n} \sum_{i=1}^n\norm{g^t_i - h^{t}_i}^2} \\
        &\quad - \left(\frac{1}{2\gamma} - \frac{L}{2} - \frac{8 \gamma \omega \left(2 \omega + 1\right)}{n} \widehat{L}^2\right)\Exp{\norm{x^{t+1} - x^t}^2}.
    \end{align*}
    Using assumption about $\gamma,$ we can show that $\frac{1}{2\gamma} - \frac{L}{2} - \frac{8 \gamma \omega \left(2 \omega + 1\right)}{n} \widehat{L}^2 \geq 0$ (see Lemma~\ref{lemma:gamma}), thus
    \begin{align*}
        &\Exp{f(x^{t + 1})} + \gamma \left(2 \omega + 1\right) \Exp{\norm{g^{t+1} - h^{t+1}}^2} + \frac{2 \gamma \omega}{n} \Exp{\frac{1}{n}\sum_{i=1}^n\norm{g^{t+1}_i - h^{t+1}_i}^2}\\
        &\leq \Exp{f(x^t)} - \frac{\gamma}{2}\Exp{\norm{\nabla f(x^t)}^2} + \gamma \left(2 \omega + 1\right) \Exp{\norm{g^{t} - h^t}^2} + \frac{2 \gamma \omega}{n} \Exp{\frac{1}{n} \sum_{i=1}^n\norm{g^t_i - h^{t}_i}^2}.
    \end{align*}
    In the view of Lemma~\ref{lemma:good_recursion} with $\Psi^t = \left(2 \omega + 1\right) \Exp{\norm{g^{t} - h^t}^2} + \frac{2 \omega}{n} \Exp{\frac{1}{n} \sum_{i=1}^n\norm{g^t_i - h^{t}_i}^2}$ we can conclude the proof.
\end{proof}

\COROLLARYGRADIENTORACLE*

\begin{proof}
    The communication complexities can be easily derived using Theorem~\ref{theorem:gradient_oracle}. At each communication round of Algorithm~\ref{alg:main_algorithm}, each node sends $\zeta_{\cC}$ coordinates. In the view of $g^0_i = \nabla f_i(x^0)$ for all $i \in [n],$ we additionally have to send $d$ coordinates from the nodes to the server, thus the total communication complexity would be $\cO\left(d + \zeta_{\cC} T\right).$
\end{proof}

\COROLLARYGRADIENTORACLERANDK*

\begin{proof}
    In the view of Theorem~\ref{theorem:rand_k}, we have $\omega + 1 = d / K.$ Combining this and an inequality $L \leq \widehat{L},$ the communication complexity equals
    \begin{eqnarray*}
        \cO\left(d + \zeta_{\cC} T\right) &=& \cO\left(d + \frac{1}{\varepsilon}\Bigg[\left(f(x^0) - f^*\right)\left(KL + K\frac{\omega}{\sqrt{n}} \widehat{L}\right)\Bigg]\right) \\
        &=& \cO\left(d + \frac{1}{\varepsilon}\Bigg[\left(f(x^0) - f^*\right)\left(\frac{d}{\sqrt{n}} L + \frac{d}{\sqrt{n}} \widehat{L}\right)\Bigg]\right) \\
        &=& \cO\left(d + \frac{1}{\varepsilon}\Bigg[\left(f(x^0) - f^*\right)\left(\frac{d}{\sqrt{n}} \widehat{L}\right)\Bigg]\right).
    \end{eqnarray*}
\end{proof}

\subsection{Case of \algname{\algorithmname} under P\L-condition}
\begin{theorem}
    \label{theorem:gradient_oracle_pl}
    Suppose that Assumption \ref{ass:lower_bound}, \ref{ass:lipschitz_constant}, \ref{ass:nodes_lipschitz_constant}, \ref{ass:compressors} and \ref{ass:pl_condition} hold. Let us take $a = 1 / \left(2\omega + 1\right),$ 
    $\gamma \leq \min\left\{\left(L + \sqrt{\frac{40 \omega (2 \omega + 1)}{n}} \widehat{L}\right)^{-1}, \frac{a}{2\mu}\right\}, $ and $h^{0}_i = \nabla f_i(x^0)$ for all $i \in [n]$ in Algorithm~\ref{alg:main_algorithm} \algname{(\algorithmname)},
    then 
    \begin{align*}
        \Exp{f(x^{T}) - f^*} \leq (1 - \gamma \mu)^{T}\left(\left(f(x^0) - f^*\right) +  2\gamma (2\omega + 1) \norm{g^{0} - \nabla f(x^0)}^2 + \frac{8 \gamma \omega}{n} \left(\frac{1}{n} \sum_{i=1}^n\norm{g^0_i - \nabla f_i(x^0)}^2\right)\right).
    \end{align*}
\end{theorem}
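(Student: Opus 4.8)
The plan is to mirror the proof of Theorem~\ref{theorem:gradient_oracle} almost verbatim, replacing the two ``general nonconvex'' ingredients by their P\L-flavored counterparts. Concretely, the non-P\L\ proof combined the main descent Lemma~\ref{lemma:main_lemma} with the gradient-oracle Lemma~\ref{lemma:gradient} and then closed via Lemma~\ref{lemma:good_recursion}. Here I would instead invoke Lemma~\ref{lemma:main_lemma_pl}, which is the exact analogue of Lemma~\ref{lemma:main_lemma} but carries the contraction factor $(1 - \gamma\mu)$ on the two potential terms and requires the extra stepsize restriction $\gamma \le \nicefrac{a}{2\mu}$, and then close via Lemma~\ref{lemma:good_recursion_pl} rather than Lemma~\ref{lemma:good_recursion}. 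Note that the stepsize hypothesis of the theorem is a minimum of two quantities precisely so that both the smoothness-type bound and the constraint $\gamma \le \nicefrac{a}{2\mu}$ needed by Lemma~\ref{lemma:main_lemma_pl} hold simultaneously.

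First I would apply Lemma~\ref{lemma:main_lemma_pl} and take the total expectation. Since in the \algname{\algorithmname} (gradient) setting the initialization $h^0_i = \nabla f_i(x^0)$ together with the update $h^{t+1}_i = \nabla f_i(x^{t+1})$ forces $h^t = \nabla f(x^t)$ deterministically, part~1 of Lemma~\ref{lemma:gradient} kills the term $\gamma\norm{h^t - \nabla f(x^t)}^2$. Next, part~3 of Lemma~\ref{lemma:gradient} bounds $\ExpSub{h}{\tfrac1n\sum_{i=1}^n\norm{h^{t+1}_i - h^t_i}^2} \le \bigl(\tfrac1n\sum_{i=1}^n L_i^2\bigr)\norm{x^{t+1}-x^t}^2 = \widehat{L}^2\norm{x^{t+1}-x^t}^2$ using Assumption~\ref{ass:nodes_lipschitz_constant}. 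Substituting this into the residual term $\tfrac{20\gamma\omega(2\omega+1)}{n}\Exp{\tfrac1n\sum_i\norm{h^{t+1}_i-h^t_i}^2}$ turns it into a multiple of $\Exp{\norm{x^{t+1}-x^t}^2}$, which I would merge with the existing $-\bigl(\tfrac{1}{2\gamma}-\tfrac{L}{2}\bigr)\norm{x^{t+1}-x^t}^2$ term.

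At this point the coefficient of $\norm{x^{t+1}-x^t}^2$ is $-\bigl(\tfrac{1}{2\gamma} - \tfrac{L}{2} - \tfrac{20\gamma\omega(2\omega+1)}{n}\widehat{L}^2\bigr)$, and I would apply Lemma~\ref{lemma:gamma} with $A = \tfrac{40\omega(2\omega+1)}{n}\widehat{L}^2$ to conclude this coefficient is nonpositive under $\gamma \le \bigl(L + \sqrt{\tfrac{40\omega(2\omega+1)}{n}}\,\widehat{L}\bigr)^{-1}$; here the factor $40$ in the hypothesis matches the $20$ in the residual because Lemma~\ref{lemma:gamma} contributes the $\tfrac{\gamma A}{2}$. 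Dropping the resulting nonpositive term yields exactly the recursion hypothesized in Lemma~\ref{lemma:good_recursion_pl} with $\Psi^t = 2(2\omega+1)\norm{g^t-h^t}^2 + \tfrac{8\omega}{n}\bigl(\tfrac1n\sum_i\norm{g^t_i-h^t_i}^2\bigr) \ge 0$ and, crucially, $C = 0$ since the gradient oracle introduces no variance floor. Invoking Lemma~\ref{lemma:good_recursion_pl} with $\Psi^0 = 2(2\omega+1)\norm{g^0-\nabla f(x^0)}^2 + \tfrac{8\omega}{n}\bigl(\tfrac1n\sum_i\norm{g^0_i-\nabla f_i(x^0)}^2\bigr)$ (using $h^0 = \nabla f(x^0)$) then gives the claimed bound. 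The only real obstacle is bookkeeping: verifying that the numerical constant in the stepsize ($40$) is exactly the one that makes Lemma~\ref{lemma:gamma} absorb the $\tfrac{20}{n}$-residual, and checking that both branches of the $\min$ defining $\gamma$ are genuinely needed (the second branch $\gamma \le \nicefrac{a}{2\mu}$ is what licenses Lemma~\ref{lemma:main_lemma_pl}); everything else is a direct transcription of the non-P\L\ argument.
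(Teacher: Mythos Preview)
Your proposal is correct and follows essentially the same approach as the paper: combine Lemma~\ref{lemma:main_lemma_pl} with Lemma~\ref{lemma:gradient}, use Lemma~\ref{lemma:gamma} (with $A = \tfrac{40\omega(2\omega+1)}{n}\widehat{L}^2$) to drop the $\norm{x^{t+1}-x^t}^2$ term, and close with Lemma~\ref{lemma:good_recursion_pl} using the stated $\Psi^t$ and $C=0$. Your bookkeeping on the $40$-versus-$20$ constant and on the role of the $\gamma \le \nicefrac{a}{2\mu}$ branch is exactly right.
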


\begin{proof}
    Considering Lemma~\ref{lemma:main_lemma_pl}, Lemma~\ref{lemma:gradient}, and the law of total expectation, we obtain
    \begin{align*}
        &\Exp{f(x^{t + 1})} + 2\gamma(2\omega + 1) \Exp{\norm{g^{t+1} - h^{t+1}}^2} + \frac{8 \gamma \omega}{n} \Exp{\frac{1}{n}\sum_{i=1}^n\norm{g^{t+1}_i - h^{t+1}_i}^2}\\
        &\leq \Exp{f(x^t) - \frac{\gamma}{2}\norm{\nabla f(x^t)}^2 - \left(\frac{1}{2\gamma} - \frac{L}{2}\right) \norm{x^{t+1} - x^t}^2}\\
        &\quad + \left(1 - \gamma \mu\right)2\gamma(2\omega + 1) \Exp{\norm{g^{t} - h^t}^2} + \left(1 - \gamma \mu\right) \frac{8 \gamma \omega}{n} \Exp{\frac{1}{n} \sum_{i=1}^n\norm{g^t_i - h^{t}_i}^2}\\
        &\quad + \frac{20 \gamma \omega (2 \omega + 1)}{n} \widehat{L}^2 \norm{x^{t+1} - x^t}^2 \\
        &= \Exp{f(x^t)} - \frac{\gamma}{2}\Exp{\norm{\nabla f(x^t)}^2} \\
        &\quad + \left(1 - \gamma \mu\right)2\gamma(2\omega + 1) \Exp{\norm{g^{t} - h^t}^2} + \left(1 - \gamma \mu\right) \frac{8 \gamma \omega}{n} \Exp{\frac{1}{n} \sum_{i=1}^n\norm{g^t_i - h^{t}_i}^2}\\
        &\quad -\left(\frac{1}{2\gamma} - \frac{L}{2} - \frac{20 \gamma \omega (2 \omega + 1)}{n} \widehat{L}^2\right) \norm{x^{t+1} - x^t}^2.
    \end{align*}
    Using the assumption about $\gamma,$ we can show that $\frac{1}{2\gamma} - \frac{L}{2} - \frac{20 \gamma \omega (2 \omega + 1)}{n} \widehat{L}^2 \geq 0$ (see Lemma~\ref{lemma:gamma}), thus
    \begin{align*}
        &\Exp{f(x^{t + 1})} + 2\gamma(2\omega + 1) \Exp{\norm{g^{t+1} - h^{t+1}}^2} + \frac{8 \gamma \omega}{n} \Exp{\frac{1}{n}\sum_{i=1}^n\norm{g^{t+1}_i - h^{t+1}_i}^2}\\
        &\leq \Exp{f(x^t)} - \frac{\gamma}{2}\Exp{\norm{\nabla f(x^t)}^2} \\
        &\quad + \left(1 - \gamma \mu\right)2\gamma(2\omega + 1) \Exp{\norm{g^{t} - h^t}^2} + \left(1 - \gamma \mu\right) \frac{8 \gamma \omega}{n} \Exp{\frac{1}{n} \sum_{i=1}^n\norm{g^t_i - h^{t}_i}^2}.
    \end{align*}
    In the view of Lemma~\ref{lemma:good_recursion_pl} with $\Psi^t = 2(2\omega + 1) \Exp{\norm{g^{t} - h^t}^2} + \frac{8 \omega}{n} \Exp{\frac{1}{n} \sum_{i=1}^n\norm{g^t_i - h^{t}_i}^2}$ we can conclude the proof.
\end{proof}

We use $\widetilde{\cO}\left(\cdot\right)$, when we provide a bound up to logarithmic factors.

\begin{corollary}
    \label{cor:gradient_oracle_pl}
    Suppose that assumptions from Theorem~\ref{theorem:gradient_oracle_pl} hold, and $g^{0}_i = 0$ for all $i \in [n],$ then
    \algname{\algorithmname}
    needs
    \begin{align}
        \label{eq:corollary:gradient_oracle_pl}
        &T \eqdef \widetilde{\cO}\left(\omega + \frac{L}{\mu} + \frac{\omega\widehat{L}}{\mu\sqrt{n}}\frac{}{}\right).
    \end{align}
    communication rounds to get an $\varepsilon$-solution and the communication complexity is equal to $\cO\left(\zeta_{\cC} T\right),$ where $\zeta_{\cC}$ is the expected density from Definition~\ref{def:expected_density}.
\end{corollary}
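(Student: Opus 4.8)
The plan is to derive the corollary directly from the linear-convergence bound of Theorem~\ref{theorem:gradient_oracle_pl} by unrolling the contraction factor and substituting the prescribed stepsize. First I would note that the right-hand side of Theorem~\ref{theorem:gradient_oracle_pl} has the form $(1-\gamma\mu)^T R^0$, where $R^0$ collects the initial function gap together with the initial compression-error terms $2\gamma(2\omega+1)\norm{g^0-\nabla f(x^0)}^2$ and $\frac{8\gamma\omega}{n}(\frac1n\sum_i\norm{g^0_i-\nabla f_i(x^0)}^2)$. Using the elementary bound $(1-\gamma\mu)^T \leq \exp(-\gamma\mu T)$, the target accuracy $\Exp{f(x^T)-f^*} \leq \varepsilon$ is guaranteed as soon as $\exp(-\gamma\mu T)R^0 \leq \varepsilon$, that is,
\[
T \geq \frac{1}{\gamma\mu}\ln\!\left(\frac{R^0}{\varepsilon}\right) = \widetilde{\cO}\!\left(\frac{1}{\gamma\mu}\right).
\]
The logarithmic dependence on $R^0/\varepsilon$ is precisely what the $\widetilde{\cO}$ notation absorbs, so the whole task reduces to estimating $1/(\gamma\mu)$ up to constants.

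Next I would unpack the stepsize. Since $\gamma = \min\{(L + \sqrt{40\omega(2\omega+1)/n}\,\widehat{L})^{-1},\, a/(2\mu)\}$ with $a = 1/(2\omega+1)$, taking reciprocals turns the minimum into a maximum,
\[
\frac{1}{\gamma} = \max\!\left\{L + \sqrt{\tfrac{40\omega(2\omega+1)}{n}}\,\widehat{L},\; 2\mu(2\omega+1)\right\},
\]
and bounding the maximum by the sum and dividing by $\mu$ gives
\[
\frac{1}{\gamma\mu} = \cO\!\left(\frac{L}{\mu} + \frac{\sqrt{\omega(2\omega+1)}\,\widehat{L}}{\mu\sqrt{n}} + (2\omega+1)\right).
\]
It then remains to simplify $\sqrt{\omega(2\omega+1)} = \Theta(\omega)$ and $2\omega+1 = \Theta(\omega)$ (in the regime $\omega \gtrsim 1$), which yields the advertised count $T = \widetilde{\cO}(\omega + \frac{L}{\mu} + \frac{\omega\widehat{L}}{\mu\sqrt{n}})$.

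Finally, for the communication complexity I would count the coordinates sent per round and in initialization. Each of the $T$ rounds transmits one compressed message $m^{t+1}_i$ per node, costing $\zeta_{\cC}$ coordinates by Definition~\ref{def:expected_density}. The key point is that the hypothesis $g^0_i = 0$, combined with the fact that $h^0_i = \nabla f_i(x^0)$ lives only on the nodes and is never communicated, means the server already knows $g^0 = \frac1n\sum_i g^0_i = 0$ without any transmission; hence no initialization communication is incurred. This is in contrast with Corollary~\ref{cor:gradient_oracle}, where $g^0_i = \nabla f_i(x^0)$ forced an extra $\cO(d)$ term. The total communication complexity is therefore $\cO(\zeta_{\cC} T)$.

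The main obstacle, such as it is, lies less in any single estimate than in correctly propagating the $\min$-to-$\max$ interchange for $1/\gamma$ and identifying which competing term dominates: the $\Theta(\omega)$ contribution coming from the $a/(2\mu)$ bound versus the $\omega\widehat{L}/(\mu\sqrt{n})$ contribution coming from the smoothness bound. Everything else is the routine geometric-decay-to-iteration-count conversion and elementary algebra.
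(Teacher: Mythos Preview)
Your proposal is correct and follows essentially the same approach as the paper's proof: unroll the $(1-\gamma\mu)^T$ contraction from Theorem~\ref{theorem:gradient_oracle_pl}, absorb the initialization error $R^0$ into the logarithm hidden by $\widetilde{\cO}$, and read off $T = \widetilde{\cO}(1/(\gamma\mu))$ by substituting the stepsize bound. Your explanation of why the $\cO(d)$ initialization cost disappears (because $g^0_i = 0$ requires no transmission) is in fact more explicit than the paper's, which simply remarks that the initialization error sits under the logarithm.
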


\begin{proof}
    Clearly, using Theorem~\ref{theorem:gradient_oracle_pl}, one can show that Algorithm~\ref{alg:main_algorithm} returns an $\varepsilon$-solution after \eqref{eq:corollary:gradient_oracle_pl} communication rounds. At each communication round of Algorithm~\ref{alg:main_algorithm}, each node sends $\zeta_{\cC}$ coordinates, thus the total communication complexity would be $\cO\left(\zeta_{\cC} T\right)$ per node. Unlike Corollary~\ref{cor:gradient_oracle}, in this corollary, we can initialize $g^0_i$, for instance, with zeros because the corresponding initialization error $\Psi^0$ from the proof of Theorem~\ref{theorem:gradient_oracle_pl} would be under the logarithm.
\end{proof}

\subsection{Case of \algname{\algorithmname-PAGE}}

\begin{lemma}
    \label{lemma:page_h}
    Suppose that Assumptions \ref{ass:nodes_lipschitz_constant} and \ref{ass:max_lipschitz_constant} hold. For $h^{t+1}_i$ from Algorithm~\ref{alg:main_algorithm} \algname{(\algname{\algorithmname-PAGE})} we have
    \begin{enumerate}
    \item
        \begin{align*}
            \ExpSub{h}{\norm{h^{t+1} - \nabla f(x^{t+1})}^2} \leq \frac{\left(1 - p\right)L_{\max}^2}{n\BZ}\norm{x^{t+1} - x^t}^2 + \left(1 - p\right) \norm{h^{t} - \nabla f(x^{t})}^2.
        \end{align*}
    \item
        \begin{align*}
            \ExpSub{h}{\norm{h^{t+1}_i - \nabla f_i(x^{t+1})}^2} \leq \frac{\left(1 - p\right)L_{\max}^2}{\BZ}\norm{x^{t+1} - x^t}^2 + \left(1 - p\right) \norm{h^{t}_i - \nabla f_i(x^{t})}^2, \quad \forall i \in [n].
        \end{align*}
    \item
        \begin{align*}
            \ExpSub{h}{\norm{h^{t+1}_i - h^{t}_i}^2} \leq \left(\frac{(1 - p) L_{\max}^2}{\BZ} + 2 L_i^2\right) \norm{x^{t+1} - x^t}^2 + 2 p \norm{h^{t}_i - \nabla f_i(x^{t})}^2, \quad \forall i \in [n].
        \end{align*}
    \end{enumerate}
\end{lemma}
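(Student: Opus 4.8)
The plan is to handle the two sources of randomness inside $\ExpSub{h}{\cdot}$ separately: the shared coin flip $c^{t+1}$ (equal to $1$ with probability $p$ and $0$ with probability $1-p$) and the independent mini-batches $\{I^t_i\}_{i=1}^n$. Throughout, $x^{t+1}$, $x^t$ and $\{h^t_i\}$ are fixed by the conditioning. The single workhorse estimate I would establish first is a variance bound for the mini-batch increment $\Delta_i \eqdef \frac{1}{\BZ}\sum_{j\in I^t_i}\left(\nabla f_{ij}(x^{t+1}) - \nabla f_{ij}(x^t)\right)$. Since $I^t_i$ consists of $\BZ$ i.i.d.\ samples from $[m]$, $\Delta_i$ is an average of $\BZ$ i.i.d.\ terms with mean $\nabla f_i(x^{t+1}) - \nabla f_i(x^t)$ (unbiasedness), so $\ExpSub{I^t_i}{\norm{\Delta_i - (\nabla f_i(x^{t+1}) - \nabla f_i(x^t))}^2} = \frac{1}{\BZ}\mathrm{Var}(\cdot) \le \frac{1}{\BZ}\cdot\frac{1}{m}\sum_{j=1}^m\norm{\nabla f_{ij}(x^{t+1}) - \nabla f_{ij}(x^t)}^2 \le \frac{L_{\max}^2}{\BZ}\norm{x^{t+1}-x^t}^2$, using $L_{\max}$-smoothness of each $f_{ij}$ (Assumption~\ref{ass:max_lipschitz_constant}).

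For part 2 I would condition on the coin. When $c^{t+1}=1$ (probability $p$), $h^{t+1}_i = \nabla f_i(x^{t+1})$ and the term vanishes. When $c^{t+1}=0$ (probability $1-p$), $h^{t+1}_i - \nabla f_i(x^{t+1}) = h^t_i + \Delta_i - \nabla f_i(x^{t+1})$; applying the variance decomposition \eqref{auxiliary:variance_decomposition} over $I^t_i$ splits this into the squared mean $\norm{h^t_i - \nabla f_i(x^t)}^2$ plus the variance, which is bounded above by $\frac{L_{\max}^2}{\BZ}\norm{x^{t+1}-x^t}^2$. Weighting the two branches by $p$ and $1-p$ gives part 2 directly. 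Part 1 is identical but applied to the averages $h^{t+1} = \frac1n\sum_i h^{t+1}_i$ and $\nabla f(x^{t+1})$; the only new ingredient is that the $\Delta_i$ are \emph{independent across} $i$, so the variance of $\frac1n\sum_i \Delta_i$ equals $\frac{1}{n^2}\sum_i \mathrm{Var}(\Delta_i) \le \frac{L_{\max}^2}{n\BZ}\norm{x^{t+1}-x^t}^2$, which produces the improved $\frac1n$ factor.

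For part 3 I would again condition on the coin. In the $c^{t+1}=0$ branch, $h^{t+1}_i - h^t_i = \Delta_i$, and the variance decomposition gives $\ExpSub{I^t_i}{\norm{\Delta_i}^2} = \norm{\nabla f_i(x^{t+1})-\nabla f_i(x^t)}^2 + \mathrm{Var}(\Delta_i) \le \left(L_i^2 + \frac{L_{\max}^2}{\BZ}\right)\norm{x^{t+1}-x^t}^2$, using $L_i$-smoothness of $f_i$ (Assumption~\ref{ass:nodes_lipschitz_constant}). In the $c^{t+1}=1$ branch, $h^{t+1}_i - h^t_i = \nabla f_i(x^{t+1}) - h^t_i$; I would split this as $(\nabla f_i(x^{t+1}) - \nabla f_i(x^t)) + (\nabla f_i(x^t) - h^t_i)$ and use Young's inequality \eqref{auxiliary:jensen_inequality} together with $L_i$-smoothness to bound it by $2L_i^2\norm{x^{t+1}-x^t}^2 + 2\norm{h^t_i - \nabla f_i(x^t)}^2$. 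Combining with weights $p$ and $1-p$ yields coefficient $(1+p)L_i^2 + \frac{(1-p)L_{\max}^2}{\BZ}$ on $\norm{x^{t+1}-x^t}^2$ and $2p$ on the error term; finally bounding $1+p \le 2$ (since $p\le 1$) produces the stated $2L_i^2$.

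The computations are elementary; the only points requiring care are (i) obtaining the $\frac{1}{\BZ}$ mini-batch variance factor and, for part 1, the extra $\frac1n$ coming from cross-node independence of the samplers, and (ii) in part 3 the slightly lossy step $1+p\le 2$ that merges the two coin branches into the clean coefficient $2L_i^2$. I do not anticipate a genuine obstacle beyond correctly bookkeeping the two independent randomness sources.
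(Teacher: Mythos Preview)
Your proposal is correct and follows essentially the same approach as the paper's own proof: condition on the coin, apply the variance decomposition \eqref{auxiliary:variance_decomposition} to isolate the mini-batch variance, bound that variance by $\frac{L_{\max}^2}{\BZ}\norm{x^{t+1}-x^t}^2$ via Assumption~\ref{ass:max_lipschitz_constant}, and for part~3 split the $c^{t+1}=1$ branch with \eqref{auxiliary:jensen_inequality} before relaxing $(1+p)L_i^2\le 2L_i^2$. The paper carries out exactly these steps in the same order (including the same slightly lossy $1+p\le 2$ relaxation), so there is nothing to add.
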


\begin{proof}
    Using the definition of $h^{t+1}$, we obtain
    \begin{align*}
        &\ExpSub{h}{\norm{h^{t+1} - \nabla f(x^{t+1})}^2} \\
        &=\left(1 - p\right)\ExpSub{h}{\norm{h^t + \frac{1}{n}\sum_{i=1}^n\frac{1}{\BZ}\sum_{j \in I^t_i}\left(\nabla f_{ij}(x^{t+1}) - \nabla f_{ij}(x^{t})\right) - \nabla f(x^{t+1})}^2} \\
        &\overset{\eqref{auxiliary:variance_decomposition}}{=}\left(1 - p\right)\ExpSub{h}{\norm{\frac{1}{n}\sum_{i=1}^n\frac{1}{\BZ}\sum_{j \in I^t_i}\left(\nabla f_{ij}(x^{t+1}) - \nabla f_{ij}(x^{t})\right) - \left(\nabla f(x^{t+1}) - \nabla f(x^{t})\right)}^2} \\
        &\quad + \left(1 - p\right) \norm{h^{t} - \nabla f(x^{t})}^2.
    \end{align*}
    From the unbiasedness and independence of mini-batch samples, we get
    \begin{align*}
        &\ExpSub{h}{\norm{h^{t+1} - \nabla f(x^{t+1})}^2} \\
        &\leq\frac{\left(1 - p\right)}{n^2\BZ^2}\sum_{i=1}^n\ExpSub{h}{\sum_{j \in I^t_i}\norm{\left(\nabla f_{ij}(x^{t+1}) - \nabla f_{ij}(x^{t})\right) - \left(\nabla f_i(x^{t+1}) - \nabla f_i(x^{t})\right)}^2} \\
        &\quad + \left(1 - p\right) \norm{h^{t} - \nabla f(x^{t})}^2 \\
        &=\frac{\left(1 - p\right)}{n^2 \BZ}\sum_{i=1}^n\left(\frac{1}{m}\sum_{j=1}^m\norm{\left(\nabla f_{ij}(x^{t+1}) - \nabla f_{ij}(x^{t})\right) - \left(\nabla f_i(x^{t+1}) - \nabla f_i(x^{t})\right)}^2\right) \\
        &\quad + \left(1 - p\right) \norm{h^{t} - \nabla f(x^{t})}^2 \\
        &\leq\frac{\left(1 - p\right)}{n^2 \BZ}\sum_{i=1}^n\left(\frac{1}{m}\sum_{j=1}^m\norm{\nabla f_{ij}(x^{t+1}) - \nabla f_{ij}(x^{t})}^2\right) \\
        &\quad + \left(1 - p\right) \norm{h^{t} - \nabla f(x^{t})}^2 \\
        &\leq\frac{\left(1 - p\right)L_{\max}^2}{n\BZ}\norm{x^{t+1} - x^t}^2 + \left(1 - p\right) \norm{h^{t} - \nabla f(x^{t})}^2.
    \end{align*}
    In the last inequality, we use Assumption~\ref{ass:max_lipschitz_constant}. Using the same reasoning, we have
    \begin{align*}
        &\ExpSub{h}{\norm{h^{t+1}_i - \nabla f_i(x^{t+1})}^2} \\
        &=\left(1 - p\right)\ExpSub{h}{\norm{h^t_i + \frac{1}{\BZ}\sum_{j \in I^t_i}\left(\nabla f_{ij}(x^{t+1}) - \nabla f_{ij}(x^{t})\right) - \nabla f_i(x^{t+1})}^2} \\
        &=\left(1 - p\right)\ExpSub{h}{\norm{\frac{1}{\BZ}\sum_{j \in I^t_i}\left(\nabla f_{ij}(x^{t+1}) - \nabla f_{ij}(x^{t})\right) - \left(\nabla f(x^{t+1}) - \nabla f(x^{t})\right)}^2} \\
        &\quad + \left(1 - p\right) \norm{h^{t}_i - \nabla f_i(x^{t})}^2 \\
        &\leq\frac{\left(1 - p\right)L_{\max}^2}{\BZ}\norm{x^{t+1} - x^t}^2 + \left(1 - p\right) \norm{h^{t}_i - \nabla f_i(x^{t})}^2.
    \end{align*}
    Finally, we consider the last ineqaulity of the lemma:
    \begin{align*}
        &\ExpSub{h}{\norm{h^{t+1}_i - h^{t}_i}^2} \\
        &=p\norm{\nabla f_i(x^{t+1}) - h^{t}_i}^2 + (1 - p) \ExpSub{h}{\norm{h^{t}_i + \frac{1}{\BZ}\sum_{j \in I^t_i}\left(\nabla f_{ij}(x^{t+1}) - \nabla f_{ij}(x^{t})\right) - h^{t}_i}^2} \\
        &\overset{\eqref{auxiliary:variance_decomposition}}{=}p\norm{\nabla f_i(x^{t+1}) - h^{t}_i}^2 \\
        &\quad + (1 - p) \ExpSub{h}{\norm{\frac{1}{\BZ}\sum_{j \in I^t_i}\left(\nabla f_{ij}(x^{t+1}) - \nabla f_{ij}(x^{t})\right) - \left(\nabla f_{i}(x^{t+1}) - \nabla f_{i}(x^{t})\right)}^2} \\
        &\quad + (1 - p) \norm{\nabla f_{i}(x^{t+1}) - \nabla f_{i}(x^{t})}^2.
    \end{align*}
    Using the unbiasedness and independence of the gradients, we obtain
    \begin{align*}
        &\ExpSub{h}{\norm{h^{t+1}_i - h^{t}_i}^2} \\
        &\leq p\norm{\nabla f_i(x^{t+1}) - h^{t}_i}^2 \\
        &\quad + \frac{(1 - p)}{\BZ^2} \ExpSub{h}{\sum_{j \in I^t_i}\norm{\left(\nabla f_{ij}(x^{t+1}) - \nabla f_{ij}(x^{t})\right) - \left(\nabla f_{i}(x^{t+1}) - \nabla f_{i}(x^{t})\right)}^2} \\
        &\quad + (1 - p) \norm{\nabla f_{i}(x^{t+1}) - \nabla f_{i}(x^{t})}^2 \\
        &= p\norm{\nabla f_i(x^{t+1}) - h^{t}_i}^2 \\
        &\quad + \frac{(1 - p)}{\BZ} \left(\frac{1}{m}\sum_{j=1}^m\norm{\left(\nabla f_{ij}(x^{t+1}) - \nabla f_{ij}(x^{t})\right) - \left(\nabla f_{i}(x^{t+1}) - \nabla f_{i}(x^{t})\right)}^2\right) \\
        &\quad + (1 - p) \norm{\nabla f_{i}(x^{t+1}) - \nabla f_{i}(x^{t})}^2 \\
        &\leq p\norm{\nabla f_i(x^{t+1}) - h^{t}_i}^2 \\
        &\quad + \frac{(1 - p)}{\BZ} \left(\frac{1}{m}\sum_{j=1}^m\norm{\nabla f_{ij}(x^{t+1}) - \nabla f_{ij}(x^{t})}^2\right) \\
        &\quad + (1 - p) \norm{\nabla f_{i}(x^{t+1}) - \nabla f_{i}(x^{t})}^2.
    \end{align*}
    From Assumptions \ref{ass:nodes_lipschitz_constant} and \ref{ass:max_lipschitz_constant}, we can conclude that
    \begin{align*}
        &\ExpSub{h}{\norm{h^{t+1}_i - h^{t}_i}^2} \\
        &\leq p\norm{\nabla f_i(x^{t+1}) - h^{t}_i}^2 + (1 - p) \left(\frac{L_{\max}^2}{\BZ} + L_i^2\right) \norm{x^{t+1} - x^t}^2 \\
        &= p\norm{\nabla f_i(x^{t+1}) - \nabla f_i(x^{t}) + \nabla f_i(x^{t}) - h^{t}_i}^2 + (1 - p) \left(\frac{L_{\max}^2}{\BZ} + L_i^2\right) \norm{x^{t+1} - x^t}^2 \\
        &\overset{\eqref{auxiliary:jensen_inequality}}{\leq} 2p\norm{\nabla f_i(x^{t+1}) - \nabla f_i(x^{t})}^2 + 2 p \norm{h^{t}_i - \nabla f_i(x^{t})}^2 + (1 - p) \left(\frac{L_{\max}^2}{\BZ} + L_i^2\right) \norm{x^{t+1} - x^t}^2 \\
        &\leq 2p L_i^2\norm{x^{t+1} - x^{t}}^2 + 2 p \norm{h^{t}_i - \nabla f_i(x^{t})}^2 + (1 - p) \left(\frac{L_{\max}^2}{\BZ} + L_i^2\right) \norm{x^{t+1} - x^t}^2 \\
        &\leq \left(\frac{(1 - p) L_{\max}^2}{\BZ} + 2 L_i^2\right) \norm{x^{t+1} - x^t}^2 + 2 p \norm{h^{t}_i - \nabla f_i(x^{t})}^2.
    \end{align*}
\end{proof}

\CONVERGENCEPAGE*

\begin{proof}
    Let us fix constants $\nu, \rho \in [0,\infty)$ that we will define later. Considering Lemma~\ref{lemma:main_lemma}, Lemma~\ref{lemma:page_h}, and the law of total expectation, we obtain
    \begin{align*}
        &\Exp{f(x^{t + 1})} + \gamma \left(2 \omega + 1\right) \Exp{\norm{g^{t+1} - h^{t+1}}^2} + \frac{2 \gamma \omega}{n} \Exp{\frac{1}{n}\sum_{i=1}^n\norm{g^{t+1}_i - h^{t+1}_i}^2}\\
        &\quad  + \nu \Exp{\norm{h^{t+1} - \nabla f(x^{t+1})}^2} + \rho \Exp{\frac{1}{n}\sum_{i=1}^n\norm{h^{t+1}_i - \nabla f_i(x^{t+1})}^2}\\
        &\leq \Exp{f(x^t) - \frac{\gamma}{2}\norm{\nabla f(x^t)}^2 - \left(\frac{1}{2\gamma} - \frac{L}{2}\right)
        \norm{x^{t+1} - x^t}^2 + \gamma \norm{h^{t} - \nabla f(x^t)}^2}\\
        &\quad + \gamma \left(2 \omega + 1\right) \Exp{\norm{g^{t} - h^t}^2} + \frac{2 \gamma \omega}{n} \Exp{\frac{1}{n} \sum_{i=1}^n\norm{g^t_i - h^{t}_i}^2} \\
        &\quad + \frac{8 \gamma \omega \left(2 \omega + 1\right)}{n}\Exp{\left(\frac{(1 - p) L_{\max}^2}{\BZ} + 2 \widehat{L}^2\right) \norm{x^{t+1} - x^t}^2 + 2 p \frac{1}{n} \sum_{i=1}^n \norm{h^{t}_i - \nabla f_i(x^{t})}^2} \\
        &\quad + \nu \Exp{\frac{\left(1 - p\right)L_{\max}^2}{n\BZ}\norm{x^{t+1} - x^t}^2 + \left(1 - p\right) \norm{h^{t} - \nabla f(x^{t})}^2} \\
        &\quad + \rho \Exp{\frac{\left(1 - p\right)L_{\max}^2}{\BZ}\norm{x^{t+1} - x^t}^2 + \left(1 - p\right) \frac{1}{n} \sum_{i=1}^n \norm{h^{t}_i - \nabla f_i(x^{t})}^2}.
    \end{align*}
    After rearranging the terms, we get
    \begin{align*}
        &\Exp{f(x^{t + 1})} + \gamma \left(2 \omega + 1\right) \Exp{\norm{g^{t+1} - h^{t+1}}^2} + \frac{2 \gamma \omega}{n} \Exp{\frac{1}{n}\sum_{i=1}^n\norm{g^{t+1}_i - h^{t+1}_i}^2}\\
        &\quad  + \nu \Exp{\norm{h^{t+1} - \nabla f(x^{t+1})}^2} + \rho \Exp{\frac{1}{n}\sum_{i=1}^n\norm{h^{t+1}_i - \nabla f_i(x^{t+1})}^2}\\
        &\leq \Exp{f(x^t)} - \frac{\gamma}{2}\Exp{\norm{\nabla f(x^t)}^2} \\
        &\quad + \gamma \left(2 \omega + 1\right) \Exp{\norm{g^{t} - h^t}^2} + \frac{2 \gamma \omega}{n} \Exp{\frac{1}{n} \sum_{i=1}^n\norm{g^t_i - h^{t}_i}^2} \\
        &\quad - \left(\frac{1}{2\gamma} - \frac{L}{2} - \frac{8 \gamma \omega \left(2 \omega + 1\right) \left(\frac{(1 - p) L_{\max}^2}{\BZ} + 2 \widehat{L}^2\right)}{n} - \nu \frac{\left(1 - p\right)L_{\max}^2}{n\BZ} - \rho \frac{\left(1 - p\right)L_{\max}^2}{\BZ} \right) \Exp{\norm{x^{t+1} - x^t}^2} \\
        &\quad + \left(\gamma + \nu (1 - p)\right) \Exp{\norm{h^{t} - \nabla f(x^{t})}^2} \\
        &\quad + \left(\frac{16 \gamma p \omega \left(2 \omega + 1\right)}{n} + \rho (1 - p)\right)\Exp{\frac{1}{n}\sum_{i=1}^n\norm{h^{t}_i - \nabla f_i(x^{t})}^2}.
    \end{align*}
    Next, let us fix $\nu = \frac{\gamma}{p},$ to get
    \begin{align*}
        &\Exp{f(x^{t + 1})} + \gamma \left(2 \omega + 1\right) \Exp{\norm{g^{t+1} - h^{t+1}}^2} + \frac{2 \gamma \omega}{n} \Exp{\frac{1}{n}\sum_{i=1}^n\norm{g^{t+1}_i - h^{t+1}_i}^2}\\
        &\quad  + \frac{\gamma}{p} \Exp{\norm{h^{t+1} - \nabla f(x^{t+1})}^2} + \rho \Exp{\frac{1}{n}\sum_{i=1}^n\norm{h^{t+1}_i - \nabla f_i(x^{t+1})}^2}\\
        &\leq \Exp{f(x^t)} - \frac{\gamma}{2}\Exp{\norm{\nabla f(x^t)}^2} \\
        &\quad + \gamma \left(2 \omega + 1\right) \Exp{\norm{g^{t} - h^t}^2} + \frac{2 \gamma \omega}{n} \Exp{\frac{1}{n} \sum_{i=1}^n\norm{g^t_i - h^{t}_i}^2} \\
        &\quad + \frac{\gamma}{p} \Exp{\norm{h^{t} - \nabla f(x^{t})}^2} \\
        &\quad - \left(\frac{1}{2\gamma} - \frac{L}{2} - \frac{8 \gamma \omega \left(2 \omega + 1\right) \left(\frac{(1 - p) L_{\max}^2}{\BZ} + 2 \widehat{L}^2\right)}{n} - \frac{\gamma\left(1 - p\right)L_{\max}^2}{pn\BZ} - \rho \frac{\left(1 - p\right)L_{\max}^2}{\BZ} \right) \Exp{\norm{x^{t+1} - x^t}^2} \\
        &\quad + \left(\frac{16 \gamma p \omega \left(2 \omega + 1\right)}{n} + \rho (1 - p)\right)\Exp{\frac{1}{n}\sum_{i=1}^n\norm{h^{t}_i - \nabla f_i(x^{t})}^2}.
    \end{align*}
    By taking $\rho = \frac{16 \gamma \omega \left(2 \omega + 1\right)}{n},$ we obtain
    \begin{align*}
        &\Exp{f(x^{t + 1})} + \gamma \left(2 \omega + 1\right) \Exp{\norm{g^{t+1} - h^{t+1}}^2} + \frac{2 \gamma \omega}{n} \Exp{\frac{1}{n}\sum_{i=1}^n\norm{g^{t+1}_i - h^{t+1}_i}^2}\\
        &\quad  + \frac{\gamma}{p} \Exp{\norm{h^{t+1} - \nabla f(x^{t+1})}^2} + \frac{16 \gamma \omega \left(2 \omega + 1\right)}{n} \Exp{\frac{1}{n}\sum_{i=1}^n\norm{h^{t+1}_i - \nabla f_i(x^{t+1})}^2}\\
        &\leq \Exp{f(x^t)} - \frac{\gamma}{2}\Exp{\norm{\nabla f(x^t)}^2} \\
        &\quad + \gamma \left(2 \omega + 1\right) \Exp{\norm{g^{t} - h^t}^2} + \frac{2 \gamma \omega}{n} \Exp{\frac{1}{n} \sum_{i=1}^n\norm{g^t_i - h^{t}_i}^2} \\
        &\quad + \frac{\gamma}{p} \Exp{\norm{h^{t} - \nabla f(x^{t})}^2} + \frac{16 \gamma \omega \left(2 \omega + 1\right)}{n} \Exp{\frac{1}{n}\sum_{i=1}^n\norm{h^{t}_i - \nabla f_i(x^{t})}^2} \\
        &\quad - \left(\frac{1}{2\gamma} - \frac{L}{2} - \frac{8 \gamma \omega \left(2 \omega + 1\right) \left(\frac{(1 - p) L_{\max}^2}{\BZ} + 2 \widehat{L}^2\right)}{n} \right.\\
        &\left.\quad\quad\quad - \frac{\gamma\left(1 - p\right)L_{\max}^2}{pn\BZ} - \frac{16 \gamma \omega \left(2 \omega + 1\right) \left(1 - p\right)L_{\max}^2}{n \BZ} \right) \Exp{\norm{x^{t+1} - x^t}^2} \\
        &\leq \Exp{f(x^t)} - \frac{\gamma}{2}\Exp{\norm{\nabla f(x^t)}^2} \\
        &\quad + \gamma \left(2 \omega + 1\right) \Exp{\norm{g^{t} - h^t}^2} + \frac{2 \gamma \omega}{n} \Exp{\frac{1}{n} \sum_{i=1}^n\norm{g^t_i - h^{t}_i}^2} \\
        &\quad + \frac{\gamma}{p} \Exp{\norm{h^{t} - \nabla f(x^{t})}^2} + \frac{16 \gamma \omega \left(2 \omega + 1\right)}{n} \Exp{\frac{1}{n}\sum_{i=1}^n\norm{h^{t}_i - \nabla f_i(x^{t})}^2} \\
        &\quad - \left(\frac{1}{2\gamma} - \frac{L}{2} - \frac{24 \gamma \omega \left(2 \omega + 1\right) \left(\frac{(1 - p) L_{\max}^2}{\BZ} + \widehat{L}^2\right)}{n} - \frac{\gamma\left(1 - p\right)L_{\max}^2}{pn\BZ} \right) \Exp{\norm{x^{t+1} - x^t}^2}.
    \end{align*}
    Next, considering the choice of $\gamma$ and Lemma~\ref{lemma:gamma}, we get
    \begin{align*}
        &\Exp{f(x^{t + 1})} + \gamma \left(2 \omega + 1\right) \Exp{\norm{g^{t+1} - h^{t+1}}^2} + \frac{2 \gamma \omega}{n} \Exp{\frac{1}{n}\sum_{i=1}^n\norm{g^{t+1}_i - h^{t+1}_i}^2}\\
        &\quad  + \frac{\gamma}{p} \Exp{\norm{h^{t+1} - \nabla f(x^{t+1})}^2} + \frac{16 \gamma \omega \left(2 \omega + 1\right)}{n} \Exp{\frac{1}{n}\sum_{i=1}^n\norm{h^{t+1}_i - \nabla f_i(x^{t+1})}^2}\\
        &\leq \Exp{f(x^t)} - \frac{\gamma}{2}\Exp{\norm{\nabla f(x^t)}^2} \\
        &\quad + \gamma \left(2 \omega + 1\right) \Exp{\norm{g^{t} - h^t}^2} + \frac{2 \gamma \omega}{n} \Exp{\frac{1}{n} \sum_{i=1}^n\norm{g^t_i - h^{t}_i}^2} \\
        &\quad + \frac{\gamma}{p} \Exp{\norm{h^{t} - \nabla f(x^{t})}^2} + \frac{16 \gamma \omega \left(2 \omega + 1\right)}{n} \Exp{\frac{1}{n}\sum_{i=1}^n\norm{h^{t}_i - \nabla f_i(x^{t})}^2}.
    \end{align*}
    Finally, in the view of Lemma~\ref{lemma:good_recursion} with 
    \begin{eqnarray*}
        \Psi^t &=& \left(2 \omega + 1\right) \Exp{\norm{g^{t} - h^t}^2} + \frac{2 \omega}{n} \Exp{\frac{1}{n} \sum_{i=1}^n\norm{g^t_i - h^{t}_i}^2} \\
        &\quad +& \frac{1}{p} \Exp{\norm{h^{t} - \nabla f(x^{t})}^2} + \frac{16 \omega \left(2 \omega + 1\right)}{n} \Exp{\frac{1}{n}\sum_{i=1}^n\norm{h^{t}_i - \nabla f_i(x^{t})}^2},
    \end{eqnarray*}
    we can conclude the proof.
\end{proof}

\COROLLARYPAGE*
\begin{proof}
    Corollary~\ref{cor:mini_batch_oracle} can be proved in the same way as Corollary~\ref{cor:gradient_oracle}. One only should note that the expected number of gradients calculations at each communication round equals $p m + (1 - p) B = \frac{2 m B}{m + B} \leq 2 B.$
\end{proof}

\COROLLARYPAGERANDK*

\begin{proof}
    In the view of Theorem~\ref{theorem:rand_k}, we have $\omega + 1 = d / K.$ Combining this, inequalities $L \leq \widehat{L} \leq L_{\max},$ and $K = \Theta\left(\frac{B d}{\sqrt{m}}\right) = \cO\left(\frac{d}{\sqrt{n}}\right),$ we can show that the communication complexity equals
    \begin{eqnarray*}
        \cO\left(d + \zeta_{\cC} T\right) &=& \cO\left(d + \frac{1}{\varepsilon}\vast[\left(f(x^0) - f^*\right) \left(K L + K\frac{\omega}{\sqrt{n}}\widehat{L} + K\left(\frac{\omega}{\sqrt{n}} + \sqrt{\frac{m}{n \BZ}}\right)\frac{L_{\max}}{\sqrt{\BZ}}\right) \vast]\right) \\
        &=& \cO\left(d + \frac{1}{\varepsilon}\vast[\left(f(x^0) - f^*\right) \left(\frac{d}{\sqrt{n}} L + \frac{d}{\sqrt{n}}\widehat{L} + \frac{d}{\sqrt{n}}L_{\max}\right) \vast]\right) \\
        &=& \cO\left(d + \frac{1}{\varepsilon}\vast[\left(f(x^0) - f^*\right) \left(\frac{d}{\sqrt{n}}L_{\max}\right) \vast]\right).
    \end{eqnarray*}
    And the expected number of gradient calculations per node equals
    \begin{eqnarray*}
        \cO\left(m + B T\right) &=& \cO\left(m + \frac{1}{\varepsilon}\vast[\left(f(x^0) - f^*\right) \left(B L + B\frac{\omega}{\sqrt{n}}\widehat{L} + B\left(\frac{\omega}{\sqrt{n}} + \sqrt{\frac{m}{n \BZ}}\right)\frac{L_{\max}}{\sqrt{\BZ}}\right) \vast]\right) \\
        &=& \cO\left(m + \frac{1}{\varepsilon}\vast[\left(f(x^0) - f^*\right) \left(\sqrt{\frac{m}{n}}L + \sqrt{\frac{m}{n}}\widehat{L} + \sqrt{\frac{m}{n}} L_{\max}\right) \vast]\right) \\
        &=& \cO\left(m + \frac{1}{\varepsilon}\vast[\left(f(x^0) - f^*\right) \left(\sqrt{\frac{m}{n}} L_{\max}\right) \vast]\right).
    \end{eqnarray*}
\end{proof}

\subsection{Case of \algname{\algorithmname-PAGE} under P\L-condition}

\begin{theorem}
    \label{theorem:page_pl}
    Suppose that Assumption \ref{ass:lower_bound}, \ref{ass:lipschitz_constant}, \ref{ass:nodes_lipschitz_constant}, \ref{ass:compressors}, \ref{ass:max_lipschitz_constant}, and \ref{ass:pl_condition} hold. Let us take $a = 1 / \left(2\omega + 1\right),$ probability $p \in (0, 1], $ batch size $\BZ \in [m],$ and
    $\gamma \leq \min\left\{\left(L + \sqrt{\frac{200 \omega (2 \omega + 1)}{n}\left(\frac{(1 - p) L_{\max}^2}{\BZ} + 2 \widehat{L}^2\right) + \frac{4 \left(1 - p\right)L_{\max}^2}{p n\BZ}}\right)^{-1}, \frac{a}{2\mu} , \frac{p}{2\mu}\right\}$ in Algorithm~\ref{alg:main_algorithm} \algname{(\algorithmname-PAGE)},
    then 
    \begin{eqnarray*}
        \Exp{f(x^{T}) - f^*} &\leq& (1 - \gamma \mu)^{T}\vast((f(x^0) - f^*) + 2 \gamma (2\omega + 1) \norm{g^{0} - h^0}^2 + \frac{8 \gamma \omega}{n} \frac{1}{n} \sum_{i=1}^n\norm{g^0_i - h^{0}_i}^2\\
        &\quad +& \frac{2 \gamma}{p} \norm{h^{0} - \nabla f(x^{0})}^2 + \frac{80 \gamma \omega \left(2 \omega + 1\right)}{n} \left(\frac{1}{n}\sum_{i=1}^n\norm{h^{0}_i - \nabla f_i(x^{0})}^2\right)\vast).
    \end{eqnarray*}
\end{theorem}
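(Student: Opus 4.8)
The plan is to mirror the proof of Theorem~\ref{theorem:page}, but starting from the P\L{} one-step descent estimate in Lemma~\ref{lemma:main_lemma_pl} instead of Lemma~\ref{lemma:main_lemma}, and closing with Lemma~\ref{lemma:good_recursion_pl} instead of Lemma~\ref{lemma:good_recursion}. Concretely, I would augment the Lyapunov function already present in Lemma~\ref{lemma:main_lemma_pl} (which controls $\Exp{\norm{g^{t+1} - h^{t+1}}^2}$ and $\Exp{\frac{1}{n}\sum_{i=1}^n \norm{g^{t+1}_i - h^{t+1}_i}^2}$ with contraction factor $1 - \gamma\mu$) by two further error terms, $\nu \Exp{\norm{h^{t+1} - \nabla f(x^{t+1})}^2}$ and $\rho \Exp{\frac{1}{n}\sum_{i=1}^n \norm{h^{t+1}_i - \nabla f_i(x^{t+1})}^2}$, for constants $\nu, \rho \geq 0$ to be fixed. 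The three bounds in Lemma~\ref{lemma:page_h} are then substituted for the PAGE estimator: the first two control the new terms at time $t+1$ by their values at time $t$ plus a multiple of $\norm{x^{t+1} - x^t}^2$, and the third controls the residual $\frac{1}{n}\sum_{i=1}^n \norm{h^{t+1}_i - h^{t}_i}^2$ that appears with coefficient $\frac{20\gamma\omega(2\omega+1)}{n}$ in Lemma~\ref{lemma:main_lemma_pl}.

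After substitution and regrouping, I would collect the coefficients of the three ``carry-over'' quantities $\norm{h^{t} - \nabla f(x^{t})}^2$, $\frac{1}{n}\sum_{i=1}^n \norm{h^{t}_i - \nabla f_i(x^{t})}^2$, and $\norm{x^{t+1} - x^t}^2$. To match the contraction structure required by Lemma~\ref{lemma:good_recursion_pl}, the first two must acquire the factor $1 - \gamma\mu$ of their time-$(t+1)$ counterparts. The $\norm{h^{t} - \nabla f(x^{t})}^2$ coefficient is $\gamma + \nu(1 - p)$, so I need $\gamma + \nu(1-p) \leq (1 - \gamma\mu)\nu$, i.e.\ $\nu(p - \gamma\mu) \geq \gamma$; choosing $\nu = \frac{2\gamma}{p}$ and using the hypothesis $\gamma \leq \frac{p}{2\mu}$ (which gives $p - \gamma\mu \geq \frac{p}{2}$) makes this hold. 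The $\frac{1}{n}\sum_i\norm{h^{t}_i - \nabla f_i(x^{t})}^2$ coefficient is $\frac{40\gamma p\omega(2\omega+1)}{n} + \rho(1-p)$, where the first part comes from the $2p$ factor in the third bound of Lemma~\ref{lemma:page_h}; the analogous requirement $\frac{40\gamma p\omega(2\omega+1)}{n} \leq \rho(p - \gamma\mu)$ is satisfied by $\rho = \frac{80\gamma\omega(2\omega+1)}{n}$, again using $\gamma \leq \frac{p}{2\mu}$. Finally, the net coefficient of $\norm{x^{t+1} - x^t}^2$ is $\frac{1}{2\gamma} - \frac{L}{2}$ minus a term of the form $\frac{\gamma A}{2}$; one checks that $A$ is bounded above by the quantity under the square root in the stated step-size restriction, so Lemma~\ref{lemma:gamma} makes this coefficient nonnegative and the term can be dropped.

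With these choices the one-step inequality takes exactly the form required by Lemma~\ref{lemma:good_recursion_pl}, with $C = 0$ (the finite-sum PAGE estimator carries no variance floor, unlike the stochastic case) and
\begin{align*}
\Psi^t &= 2(2\omega+1)\norm{g^{t} - h^t}^2 + \frac{8\omega}{n}\frac{1}{n}\sum_{i=1}^n\norm{g^t_i - h^{t}_i}^2 \\
&\quad + \frac{2}{p}\norm{h^{t} - \nabla f(x^{t})}^2 + \frac{80\omega(2\omega+1)}{n}\frac{1}{n}\sum_{i=1}^n\norm{h^{t}_i - \nabla f_i(x^{t})}^2.
\end{align*}
Invoking Lemma~\ref{lemma:good_recursion_pl} then yields $\Exp{f(x^T) - f^*} \leq (1 - \gamma\mu)^T\bigl((f(x^0) - f^*) + \gamma\Psi^0\bigr)$, and expanding $\gamma\Psi^0$ reproduces the claimed bound. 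I expect the only delicate step to be the simultaneous bookkeeping in the second paragraph: one must verify that the single cap $\gamma \leq \frac{p}{2\mu}$ (together with $\gamma \leq \frac{a}{2\mu}$, already used inside Lemma~\ref{lemma:main_lemma_pl}) is strong enough to turn every carry-over coefficient into its $1 - \gamma\mu$ contracted version at once, while the smoothness-type cap on $\gamma$ keeps the $\norm{x^{t+1} - x^t}^2$ coefficient nonnegative. No inequality beyond Lemmas~\ref{lemma:main_lemma_pl}, \ref{lemma:page_h}, \ref{lemma:gamma} and \ref{lemma:good_recursion_pl} is needed.
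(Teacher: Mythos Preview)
Your proposal is correct and follows essentially the same route as the paper's proof: you combine Lemma~\ref{lemma:main_lemma_pl} with Lemma~\ref{lemma:page_h}, choose $\nu = \frac{2\gamma}{p}$ and $\rho = \frac{80\gamma\omega(2\omega+1)}{n}$, use the caps $\gamma \le \frac{p}{2\mu}$ and $\gamma \le \frac{a}{2\mu}$ to obtain the $(1-\gamma\mu)$ contraction, invoke Lemma~\ref{lemma:gamma} to discard the $\norm{x^{t+1}-x^t}^2$ term, and finish with Lemma~\ref{lemma:good_recursion_pl}. The only cosmetic difference is that the paper first shows contraction by the factor $1-\tfrac{p}{2}$ and then observes $1-\tfrac{p}{2}\le 1-\gamma\mu$, whereas you merge these two steps into a single inequality; the content is identical.
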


\begin{proof}
    Let us fix constants $\nu, \rho \in [0,\infty)$ that we will define later. Considering Lemma~\ref{lemma:main_lemma_pl}, Lemma~\ref{lemma:page_h}, and the law of total expectation, we obtain
    \begin{align*}
        &\Exp{f(x^{t + 1})} + 2\gamma(2\omega + 1) \Exp{\norm{g^{t+1} - h^{t+1}}^2} + \frac{8 \gamma \omega}{n} \Exp{\frac{1}{n}\sum_{i=1}^n\norm{g^{t+1}_i - h^{t+1}_i}^2}\\
        &\quad  + \nu \Exp{\norm{h^{t+1} - \nabla f(x^{t+1})}^2} + \rho \Exp{\frac{1}{n}\sum_{i=1}^n\norm{h^{t+1}_i - \nabla f_i(x^{t+1})}^2}\\
        &\leq \Exp{f(x^t) - \frac{\gamma}{2}\norm{\nabla f(x^t)}^2 - \left(\frac{1}{2\gamma} - \frac{L}{2}\right) \norm{x^{t+1} - x^t}^2 + \gamma \norm{h^{t} - \nabla f(x^t)}^2}\\
        &\quad + \left(1 - \gamma \mu\right)2\gamma(2\omega + 1) \Exp{\norm{g^{t} - h^t}^2} + \left(1 - \gamma \mu\right) \frac{8 \gamma \omega}{n} \Exp{\frac{1}{n} \sum_{i=1}^n\norm{g^t_i - h^{t}_i}^2}\\
        &\quad + \frac{20 \gamma \omega (2 \omega + 1)}{n}\Exp{\left(\frac{(1 - p) L_{\max}^2}{\BZ} + 2 \widehat{L}^2\right) \norm{x^{t+1} - x^t}^2 + 2 p \frac{1}{n} \sum_{i=1}^n\norm{h^{t}_i - \nabla f_i(x^{t})}^2} \\
        &\quad + \nu \Exp{\frac{\left(1 - p\right)L_{\max}^2}{n\BZ}\norm{x^{t+1} - x^t}^2 + \left(1 - p\right) \norm{h^{t} - \nabla f(x^{t})}^2} \\
        &\quad + \rho \Exp{\frac{\left(1 - p\right)L_{\max}^2}{\BZ}\norm{x^{t+1} - x^t}^2 + \left(1 - p\right) \frac{1}{n} \sum_{i=1}^n \norm{h^{t}_i - \nabla f_i(x^{t})}^2}.
    \end{align*}
    After rearranging the terms, we get
    \begin{align*}
        &\Exp{f(x^{t + 1})} + 2\gamma(2\omega + 1) \Exp{\norm{g^{t+1} - h^{t+1}}^2} + \frac{8 \gamma \omega}{n} \Exp{\frac{1}{n}\sum_{i=1}^n\norm{g^{t+1}_i - h^{t+1}_i}^2}\\
        &\quad  + \nu \Exp{\norm{h^{t+1} - \nabla f(x^{t+1})}^2} + \rho \Exp{\frac{1}{n}\sum_{i=1}^n\norm{h^{t+1}_i - \nabla f_i(x^{t+1})}^2}\\
        &\leq \Exp{f(x^t)} - \frac{\gamma}{2}\Exp{\norm{\nabla f(x^t)}^2} \\
        &\quad + \left(1 - \gamma \mu\right)2\gamma(2\omega + 1) \Exp{\norm{g^{t} - h^t}^2} + \left(1 - \gamma \mu\right) \frac{8 \gamma \omega}{n} \Exp{\frac{1}{n} \sum_{i=1}^n\norm{g^t_i - h^{t}_i}^2}\\
        &\quad - \left(\frac{1}{2\gamma} - \frac{L}{2} - \frac{20 \gamma \omega (2 \omega + 1)}{n}\left(\frac{(1 - p) L_{\max}^2}{\BZ} + 2 \widehat{L}^2\right) - \nu \frac{\left(1 - p\right)L_{\max}^2}{n\BZ} - \rho \frac{\left(1 - p\right)L_{\max}^2}{\BZ}\right) \Exp{\norm{x^{t+1} - x^t}^2} \\
        &\quad + \left(\gamma + \nu (1 - p)\right) \Exp{\norm{h^{t} - \nabla f(x^{t})}^2} \\
        &\quad + \left(\frac{40 p \gamma \omega \left(2 \omega + 1\right)}{n} + \rho (1 - p)\right)\Exp{\frac{1}{n}\sum_{i=1}^n\norm{h^{t}_i - \nabla f_i(x^{t})}^2}.
    \end{align*}
    By taking $\nu = \frac{2\gamma}{p}$ and $\rho = \frac{80 \gamma \omega \left(2 \omega + 1\right)}{n},$ one can see that $\gamma + \nu (1 - p) \leq \left(1 - \frac{p}{2}\right)\nu$ and $\frac{40 p \gamma \omega \left(2 \omega + 1\right)}{n} + \rho (1 - p) \leq \left(1 - \frac{p}{2}\right)\rho,$ thus
    \begin{align*}
        &\Exp{f(x^{t + 1})} + 2\gamma(2\omega + 1) \Exp{\norm{g^{t+1} - h^{t+1}}^2} + \frac{8 \gamma \omega}{n} \Exp{\frac{1}{n}\sum_{i=1}^n\norm{g^{t+1}_i - h^{t+1}_i}^2}\\
        &\quad  + \nu \Exp{\norm{h^{t+1} - \nabla f(x^{t+1})}^2} + \rho \Exp{\frac{1}{n}\sum_{i=1}^n\norm{h^{t+1}_i - \nabla f_i(x^{t+1})}^2}\\
        &\leq \Exp{f(x^t)} - \frac{\gamma}{2}\Exp{\norm{\nabla f(x^t)}^2} \\
        &\quad + \left(1 - \gamma \mu\right)2\gamma(2\omega + 1) \Exp{\norm{g^{t} - h^t}^2} + \left(1 - \gamma \mu\right) \frac{8 \gamma \omega}{n} \Exp{\frac{1}{n} \sum_{i=1}^n\norm{g^t_i - h^{t}_i}^2}\\
        &\quad  + \left(1 - \frac{p}{2}\right)\frac{2\gamma}{p} \Exp{\norm{h^{t} - \nabla f(x^{t})}^2} + \left(1 - \frac{p}{2}\right) \frac{80 \gamma \omega \left(2 \omega + 1\right)}{n} \Exp{\frac{1}{n}\sum_{i=1}^n\norm{h^{t}_i - \nabla f_i(x^{t})}^2} \\
        &\quad - \left(\frac{1}{2\gamma} - \frac{L}{2} - \frac{20 \gamma \omega (2 \omega + 1)}{n}\left(\frac{(1 - p) L_{\max}^2}{\BZ} + 2 \widehat{L}^2\right) \right.\\
        &\left.\quad\quad\quad- \frac{2\gamma \left(1 - p\right)L_{\max}^2}{p n\BZ} - \frac{80 \gamma \omega \left(2 \omega + 1\right)\left(1 - p\right)L_{\max}^2}{n \BZ}\right) \Exp{\norm{x^{t+1} - x^t}^2} \\
        &\leq \Exp{f(x^t)} - \frac{\gamma}{2}\Exp{\norm{\nabla f(x^t)}^2} \\
        &\quad + \left(1 - \gamma \mu\right)2\gamma(2\omega + 1) \Exp{\norm{g^{t} - h^t}^2} + \left(1 - \gamma \mu\right) \frac{8 \gamma \omega}{n} \Exp{\frac{1}{n} \sum_{i=1}^n\norm{g^t_i - h^{t}_i}^2}\\
        &\quad  + \left(1 - \frac{p}{2}\right)\frac{2\gamma}{p} \Exp{\norm{h^{t} - \nabla f(x^{t})}^2} + \left(1 - \frac{p}{2}\right) \frac{80 \gamma \omega \left(2 \omega + 1\right)}{n} \Exp{\frac{1}{n}\sum_{i=1}^n\norm{h^{t}_i - \nabla f_i(x^{t})}^2} \\
        &\quad - \left(\frac{1}{2\gamma} - \frac{L}{2} - \frac{100 \gamma \omega (2 \omega + 1)}{n}\left(\frac{(1 - p) L_{\max}^2}{\BZ} + 2 \widehat{L}^2\right) - \frac{2\gamma \left(1 - p\right)L_{\max}^2}{p n\BZ}\right) \Exp{\norm{x^{t+1} - x^t}^2}.
    \end{align*}
    Next, considering the choice of $\gamma$ and Lemma~\ref{lemma:gamma}, we get
    \begin{align*}
        &\Exp{f(x^{t + 1})} + 2\gamma(2\omega + 1) \Exp{\norm{g^{t+1} - h^{t+1}}^2} + \frac{8 \gamma \omega}{n} \Exp{\frac{1}{n}\sum_{i=1}^n\norm{g^{t+1}_i - h^{t+1}_i}^2}\\
        &\quad  + \nu \Exp{\norm{h^{t+1} - \nabla f(x^{t+1})}^2} + \rho \Exp{\frac{1}{n}\sum_{i=1}^n\norm{h^{t+1}_i - \nabla f_i(x^{t+1})}^2}\\
        &\leq \Exp{f(x^t)} - \frac{\gamma}{2}\Exp{\norm{\nabla f(x^t)}^2} \\
        &\quad + \left(1 - \gamma \mu\right)2\gamma(2\omega + 1) \Exp{\norm{g^{t} - h^t}^2} + \left(1 - \gamma \mu\right) \frac{8 \gamma \omega}{n} \Exp{\frac{1}{n} \sum_{i=1}^n\norm{g^t_i - h^{t}_i}^2}\\
        &\quad  + \left(1 - \gamma \mu\right)\frac{2\gamma}{p} \Exp{\norm{h^{t} - \nabla f(x^{t})}^2} + \left(1 - \gamma \mu\right) \frac{80 \gamma \omega \left(2 \omega + 1\right)}{n} \Exp{\frac{1}{n}\sum_{i=1}^n\norm{h^{t}_i - \nabla f_i(x^{t})}^2}.
    \end{align*}
    In the view of Lemma~\ref{lemma:good_recursion_pl} with
    \begin{eqnarray*}
        \Psi^t &=& 2 (2\omega + 1) \Exp{\norm{g^{t} - h^t}^2} + \frac{8 \omega}{n} \Exp{\frac{1}{n} \sum_{i=1}^n\norm{g^t_i - h^{t}_i}^2}\\
        &\quad +& \frac{2}{p} \Exp{\norm{h^{t} - \nabla f(x^{t})}^2} + \frac{80 \omega \left(2 \omega + 1\right)}{n} \Exp{\frac{1}{n}\sum_{i=1}^n\norm{h^{t}_i - \nabla f_i(x^{t})}^2},
    \end{eqnarray*}
    we can conclude the proof of the theorem.
\end{proof}

\begin{corollary}
    \label{cor:mini_batch_oracle_pl}
    Suppose that assumptions from Theorem~\ref{theorem:page_pl} hold, probability $p = \BZ / (m + \BZ),$ and $h^{0}_i = g^{0}_i = 0$ for all $i \in [n],$ then
    \algname{\algorithmname-PAGE}
    needs
    \begin{align}
    \label{eq:corollary:page_pl}
    T \eqdef \widetilde{\cO}\left(\omega + \frac{m}{B} + \frac{L}{\mu} + \frac{\omega \widehat{L}}{\mu \sqrt{n}} + \left(\frac{\omega}{\sqrt{n}} + \frac{\sqrt{m}}{\sqrt{n B}}\right) \frac{L_{\max}}{\mu\sqrt{B}}\right)
    \end{align}
    communication rounds to get an $\varepsilon$-solution, the communication complexity is equal to $\cO\left(\zeta_{\cC} T\right),$ and the expected number of gradient calculations per node equals $\cO\left(\BZ T\right),$ where $\zeta_{\cC}$ is the expected density from Definition~\ref{def:expected_density}.
\end{corollary}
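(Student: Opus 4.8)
The plan is to turn the linear rate of Theorem~\ref{theorem:page_pl} into an iteration complexity, exactly as in the proof of Corollary~\ref{cor:gradient_oracle_pl}. That theorem gives $\Exp{f(x^{T}) - f^*} \leq (1 - \gamma\mu)^{T} \Xi^0$, where $\Xi^0$ bundles the four initialization errors appearing in its statement. To reach an $\varepsilon$-solution it suffices that $(1 - \gamma\mu)^{T} \Xi^0 \leq \varepsilon$, i.e. $T = \cO\!\left(\frac{1}{\gamma\mu}\ln\frac{\Xi^0}{\varepsilon}\right)$. Since $\Xi^0$ appears only inside the logarithm, its precise magnitude is irrelevant; this is why we may use the cost-free initialization $h^{0}_i = g^{0}_i = 0$ and still hide the resulting $\ln \Xi^0$ inside $\widetilde{\cO}(\cdot)$. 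The whole task therefore reduces to upper bounding $\frac{1}{\gamma\mu}$ for the largest admissible stepsize.

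First I would take $\gamma$ equal to the minimum of the three quantities in Theorem~\ref{theorem:page_pl}; then $\frac{1}{\gamma\mu}$ equals the largest of $\frac{1}{\mu}\big(L+\sqrt{\cdots}\big)$, $\frac{2}{a}$, and $\frac{2}{p}$, and bounding this maximum by their sum gives
\[
\frac{1}{\gamma\mu} = \cO\!\left(\frac{L}{\mu} + \frac{1}{\mu}\sqrt{\frac{\omega(2\omega+1)}{n}\!\left(\frac{(1-p)L_{\max}^2}{B} + \widehat{L}^2\right) + \frac{(1-p)L_{\max}^2}{pnB}} + \omega + \frac{1}{p}\right),
\]
where the $\omega$ term comes from $\frac{2}{a} = 2(2\omega+1)$ with $a = \nicefrac{1}{(2\omega+1)}$, and the $\frac{1}{p}$ term comes from $\frac{2}{p}$.

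Next I would substitute $p = \nicefrac{B}{(m+B)}$, so $\frac{1}{p} = 1 + \frac{m}{B}$, and simplify the square root via $\sqrt{x+y+z} \leq \sqrt{x}+\sqrt{y}+\sqrt{z}$ together with $\omega(2\omega+1) = \cO(\omega^2)$, $1 - p \leq 1$, and $m + B = \cO(m)$ (valid since $B \in [m]$). The three summands under the root become $\frac{\omega\widehat{L}}{\sqrt{n}}$, $\frac{\omega}{\sqrt{n}}\frac{L_{\max}}{\sqrt{B}}$, and the crucial cross term
\[
\sqrt{\frac{(1-p)L_{\max}^2}{pnB}} = \cO\!\left(\sqrt{\frac{m\,L_{\max}^2}{nB^2}}\right) = \cO\!\left(\frac{\sqrt{m}}{\sqrt{nB}}\cdot\frac{L_{\max}}{\sqrt{B}}\right).
\]
Collecting all contributions reproduces exactly the claimed
\[
T = \widetilde{\cO}\!\left(\omega + \frac{m}{B} + \frac{L}{\mu} + \frac{\omega\widehat{L}}{\mu\sqrt{n}} + \left(\frac{\omega}{\sqrt{n}} + \frac{\sqrt{m}}{\sqrt{nB}}\right)\frac{L_{\max}}{\mu\sqrt{B}}\right).
\]

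Finally, the two complexity statements follow as in Corollary~\ref{cor:mini_batch_oracle}. Each round transmits $\zeta_{\cC}$ coordinates per node, and because $g^{0}_i = 0$ there is no initial uncompressed $d$-dimensional message (unlike the non-P\L{} corollary), so the communication complexity is $\cO(\zeta_{\cC} T)$; the expected number of local gradient evaluations per round is $pm + (1-p)B = \frac{2mB}{m+B} \leq 2B$, giving $\cO(BT)$ oracle calls per node. The only real work is the algebraic bookkeeping of the stepsize — in particular checking that the term $\frac{(1-p)L_{\max}^2}{pnB}$ produces precisely the $\frac{\sqrt{m}}{\sqrt{nB}}\frac{L_{\max}}{\sqrt{B}}$ contribution after the substitution $p = \nicefrac{B}{(m+B)}$; everything else is bounding a maximum by a sum and hiding logarithms.
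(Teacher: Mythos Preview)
Your proposal is correct and follows exactly the approach the paper takes: invoke the linear rate from Theorem~\ref{theorem:page_pl}, solve $(1-\gamma\mu)^T\Xi^0\le\varepsilon$ for $T$, bound $\tfrac{1}{\gamma\mu}$ by substituting the largest admissible stepsize and $p=\tfrac{B}{m+B}$, and observe that the zero initialization is harmless because $\Xi^0$ sits under the logarithm. The paper's own proof in fact gives only the last two sentences of your argument (communication and oracle cost, plus the remark about zero initialization) and dismisses the stepsize algebra with a ``clearly''; your write-up simply fills in that algebra explicitly and correctly.
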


\begin{proof}
    Clearly, using Theorem~\ref{theorem:page_pl}, one can show that Algorithm~\ref{alg:main_algorithm} returns an $\varepsilon$-solution after \eqref{eq:corollary:page_pl} communication rounds. At each communication round of Algorithm~\ref{alg:main_algorithm}, each node sends $\zeta_{\cC}$ coordinates, thus the total communication complexity would be $\cO\left(\zeta_{\cC} T\right).$ Moreover, the expected number of gradients calculations at each communication round equals $p m + (1 - p) B = \frac{2 m B}{m + B} \leq 2 B, $ thus the total expected number of gradients that each node calculates is $\cO\left(\BZ T\right).$ Unlike Corollary~\ref{cor:mini_batch_oracle}, in this corollary, we can initialize $h^{0}_i$ and $g^0_i$, for instance, with zeros because the corresponding initialization error $\Psi^0$ from the proof of Theorem~\ref{theorem:page_pl} would be under the logarithm.
\end{proof}

\subsection{Case of \algname{\algorithmname-MVR}}

We introduce new notations: $\nabla f_i(x^{t+1};\xi^{t+1}_i) = \frac{1}{B} \sum_{j=1}^B \nabla f_i(x^{t+1};\xi^{t+1}_{ij})$ and $\nabla f(x^{t+1};\xi^{t+1}) = \frac{1}{n} \sum_{i=1}^n \nabla f_i(x^{t+1};\xi^{t+1}_i).$

\begin{lemma}
    \label{lemma:mvr}
    Suppose that Assumptions \ref{ass:nodes_lipschitz_constant}, \ref{ass:stochastic_unbiased_and_variance_bounded} and \ref{ass:mean_square_smoothness} hold. For $h^{t+1}_i$ from Algorithm~\ref{alg:main_algorithm} \algname{(\algname{\algorithmname-MVR})} we have
    \begin{enumerate}
    \item
        \begin{align*}
            \ExpSub{h}{\norm{h^{t+1} - \nabla f(x^{t+1})}^2} \leq \frac{2 b^2 \sigma^2}{n B} + \frac{2 \left(1 - b\right)^2 L_\sigma^2}{n B}\norm{x^{t+1} - x^{t}}^2 + \left(1 - b\right)^2 \norm{h^{t} - \nabla f(x^{t})}^2.
        \end{align*}
    \item
        \begin{align*}
            \ExpSub{h}{\norm{h^{t+1}_i - \nabla f_i(x^{t+1})}^2} \leq \frac{2b^2 \sigma^2}{B} + \frac{2\left(1 - b\right)^2 L_\sigma^2}{B}\norm{x^{t+1} - x^{t}}^2 + \left(1 - b\right)^2 \norm{h_i^{t} - \nabla f_i(x^{t})}^2, \quad \forall i \in [n].
        \end{align*}
    \item
        \begin{align*}
            \ExpSub{h}{\norm{h^{t+1}_i - h^{t}_i}^2} \leq \frac{2b^2\sigma^2}{B} + 2\left(\frac{\left(1 - b\right)^2 L_\sigma^2}{B} + L_i^2\right)\norm{x^{t+1} - x^{t}}^2 + 2b^2\norm{h^{t}_i - \nabla f_i(x^{t})}^2, \quad \forall i \in [n].
        \end{align*}
    \end{enumerate}
\end{lemma}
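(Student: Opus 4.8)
The plan is to treat all three bounds with one recipe: isolate the conditional mean of the momentum estimator using the variance decomposition \eqref{auxiliary:variance_decomposition}, and then control the remaining mean-zero fluctuation with the bounded-variance property (Assumption~\ref{ass:stochastic_unbiased_and_variance_bounded}) and the mean-squared smoothness (Assumption~\ref{ass:mean_square_smoothness}). Writing the update compactly as $h^{t+1}_i = \nabla f_i(x^{t+1};\xi^{t+1}_i) + (1-b)(h^t_i - \nabla f_i(x^t;\xi^{t+1}_i))$ and taking $\ExpSub{h}{\cdot}$, unbiasedness gives $\ExpSub{h}{h^{t+1}_i} = \nabla f_i(x^{t+1}) + (1-b)(h^t_i - \nabla f_i(x^t))$, hence $\ExpSub{h}{h^{t+1}_i - \nabla f_i(x^{t+1})} = (1-b)(h^t_i - \nabla f_i(x^t))$. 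This identity is the heart of momentum variance reduction and produces the $(1-b)^2\norm{h^t_i - \nabla f_i(x^t)}^2$ recursion term in item~2 (and its averaged analogue in item~1).

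First I would handle item~2. After subtracting the conditional mean, the centered fluctuation splits cleanly into a pure-noise piece $b(\nabla f_i(x^{t+1};\xi^{t+1}_i) - \nabla f_i(x^{t+1}))$ and a variance-reduction piece $(1-b)(\nabla f_i(x^{t+1};\xi^{t+1}_i) - \nabla f_i(x^t;\xi^{t+1}_i) - (\nabla f_i(x^{t+1}) - \nabla f_i(x^t)))$. Applying \eqref{auxiliary:jensen_inequality} to these two summands, bounding the first by $\sigma^2/B$ via Assumption~\ref{ass:stochastic_unbiased_and_variance_bounded} and the second by $(L_\sigma^2/B)\norm{x^{t+1}-x^t}^2$ via Assumption~\ref{ass:mean_square_smoothness}, and in both cases using that averaging over the $B$ independent samples of $\xi^{t+1}_i$ shrinks the variance by a factor $B$, yields the claimed $\frac{2b^2\sigma^2}{B} + \frac{2(1-b)^2 L_\sigma^2}{B}\norm{x^{t+1}-x^t}^2$.

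For item~1 the same node-wise decomposition applies, and the extra $1/n$ factor comes from independence of the samples across the $n$ nodes: the cross terms in $\ExpSub{h}{\norm{\frac{1}{n}\sum_i(\cdots)}^2}$ vanish, so the per-node bounds from item~2 are summed and divided by $n^2$. Item~3 reuses the \emph{identical} centered fluctuation, since $h^t_i$ is deterministic given the past, so only the mean term changes: here $\norm{\ExpSub{h}{h^{t+1}_i - h^t_i}}^2 = \norm{\nabla f_i(x^{t+1}) - \nabla f_i(x^t) - b(h^t_i - \nabla f_i(x^t))}^2$, which I would bound by $2\norm{\nabla f_i(x^{t+1}) - \nabla f_i(x^t)}^2 + 2b^2\norm{h^t_i - \nabla f_i(x^t)}^2$ via \eqref{auxiliary:jensen_inequality} and then invoke the $L_i$-smoothness of $f_i$ (Assumption~\ref{ass:nodes_lipschitz_constant}) on the first term.

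The computations are routine once the decomposition is in place; the only delicate point is bookkeeping the factor-$B$ variance reduction consistently, i.e.\ recognizing that both the pure-noise term and the difference term are averages of $B$ i.i.d.\ mean-zero vectors and so each inherits the $1/B$ factor, and, for item~1, correctly accounting for cross-node independence to obtain the $1/n$ improvement. No single step is a genuine obstacle.
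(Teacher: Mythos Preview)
Your proposal is correct and follows essentially the same route as the paper's proof: the paper also uses the variance decomposition \eqref{auxiliary:variance_decomposition} to separate the conditional mean from the fluctuation, splits the fluctuation into the same $b(\cdot)$ and $(1-b)(\cdot)$ pieces via \eqref{auxiliary:jensen_inequality}, and then invokes Assumptions~\ref{ass:stochastic_unbiased_and_variance_bounded} and~\ref{ass:mean_square_smoothness} together with independence across the $B$ samples (and, for item~1, across the $n$ nodes). Your observation that item~3 shares the identical centered fluctuation with item~2, so only the squared-mean term changes, is exactly how the paper handles it as well.
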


\begin{proof}
    First, let us proof the bound for $\ExpSub{h}{\norm{h^{t+1} - \nabla f(x^{t+1})}^2}$:
    \begin{align*}
        &\ExpSub{h}{\norm{h^{t+1} - \nabla f(x^{t+1})}^2} \\
        &=\ExpSub{h}{\norm{\nabla f(x^{t+1};\xi^{t+1}) + \left(1 - b\right) \left(h^{t} - \nabla f(x^{t};\xi^{t+1})\right) - \nabla f(x^{t+1})}^2} \\
        &\overset{\eqref{auxiliary:variance_decomposition}}{=}\ExpSub{h}{\norm{b\left(\nabla f(x^{t+1};\xi^{t+1}) - \nabla f(x^{t+1})\right) + \left(1 - b\right) \left(\nabla f(x^{t+1};\xi^{t+1}) - \nabla f(x^{t+1}) + \nabla f(x^{t}) - \nabla f(x^{t};\xi^{t+1})\right)}^2} \\
        &\quad + \left(1 - b\right)^2 \norm{h^{t} - \nabla f(x^{t})}^2 \\
        &\overset{\eqref{auxiliary:jensen_inequality}}{\leq}2b^2\ExpSub{h}{\norm{\nabla f(x^{t+1};\xi^{t+1}) - \nabla f(x^{t+1})}^2} \\
        &\quad + 2\left(1 - b\right)^2\ExpSub{h}{\norm{\nabla f(x^{t+1};\xi^{t+1}) - \nabla f(x^{t+1}) + \nabla f(x^{t}) - \nabla f(x^{t};\xi^{t+1})}^2} \\
        &\quad + \left(1 - b\right)^2 \norm{h^{t} - \nabla f(x^{t})}^2 \\
        &=\frac{2b^2}{n^2}\sum_{i=1}^n\ExpSub{h}{\norm{\nabla f_i(x^{t+1};\xi^{t+1}_i) - \nabla f_i(x^{t+1})}^2} \\
        &\quad + \frac{2\left(1 - b\right)^2}{n^2}\sum_{i=1}^n\ExpSub{h}{\norm{\nabla f_i(x^{t+1};\xi^{t+1}_i) - \nabla f_i(x^{t};\xi^{t+1}_i) - \left(\nabla f_i(x^{t+1}) - \nabla f_i(x^{t})\right)}^2} \\
        &\quad + \left(1 - b\right)^2 \norm{h^{t} - \nabla f(x^{t})}^2 \\
        &=\frac{2b^2}{n^2 B^2}\sum_{i=1}^n \sum_{j=1}^B \ExpSub{h}{\norm{\nabla f_i(x^{t+1};\xi^{t+1}_{ij}) - \nabla f_i(x^{t+1})}^2} \\
        &\quad + \frac{2\left(1 - b\right)^2}{n^2B^2}\sum_{i=1}^n \sum_{j=1}^B\ExpSub{h}{\norm{\nabla f_i(x^{t+1};\xi^{t+1}_{ij}) - \nabla f_i(x^{t};\xi^{t+1}_{ij}) - \left(\nabla f_i(x^{t+1}) - \nabla f_i(x^{t})\right)}^2} \\
        &\quad + \left(1 - b\right)^2 \norm{h^{t} - \nabla f(x^{t})}^2.
    \end{align*}
    Using Assumptions \ref{ass:stochastic_unbiased_and_variance_bounded} and \ref{ass:mean_square_smoothness}, we obtain
    \begin{align*}
        &\ExpSub{h}{\norm{h^{t+1} - \nabla f(x^{t+1})}^2} \leq \frac{2 b^2 \sigma^2}{nB} + \frac{2 \left(1 - b\right)^2 L_\sigma^2}{nB}\norm{x^{t+1} - x^{t}}^2 + \left(1 - b\right)^2 \norm{h^{t} - \nabla f(x^{t})}^2.
    \end{align*}
    Similarly, we can get the bound for $\ExpSub{h}{\norm{h^{t+1}_i - \nabla f_i(x^{t+1})}^2}$:
    \begin{align*}
        &\ExpSub{h}{\norm{h^{t+1}_i - \nabla f_i(x^{t+1})}^2} \\
        &=\ExpSub{h}{\norm{\nabla f_i(x^{t+1};\xi^{t+1}_i) + \left(1 - b\right) \left(h_i^{t} - \nabla f_i(x^{t};\xi^{t+1}_i)\right) - \nabla f_i(x^{t+1})}^2} \\
        &=\Exp{\norm{b\left(\nabla f_i(x^{t+1};\xi^{t+1}_i) - \nabla f_i(x^{t+1})\right) + \left(1 - b\right) \left(\nabla f_i(x^{t+1};\xi^{t+1}_i) - \nabla f_i(x^{t+1}) + \nabla f(x^{t}) - \nabla f_i(x^{t};\xi^{t+1}_i)\right)}^2} \\
        &\quad + \left(1 - b\right)^2 \norm{h_i^{t} - \nabla f_i(x^{t})}^2 \\
        &\leq \frac{2b^2 \sigma^2}{B} + \frac{2\left(1 - b\right)^2 L_\sigma^2}{B}\norm{x^{t+1} - x^{t}}^2 + \left(1 - b\right)^2 \norm{h_i^{t} - \nabla f_i(x^{t})}^2.
    \end{align*}
    Now, we proof the last inequality of the lemma:
    \begin{align*}
        &\ExpSub{h}{\norm{h^{t+1}_i - h^{t}_i}^2} \\
        &=\ExpSub{h}{\norm{\nabla f_i(x^{t+1};\xi^{t+1}_i) + \left(1 - b\right) \left(h^{t}_i - \nabla f_i(x^{t};\xi^{t+1}_i)\right) - h^{t}_i}^2} \\
        &\overset{\eqref{auxiliary:variance_decomposition}}{=} \ExpSub{h}{\norm{\nabla f_i(x^{t+1};\xi^{t+1}_i) - \nabla f_i(x^{t+1}) + \left(1 - b\right) \left(\nabla f_i(x^{t}) - \nabla f_i(x^{t};\xi^{t+1}_i)\right)}^2} \\
        &\quad + \norm{\nabla f_i(x^{t+1}) - \nabla f_i(x^{t}) - b \left(h^{t}_i - \nabla f_i(x^{t})\right)}^2 \\
        &= \ExpSub{h}{\norm{b\left(\nabla f_i(x^{t+1};\xi^{t+1}_i) - \nabla f_i(x^{t+1})\right) + \left(1 - b\right) \left(\nabla f_i(x^{t+1};\xi^{t+1}_i) - \nabla f_i(x^{t};\xi^{t+1}_i) - (\nabla f_i(x^{t+1}) - \nabla f_i(x^{t}))\right)}^2} \\
        &\quad + \norm{\nabla f_i(x^{t+1}) - \nabla f_i(x^{t}) - b \left(h^{t}_i - \nabla f_i(x^{t})\right)}^2 \\
        &\overset{\eqref{auxiliary:jensen_inequality}}{\leq} 2b^2\ExpSub{h}{\norm{\nabla f_i(x^{t+1};\xi^{t+1}_i) - \nabla f_i(x^{t+1})}^2} \\
        &\quad + 2\left(1 - b\right)^2\ExpSub{h}{\norm{\nabla f_i(x^{t+1};\xi^{t+1}_i) - \nabla f_i(x^{t};\xi^{t+1}_i) - (\nabla f_i(x^{t+1}) - \nabla f_i(x^{t}))}^2} \\
        &\quad + 2\norm{\nabla f_i(x^{t+1}) - \nabla f_i(x^{t})}^2 + 2b^2\norm{h^{t}_i - \nabla f_i(x^{t})}^2 \\
        &= \frac{2b^2}{B^2} \sum_{j=1}^{B}\ExpSub{h}{\norm{\nabla f_i(x^{t+1};\xi^{t+1}_{ij}) - \nabla f_i(x^{t+1})}^2} \\
        &\quad + \frac{2\left(1 - b\right)^2}{B^2} \sum_{j=1}^{B} \ExpSub{h}{\norm{\nabla f_i(x^{t+1};\xi^{t+1}_{ij}) - \nabla f_i(x^{t};\xi^{t+1}_{ij}) - (\nabla f_i(x^{t+1}) - \nabla f_i(x^{t}))}^2} \\
        &\quad + 2\norm{\nabla f_i(x^{t+1}) - \nabla f_i(x^{t})}^2 + 2b^2\norm{h^{t}_i - \nabla f_i(x^{t})}^2
    \end{align*}
    In the view of Assumptions \ref{ass:nodes_lipschitz_constant}, \ref{ass:stochastic_unbiased_and_variance_bounded} and \ref{ass:mean_square_smoothness}, we obtain
    \begin{align*}
        \ExpSub{h}{\norm{h^{t+1}_i - h^{t}_i}^2} \leq \frac{2b^2\sigma^2}{B} + \frac{2\left(1 - b\right)^2 L_\sigma^2}{B}\norm{x^{t+1} - x^{t}}^2 + 2 L_i^2 \norm{x^{t+1} - x^{t}}^2 + 2b^2\norm{h^{t}_i - \nabla f_i(x^{t})}^2.
    \end{align*}
\end{proof}

\CONVERGENCEMVR*

\begin{proof}
    Let us fix constants $\nu, \rho \in [0,\infty)$ that we will define later. Considering Lemma~\ref{lemma:main_lemma}, Lemma~\ref{lemma:mvr}, and the law of total expectation, we obtain
    \begin{align*}
        &\Exp{f(x^{t + 1})} + \gamma \left(2 \omega + 1\right) \Exp{\norm{g^{t+1} - h^{t+1}}^2} + \frac{2 \gamma \omega}{n} \Exp{\frac{1}{n}\sum_{i=1}^n\norm{g^{t+1}_i - h^{t+1}_i}^2}\\
        &\quad  + \nu \Exp{\norm{h^{t+1} - \nabla f(x^{t+1})}^2} + \rho \Exp{\frac{1}{n}\sum_{i=1}^n\norm{h^{t+1}_i - \nabla f_i(x^{t+1})}^2}\\
        &\leq \Exp{f(x^t) - \frac{\gamma}{2}\norm{\nabla f(x^t)}^2 - \left(\frac{1}{2\gamma} - \frac{L}{2}\right)
        \norm{x^{t+1} - x^t}^2 + \gamma \norm{h^{t} - \nabla f(x^t)}^2}\\
        &\quad + \gamma \left(2 \omega + 1\right) \Exp{\norm{g^{t} - h^t}^2} + \frac{2 \gamma \omega}{n} \Exp{\frac{1}{n} \sum_{i=1}^n\norm{g^t_i - h^{t}_i}^2} \\
        &\quad + \frac{8 \gamma \omega \left(2 \omega + 1\right)}{n}\Exp{\frac{2b^2\sigma^2}{B} + 2\left(\frac{\left(1 - b\right)^2 L_\sigma^2}{B} + \widehat{L}^2\right)\norm{x^{t+1} - x^{t}}^2 + 2b^2\frac{1}{n} \sum_{i=1}^n\norm{h^{t}_i - \nabla f_i(x^{t})}^2} \\
        &\quad + \nu \Exp{\frac{2 b^2 \sigma^2}{n B} + \frac{2 \left(1 - b\right)^2 L_\sigma^2}{nB}\norm{x^{t+1} - x^{t}}^2 + \left(1 - b\right)^2 \norm{h^{t} - \nabla f(x^{t})}^2} \\
        &\quad + \rho \Exp{\frac{2b^2 \sigma^2}{B} + \frac{2\left(1 - b\right)^2 L_\sigma^2}{B}\norm{x^{t+1} - x^{t}}^2 + \left(1 - b\right)^2\frac{1}{n} \sum_{i=1}^n \norm{h_i^{t} - \nabla f_i(x^{t})}^2}
    \end{align*}
    After rearranging the terms, we get
    \begin{align*}
        &\Exp{f(x^{t + 1})} + \gamma \left(2 \omega + 1\right) \Exp{\norm{g^{t+1} - h^{t+1}}^2} + \frac{2 \gamma \omega}{n} \Exp{\frac{1}{n}\sum_{i=1}^n\norm{g^{t+1}_i - h^{t+1}_i}^2}\\
        &\quad  + \nu \Exp{\norm{h^{t+1} - \nabla f(x^{t+1})}^2} + \rho \Exp{\frac{1}{n}\sum_{i=1}^n\norm{h^{t+1}_i - \nabla f_i(x^{t+1})}^2}\\
        &\leq \Exp{f(x^t)} - \frac{\gamma}{2}\Exp{\norm{\nabla f(x^t)}^2} \\
        &\quad + \gamma \left(2 \omega + 1\right) \Exp{\norm{g^{t} - h^t}^2} + \frac{2 \gamma \omega}{n} \Exp{\frac{1}{n} \sum_{i=1}^n\norm{g^t_i - h^{t}_i}^2} \\
        &\quad - \left(\frac{1}{2\gamma} - \frac{L}{2} - \frac{16 \gamma \omega \left(2 \omega + 1\right) \left(\frac{\left(1 - b\right)^2 L_\sigma^2}{B} + \widehat{L}^2\right)}{n} - \frac{2 \nu \left(1 - b\right)^2 L_\sigma^2}{n B} - \frac{2 \rho \left(1 - b\right)^2 L_\sigma^2}{B}\right) \Exp{\norm{x^{t+1} - x^t}^2} \\
        &\quad + \left(\gamma + \nu (1 - b)^2\right) \Exp{\norm{h^{t} - \nabla f(x^{t})}^2} \\
        &\quad + \left(\frac{16 b^2 \gamma \omega \left(2 \omega + 1\right)}{n} + \rho (1 - b)^2\right)\Exp{\frac{1}{n}\sum_{i=1}^n\norm{h^{t}_i - \nabla f_i(x^{t})}^2} \\
        &\quad + 2 \left(\frac{8 \gamma \omega \left(2 \omega + 1\right)}{n B} + \frac{\nu}{n B} + \frac{\rho}{B}\right) b^2 \sigma^2.
    \end{align*}
    By taking $\nu = \frac{\gamma}{b},$ one can see that $\gamma + \nu (1 - b)^2 \leq \nu,$ and
    \begin{align*}
        &\Exp{f(x^{t + 1})} + \gamma \left(2 \omega + 1\right) \Exp{\norm{g^{t+1} - h^{t+1}}^2} + \frac{2 \gamma \omega}{n} \Exp{\frac{1}{n}\sum_{i=1}^n\norm{g^{t+1}_i - h^{t+1}_i}^2}\\
        &\quad  + \frac{\gamma}{b} \Exp{\norm{h^{t+1} - \nabla f(x^{t+1})}^2} + \rho \Exp{\frac{1}{n}\sum_{i=1}^n\norm{h^{t+1}_i - \nabla f_i(x^{t+1})}^2}\\
        &\leq \Exp{f(x^t)} - \frac{\gamma}{2}\Exp{\norm{\nabla f(x^t)}^2} \\
        &\quad + \gamma \left(2 \omega + 1\right) \Exp{\norm{g^{t} - h^t}^2} + \frac{2 \gamma \omega}{n} \Exp{\frac{1}{n} \sum_{i=1}^n\norm{g^t_i - h^{t}_i}^2} \\
        &\quad + \frac{\gamma}{b} \Exp{\norm{h^{t} - \nabla f(x^{t})}^2} \\
        &\quad - \left(\frac{1}{2\gamma} - \frac{L}{2} - \frac{16 \gamma \omega \left(2 \omega + 1\right) \left(\frac{\left(1 - b\right)^2 L_\sigma^2}{B} + \widehat{L}^2\right)}{n} - \frac{2 \gamma \left(1 - b\right)^2 L_\sigma^2}{b n B} - \frac{2 \rho \left(1 - b\right)^2 L_\sigma^2}{B}\right) \Exp{\norm{x^{t+1} - x^t}^2} \\
        &\quad + \left(\frac{16 b^2 \gamma \omega \left(2 \omega + 1\right)}{n} + \rho (1 - b)^2\right)\Exp{\frac{1}{n}\sum_{i=1}^n\norm{h^{t}_i - \nabla f_i(x^{t})}^2} \\
        &\quad + 2 \left(\frac{8 \gamma \omega \left(2 \omega + 1\right)}{n B} + \frac{\gamma}{b n B} + \frac{\rho}{B}\right) b^2 \sigma^2.
    \end{align*}
    Next, we fix $\rho = \frac{16 b \gamma \omega \left(2 \omega + 1\right)}{n}.$ With this choice of $\rho$ and for all $b \in [0, 1],$ we can show that $\frac{16 b^2 \gamma \omega \left(2 \omega + 1\right)}{n} + \rho (1 - b)^2 \leq \rho,$ thus
    \begin{align*}
        &\Exp{f(x^{t + 1})} + \gamma \left(2 \omega + 1\right) \Exp{\norm{g^{t+1} - h^{t+1}}^2} + \frac{2 \gamma \omega}{n} \Exp{\frac{1}{n}\sum_{i=1}^n\norm{g^{t+1}_i - h^{t+1}_i}^2}\\
        &\quad  + \frac{\gamma}{b} \Exp{\norm{h^{t+1} - \nabla f(x^{t+1})}^2} + \frac{16 b \gamma \omega \left(2 \omega + 1\right)}{n} \Exp{\frac{1}{n}\sum_{i=1}^n\norm{h^{t+1}_i - \nabla f_i(x^{t+1})}^2}\\
        &\leq \Exp{f(x^t)} - \frac{\gamma}{2}\Exp{\norm{\nabla f(x^t)}^2} \\
        &\quad + \gamma \left(2 \omega + 1\right) \Exp{\norm{g^{t} - h^t}^2} + \frac{2 \gamma \omega}{n} \Exp{\frac{1}{n} \sum_{i=1}^n\norm{g^t_i - h^{t}_i}^2} \\
        &\quad + \frac{\gamma}{b} \Exp{\norm{h^{t} - \nabla f(x^{t})}^2} + \frac{16 b \gamma \omega \left(2 \omega + 1\right)}{n} \Exp{\frac{1}{n}\sum_{i=1}^n\norm{h^{t}_i - \nabla f_i(x^{t})}^2} \\
        &\quad - \left(\frac{1}{2\gamma} - \frac{L}{2} - \frac{16 \gamma \omega \left(2 \omega + 1\right) \left(\frac{\left(1 - b\right)^2 L_\sigma^2}{B} + \widehat{L}^2\right)}{n} - \frac{2 \gamma \left(1 - b\right)^2 L_\sigma^2}{b n B} - \frac{32 b \gamma \omega \left(2 \omega + 1\right) \left(1 - b\right)^2 L_\sigma^2}{n B}\right) \Exp{\norm{x^{t+1} - x^t}^2} \\
        &\quad + 2 \left(\frac{8 \gamma \omega \left(2 \omega + 1\right)}{n B} + \frac{\gamma}{b n B} + \frac{16 b \gamma \omega \left(2 \omega + 1\right)}{n B}\right) b^2 \sigma^2 \\
        &\leq \Exp{f(x^t)} - \frac{\gamma}{2}\Exp{\norm{\nabla f(x^t)}^2} \\
        &\quad + \gamma \left(2 \omega + 1\right) \Exp{\norm{g^{t} - h^t}^2} + \frac{2 \gamma \omega}{n} \Exp{\frac{1}{n} \sum_{i=1}^n\norm{g^t_i - h^{t}_i}^2} \\
        &\quad + \frac{\gamma}{b} \Exp{\norm{h^{t} - \nabla f(x^{t})}^2} + \frac{16 b \gamma \omega \left(2 \omega + 1\right)}{n} \Exp{\frac{1}{n}\sum_{i=1}^n\norm{h^{t}_i - \nabla f_i(x^{t})}^2} \\
        &\quad - \left(\frac{1}{2\gamma} - \frac{L}{2} - \frac{48 \gamma \omega \left(2 \omega + 1\right) \left(\frac{\left(1 - b\right)^2 L_\sigma^2}{B} + \widehat{L}^2\right)}{n} - \frac{2 \gamma \left(1 - b\right)^2 L_\sigma^2}{b n B}\right) \Exp{\norm{x^{t+1} - x^t}^2} \\
        &\quad + \left(\frac{48 \gamma \omega \left(2 \omega + 1\right)}{n B} + \frac{2\gamma}{b n B}\right) b^2 \sigma^2.
    \end{align*}
    In the last inequality we use $b \in (0, 1].$
    Next, considering the choice of $\gamma$ and Lemma~\ref{lemma:gamma}, we get
    \begin{align*}
        &\Exp{f(x^{t + 1})} + \gamma \left(2 \omega + 1\right) \Exp{\norm{g^{t+1} - h^{t+1}}^2} + \frac{2 \gamma \omega}{n} \Exp{\frac{1}{n}\sum_{i=1}^n\norm{g^{t+1}_i - h^{t+1}_i}^2}\\
        &\quad  + \frac{\gamma}{b} \Exp{\norm{h^{t+1} - \nabla f(x^{t+1})}^2} + \frac{16 b \gamma \omega \left(2 \omega + 1\right)}{n} \Exp{\frac{1}{n}\sum_{i=1}^n\norm{h^{t+1}_i - \nabla f_i(x^{t+1})}^2}\\
        &\leq \Exp{f(x^t)} - \frac{\gamma}{2}\Exp{\norm{\nabla f(x^t)}^2} \\
        &\quad + \gamma \left(2 \omega + 1\right) \Exp{\norm{g^{t} - h^t}^2} + \frac{2 \gamma \omega}{n} \Exp{\frac{1}{n} \sum_{i=1}^n\norm{g^t_i - h^{t}_i}^2} \\
        &\quad + \frac{\gamma}{b} \Exp{\norm{h^{t} - \nabla f(x^{t})}^2} + \frac{16 b \gamma \omega \left(2 \omega + 1\right)}{n} \Exp{\frac{1}{n}\sum_{i=1}^n\norm{h^{t}_i - \nabla f_i(x^{t})}^2} \\
        &\quad + \left(\frac{48 \gamma \omega \left(2 \omega + 1\right)}{n B} + \frac{2\gamma}{b n B}\right) b^2 \sigma^2.
    \end{align*}
    In the view of Lemma~\ref{lemma:good_recursion} with 
    \begin{eqnarray*}
        \Psi^t &=& \left(2 \omega + 1\right) \Exp{\norm{g^{t} - h^t}^2} + \frac{2 \omega}{n} \Exp{\frac{1}{n} \sum_{i=1}^n\norm{g^t_i - h^{t}_i}^2} \\
        &\quad +& \frac{1}{b} \Exp{\norm{h^{t} - \nabla f(x^{t})}^2} + \frac{16 b \omega \left(2 \omega + 1\right)}{n} \Exp{\frac{1}{n}\sum_{i=1}^n\norm{h^{t}_i - \nabla f_i(x^{t})}^2}
    \end{eqnarray*}
    and $C = \left(\frac{48 \omega \left(2 \omega + 1\right)}{n B} + \frac{2}{b n B}\right) b^2 \sigma^2,$ we can conclude the proof.
\end{proof}

\COROLLARYSTOCHASTIC*

\begin{proof}
    In the view of Theorem~\ref{theorem:stochastic}, we have
    \begin{align*}
        &\Exp{\norm{\nabla f(\widehat{x}^T)}^2} \\
        &= \cO\vast(\frac{1}{T}\vast[\left(f(x^0) - f^*\right) \left(L + \frac{\omega}{\sqrt{n}}\sqrt{\left(1 - b\right)^2 \frac{L_{\sigma}^2}{B} + \widehat{L}^2} + \sqrt{\frac{\left(1 - b\right)^2}{b n}}\frac{L_{\sigma}}{\sqrt{B}}\right)\\
        &+ \frac{1}{b} \norm{h^{0} - \nabla f(x^{0})}^2 + \frac{b \omega^2}{n} \left(\frac{1}{n}\sum_{i=1}^n\norm{h^{0}_i - \nabla f_i(x^{0})}^2\right)\vast] \\
        &+ \left(\frac{\omega^2}{n} + \frac{1}{b n}\right) b^2 \frac{\sigma^2}{B}\vast).
    \end{align*}
    Note, that $\frac{1}{b} = \Theta\left(\max\left\{\omega \sqrt{\frac{\sigma^2}{n \varepsilon B}}, \frac{\sigma^2}{n \varepsilon B}\right\}\right) \leq \Theta\left(\max\left\{\omega^2, \frac{\sigma^2}{n \varepsilon B}\right\}\right),$ thus
    \begin{align*}
        &\Exp{\norm{\nabla f(\widehat{x}^T)}^2} \\
        &= \cO\vast(\frac{1}{T}\vast[\left(f(x^0) - f^*\right) \left(L + \frac{\omega}{\sqrt{n}}\left(\widehat{L} + \frac{L_{\sigma}}{\sqrt{B}}\right) + \sqrt{\frac{\sigma^2}{\varepsilon n^2 B}}\frac{L_{\sigma}}{\sqrt{B}}\right)\\
        &+ \frac{1}{b} \norm{h^{0} - \nabla f(x^{0})}^2 + \frac{b \omega^2}{n} \left(\frac{1}{n}\sum_{i=1}^n\norm{h^{0}_i - \nabla f_i(x^{0})}^2\right)\vast] + \varepsilon\vast).
    \end{align*}
    Thus we can take
    \begin{align*}
        &T=\cO\vast(\frac{1}{\varepsilon}\vast[\left(f(x^0) - f^*\right) \left(L + \frac{\omega}{\sqrt{n}}\left(\widehat{L} + \frac{L_{\sigma}}{\sqrt{B}}\right) + \sqrt{\frac{\sigma^2}{\varepsilon n^2 B}}\frac{L_{\sigma}}{\sqrt{B}}\right)\\
        &+ \frac{1}{b} \norm{h^{0} - \nabla f(x^{0})}^2 + \frac{b \omega^2}{n} \left(\frac{1}{n}\sum_{i=1}^n\norm{h^{0}_i - \nabla f_i(x^{0})}^2\right)\vast]\vast).
    \end{align*}
    Note, that $h^{0}_i = g^{0}_i = \frac{1}{B_{\textnormal{init}}} \sum_{k = 1}^{B_{\textnormal{init}}} \nabla f_i(x^0; \xi^0_{ik})$ for all $i \in [n].$ Let us bound $\Exp{\norm{h^{0} - \nabla f(x^{0})}^2}$:
    \begin{eqnarray*}
        \Exp{\norm{h^{0} - \nabla f(x^{0})}^2} &=& \Exp{\norm{\frac{1}{n} \sum_{i=1}^n \frac{1}{B_{\textnormal{init}}} \sum_{k = 1}^{B_{\textnormal{init}}} \nabla f_i(x^0; \xi^0_{ik}) - \nabla f(x^{0})}^2} \\
        &=& \frac{1}{n^2 B_{\textnormal{init}}^2}\sum_{i=1}^n \sum_{k = 1}^{B_{\textnormal{init}}} \Exp{\norm{\nabla f_i(x^0; \xi^0_{ik}) - \nabla f_i(x^{0})}^2}\\
        &\leq& \frac{\sigma^2}{n B_{\textnormal{init}}}.
    \end{eqnarray*}
    Likewise, $\frac{1}{n}\sum_{i=1}^n\Exp{\norm{h^{0}_i - \nabla f_i(x^{0})}^2} \leq \frac{\sigma^2}{B_{\textnormal{init}}}.$ 
    All in all, we have
    \begin{align*}
        &T=\cO\vast(\frac{1}{\varepsilon}\vast[\left(f(x^0) - f^*\right) \left(L + \frac{\omega}{\sqrt{n}}\left(\widehat{L} + \frac{L_{\sigma}}{\sqrt{B}}\right) + \sqrt{\frac{\sigma^2}{\varepsilon n^2 B}}\frac{L_{\sigma}}{\sqrt{B}}\right)\\
        &+ \frac{\sigma^2}{b n B_{\textnormal{init}}} + \frac{b \omega^2 \sigma^2}{n B_{\textnormal{init}}}\vast]\vast)\\
        &=\cO\vast(\frac{1}{\varepsilon}\vast[\left(f(x^0) - f^*\right) \left(L + \frac{\omega}{\sqrt{n}}\left(\widehat{L} + \frac{L_{\sigma}}{\sqrt{B}}\right) + \sqrt{\frac{\sigma^2}{\varepsilon n^2 B}}\frac{L_{\sigma}}{\sqrt{B}}\right)\\
        &+ \frac{\sigma^2}{n B} + \frac{b^2 \omega^2 \sigma^2}{n B}\vast]\vast)\\
        &=\cO\vast(\frac{1}{\varepsilon}\vast[\left(f(x^0) - f^*\right) \left(L + \frac{\omega}{\sqrt{n}}\left(\widehat{L} + \frac{L_{\sigma}}{\sqrt{B}}\right) + \sqrt{\frac{\sigma^2}{\varepsilon n^2 B}}\frac{L_{\sigma}}{\sqrt{B}}\right)\vast] + \frac{\sigma^2}{n \varepsilon B}\vast).
    \end{align*}
    In the view of Algorithm~\ref{alg:main_algorithm} and the fact that we use a mini-batch of stochastic gradients, the number of stochastic gradients that each node calculates equals $B_{\textnormal{init}} + 2BT = \cO(B_{\textnormal{init}} + BT).$
\end{proof}

\COROLLARYSTOCHASTICRANDK*

\begin{proof}
  In the view of Theorem~\ref{theorem:rand_k}, we have $\omega + 1 = d / K.$ Moreover, $K = \Theta\left(\frac{B d \sqrt{\varepsilon n}}{\sigma}\right) = \cO\left(\frac{d}{\sqrt{n}}\right),$ thus the communication complexity equals
  \begin{eqnarray*}
      \cO\left(d + \zeta_{\cC} T\right) &=& \cO\left(d + \frac{1}{\varepsilon}\vast[\left(f(x^0) - f^*\right)\left(K L + K\frac{\omega}{\sqrt{n}}\left(\widehat{L} + \frac{L_{\sigma}}{\sqrt{B}}\right) + K\sqrt{\frac{\sigma^2}{\varepsilon n^2 B}} \frac{L_\sigma}{\sqrt{B}}\right)\vast] + K\frac{\sigma^2}{n \varepsilon B}\right) \\
      &=& \cO\left(d + \frac{1}{\varepsilon}\vast[\left(f(x^0) - f^*\right)\left(\frac{d}{\sqrt{n}} L + \frac{d}{\sqrt{n}}\left(\widehat{L} + \frac{L_{\sigma}}{\sqrt{B}}\right) + \frac{d}{\sqrt{n}} L_\sigma\right)\vast] + \frac{d \sigma}{\sqrt{n \varepsilon}}\right) \\
      &=& \cO\left(d + \frac{d \sigma}{\sqrt{n \varepsilon}} + \frac{1}{\varepsilon}\vast[\left(f(x^0) - f^*\right)\left(\frac{d}{\sqrt{n}} \widetilde{L}\right)\vast]\right) \\
      &=& \cO\left(\frac{d \sigma}{\sqrt{n \varepsilon}} + \frac{1}{\varepsilon}\vast[\left(f(x^0) - f^*\right)\left(\frac{d}{\sqrt{n}} \widetilde{L}\right)\vast]\right).
  \end{eqnarray*}
  And the expected number of stochastic gradient calculations per node equals
  \begin{align*}
      &\cO\left(B_{\textnormal{init}} + B T\right) \\
      &= \cO\left(B\frac{\sigma^2}{B n \varepsilon} + B\omega\sqrt{\frac{\sigma^2}{n \varepsilon B}} + \frac{1}{\varepsilon}\vast[\left(f(x^0) - f^*\right)\left(B L + B\frac{\omega}{\sqrt{n}}\left(\widehat{L} + \frac{L_{\sigma}}{\sqrt{B}}\right) + B\sqrt{\frac{\sigma^2}{\varepsilon n^2 B}} \frac{L_\sigma}{\sqrt{B}}\right)\vast]\right) \\
      &= \cO\left(\frac{\sigma^2}{n \varepsilon} + \frac{\sigma^2}{n \varepsilon \sqrt{B}} + \frac{1}{\varepsilon}\vast[\left(f(x^0) - f^*\right)\left(\frac{\sigma}{\sqrt{\varepsilon} n} L + \frac{\sigma}{\sqrt{\varepsilon} n}\left(\widehat{L} + \frac{L_{\sigma}}{\sqrt{B}}\right) + \frac{\sigma}{\sqrt{\varepsilon} n} L_\sigma\right)\vast]\right) \\
      &= \cO\left(\frac{\sigma^2}{n \varepsilon} + \frac{1}{\varepsilon}\vast[\left(f(x^0) - f^*\right)\left(\frac{\sigma}{\sqrt{\varepsilon} n} \widetilde{L}\right)\vast]\right).
  \end{align*}
\end{proof}

\subsection{Case of \algname{\algorithmname-MVR} under P\L-condition}

\begin{theorem}
    \label{theorem:stochastic_pl}
    Suppose that Assumption \ref{ass:lower_bound}, \ref{ass:lipschitz_constant}, \ref{ass:nodes_lipschitz_constant}, \ref{ass:compressors}, \ref{ass:stochastic_unbiased_and_variance_bounded}, \ref{ass:mean_square_smoothness} and \ref{ass:pl_condition} hold. Let us take $a = 1 / \left(2\omega + 1\right),$ $b \in (0, 1]$ and
    $\gamma \leq \min\left\{\left(L + \sqrt{\frac{400 \omega \left(2 \omega + 1\right) \left(\frac{\left(1 - b\right)^2 L_\sigma^2}{B} + \widehat{L}^2\right)}{n} + \frac{8 \left(1 - b\right)^2 L_\sigma^2}{b n B}}\right)^{-1}, \frac{a}{2\mu} , \frac{b}{2\mu}\right\}$ in Algorithm~\ref{alg:main_algorithm} \algname{(\algorithmname-MVR)},
    then 
    \begin{eqnarray*}
        \Exp{f(x^{T}) - f^*} &\leq& (1 - \gamma \mu)^{T}\vast(\left(f(x^0) - f^*\right) +  2\gamma (2\omega + 1) \norm{g^{0} - h^0}^2 + \frac{8 \gamma \omega}{n} \left(\frac{1}{n} \sum_{i=1}^n\norm{g^0_i - h^{0}_i}^2\right) \\
        &\quad +&\frac{2 \gamma}{b} \norm{h^{0} - \nabla f(x^{0})}^2 + \frac{80 b \gamma \omega \left(2 \omega + 1\right)}{n} \left(\frac{1}{n}\sum_{i=1}^n\norm{h^{0}_i - \nabla f_i(x^{0})}^2\right)\vast) \\
        &\quad +& \frac{1}{\mu}\left(\frac{200 \omega \left(2 \omega + 1\right)}{n B} + \frac{4}{b n B}\right) b^2 \sigma^2.
    \end{eqnarray*}
\end{theorem}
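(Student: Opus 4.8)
The plan is to mirror the proof of its non-P\L\ counterpart, Theorem~\ref{theorem:stochastic}, but with Lemma~\ref{lemma:main_lemma_pl} in place of Lemma~\ref{lemma:main_lemma} so that every compressor-error term carries the contraction factor $(1-\gamma\mu)$ rather than merely being absorbed. First I would start from the descent inequality of Lemma~\ref{lemma:main_lemma_pl}, whose right-hand side contains the already-contracted compressor-error terms $\norm{g^t-h^t}^2$ and $\frac{1}{n}\sum_i\norm{g^t_i-h^t_i}^2$, the estimator-bias term $\gamma\norm{h^t-\nabla f(x^t)}^2$, and the progress term $\frac{20\gamma\omega(2\omega+1)}{n}\frac{1}{n}\sum_i\norm{h^{t+1}_i-h^t_i}^2$. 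To control the bias I would augment both sides by two auxiliary Lyapunov quantities, $\nu\,\norm{h^{t+1}-\nabla f(x^{t+1})}^2$ and $\rho\,\frac{1}{n}\sum_i\norm{h^{t+1}_i-\nabla f_i(x^{t+1})}^2$, with constants $\nu,\rho\geq 0$ fixed later.

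Next I would invoke Lemma~\ref{lemma:mvr}, which bounds each of $\norm{h^{t+1}-\nabla f(x^{t+1})}^2$, $\norm{h^{t+1}_i-\nabla f_i(x^{t+1})}^2$, and $\norm{h^{t+1}_i-h^t_i}^2$ by a $\norm{x^{t+1}-x^t}^2$ piece, a contracted bias term (factor $(1-b)^2$, resp.\ $2b^2$ for the last one), and an additive $\sigma^2$ noise floor. Substituting and collecting like terms, the coefficient of $\norm{h^t-\nabla f(x^t)}^2$ becomes $\gamma+\nu(1-b)^2$ and that of $\frac{1}{n}\sum_i\norm{h^t_i-\nabla f_i(x^t)}^2$ becomes $\frac{40b^2\gamma\omega(2\omega+1)}{n}+\rho(1-b)^2$. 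Choosing $\nu=\frac{2\gamma}{b}$ and $\rho=\frac{80b\gamma\omega(2\omega+1)}{n}$, a short computation using $b\in(0,1]$ (reducing to $(1-b)^2\leq 1-b$) shows both are at most $(1-\tfrac{b}{2})\nu$ and $(1-\tfrac{b}{2})\rho$; the stepsize constraint $\gamma\leq\frac{b}{2\mu}$ then upgrades $1-\tfrac{b}{2}$ to $1-\gamma\mu$, matching the contraction of the $g$-terms inherited from Lemma~\ref{lemma:main_lemma_pl} (which itself needs $\gamma\leq\frac{a}{2\mu}$).

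It then remains to dispose of the $\norm{x^{t+1}-x^t}^2$ term and aggregate the noise. Summing all $\norm{x^{t+1}-x^t}^2$ contributions (from the $\frac{20\gamma\omega(2\omega+1)}{n}$ term and from the $\nu,\rho$ substitutions) and bounding $b\leq 1$ where convenient, the total negative coefficient is $\frac{1}{2\gamma}-\frac{L}{2}-\frac{\gamma A}{2}$ with $A=\frac{400\omega(2\omega+1)}{n}\big(\frac{(1-b)^2L_\sigma^2}{B}+\widehat L^2\big)+\frac{8(1-b)^2L_\sigma^2}{bnB}$; Lemma~\ref{lemma:gamma} with the first branch of the stepsize bound makes this nonnegative, so the term is dropped. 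The three $\sigma^2$ contributions collapse into $C=\big(\frac{200\omega(2\omega+1)}{nB}+\frac{4}{bnB}\big)b^2\sigma^2$. Finally I would apply Lemma~\ref{lemma:good_recursion_pl} with the nonnegative $\Psi^t=2(2\omega+1)\norm{g^t-h^t}^2+\frac{8\omega}{n}\frac{1}{n}\sum_i\norm{g^t_i-h^t_i}^2+\frac{2}{b}\norm{h^t-\nabla f(x^t)}^2+\frac{80b\omega(2\omega+1)}{n}\frac{1}{n}\sum_i\norm{h^t_i-\nabla f_i(x^t)}^2$ and this $C$, reading off the claim: $\gamma\Psi^0$ produces exactly the initialization terms and $\frac{C}{\mu}$ the residual noise floor.

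The main obstacle is the bookkeeping in the coefficient-matching step: one must verify simultaneously that $\nu,\rho$ telescope the two bias terms with the correct $(1-\tfrac b2)$ factor and that, after these choices, every residual $\norm{x^{t+1}-x^t}^2$ coefficient is absorbed into the single quantity $A$ appearing in the stepsize. Everything else is a direct specialization of the P\L\ recursion machinery already exercised for \algname{\algorithmname} and \algname{\algorithmname-PAGE}.
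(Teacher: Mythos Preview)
Your proposal is correct and follows essentially the same approach as the paper's own proof: start from Lemma~\ref{lemma:main_lemma_pl}, add the two auxiliary bias terms with weights $\nu=\frac{2\gamma}{b}$ and $\rho=\frac{80b\gamma\omega(2\omega+1)}{n}$, apply Lemma~\ref{lemma:mvr}, verify the $(1-\tfrac{b}{2})$ contractions, convert to $(1-\gamma\mu)$ via $\gamma\le\frac{b}{2\mu}$, absorb the $\norm{x^{t+1}-x^t}^2$ term via Lemma~\ref{lemma:gamma}, and close with Lemma~\ref{lemma:good_recursion_pl} using exactly the $\Psi^t$ and $C$ you wrote down. The constants, the choices of $\nu,\rho$, and the final $A$ in the stepsize bound all match the paper.
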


\begin{proof}
    Let us fix constants $\nu, \rho \in [0,\infty)$ that we will define later. Considering Lemma~\ref{lemma:main_lemma_pl}, Lemma~\ref{lemma:mvr}, and the law of total expectation, we obtain
    \begin{align*}
        &\Exp{f(x^{t + 1})} + 2\gamma(2\omega + 1) \Exp{\norm{g^{t+1} - h^{t+1}}^2} + \frac{8 \gamma \omega}{n} \Exp{\frac{1}{n}\sum_{i=1}^n\norm{g^{t+1}_i - h^{t+1}_i}^2}\\
        &\quad  + \nu \Exp{\norm{h^{t+1} - \nabla f(x^{t+1})}^2} + \rho \Exp{\frac{1}{n}\sum_{i=1}^n\norm{h^{t+1}_i - \nabla f_i(x^{t+1})}^2}\\
        &\leq \Exp{f(x^t) - \frac{\gamma}{2}\norm{\nabla f(x^t)}^2 - \left(\frac{1}{2\gamma} - \frac{L}{2}\right) \norm{x^{t+1} - x^t}^2 + \gamma \norm{h^{t} - \nabla f(x^t)}^2}\\
        &\quad + \left(1 - \gamma \mu\right)2\gamma(2\omega + 1) \Exp{\norm{g^{t} - h^t}^2} + \left(1 - \gamma \mu\right) \frac{8 \gamma \omega}{n} \Exp{\frac{1}{n} \sum_{i=1}^n\norm{g^t_i - h^{t}_i}^2}\\
        &\quad + \frac{20 \gamma \omega (2 \omega + 1)}{n}\Exp{\frac{2b^2\sigma^2}{B} + 2\left(\frac{\left(1 - b\right)^2 L_\sigma^2}{B} + \widehat{L}^2\right)\norm{x^{t+1} - x^{t}}^2 + 2b^2\frac{1}{n} \sum_{i=1}^n\norm{h^{t}_i - \nabla f_i(x^{t})}^2} \\
        &\quad + \nu \Exp{\frac{2 b^2 \sigma^2}{n B} + \frac{2 \left(1 - b\right)^2 L_\sigma^2}{n B}\norm{x^{t+1} - x^{t}}^2 + \left(1 - b\right)^2 \norm{h^{t} - \nabla f(x^{t})}^2} \\
        &\quad + \rho \Exp{\frac{2b^2 \sigma^2}{B} + \frac{2\left(1 - b\right)^2 L_\sigma^2}{B}\norm{x^{t+1} - x^{t}}^2 + \left(1 - b\right)^2\frac{1}{n} \sum_{i=1}^n \norm{h_i^{t} - \nabla f_i(x^{t})}^2}.
    \end{align*}
    After rearranging the terms, we get
    \begin{align*}
        &\Exp{f(x^{t + 1})} + 2\gamma(2\omega + 1) \Exp{\norm{g^{t+1} - h^{t+1}}^2} + \frac{8 \gamma \omega}{n} \Exp{\frac{1}{n}\sum_{i=1}^n\norm{g^{t+1}_i - h^{t+1}_i}^2}\\
        &\quad  + \nu \Exp{\norm{h^{t+1} - \nabla f(x^{t+1})}^2} + \rho \Exp{\frac{1}{n}\sum_{i=1}^n\norm{h^{t+1}_i - \nabla f_i(x^{t+1})}^2}\\
        &\leq \Exp{f(x^t)} - \frac{\gamma}{2}\Exp{\norm{\nabla f(x^t)}^2} \\
        &\quad + \left(1 - \gamma \mu\right)2\gamma(2\omega + 1) \Exp{\norm{g^{t} - h^t}^2} + \left(1 - \gamma \mu\right) \frac{8 \gamma \omega}{n} \Exp{\frac{1}{n} \sum_{i=1}^n\norm{g^t_i - h^{t}_i}^2}\\
        &\quad - \left(\frac{1}{2\gamma} - \frac{L}{2} - \frac{40 \gamma \omega \left(2 \omega + 1\right) \left(\frac{\left(1 - b\right)^2 L_\sigma^2}{B} + \widehat{L}^2\right)}{n} - \frac{2 \nu \left(1 - b\right)^2 L_\sigma^2}{n B} - \frac{2 \rho \left(1 - b\right)^2 L_\sigma^2}{B}\right) \Exp{\norm{x^{t+1} - x^t}^2} \\
        &\quad + \left(\gamma + \nu (1 - b)^2\right) \Exp{\norm{h^{t} - \nabla f(x^{t})}^2} \\
        &\quad + \left(\frac{40 b^2 \gamma \omega \left(2 \omega + 1\right)}{n} + \rho (1 - b)^2\right)\Exp{\frac{1}{n}\sum_{i=1}^n\norm{h^{t}_i - \nabla f_i(x^{t})}^2} \\
        &\quad + 2 \left(\frac{20 \gamma \omega \left(2 \omega + 1\right)}{n B} + \frac{\nu}{n B} + \frac{\rho}{B}\right) b^2 \sigma^2.
    \end{align*}
    By taking $\nu = \frac{2\gamma}{b},$ one can see that $\gamma + \nu (1 - b)^2 \leq \left(1 - \frac{b}{2}\right)\nu,$ and
    \begin{align*}
        &\Exp{f(x^{t + 1})} + 2\gamma(2\omega + 1) \Exp{\norm{g^{t+1} - h^{t+1}}^2} + \frac{8 \gamma \omega}{n} \Exp{\frac{1}{n}\sum_{i=1}^n\norm{g^{t+1}_i - h^{t+1}_i}^2}\\
        &\quad  + \frac{2\gamma}{b} \Exp{\norm{h^{t+1} - \nabla f(x^{t+1})}^2} + \rho \Exp{\frac{1}{n}\sum_{i=1}^n\norm{h^{t+1}_i - \nabla f_i(x^{t+1})}^2}\\
        &\leq \Exp{f(x^t)} - \frac{\gamma}{2}\Exp{\norm{\nabla f(x^t)}^2} \\
        &\quad + \left(1 - \gamma \mu\right)2\gamma(2\omega + 1) \Exp{\norm{g^{t} - h^t}^2} + \left(1 - \gamma \mu\right) \frac{8 \gamma \omega}{n} \Exp{\frac{1}{n} \sum_{i=1}^n\norm{g^t_i - h^{t}_i}^2}\\
        &\quad  + \left(1 - \frac{b}{2}\right)\frac{2\gamma}{b} \Exp{\norm{h^{t} - \nabla f(x^{t})}^2} \\
        &\quad - \left(\frac{1}{2\gamma} - \frac{L}{2} - \frac{40 \gamma \omega \left(2 \omega + 1\right) \left(\frac{\left(1 - b\right)^2 L_\sigma^2}{B} + \widehat{L}^2\right)}{n} - \frac{4 \gamma \left(1 - b\right)^2 L_\sigma^2}{b n B} - \frac{2 \rho \left(1 - b\right)^2 L_\sigma^2}{B}\right) \Exp{\norm{x^{t+1} - x^t}^2} \\
        &\quad + \left(\frac{40 b^2 \gamma \omega \left(2 \omega + 1\right)}{n} + \rho (1 - b)^2\right)\Exp{\frac{1}{n}\sum_{i=1}^n\norm{h^{t}_i - \nabla f_i(x^{t})}^2} \\
        &\quad + 2 \left(\frac{20 \gamma \omega \left(2 \omega + 1\right)}{n B} + \frac{2\gamma}{b n B} + \frac{\rho}{B}\right) b^2 \sigma^2.
    \end{align*}
    Next, we fix $\rho = \frac{80 b \gamma \omega \left(2 \omega + 1\right)}{n}.$ With this choice of $\rho$ and for all $b \in (0, 1],$ we can show that $\frac{40 b^2 \gamma \omega \left(2 \omega + 1\right)}{n} + \rho (1 - b)^2 \leq \left(1 - \frac{b}{2}\right)\rho,$ thus
    \begin{align*}
        &\Exp{f(x^{t + 1})} + 2\gamma(2\omega + 1) \Exp{\norm{g^{t+1} - h^{t+1}}^2} + \frac{8 \gamma \omega}{n} \Exp{\frac{1}{n}\sum_{i=1}^n\norm{g^{t+1}_i - h^{t+1}_i}^2}\\
        &\quad  + \frac{2\gamma}{b} \Exp{\norm{h^{t+1} - \nabla f(x^{t+1})}^2} + \frac{80 b \gamma \omega \left(2 \omega + 1\right)}{n} \Exp{\frac{1}{n}\sum_{i=1}^n\norm{h^{t+1}_i - \nabla f_i(x^{t+1})}^2}\\
        &\leq \Exp{f(x^t)} - \frac{\gamma}{2}\Exp{\norm{\nabla f(x^t)}^2} \\
        &\quad + \left(1 - \gamma \mu\right)2\gamma(2\omega + 1) \Exp{\norm{g^{t} - h^t}^2} + \left(1 - \gamma \mu\right) \frac{8 \gamma \omega}{n} \Exp{\frac{1}{n} \sum_{i=1}^n\norm{g^t_i - h^{t}_i}^2}\\
        &\quad  + \left(1 - \frac{b}{2}\right)\frac{2\gamma}{b} \Exp{\norm{h^{t} - \nabla f(x^{t})}^2} + \left(1 - \frac{b}{2}\right) \frac{80 b \gamma \omega \left(2 \omega + 1\right)}{n} \Exp{\frac{1}{n}\sum_{i=1}^n\norm{h^{t}_i - \nabla f_i(x^{t})}^2} \\
        &\quad - \left(\frac{1}{2\gamma} - \frac{L}{2} - \frac{40 \gamma \omega \left(2 \omega + 1\right) \left(\frac{\left(1 - b\right)^2 L_\sigma^2}{B} + \widehat{L}^2\right)}{n} \right.\\
        &\quad\quad\quad\left. - \frac{4 \gamma \left(1 - b\right)^2 L_\sigma^2}{b n B} - \frac{160 b \gamma \omega \left(2 \omega + 1\right) \left(1 - b\right)^2 L_\sigma^2}{n B}\right) \Exp{\norm{x^{t+1} - x^t}^2} \\
        &\quad + 2 \left(\frac{20 \gamma \omega \left(2 \omega + 1\right)}{n B} + \frac{2\gamma}{b n B} + \frac{80 b \gamma \omega \left(2 \omega + 1\right)}{n B}\right) b^2 \sigma^2 \\
        &\leq \Exp{f(x^t)} - \frac{\gamma}{2}\Exp{\norm{\nabla f(x^t)}^2} \\
        &\quad + \left(1 - \gamma \mu\right)2\gamma(2\omega + 1) \Exp{\norm{g^{t} - h^t}^2} + \left(1 - \gamma \mu\right) \frac{8 \gamma \omega}{n} \Exp{\frac{1}{n} \sum_{i=1}^n\norm{g^t_i - h^{t}_i}^2}\\
        &\quad  + \left(1 - \frac{b}{2}\right)\frac{2\gamma}{b} \Exp{\norm{h^{t} - \nabla f(x^{t})}^2} + \left(1 - \frac{b}{2}\right) \frac{80 b \gamma \omega \left(2 \omega + 1\right)}{n} \Exp{\frac{1}{n}\sum_{i=1}^n\norm{h^{t}_i - \nabla f_i(x^{t})}^2} \\
        &\quad - \left(\frac{1}{2\gamma} - \frac{L}{2} - \frac{200 \gamma \omega \left(2 \omega + 1\right) \left(\frac{\left(1 - b\right)^2 L_\sigma^2}{B} + \widehat{L}^2\right)}{n} - \frac{4 \gamma \left(1 - b\right)^2 L_\sigma^2}{b n B}\right) \Exp{\norm{x^{t+1} - x^t}^2} \\
        &\quad + \left(\frac{200 \gamma \omega \left(2 \omega + 1\right)}{n B} + \frac{4\gamma}{b n B}\right) b^2 \sigma^2.
    \end{align*}
    In the last inequality we use $b \in (0, 1].$
    Next, considering the choice of $\gamma$ and Lemma~\ref{lemma:gamma}, we get
    \begin{align*}
        &\Exp{f(x^{t + 1})} + 2\gamma(2\omega + 1) \Exp{\norm{g^{t+1} - h^{t+1}}^2} + \frac{8 \gamma \omega}{n} \Exp{\frac{1}{n}\sum_{i=1}^n\norm{g^{t+1}_i - h^{t+1}_i}^2}\\
        &\quad  + \frac{2\gamma}{b} \Exp{\norm{h^{t+1} - \nabla f(x^{t+1})}^2} + \frac{80 b \gamma \omega \left(2 \omega + 1\right)}{n} \Exp{\frac{1}{n}\sum_{i=1}^n\norm{h^{t+1}_i - \nabla f_i(x^{t+1})}^2}\\
        &\leq \Exp{f(x^t)} - \frac{\gamma}{2}\Exp{\norm{\nabla f(x^t)}^2} \\
        &\quad + \left(1 - \gamma \mu\right)2\gamma(2\omega + 1) \Exp{\norm{g^{t} - h^t}^2} + \left(1 - \gamma \mu\right) \frac{8 \gamma \omega}{n} \Exp{\frac{1}{n} \sum_{i=1}^n\norm{g^t_i - h^{t}_i}^2}\\
        &\quad  + \left(1 - \gamma \mu\right)\frac{2\gamma}{b} \Exp{\norm{h^{t} - \nabla f(x^{t})}^2} + \left(1 - \gamma \mu\right) \frac{80 b \gamma \omega \left(2 \omega + 1\right)}{n} \Exp{\frac{1}{n}\sum_{i=1}^n\norm{h^{t}_i - \nabla f_i(x^{t})}^2} \\
        &\quad + \left(\frac{200 \gamma \omega \left(2 \omega + 1\right)}{n B} + \frac{4\gamma}{b n B}\right) b^2 \sigma^2.
    \end{align*}
    In the view of Lemma~\ref{lemma:good_recursion_pl} with
    \begin{eqnarray*}
        \Psi^t &=& 2 (2\omega + 1) \Exp{\norm{g^{t} - h^t}^2} + \frac{8 \omega}{n} \Exp{\frac{1}{n} \sum_{i=1}^n\norm{g^t_i - h^{t}_i}^2}\\
        &\quad +& \frac{2}{b} \Exp{\norm{h^{t} - \nabla f(x^{t})}^2} + \frac{80 b \omega \left(2 \omega + 1\right)}{n} \Exp{\frac{1}{n}\sum_{i=1}^n\norm{h^{t}_i - \nabla f_i(x^{t})}^2} \\
    \end{eqnarray*}
    and $C = \left(\frac{200 \omega \left(2 \omega + 1\right)}{n B} + \frac{4}{b n B}\right) b^2 \sigma^2,$ we can conclude the proof.
\end{proof}

\begin{corollary}
    \label{cor:stochastic_pl}
    Suppose that assumptions from Theorem~\ref{theorem:stochastic_pl} hold, momentum $b = \Theta\left(\min \left\{\ \frac{1}{\omega} \sqrt{\frac{\mu n \varepsilon B}{\sigma^2}}, \frac{\mu n \varepsilon B}{\sigma^2}\right\}\right),$ and $h^{0}_i = g^{0}_i = 0$ for all $i \in [n],$
    then Algorithm~\ref{alg:main_algorithm} needs
    \begin{align}
        \label{eq:corollary:stochastic_pl}
        T \eqdef \widetilde{\cO}\left(\omega + \omega \sqrt{\frac{\sigma^2}{\mu n \varepsilon B}} + \frac{\sigma^2}{\mu n \varepsilon B} + \frac{L}{\mu} + \frac{\omega \widehat{L}}{\mu \sqrt{n}} + \left(\frac{\omega}{\sqrt{n}} + \frac{\sigma}{n \sqrt{B \mu \varepsilon}}\right)\frac{L_\sigma}{\mu\sqrt{B}}\right)
    \end{align}
    communication rounds to get an $\varepsilon$-solution, the communication complexity is equal to $\cO\left(\zeta_{\cC} T\right),$ and the number of stochastic gradient calculations per node equals $\cO(BT),$ where $\zeta_{\cC}$ is the expected density from Definition~\ref{def:expected_density}.
\end{corollary}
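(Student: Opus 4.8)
The plan is to derive Corollary~\ref{cor:stochastic_pl} directly from Theorem~\ref{theorem:stochastic_pl}, whose right-hand side has the shape of a contracting term $(1-\gamma\mu)^T\cdot\Delta^0$ plus a constant noise floor $\frac{1}{\mu}\left(\frac{200\omega(2\omega+1)}{nB}+\frac{4}{bnB}\right)b^2\sigma^2$, where $\Delta^0$ is the parenthesized initialization quantity. To reach $\Exp{f(x^T)-f^*}\le\varepsilon$ it suffices to make each of these at most $\varepsilon/2$, so I will split the argument into (i) choosing the momentum $b$ so the noise floor is $\cO(\varepsilon)$, and (ii) bounding the number of rounds $T$ needed to contract the first term below $\varepsilon$, which reduces to estimating $1/(\gamma\mu)$.

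For step (i), I would expand the noise floor as $\frac{200\omega(2\omega+1)b^2\sigma^2}{\mu nB}+\frac{4b\sigma^2}{\mu nB}$. Requiring the first summand to be $\cO(\varepsilon)$ forces $b=\cO\left(\frac{1}{\omega}\sqrt{\frac{\mu n\varepsilon B}{\sigma^2}}\right)$ (using $\omega(2\omega+1)=\Theta(\omega^2)$), and requiring the second forces $b=\cO\left(\frac{\mu n\varepsilon B}{\sigma^2}\right)$; taking $b=\Theta\left(\min\{\ldots\}\right)$ as in the statement meets both simultaneously and is exactly the prescribed choice. For step (ii), note that with $h^0_i=g^0_i=0$ the bracket $\Delta^0$ is a fixed finite quantity (it involves only $f(x^0)-f^*$ and norms of $\nabla f(x^0)$, $\nabla f_i(x^0)$), so $(1-\gamma\mu)^T\Delta^0\le\varepsilon$ holds once $T=\widetilde{\cO}(1/(\gamma\mu))$, the factor $\ln(\Delta^0/\varepsilon)$ being absorbed into $\widetilde{\cO}$; this is precisely why initializing with zeros is admissible here, in contrast to Corollary~\ref{cor:stochastic}.

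It then remains to read off $1/(\gamma\mu)$ from the stepsize constraint of Theorem~\ref{theorem:stochastic_pl}. The branch $L+\sqrt{\cdots}$ contributes $\frac{L}{\mu}$ together with $\frac1\mu\sqrt{\cdots}$, the branch $a/(2\mu)$ contributes $\frac2a=2(2\omega+1)=\Theta(\omega)$, and the branch $b/(2\mu)$ contributes $\frac2b=\Theta\left(\max\{\omega\sqrt{\sigma^2/(\mu n\varepsilon B)},\,\sigma^2/(\mu n\varepsilon B)\}\right)$, which is the source of the $\omega\sqrt{\sigma^2/(\mu n\varepsilon B)}$ and $\sigma^2/(\mu n\varepsilon B)$ terms. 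For the smoothness piece I would use $\sqrt{A+B}\le\sqrt A+\sqrt B$, $(1-b)\le1$ and $\omega(2\omega+1)=\Theta(\omega^2)$ to split $\frac1\mu\sqrt{\cdots}$ into $\frac{\omega\widehat L}{\mu\sqrt n}$, $\frac{\omega}{\sqrt n}\frac{L_\sigma}{\mu\sqrt B}$, and a residual $\frac{L_\sigma}{\mu\sqrt{bnB}}$.

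The one step requiring care -- and the main obstacle -- is reconciling this residual $\frac{L_\sigma}{\mu\sqrt{bnB}}$ with the clean final expression, which I would handle by a case split on the two branches of $b$. When $b=\Theta(\mu n\varepsilon B/\sigma^2)$ (i.e.\ $\omega^2\le\sigma^2/(\mu n\varepsilon B)$) one computes $\frac{1}{\sqrt b}=\Theta(\sigma/\sqrt{\mu n\varepsilon B})$, so $\frac{L_\sigma}{\mu\sqrt{bnB}}=\Theta\left(\frac{\sigma}{n\sqrt{B\mu\varepsilon}}\frac{L_\sigma}{\mu\sqrt B}\right)$, exactly the stated term. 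When $b=\Theta\left(\frac{1}{\omega}\sqrt{\mu n\varepsilon B/\sigma^2}\right)$ (i.e.\ $\omega^2\ge\sigma^2/(\mu n\varepsilon B)$) one has $1/b\le\omega^2$, whence $\frac{L_\sigma}{\mu\sqrt{bnB}}\le\frac{\omega}{\sqrt n}\frac{L_\sigma}{\mu\sqrt B}$ and the residual is absorbed by a term already present. Collecting all contributions yields the claimed $T$ in \eqref{eq:corollary:stochastic_pl}. Finally, since each round of \algname{\algorithmname-MVR} transmits $\zeta_{\cC}$ coordinates and the zero initialization needs no full-vector upload, the communication complexity is $\cO(\zeta_{\cC}T)$; and since the update evaluates $2B=\cO(B)$ stochastic gradients per round with no initial batch, the oracle cost is $\cO(BT)$.
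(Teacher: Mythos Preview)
Your proposal is correct and follows essentially the same approach as the paper: choose $b$ so the noise floor in Theorem~\ref{theorem:stochastic_pl} is $\cO(\varepsilon)$, then read off $T=\widetilde{\cO}(1/(\gamma\mu))$ from the three stepsize constraints, absorbing the zero-initialization error $\Delta^0$ into the logarithmic factor. The paper's own proof is extremely terse (it merely asserts the noise-floor bound and the value of $T$ without algebra), so your detailed splitting of the square root and the case analysis on the two branches of $b$ to handle the $\frac{L_\sigma}{\mu\sqrt{bnB}}$ residual are exactly the computations a reader would have to supply; they are correct and nothing is missing.
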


\begin{proof}
    Considering the choice of $b,$ we have $\frac{1}{\mu}\left(\frac{200 \omega \left(2 \omega + 1\right)}{n B} + \frac{4}{b n B}\right) b^2 \sigma^2 = \cO\left(\varepsilon\right).$ Therefore, is it enough to take the number of communication rounds equals \eqref{eq:corollary:stochastic_pl} to get an $\varepsilon$-solution. 
    In the view of Algorithm~\ref{alg:main_algorithm} and the fact that we use a mini-batch of stochastic gradients, the communication complexity is equal to $\cO\left(\zeta_{\cC} T\right)$ and the number of stochastic gradients that each node calculates equals $\cO(BT).$ Unlike Corollary~\ref{cor:stochastic}, in this corollary, we can initialize $h^{0}_i$ and $g^0_i$, for instance, with zeros because the corresponding initialization error $\Psi^0$ from the proof of Theorem~\ref{theorem:stochastic_pl} would be under the logarithm.
\end{proof}

\subsection{Case of \algname{\algorithmname-SYNC-MVR}}

Comparing Algorithm~\ref{alg:main_algorithm} and Algorithm~\ref{alg:main_algorithm_mvr_sync}, one can see that Algorithm~\ref{alg:main_algorithm_mvr_sync} has the third source of randomness from $c^{t+1}.$
In this section, we define $\ExpSub{p}{\cdot}$ to be a conditional expectation w.r.t.\,$c^{t+1}$ conditioned on all previous randomness.
And we define $\ExpSub{t+1}{\cdot}$ to be a conditional expectation w.r.t.\,$c^{t+1},$ $\{\cC_i\}_{i=1}^n$, $\{h_i^{t+1}\}_{i=1}^n$ conditioned on all previous randomness. Note, that $\ExpSub{t+1}{\cdot} = \ExpSub{h}{\ExpSub{\cC}{\ExpSub{p}{\cdot}}}.$

\begin{lemma}
    \label{lemma:sync:prob_compressor}
    Suppose that Assumptions \ref{ass:nodes_lipschitz_constant}, \ref{ass:stochastic_unbiased_and_variance_bounded} and \ref{ass:compressors} hold and let us consider sequences $\{g^{t+1}_i\}_{i=1}^n$ and $\{h^{t+1}_i\}_{i=1}^n$ from Algorithm~\ref{alg:main_algorithm_mvr_sync}, then
    \begin{align*}
        &\ExpSub{t+1}{\norm{g^{t+1} - h^{t+1}}^2} \\
        &\leq \frac{2\omega(1 - p) \left(\frac{L_{\sigma}^2}{B} + \widehat{L}^2\right)}{n} \norm{x^{t+1} - x^{t}}^2 + \frac{2 a^2 \omega (1 - p)}{n^2} \sum_{i=1}^n\norm{g^t_i - h^{t}_i}^2 + (1 - p)\left(1 - a\right)^2\norm{g^t - h^{t}}^2,
    \end{align*}
    and
    \begin{align*}
        &\ExpSub{t+1}{\norm{g^{t+1}_i - h^{t+1}_i}^2} \\
        &\leq 2\omega(1 - p) \left(\frac{L_{\sigma}^2}{B} + L_i^2\right) \norm{x^{t+1} - x^{t}}^2 + (1 - p)\left(2 a^2 \omega + \left(1 - a\right)^2\right) \norm{g^t_i - h^{t}_i}^2, \quad \forall i \in [n].
    \end{align*}
\end{lemma}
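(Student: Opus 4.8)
The plan is to condition on the synchronization coin $c^{t+1}$ and exploit the fact that a synchronization step annihilates both error terms. Inspecting Algorithm~\ref{alg:main_algorithm_mvr_sync}, when $c^{t+1}=1$ (which happens with probability $p$) each node sets $m^{t+1}_i = g^{t+1}_i = h^{t+1}_i$ and the server sets $g^{t+1} = \frac{1}{n}\sum_{i=1}^n m^{t+1}_i = h^{t+1}$; hence $g^{t+1}_i - h^{t+1}_i = 0$ and $g^{t+1} - h^{t+1} = 0$ deterministically. Thus, when I take the innermost expectation $\ExpSub{p}{\cdot}$ over $c^{t+1}$, the synchronization branch contributes nothing and only the branch $c^{t+1}=0$ survives, carrying an overall factor $(1-p)$. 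Because $c^{t+1}$ is independent of the compressors and of the fresh stochastic samples, this factor pulls cleanly through the remaining expectations $\ExpSub{\cC}{\cdot}$ and $\ExpSub{h}{\cdot}$.

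In the branch $c^{t+1}=0$, the definitions of $m^{t+1}_i$ and $g^{t+1}_i$ are identical to those of Algorithm~\ref{alg:main_algorithm}, so I can reuse verbatim the compressor estimates \eqref{eq:compressor_global_error} and \eqref{eq:compressor_local_error}. Applying $\ExpSub{\cC}{\cdot}$ therefore bounds the (conditional) global error by $\frac{2\omega}{n^2}\sum_{i=1}^n\norm{h^{t+1}_i - h^{t}_i}^2 + \frac{2a^2\omega}{n^2}\sum_{i=1}^n\norm{g^t_i - h^{t}_i}^2 + (1-a)^2\norm{g^t - h^{t}}^2$, and the per-node error by $2\omega\norm{h^{t+1}_i - h^{t}_i}^2 + (2a^2\omega + (1-a)^2)\norm{g^t_i - h^{t}_i}^2$.

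It remains to control $\ExpSub{h}{\norm{h^{t+1}_i - h^{t}_i}^2}$ in this branch. Since there is no momentum here, $h^{t+1}_i - h^{t}_i = \frac{1}{B}\sum_{j=1}^B\left(\nabla f_i(x^{t+1};\xi^{t+1}_{ij}) - \nabla f_i(x^{t};\xi^{t+1}_{ij})\right)$, whose conditional mean is $\nabla f_i(x^{t+1}) - \nabla f_i(x^t)$ by the unbiasedness in Assumption~\ref{ass:stochastic_unbiased_and_variance_bounded}. I would apply the variance decomposition \eqref{auxiliary:variance_decomposition} to split off $\norm{\nabla f_i(x^{t+1}) - \nabla f_i(x^t)}^2$, bound the centered remainder by $\frac{L_\sigma^2}{B}\norm{x^{t+1}-x^t}^2$ using independence of the $B$ samples together with the mean-squared smoothness of Assumption~\ref{ass:mean_square_smoothness}, and bound the mean term by $L_i^2\norm{x^{t+1}-x^t}^2$ using the $L_i$-smoothness of Assumption~\ref{ass:nodes_lipschitz_constant}. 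This yields $\ExpSub{h}{\norm{h^{t+1}_i - h^{t}_i}^2} \leq \left(\frac{L_\sigma^2}{B} + L_i^2\right)\norm{x^{t+1}-x^t}^2$.

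Finally, I substitute this increment estimate into the two compressor bounds, collapse $\frac{1}{n}\sum_{i=1}^n L_i^2 = \widehat{L}^2$ in the global case (Assumption~\ref{ass:nodes_lipschitz_constant}), take the outer $\ExpSub{h}{\cdot}$, and multiply by the surviving factor $(1-p)$; this reproduces the two claimed inequalities exactly. The only genuine care needed is the bookkeeping of the nested conditional expectations $\ExpSub{t+1}{\cdot} = \ExpSub{h}{\ExpSub{\cC}{\ExpSub{p}{\cdot}}}$ and the verification that the synchronization branch vanishes; once that observation is in place, the computation is a routine reuse of the \algname{\algorithmname} compressor lemma together with a standard variance estimate for the variance-reduced increment, so I anticipate no substantial obstacle.
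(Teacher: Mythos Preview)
Your proposal is correct and follows essentially the same route as the paper: condition on $c^{t+1}$ so the synchronization branch vanishes, reuse the compressor estimates \eqref{eq:compressor_global_error}--\eqref{eq:compressor_local_error} on the $c^{t+1}=0$ branch, and bound $\ExpSub{h}{\norm{h^{t+1}_i - h^t_i}^2}$ via variance decomposition plus mean-squared smoothness and $L_i$-smoothness. One small remark: you are right to invoke Assumption~\ref{ass:mean_square_smoothness} for the $\frac{L_\sigma^2}{B}$ term---the paper's proof does exactly the same, even though the lemma statement neglects to list that assumption.
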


\begin{proof}
    First, we estimate $\ExpSub{t+1}{\norm{g^{t+1} - h^{t+1}}^2}$. Let us denote $h^{t+1}_{i,0} = \frac{1}{B}\sum_{j=1}^B\nabla f_i(x^{t+1};\xi^{t+1}_{ij}) + h^{t}_i - \frac{1}{B}\sum_{j=1}^B \nabla f_i(x^{t};\xi^{t+1}_{ij}).$
    \begin{align*}
        &\ExpSub{t+1}{\norm{g^{t+1} - h^{t+1}}^2} \\
        &=\ExpSub{t+1}{\ExpSub{p}{\norm{g^{t+1} - h^{t+1}}^2}} \\
        &=(1 - p)\ExpSub{t+1}{\norm{g^t + \frac{1}{n}\sum_{i=1}^n \cC_i\left(h^{t+1}_{i,0} - h^{t}_i - a \left(g^t_i - h^{t}_i\right)\right) - \frac{1}{n}\sum_{i=1}^n h^{t+1}_{i,0}}^2} \\
        &\overset{\eqref{eq:compressor},\eqref{auxiliary:variance_decomposition}}{=}(1 - p)\ExpSub{h}{\ExpSub{\cC}{\norm{\frac{1}{n}\sum_{i=1}^n \cC_i\left(h^{t+1}_{i,0} - h^{t}_i - a \left(g^t_i - h^{t}_i\right)\right) - \frac{1}{n}\sum_{i=1}^n \left(h^{t+1}_{i,0} - h^{t}_i - a \left(g^t_i - h^{t}_i\right)\right)}^2}} \\
        &\quad + (1 - p)\left(1 - a\right)^2\norm{g^t - h^{t}}^2.
    \end{align*}
    Using the independence of compressors and \eqref{eq:compressor}, we get
    \begin{align*}
        &\ExpSub{t+1}{\norm{g^{t+1} - h^{t+1}}^2} \\
        &=\frac{(1 - p)}{n^2} \sum_{i=1}^n\ExpSub{h}{\ExpSub{\cC}{\norm{\cC_i\left(h^{t+1}_{i,0} - h^{t}_i - a \left(g^t_i - h^{t}_i\right)\right) - \left(h^{t+1}_{i,0} - h^{t}_i - a \left(g^t_i - h^{t}_i\right)\right)}^2}} \\
        &\quad + (1 - p)\left(1 - a\right)^2\norm{g^t - h^{t}}^2\\
        &\leq \frac{\omega(1 - p)}{n^2} \sum_{i=1}^n\ExpSub{h}{\norm{h^{t+1}_{i,0} - h^{t}_i - a \left(g^t_i - h^{t}_i\right)}^2} + (1 - p)\left(1 - a\right)^2\norm{g^t - h^{t}}^2 \\
        &\leq \frac{2\omega(1 - p)}{n^2} \sum_{i=1}^n\ExpSub{h}{\norm{h^{t+1}_{i,0} - h^{t}_i}^2} + \frac{2 a^2 \omega (1 - p)}{n^2} \sum_{i=1}^n\norm{g^t_i - h^{t}_i}^2 + (1 - p)\left(1 - a\right)^2\norm{g^t - h^{t}}^2 \\
        &= \frac{2\omega(1 - p)}{n^2} \sum_{i=1}^n\ExpSub{h}{\norm{\frac{1}{B}\sum_{j=1}^B\nabla f_i(x^{t+1};\xi^{t+1}_{ij}) - \frac{1}{B}\sum_{j=1}^B \nabla f_i(x^{t};\xi^{t+1}_{ij})}^2} \\
        &\quad + \frac{2 a^2 \omega (1 - p)}{n^2} \sum_{i=1}^n\norm{g^t_i - h^{t}_i}^2 + (1 - p)\left(1 - a\right)^2\norm{g^t - h^{t}}^2 \\
        &\overset{\eqref{auxiliary:variance_decomposition}}{=} \frac{2\omega(1 - p)}{n^2} \sum_{i=1}^n\Bigg(\ExpSub{h}{\norm{\frac{1}{B}\sum_{j=1}^B\left(\nabla f_i(x^{t+1};\xi^{t+1}_{ij}) - \nabla f_i(x^{t};\xi^{t+1}_{ij})\right) - \left(\nabla f_i(x^{t+1}) - \nabla f_i(x^{t})\right)}^2} \\
        &\quad + \norm{\nabla f_i(x^{t+1}) - \nabla f_i(x^{t})}^2\Bigg) \\
        &\quad + \frac{2 a^2 \omega (1 - p)}{n^2} \sum_{i=1}^n\norm{g^t_i - h^{t}_i}^2 + (1 - p)\left(1 - a\right)^2\norm{g^t - h^{t}}^2 \\
        &= \frac{2\omega(1 - p)}{n^2} \sum_{i=1}^n \Bigg(\frac{1}{B^2}\sum_{j=1}^B\ExpSub{h}{\norm{\nabla f_i(x^{t+1};\xi^{t+1}_{ij}) - \nabla f_i(x^{t};\xi^{t+1}_{ij}) - \left(\nabla f_i(x^{t+1}) - \nabla f_i(x^{t})\right)}^2} \\
        &\quad + \norm{\nabla f_i(x^{t+1}) - \nabla f_i(x^{t})}^2\Bigg) \\
        &\quad + \frac{2 a^2 \omega (1 - p)}{n^2} \sum_{i=1}^n\norm{g^t_i - h^{t}_i}^2 + (1 - p)\left(1 - a\right)^2\norm{g^t - h^{t}}^2 \\
        &\leq \frac{2\omega(1 - p) \left(\frac{L_{\sigma}^2}{B} + \widehat{L}^2\right)}{n} \norm{x^{t+1} - x^{t}}^2 + \frac{2 a^2 \omega (1 - p)}{n^2} \sum_{i=1}^n\norm{g^t_i - h^{t}_i}^2 + (1 - p)\left(1 - a\right)^2\norm{g^t - h^{t}}^2,
    \end{align*}
    where in the inequalities we use Assumptions~\ref{ass:compressors}, \ref{ass:stochastic_unbiased_and_variance_bounded} and \ref{ass:nodes_lipschitz_constant}, and \eqref{auxiliary:jensen_inequality}.
    Analogously, we can get the bound for $\ExpSub{t+1}{\norm{g^{t+1}_i - h^{t+1}_i}^2}$ for all $i \in [n]$:
    \begin{align*}
        &\ExpSub{t+1}{\norm{g^{t+1}_i - h^{t+1}_i}^2} \\
        &=\ExpSub{t+1}{\ExpSub{p}{\norm{g^{t+1}_i - h^{t+1}_i}^2}} \\
        &=(1 - p)\ExpSub{t+1}{\norm{g^t_i + \cC_i\left(h^{t+1}_{i,0} - h^{t}_i - a \left(g^t_i - h^{t}_i\right)\right) - h^{t+1}_{i,0}}^2} \\
        &\leq 2\omega(1 - p) \left(\frac{L_{\sigma}^2}{B} + L_i^2\right) \norm{x^{t+1} - x^{t}}^2 + 2 a^2 \omega (1 - p) \norm{g^t_i - h^{t}_i}^2 + (1 - p)\left(1 - a\right)^2\norm{g^t_i - h^{t}_i}^2.
    \end{align*}
\end{proof}

We introduce new notations: $\nabla f_i(x^{t+1};\xi^{t+1}_i) = \frac{1}{B} \sum_{j=1}^B \nabla f_i(x^{t+1};\xi^{t+1}_{ij})$ and $\nabla f(x^{t+1};\xi^{t+1}) = \frac{1}{n} \sum_{i=1}^n \nabla f_i(x^{t+1};\xi^{t+1}_i).$

\begin{lemma}
    \label{lemma:sync_mvr}
    Suppose that Assumptions \ref{ass:stochastic_unbiased_and_variance_bounded} and \ref{ass:mean_square_smoothness} hold and let us consider sequence $\{h^{t+1}_i\}_{i=1}^n$ from Algorithm~\ref{alg:main_algorithm_mvr_sync}, then
    \begin{align*}
        \ExpSub{t+1}{\norm{h^{t+1} - \nabla f(x^{t+1})}^2} \leq \frac{p \sigma^2}{n B'} + \frac{(1 - p) L_\sigma^2}{n B}\norm{x^{t+1} - x^{t}}^2 + (1 - p) \norm{h^{t} - \nabla f(x^{t})}^2.
    \end{align*}
\end{lemma}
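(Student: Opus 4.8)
The plan is to condition on the coin flip $c^{t+1}$ first, recalling that $\ExpSub{t+1}{\cdot} = \ExpSub{h}{\ExpSub{\cC}{\ExpSub{p}{\cdot}}}$ and that $h^{t+1}$ does not depend on the compressors, so only the expectation over $c^{t+1}$ and over the fresh stochastic samples is relevant here. By the law of total expectation, $\ExpSub{t+1}{\norm{h^{t+1} - \nabla f(x^{t+1})}^2}$ splits as $p$ times the contribution of the synchronization branch ($c^{t+1}=1$) plus $(1-p)$ times the contribution of the SARAH-like branch ($c^{t+1}=0$). I would bound each branch separately and recombine at the end.

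For the branch $c^{t+1}=1$, here $h^{t+1} = \frac{1}{n}\sum_{i=1}^n \frac{1}{B'}\sum_{k=1}^{B'}\nabla f_i(x^{t+1};\xi^{t+1}_{ik})$ is an unbiased mini-batch estimator of $\nabla f(x^{t+1})$. Using the independence of the samples across the nodes and within each node's batch together with the bounded-variance part of Assumption~\ref{ass:stochastic_unbiased_and_variance_bounded}, the squared error collapses into a sum of per-sample variances, each at most $\sigma^2$, so that $\ExpSub{h}{\norm{h^{t+1}-\nabla f(x^{t+1})}^2}\leq \frac{1}{n^2 (B')^2}\cdot n B'\cdot\sigma^2 = \frac{\sigma^2}{n B'}$.

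For the branch $c^{t+1}=0$, writing the recursion as $h^{t+1} = h^t + \frac{1}{n}\sum_{i=1}^n\frac{1}{B}\sum_{j=1}^B\left(\nabla f_i(x^{t+1};\xi^{t+1}_{ij})-\nabla f_i(x^t;\xi^{t+1}_{ij})\right)$, the conditional mean is $\ExpSub{h}{h^{t+1}} = h^t + \nabla f(x^{t+1}) - \nabla f(x^t)$, hence $\ExpSub{h}{h^{t+1}} - \nabla f(x^{t+1}) = h^t - \nabla f(x^t)$. Applying the variance decomposition \eqref{auxiliary:variance_decomposition} separates the error into the deterministic term $\norm{h^t-\nabla f(x^t)}^2$ plus a pure-variance term; on the latter I would again exploit independence across $i$ and $j$ to pass to a sum of per-sample terms $\norm{\nabla f_i(x^{t+1};\xi^{t+1}_{ij})-\nabla f_i(x^t;\xi^{t+1}_{ij})-(\nabla f_i(x^{t+1})-\nabla f_i(x^t))}^2$, each bounded by $L_\sigma^2\norm{x^{t+1}-x^t}^2$ via Assumption~\ref{ass:mean_square_smoothness}, giving $\frac{L_\sigma^2}{n B}\norm{x^{t+1}-x^t}^2 + \norm{h^t-\nabla f(x^t)}^2$.

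Recombining the two branches with weights $p$ and $1-p$ immediately produces the claimed bound. This argument is essentially a fusion of the $c=1$ case of the PAGE lemma (Lemma~\ref{lemma:page_h}) with the variance-reduction estimate of Lemma~\ref{lemma:mvr}, so there is no genuine obstacle; the only point requiring care is the bookkeeping of conditional expectations — ensuring that $h^t$, $x^t$, $x^{t+1}$ are treated as measurable with respect to the past while the fresh samples $\{\xi^{t+1}_{ij}\}$ and the coin $c^{t+1}$ are the only new randomness — and correctly invoking independence both across nodes and within batches so that the cross terms vanish.
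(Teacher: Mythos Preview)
Your proposal is correct and matches the paper's proof essentially step for step: both condition on the coin $c^{t+1}$, bound the $c^{t+1}=1$ branch via the bounded-variance assumption to get $\sigma^2/(nB')$, and handle the $c^{t+1}=0$ branch by the variance decomposition \eqref{auxiliary:variance_decomposition} plus independence across nodes and within batches together with Assumption~\ref{ass:mean_square_smoothness}, then recombine with weights $p$ and $1-p$.
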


\begin{proof}
    \begin{align*}
        &\ExpSub{t+1}{\norm{h^{t+1} - \nabla f(x^{t+1})}^2} \\
        &=p \ExpSub{h}{\norm{\frac{1}{n} \sum_{i=1}^n \frac{1}{B'} \sum_{k=1}^{B'} \nabla f_i(x^{t+1};\xi_{ik}^{t+1}) - \nabla f(x^{t+1})}^2} \\
        & \quad + (1 - p)\ExpSub{h}{\norm{\nabla f(x^{t+1};\xi^{t+1}) + h^{t} - \nabla f(x^{t};\xi^{t+1}) - \nabla f(x^{t+1})}^2} \\
        &\leq \frac{p \sigma^2}{n B'} + (1 - p)\ExpSub{h}{\norm{\nabla f(x^{t+1};\xi^{t+1}) + h^{t} - \nabla f(x^{t};\xi^{t+1}) - \nabla f(x^{t+1})}^2},
    \end{align*}
    where we use Assumption~\ref{ass:stochastic_unbiased_and_variance_bounded}.
    Next, using Assumption~\ref{ass:mean_square_smoothness} and \eqref{auxiliary:variance_decomposition}, we have
    \begin{align*}
        &\ExpSub{t+1}{\norm{h^{t+1} - \nabla f(x^{t+1})}^2} \\
        &\leq \frac{p \sigma^2}{n B'} + (1 - p)\ExpSub{h}{\norm{\nabla f(x^{t+1};\xi^{t+1}) + h^{t} - \nabla f(x^{t};\xi^{t+1}) - \nabla f(x^{t+1})}^2} \\
        &= \frac{p \sigma^2}{n B'} + (1 - p)\ExpSub{h}{\norm{\nabla f(x^{t+1};\xi^{t+1}) - \nabla f(x^{t};\xi^{t+1}) - \left(\nabla f(x^{t+1}) - \nabla f(x^{t})\right)}^2} + (1 - p) \norm{h^{t} - \nabla f(x^{t})}^2 \\
        &= \frac{p \sigma^2}{n B'} + \frac{(1 - p)}{n^2}\sum_{i=1}^n\ExpSub{h}{\norm{\nabla f_i(x^{t+1};\xi^{t+1}_i) - \nabla f_i(x^{t};\xi^{t+1}_i) - \left(\nabla f_i(x^{t+1}) - \nabla f_i(x^{t})\right)}^2} + (1 - p) \norm{h^{t} - \nabla f(x^{t})}^2 \\
        &= \frac{p \sigma^2}{n B'} + \frac{(1 - p)}{n^2 B^2}\sum_{i=1}^n \sum_{j=1}^B\ExpSub{h}{\norm{\nabla f_i(x^{t+1};\xi^{t+1}_{ij}) - \nabla f_i(x^{t};\xi^{t+1}_{ij}) - \left(\nabla f_i(x^{t+1}) - \nabla f_i(x^{t})\right)}^2} \\
        &\quad + (1 - p) \norm{h^{t} - \nabla f(x^{t})}^2 \\
        &\leq \frac{p \sigma^2}{n B'} + \frac{(1 - p) L_\sigma^2}{n B}\norm{x^{t+1} - x^{t}}^2 + (1 - p) \norm{h^{t} - \nabla f(x^{t})}^2.
    \end{align*}
\end{proof}

\begin{restatable}{theorem}{CONVERGENCESYNCMVR}
    \label{theorem:sync_stochastic}
    Suppose that Assumptions \ref{ass:lower_bound}, \ref{ass:lipschitz_constant}, \ref{ass:nodes_lipschitz_constant}, \ref{ass:stochastic_unbiased_and_variance_bounded}, \ref{ass:mean_square_smoothness} and \ref{ass:compressors} hold. Let us take $a = \frac{1}{2\omega + 1}$, probability $p \in (0, 1],$ batch size $B' \geq 1$ {\IfAppendix{and}{,}}
    {\IfAppendix{}{\scriptsizeformula}$$\gamma \leq \left(L + \sqrt{\frac{12 \omega (2\omega + 1) (1 - p)}{n}\left(\frac{L_{\sigma}^2}{B} + \widehat{L}^2\right) + \frac{2 (1 - p) L_\sigma^2}{p n B}}\right)^{-1},$$} 
    {\IfAppendix{}{and $h^{0}_i = g^{0}_i$ for all $i \in [n]$}}
    in Algorithm~\ref{alg:main_algorithm_mvr_sync}. Then 
    {\IfAppendix{
        \begin{align*}
            &\Exp{\norm{\nabla f(\widehat{x}^T)}^2} \leq \frac{1}{T}\vast[2 \left(f(x^0) - f^*\right) \\
            &\times \left(L + \sqrt{\frac{12 \omega (2\omega + 1) (1 - p)}{n}\left(\frac{L_{\sigma}^2}{B} + \widehat{L}^2\right) + \frac{2 (1 - p) L_\sigma^2}{p n B}}\right)\\
            &+ 2\left(2 \omega + 1\right) \norm{g^{0} - h^0}^2 + \frac{4 \omega}{n} \left(\frac{1}{n} \sum_{i=1}^n\norm{g^0_i - h^{0}_i}^2\right) \\
            &+ \frac{2}{p} \norm{h^{0} - \nabla f(x^{0})}^2 \vast] + \frac{2\sigma^2}{n B'}.
        \end{align*}
    }{\scriptsizeformula
        \begin{align*}
            \Exp{\norm{\nabla f(\widehat{x}^T)}^2} &\leq \frac{1}{T}\left[\frac{2 \left(f(x^0) - f^*\right)}{\gamma} + \frac{2}{p} \norm{h^{0} - \nabla f(x^{0})}^2 \right] + \frac{2\sigma^2}{n B'}.
        \end{align*}
    }}
\end{restatable}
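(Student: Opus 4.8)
The plan is to mirror the Lyapunov analysis of \algname{\algorithmname-MVR} from the proof of Theorem~\ref{theorem:stochastic}, but feeding in the synchronization-aware Lemmas~\ref{lemma:sync:prob_compressor} and~\ref{lemma:sync_mvr} in place of the generic compressor and estimator bounds. The essential structural simplification is that the non-synchronizing branch of Algorithm~\ref{alg:main_algorithm_mvr_sync} is a \algname{PAGE}-style update (equivalently, momentum $b=0$), so that $h^{t+1}_i-h^t_i=\frac{1}{B}\sum_{j=1}^B(\nabla f_i(x^{t+1};\xi^{t+1}_{ij})-\nabla f_i(x^t;\xi^{t+1}_{ij}))$ and its conditional second moment is controlled directly by $\norm{x^{t+1}-x^t}^2$ through Assumptions~\ref{ass:nodes_lipschitz_constant} and~\ref{ass:mean_square_smoothness}. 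Consequently the Lyapunov function will \emph{not} need to track the local gradient error $\frac{1}{n}\sum_i\norm{h^t_i-\nabla f_i(x^t)}^2$ that was required for \algname{\algorithmname-MVR}. First I would apply the descent Lemma~\ref{lemma:page_lemma} to the update $x^{t+1}=x^t-\gamma g^t$, then decompose $g^t-\nabla f(x^t)=(g^t-h^t)+(h^t-\nabla f(x^t))$ and use~\eqref{auxiliary:jensen_inequality} to bound $\frac{\gamma}{2}\norm{g^t-\nabla f(x^t)}^2\le\gamma\norm{g^t-h^t}^2+\gamma\norm{h^t-\nabla f(x^t)}^2$.

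Next I would introduce nonnegative constants $\kappa,\eta,\nu$ and track the Lyapunov function
\[
\Exp{f(x^t)} + \kappa\,\Exp{\norm{g^t-h^t}^2} + \eta\,\Exp{\frac{1}{n}\sum_{i=1}^n\norm{g^t_i-h^t_i}^2} + \nu\,\Exp{\norm{h^t-\nabla f(x^t)}^2}.
\]
Substituting the global and local estimates of Lemma~\ref{lemma:sync:prob_compressor} and the estimator bound of Lemma~\ref{lemma:sync_mvr} into the $(t+1)$-terms and taking total expectation yields a one-step inequality whose right-hand side involves $\norm{g^t-h^t}^2$, $\frac{1}{n}\sum_i\norm{g^t_i-h^t_i}^2$, $\norm{h^t-\nabla f(x^t)}^2$, $\norm{x^{t+1}-x^t}^2$, and the constant $\nu\frac{p\sigma^2}{nB'}$. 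I would then set $\kappa=\gamma(2\omega+1)=\gamma/a$, $\eta=2\gamma\omega/n$, and $\nu=\gamma/p$. Using $a=1/(2\omega+1)$ and the identity $2a^2\omega+(1-a)^2=1-a$, together with the contraction factors $(1-p)\le 1$ supplied by synchronization, each error coefficient on the right becomes no larger than its Lyapunov coefficient: concretely $\gamma+\kappa(1-p)(1-a)^2\le\kappa$, then $\frac{2\kappa a^2\omega(1-p)}{n}+\eta(1-p)(1-a)\le\eta$, and finally $\gamma+\nu(1-p)\le\nu$, so the recursion closes.

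The main obstacle is the bookkeeping for the coefficient of $\norm{x^{t+1}-x^t}^2$. After the substitutions it equals $\frac{1}{2\gamma}-\frac{L}{2}$ minus the three curvature contributions $\frac{2\kappa\omega(1-p)}{n}(\frac{L_\sigma^2}{B}+\widehat L^2)$, $2\eta\omega(1-p)(\frac{L_\sigma^2}{B}+\widehat L^2)$, and $\frac{\nu(1-p)L_\sigma^2}{nB}$; collecting the two $\omega$-terms produces the factor $(2\omega+1)+2\omega=4\omega+1$, and I would verify that the whole negative part is at most $\frac{\gamma A}{2}$ with $A=\frac{12\omega(2\omega+1)(1-p)}{n}(\frac{L_\sigma^2}{B}+\widehat L^2)+\frac{2(1-p)L_\sigma^2}{pnB}$ (the constant $12$, hence $6$ after halving, exactly absorbs $2(4\omega+1)\le 6(2\omega+1)$, while the $L_\sigma^2/(pnB)$ terms match). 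By the stepsize assumption and Lemma~\ref{lemma:gamma}, this coefficient is nonnegative, so the $\norm{x^{t+1}-x^t}^2$ term is dropped. What remains is the recursion required by Lemma~\ref{lemma:good_recursion} with $\Psi^t=(2\omega+1)\Exp{\norm{g^t-h^t}^2}+\frac{2\omega}{n}\Exp{\frac{1}{n}\sum_i\norm{g^t_i-h^t_i}^2}+\frac{1}{p}\Exp{\norm{h^t-\nabla f(x^t)}^2}$ and $C=\frac{\sigma^2}{nB'}$ (since $\nu\frac{p\sigma^2}{nB'}=\gamma\frac{\sigma^2}{nB'}$). Applying that lemma gives $\frac{2(f(x^0)-f^*)}{\gamma T}+\frac{2\Psi^0}{T}+\frac{2\sigma^2}{nB'}$, and taking $\gamma$ equal to the largest admissible value $(L+\sqrt A)^{-1}$ to rewrite $\frac{1}{\gamma}=L+\sqrt A$ yields exactly the claimed bound.
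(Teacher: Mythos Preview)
Your proposal is correct and follows essentially the same route as the paper: the same Lyapunov function with coefficients $\kappa=\gamma(2\omega+1)$, $\eta=2\gamma\omega/n$, $\nu=\gamma/p$, the same lemmas (\ref{lemma:page_lemma}, \ref{lemma:sync:prob_compressor}, \ref{lemma:sync_mvr}, \ref{lemma:gamma}, \ref{lemma:good_recursion}), and the same bookkeeping on the $\norm{x^{t+1}-x^t}^2$ coefficient leading to the bound $2\omega(4\omega+1)\le 6\omega(2\omega+1)$. Your observation that the local error $\frac{1}{n}\sum_i\norm{h^t_i-\nabla f_i(x^t)}^2$ drops out of the Lyapunov (because the non-sync branch has $b=0$) and your explicit use of the identity $2a^2\omega+(1-a)^2=1-a$ are both accurate and match the paper's implicit reasoning.
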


\begin{proof}
    Let us fix constants $\kappa, \eta, \nu \in [0,\infty)$ that we will define later. Using Lemma~\ref{lemma:page_lemma}, we can get \eqref{eq:page_lemma_decompose}. Considering \eqref{eq:page_lemma_decompose}, Lemma~\ref{lemma:sync:prob_compressor}, Lemma~\ref{lemma:sync_mvr}, and the law of total expectation, we obtain
    \begin{align*}
        &\Exp{f(x^{t + 1})} + \kappa \Exp{\norm{g^{t+1} - h^{t+1}}^2} + \eta \Exp{\frac{1}{n}\sum_{i=1}^n\norm{g^{t+1}_i - h^{t+1}_i}^2}\\
        &\quad  + \nu \Exp{\norm{h^{t+1} - \nabla f(x^{t+1})}^2} \\
        &\leq \Exp{f(x^t) - \frac{\gamma}{2}\norm{\nabla f(x^t)}^2 - \left(\frac{1}{2\gamma} - \frac{L}{2}\right)
        \norm{x^{t+1} - x^t}^2 + \gamma \norm{g^{t} - h^{t}}^2 + \gamma \norm{h^{t} - \nabla f(x^t)}^2}\\
        &\quad + \kappa \Exp{\frac{2\omega(1 - p) \left(\frac{L_{\sigma}^2}{B} + \widehat{L}^2\right)}{n} \norm{x^{t+1} - x^{t}}^2 + \frac{2 a^2 \omega (1 - p)}{n^2} \sum_{i=1}^n\norm{g^t_i - h^{t}_i}^2 + (1 - p)\left(1 - a\right)^2\norm{g^t - h^{t}}^2} \\
        &\quad + \eta \Exp{2\omega(1 - p) \left(\frac{L_{\sigma}^2}{B} + \widehat{L}^2\right) \norm{x^{t+1} - x^{t}}^2 + (1 - p)\left(2 a^2 \omega + \left(1 - a\right)^2\right) \frac{1}{n} \sum_{i=1}^n\norm{g^t_i - h^{t}_i}^2} \\
        &\quad + \nu \Exp{\frac{p \sigma^2}{n B'} + \frac{(1 - p) L_\sigma^2}{n B}\norm{x^{t+1} - x^{t}}^2 + (1 - p) \norm{h^{t} - \nabla f(x^{t})}^2}.
    \end{align*}
    After rearranging the terms, we get
    \begin{align*}
        &\Exp{f(x^{t + 1})} + \kappa \Exp{\norm{g^{t+1} - h^{t+1}}^2} + \eta \Exp{\frac{1}{n}\sum_{i=1}^n\norm{g^{t+1}_i - h^{t+1}_i}^2}\\
        &\quad  + \nu \Exp{\norm{h^{t+1} - \nabla f(x^{t+1})}^2} \\
        &\leq \Exp{f(x^t)} - \frac{\gamma}{2}\Exp{\norm{\nabla f(x^t)}^2} \\
        &\quad - \left(\frac{1}{2\gamma} - \frac{L}{2} - \frac{2 \kappa \omega(1 - p) \left(\frac{L_{\sigma}^2}{B} + \widehat{L}^2\right)}{n} - 2 \eta \omega(1 - p) \left(\frac{L_{\sigma}^2}{B} + \widehat{L}^2\right) - \frac{\nu (1 - p) L_\sigma^2}{n B} \right) \Exp{\norm{x^{t+1} - x^t}^2} \\
        &\quad + \left(\gamma + \kappa (1 - p)(1 - a)^2\right) \Exp{\norm{g^t - h^{t}}^2} \\
        &\quad + \left(\frac{2 \kappa a^2 \omega (1 - p)}{n} + \eta (1 - p)\left(2 a^2 \omega + \left(1 - a\right)^2\right)\right)\Exp{\frac{1}{n}\sum_{i=1}^n\norm{g^{t}_i - h^{t}_i}^2} \\
        &\quad + \left(\gamma + \nu (1 - p)\right)\Exp{\norm{h^t - \nabla f(x^t)}^2} \\
        &\quad + \frac{\nu p \sigma^2}{n B'}.
    \end{align*}
    Let us take $\nu = \frac{\gamma}{p}, $ $\kappa = \frac{\gamma}{a},$ $a = \frac{1}{2\omega + 1},$ and $\eta = \frac{2\gamma \omega}{n}.$ Thus $\gamma + \kappa (1 - p)(1 - a)^2 \leq \kappa, $ $\gamma + \nu (1 - p) = \nu, $ $\frac{2 \kappa a^2 \omega (1 - p)}{n} + \eta (1 - p)\left(2 a^2 \omega + \left(1 - a\right)^2\right) \leq \eta,$ and  
    \begin{align*}
        &\Exp{f(x^{t + 1})} + \gamma (2\omega + 1) \Exp{\norm{g^{t+1} - h^{t+1}}^2} + \frac{2\gamma \omega}{n} \Exp{\frac{1}{n}\sum_{i=1}^n\norm{g^{t+1}_i - h^{t+1}_i}^2}\\
        &\quad  + \frac{\gamma}{p} \Exp{\norm{h^{t+1} - \nabla f(x^{t+1})}^2} \\
        &\leq \Exp{f(x^t)} - \frac{\gamma}{2}\Exp{\norm{\nabla f(x^t)}^2} \\
        &\quad - \left(\frac{1}{2\gamma} - \frac{L}{2} - \frac{2 \gamma \omega (2\omega + 1) (1 - p) \left(\frac{L_{\sigma}^2}{B} + \widehat{L}^2\right)}{n} - \frac{4\gamma \omega^2 (1 - p) \left(\frac{L_{\sigma}^2}{B} + \widehat{L}^2\right)}{n}  - \frac{\gamma (1 - p) L_\sigma^2}{p n B} \right) \Exp{\norm{x^{t+1} - x^t}^2} \\
        &\quad + \gamma (2\omega + 1) \Exp{\norm{g^t - h^{t}}^2} + \frac{2\gamma \omega}{n}\Exp{\frac{1}{n}\sum_{i=1}^n\norm{g^{t}_i - h^{t}_i}^2} \\
        &\quad + \frac{\gamma}{p}\Exp{\norm{h^t - \nabla f(x^t)}^2} \\
        &\quad + \frac{\gamma \sigma^2}{n B'} \\
        &\leq \Exp{f(x^t)} - \frac{\gamma}{2}\Exp{\norm{\nabla f(x^t)}^2} \\
        &\quad - \left(\frac{1}{2\gamma} - \frac{L}{2} - \frac{6 \gamma \omega (2\omega + 1) (1 - p) \left(\frac{L_{\sigma}^2}{B} + \widehat{L}^2\right)}{n} - \frac{\gamma (1 - p) L_\sigma^2}{p n B} \right) \Exp{\norm{x^{t+1} - x^t}^2} \\
        &\quad + \gamma (2\omega + 1) \Exp{\norm{g^t - h^{t}}^2} + \frac{2\gamma \omega}{n}\Exp{\frac{1}{n}\sum_{i=1}^n\norm{g^{t}_i - h^{t}_i}^2} \\
        &\quad + \frac{\gamma}{p}\Exp{\norm{h^t - \nabla f(x^t)}^2} \\
        &\quad + \frac{\gamma \sigma^2}{n B'}.
    \end{align*}
    In the view of the choice of $\gamma$, we obtain
    \begin{align*}
        &\Exp{f(x^{t + 1})} + \gamma (2\omega + 1) \Exp{\norm{g^{t+1} - h^{t+1}}^2} + \frac{2\gamma \omega}{n} \Exp{\frac{1}{n}\sum_{i=1}^n\norm{g^{t+1}_i - h^{t+1}_i}^2}\\
        &\quad  + \frac{\gamma}{p} \Exp{\norm{h^{t+1} - \nabla f(x^{t+1})}^2} \\
        &\leq \Exp{f(x^t)} - \frac{\gamma}{2}\Exp{\norm{\nabla f(x^t)}^2} \\
        &\quad + \gamma (2\omega + 1) \Exp{\norm{g^t - h^{t}}^2} + \frac{2\gamma \omega}{n}\Exp{\frac{1}{n}\sum_{i=1}^n\norm{g^{t}_i - h^{t}_i}^2} \\
        &\quad + \frac{\gamma}{p}\Exp{\norm{h^t - \nabla f(x^t)}^2} \\
        &\quad + \frac{\gamma \sigma^2}{n B'}.
    \end{align*}
    Finally, using Lemma~\ref{lemma:good_recursion} with 
    \begin{eqnarray*}
        \Psi^t &=& (2\omega + 1) \Exp{\norm{g^t - h^{t}}^2} + \frac{2\omega}{n}\Exp{\frac{1}{n}\sum_{i=1}^n\norm{g^{t}_i - h^{t}_i}^2} \\
        &\quad +& \frac{1}{p}\Exp{\norm{h^t - \nabla f(x^t)}^2}
    \end{eqnarray*}
    and $C = \frac{\sigma^2}{n B'},$
    we can conclude the proof.
\end{proof}

\COROLLARYSYNCSTOCHASTIC*

\begin{proof}
    Considering Theorem~\ref{theorem:sync_stochastic} and the choice of $B',$ we have
    \begin{align*}
        &\Exp{\norm{\nabla f(\widehat{x}^T)}^2}\\
        &\leq \frac{1}{T}\vast[2 \left(f(x^0) - f^*\right)\left(L + \sqrt{\frac{12 \omega (2\omega + 1) (1 - p) \left(\frac{L_{\sigma}^2}{B} + \widehat{L}^2\right)}{n} + \frac{2 (1 - p) L_\sigma^2}{p n B}}\right)\\
        &\quad + \frac{2}{p} \norm{h^{0} - \nabla f(x^{0})}^2 \vast] + \frac{2\sigma^2}{n B'}\\
        &\leq \frac{1}{T}\vast[2 \left(f(x^0) - f^*\right)\left(L + \sqrt{\frac{12 \omega (2\omega + 1) (1 - p) \left(\frac{L_{\sigma}^2}{B} + \widehat{L}^2\right)}{n} + \frac{2 (1 - p) L_\sigma^2}{p n B}}\right)\\
        &\quad + \frac{2}{p} \norm{h^{0} - \nabla f(x^{0})}^2 \vast] + \frac{2}{3}\varepsilon.
    \end{align*}
    Due to $p = \min \left\{\ \frac{\zeta_{\cC}}{d}, \frac{n \varepsilon B}{\sigma^2}\right\},$ we have
    \begin{align*}
        &\Exp{\norm{\nabla f(\widehat{x}^T)}^2}\\
        &\leq \cO\vast(\frac{1}{T}\vast[2 \left(f(x^0) - f^*\right)\left(L + \sqrt{\frac{12 \omega (2\omega + 1) (1 - p) \left(\frac{L_{\sigma}^2}{B} + \widehat{L}^2\right)}{n} + \frac{2 d (1 - p) L_\sigma^2}{\zeta_{\cC} n B} + \frac{2 \sigma^2 (1 - p) L_\sigma^2}{\varepsilon n^2 B^2} }\right)\\
        &\quad + 2\left(\frac{d}{\zeta_{\cC}} + \frac{\sigma^2}{n \varepsilon B}\right) \norm{h^{0} - \nabla f(x^{0})}^2 \vast]\vast) + \frac{2}{3}\varepsilon \\
        &\leq \cO\vast(\frac{1}{T}\vast[2 \left(f(x^0) - f^*\right)\left(L + \sqrt{\frac{\omega^2 (1 - p) \left(\frac{L_{\sigma}^2}{B} + \widehat{L}^2\right)}{n} + \frac{d (1 - p) L_\sigma^2}{\zeta_{\cC} n B} + \frac{2 \sigma^2 (1 - p) L_\sigma^2}{\varepsilon n^2 B^2} }\right)\\
        &\quad + 2\left(\frac{d}{\zeta_{\cC}} + \frac{\sigma^2}{n \varepsilon B}\right) \norm{h^{0} - \nabla f(x^{0})}^2 \vast]\vast) + \frac{2}{3}\varepsilon.
    \end{align*}
    Therefore, we can take
    \begin{align*}
        T &= \cO\vast(\frac{1}{\varepsilon}\vast[\left(f(x^0) - f^*\right)\left(L + \frac{\omega}{\sqrt{n}}\left(\widehat{L} + \frac{L_{\sigma}}{\sqrt{B}}\right) + \sqrt{\frac{d}{\zeta_{\cC} n}}\frac{L_\sigma}{\sqrt{B}} + \sqrt{\frac{\sigma^2}{\varepsilon n^2 B}} \frac{L_\sigma}{\sqrt{B}}\right)+ \left(\frac{d}{\zeta_{\cC}} + \frac{\sigma^2}{n \varepsilon B}\right) \norm{h^{0} - \nabla f(x^{0})}^2 \vast]\vast).
    \end{align*}
    Note, that
    \begin{eqnarray*}
        \Exp{\norm{h^{0} - \nabla f(x^{0})}^2} &=& \Exp{\norm{\frac{1}{n} \sum_{i=1}^n \frac{1}{B_{\textnormal{init}}} \sum_{k = 1}^{B_{\textnormal{init}}} \nabla f_i(x^0; \xi^0_{ik}) - \nabla f(x^{0})}^2} \\
        &=& \frac{1}{n^2 B_{\textnormal{init}}^2}\sum_{i=1}^n \sum_{k = 1}^{B_{\textnormal{init}}} \Exp{\norm{\nabla f_i(x^0; \xi^0_{ik}) - \nabla f_i(x^{0})}^2}\\
        &\leq& \frac{\sigma^2}{n B_{\textnormal{init}}}.
    \end{eqnarray*}
    Next, by taking $B_{\textnormal{init}} = \max\left\{\frac{\sigma^2}{n \varepsilon}, B\frac{d}{\zeta_{\cC}}\right\}$ and using the last ineqaulity, we have
    \begin{align*}
        T &= \cO\vast(\frac{1}{\varepsilon}\vast[\left(f(x^0) - f^*\right)\left(L + \frac{\omega}{\sqrt{n}}\left(\widehat{L} + \frac{L_{\sigma}}{\sqrt{B}}\right) + \sqrt{\frac{d}{\zeta_{\cC} n}}\frac{L_\sigma}{\sqrt{B}} + \sqrt{\frac{\sigma^2}{\varepsilon n^2 B}} \frac{L_\sigma}{\sqrt{B}}\right)+ \left(\frac{d}{\zeta_{\cC}} + \frac{\sigma^2}{n \varepsilon B}\right)\min\left\{\frac{\sigma^2 \zeta_{\cC}}{n d B}, \varepsilon\right\} \vast]\vast) \\
        &= \cO\vast(\frac{1}{\varepsilon}\vast[\left(f(x^0) - f^*\right)\left(L + \frac{\omega}{\sqrt{n}}\left(\widehat{L} + \frac{L_{\sigma}}{\sqrt{B}}\right) + \sqrt{\frac{d}{\zeta_{\cC} n}}\frac{L_\sigma}{\sqrt{B}} + \sqrt{\frac{\sigma^2}{\varepsilon n^2 B}} \frac{L_\sigma}{\sqrt{B}}\right)\vast] + \frac{\sigma^2}{n \varepsilon B}\vast).
    \end{align*}
    Finally, it is left to estimate the communication and oracle complexity. On average, the number of coordinates that each node in Algorithm~\ref{alg:main_algorithm_mvr_sync} sends at each communication round equals $p d + (1 - p)\zeta_{\cC} \leq \frac{\zeta_{\cC}}{d} d + \left(1 - \frac{\zeta_{\cC}}{d}\right)\zeta_{\cC} \leq 2 \zeta_{\cC}.$ Therefore, the communication complexity is equal to $\cO\left(d + \zeta_{\cC} T\right).$ Considering the fact that we use a mini-batch of stochastic gradients, on average, the number of stochastic gradients that each node calculates at each communication round equals $p B' + (1 - p) 2 B \leq \cO\left(\frac{n \varepsilon B}{\sigma^2}\cdot\frac{\sigma^2}{n \varepsilon}\right) + 2 B = \cO\left(B\right).$ Considering the initial batch size $B_{\textnormal{init}}$, the number of stochastic gradients that each node calculates equals $\cO(B_{\textnormal{init}} + BT).$
\end{proof}

\COROLLARYSTOCHASTICSYNCRANDK*

\begin{proof}
    In the view of Theorem~\ref{theorem:rand_k}, we have $\omega + 1 = d / K.$ Moreover, $K = \Theta\left(\frac{B d \sqrt{\varepsilon n}}{\sigma}\right) = \cO\left(\frac{d}{\sqrt{n}}\right),$ thus the communication complexity equals
    \begin{eqnarray*}
        \cO\left(d + \zeta_{\cC} T\right) &=& \cO\left(d + \frac{1}{\varepsilon}\vast[\left(f(x^0) - f^*\right)\left(K L + K\frac{\omega}{\sqrt{n}}\left(\widehat{L} + \frac{L_{\sigma}}{\sqrt{B}}\right) + K\sqrt{\frac{\omega}{n}}\frac{L_\sigma}{\sqrt{B}} + K\sqrt{\frac{\sigma^2}{\varepsilon n^2 B}} \frac{L_\sigma}{\sqrt{B}}\right)\vast] + K\frac{\sigma^2}{n \varepsilon B}\right) \\
        &=& \cO\left(d + \frac{1}{\varepsilon}\vast[\left(f(x^0) - f^*\right)\left(\frac{d}{\sqrt{n}} L + \frac{d}{\sqrt{n}}\left(\widehat{L} + \frac{L_{\sigma}}{\sqrt{B}}\right) + \frac{d}{\sqrt{n}} L_\sigma\right)\vast] + \frac{d \sigma}{\sqrt{n \varepsilon}}\right) \\
        &=& \cO\left(d + \frac{d \sigma}{\sqrt{n \varepsilon}} + \frac{1}{\varepsilon}\vast[\left(f(x^0) - f^*\right)\left(\frac{d}{\sqrt{n}} \widetilde{L}\right)\vast]\right) \\
        &=& \cO\left(\frac{d \sigma}{\sqrt{n \varepsilon}} + \frac{1}{\varepsilon}\vast[\left(f(x^0) - f^*\right)\left(\frac{d}{\sqrt{n}} \widetilde{L}\right)\vast]\right).
    \end{eqnarray*}
    And the expected number of stochastic gradient calculations per node equals
    \begin{align*}
        &\cO\left(B_{\textnormal{init}} + B T\right) \\
        &= \cO\left(\frac{\sigma^2}{n \varepsilon} + B\frac{d}{\zeta_{\cC}} + \frac{1}{\varepsilon}\vast[\left(f(x^0) - f^*\right)\left(B L + B\frac{\omega}{\sqrt{n}}\left(\widehat{L} + \frac{L_{\sigma}}{\sqrt{B}}\right) + B\sqrt{\frac{\omega}{n}}\frac{L_\sigma}{\sqrt{B}} + B\sqrt{\frac{\sigma^2}{\varepsilon n^2 B}} \frac{L_\sigma}{\sqrt{B}}\right)\vast]\right) \\
        &= \cO\left(\frac{\sigma^2}{n \varepsilon} + \frac{\sigma}{\sqrt{n \varepsilon}} + \frac{1}{\varepsilon}\vast[\left(f(x^0) - f^*\right)\left(\frac{\sigma}{\sqrt{\varepsilon} n} L + \frac{\sigma}{\sqrt{\varepsilon} n}\left(\widehat{L} + \frac{L_{\sigma}}{\sqrt{B}}\right) + \frac{\sigma}{\sqrt{\varepsilon} n} L_\sigma\right)\vast]\right) \\
        &= \cO\left(\frac{\sigma^2}{n \varepsilon} + \frac{1}{\varepsilon}\vast[\left(f(x^0) - f^*\right)\left(\frac{\sigma}{\sqrt{\varepsilon} n} \widetilde{L}\right)\vast]\right).
    \end{align*}
\end{proof}

\subsection{Case of \algname{\algorithmname-SYNC-MVR} under P\L-condition}

\begin{theorem}
    \label{theorem:sync_stochastic_pl}
    Suppose that Assumption \ref{ass:lower_bound}, \ref{ass:lipschitz_constant}, \ref{ass:compressors}, \ref{ass:stochastic_unbiased_and_variance_bounded}, \ref{ass:mean_square_smoothness} and \ref{ass:pl_condition} hold. Let us take $a = 1 / \left(2\omega + 1\right),$ probability $p \in (0, 1]$ and
    $\gamma \leq \min\left\{\left(L + \sqrt{\frac{40 \omega (2\omega + 1) (1 - p) \left(\frac{L_{\sigma}^2}{B} + \widehat{L}^2\right)}{n} + \frac{4 (1 - p) L_\sigma^2}{p n B}}\right)^{-1}, \frac{a}{2\mu} , \frac{p}{2\mu}\right\}$ in Algorithm~\ref{alg:main_algorithm},
    then 
    \begin{eqnarray*}
        \Exp{f(x^{T}) - f^*} &\leq& (1 - \gamma \mu)^{T}\vast(\left(f(x^0) - f^*\right) +  2\gamma (2\omega + 1) \norm{g^{0} - h^0}^2 + \frac{8 \gamma \omega}{n} \left(\frac{1}{n} \sum_{i=1}^n\norm{g^0_i - h^{0}_i}^2\right) \\
        &\quad +&\frac{2 \gamma}{p} \norm{h^{0} - \nabla f(x^{0})}^2\vast) + \frac{2 \sigma^2}{n \mu B'}.
    \end{eqnarray*}
\end{theorem}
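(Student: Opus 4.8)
The plan is to reuse the non-P\L{} analysis of \algname{\algorithmname-SYNC-MVR} (Theorem~\ref{theorem:sync_stochastic}) almost line for line, but to tune the Lyapunov coefficients so that the auxiliary terms \emph{contract} at rate $(1-\gamma\mu)$ rather than merely staying bounded, and then to close with the P\L{} recursion Lemma~\ref{lemma:good_recursion_pl} in place of Lemma~\ref{lemma:good_recursion}. First I would invoke the descent inequality in the decomposed form \eqref{eq:page_lemma_decompose}, which follows from Lemma~\ref{lemma:page_lemma} and is P\L{}-agnostic, and then substitute the two \algname{SYNC-MVR}-specific estimates: Lemma~\ref{lemma:sync:prob_compressor} (for $\norm{g^{t+1}-h^{t+1}}^2$ and $\tfrac1n\sum_i\norm{g_i^{t+1}-h_i^{t+1}}^2$) and Lemma~\ref{lemma:sync_mvr} (for the global estimator error $\norm{h^{t+1}-\nabla f(x^{t+1})}^2$). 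The one-step potential would be
\[ \Psi^{t} = \kappa\,\norm{g^{t}-h^{t}}^2 + \eta\,\tfrac1n\sum_{i=1}^n\norm{g_i^{t}-h_i^{t}}^2 + \nu\,\norm{h^{t}-\nabla f(x^{t})}^2, \]
which has exactly three terms and \emph{no} local gradient-estimator term $\tfrac1n\sum_i\norm{h_i^t-\nabla f_i(x^t)}^2$ (unlike the \algname{\algorithmname-MVR} case of Theorem~\ref{theorem:stochastic_pl}), because Lemma~\ref{lemma:sync_mvr} only controls the global error; the noise $\tfrac{\nu p\sigma^2}{nB'}$ is carried along as the additive constant.

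The key change from Theorem~\ref{theorem:sync_stochastic} is to \emph{double} the non-P\L{} coefficients, taking $\kappa = \tfrac{2\gamma}{a} = 2\gamma(2\omega+1)$, $\eta = \tfrac{8\gamma\omega}{n}$ and $\nu = \tfrac{2\gamma}{p}$, just as in the P\L{} analysis of \algname{\algorithmname-MVR}. With these I would verify three contraction inequalities: $\gamma + \kappa(1-p)(1-a)^2 \le (1-\tfrac{a}{2})\kappa$ for the global compressor term; using the identity $2a^2\omega+(1-a)^2 = 1-a$ (valid at $a=1/(2\omega+1)$) together with the favorable $(1-p)$ weight, $\tfrac{2\kappa a^2\omega(1-p)}{n} + \eta(1-p)(1-a) = (1-p)(1-\tfrac{a}{2})\eta \le (1-\tfrac{a}{2})\eta$ for the local compressor term; and $\gamma + \nu(1-p) = (1-\tfrac{p}{2})\nu$ for the estimator term. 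Collecting the coefficient of $\norm{x^{t+1}-x^t}^2$ and bounding $16\omega^2\le 16\omega(2\omega+1)$ gives the factor $20\omega(2\omega+1)(1-p)(\tfrac{L_\sigma^2}{B}+\widehat L^2)/n$ plus $2(1-p)L_\sigma^2/(pnB)$; the stepsize bound $\gamma\le\bigl(L+\sqrt{\tfrac{40\omega(2\omega+1)(1-p)}{n}(\tfrac{L_\sigma^2}{B}+\widehat L^2)+\tfrac{4(1-p)L_\sigma^2}{pnB}}\bigr)^{-1}$ then makes this coefficient nonnegative via Lemma~\ref{lemma:gamma} (the factor of $2$ turning $20$ into $40$).

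Finally, the two extra constraints $\gamma\le\tfrac{a}{2\mu}$ and $\gamma\le\tfrac{p}{2\mu}$ ensure $1-\tfrac{a}{2}\le 1-\gamma\mu$ and $1-\tfrac{p}{2}\le 1-\gamma\mu$, so each of the three auxiliary coefficients contracts by at least $(1-\gamma\mu)$; combining this with the P\L{} inequality (Assumption~\ref{ass:pl_condition}) applied to the $-\tfrac{\gamma}{2}\norm{\nabla f(x^t)}^2$ term places the recursion exactly in the hypothesis of Lemma~\ref{lemma:good_recursion_pl} with $C=\tfrac{2\sigma^2}{nB'}$ (since $\nu p = 2\gamma$), yielding the claimed $(1-\gamma\mu)^T$ decay of $\Exp{f(x^T)-f^*}$ together with the residual $\tfrac{C}{\mu}=\tfrac{2\sigma^2}{n\mu B'}$. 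I expect the main obstacle to be the second (local) contraction inequality, where the doubled $\kappa,\eta$, the identity for $2a^2\omega+(1-a)^2$, and the probabilistic $(1-p)$ weight must all be balanced against the required $(1-\tfrac{a}{2})$ factor simultaneously; the rest is a direct transcription of Theorem~\ref{theorem:sync_stochastic} and Theorem~\ref{theorem:stochastic_pl}.
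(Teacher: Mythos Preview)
Your proposal is correct and follows essentially the same approach as the paper: you invoke \eqref{eq:page_lemma_decompose}, Lemma~\ref{lemma:sync:prob_compressor}, and Lemma~\ref{lemma:sync_mvr}, take the doubled coefficients $\kappa=\tfrac{2\gamma}{a}$, $\eta=\tfrac{8\gamma\omega}{n}$, $\nu=\tfrac{2\gamma}{p}$, verify the same three contraction inequalities (your explicit use of the identity $2a^2\omega+(1-a)^2=1-a$ makes the local contraction step slightly more transparent than in the paper), and close with Lemma~\ref{lemma:good_recursion_pl} and $C=\tfrac{2\sigma^2}{nB'}$. The argument matches the paper's proof in all essential respects.
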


\begin{proof}
    Let us fix constants $\kappa, \eta, \nu \in [0,\infty)$ that we will define later. Using Lemma~\ref{lemma:page_lemma}, we can get \eqref{eq:page_lemma_decompose}. Considering \eqref{eq:page_lemma_decompose}, Lemma~\ref{lemma:sync:prob_compressor}, Lemma~\ref{lemma:sync_mvr}, and the law of total expectation, we obtain
    \begin{align*}
        &\Exp{f(x^{t + 1})} + \kappa \Exp{\norm{g^{t+1} - h^{t+1}}^2} + \eta \Exp{\frac{1}{n}\sum_{i=1}^n\norm{g^{t+1}_i - h^{t+1}_i}^2}\\
        &\quad  + \nu \Exp{\norm{h^{t+1} - \nabla f(x^{t+1})}^2} \\
        &\leq \Exp{f(x^t) - \frac{\gamma}{2}\norm{\nabla f(x^t)}^2 - \left(\frac{1}{2\gamma} - \frac{L}{2}\right)
        \norm{x^{t+1} - x^t}^2 + \gamma \norm{g^{t} - h^{t}}^2 + \gamma \norm{h^{t} - \nabla f(x^t)}^2}\\
        &\quad + \kappa \Exp{\frac{2\omega(1 - p) \left(\frac{L_{\sigma}^2}{B} + \widehat{L}^2\right)}{n} \norm{x^{t+1} - x^{t}}^2 + \frac{2 a^2 \omega (1 - p)}{n^2} \sum_{i=1}^n\norm{g^t_i - h^{t}_i}^2 + (1 - p)\left(1 - a\right)^2\norm{g^t - h^{t}}^2} \\
        &\quad + \eta \Exp{2\omega(1 - p) \left(\frac{L_{\sigma}^2}{B} + \widehat{L}^2\right) \norm{x^{t+1} - x^{t}}^2 + (1 - p)\left(2 a^2 \omega + \left(1 - a\right)^2\right) \frac{1}{n} \sum_{i=1}^n\norm{g^t_i - h^{t}_i}^2} \\
        &\quad + \nu \Exp{\frac{p \sigma^2}{n B'} + \frac{(1 - p) L_\sigma^2}{n B}\norm{x^{t+1} - x^{t}}^2 + (1 - p) \norm{h^{t} - \nabla f(x^{t})}^2}
    \end{align*}
    After rearranging the terms, we get
    \begin{align*}
        &\Exp{f(x^{t + 1})} + \kappa \Exp{\norm{g^{t+1} - h^{t+1}}^2} + \eta \Exp{\frac{1}{n}\sum_{i=1}^n\norm{g^{t+1}_i - h^{t+1}_i}^2}\\
        &\quad  + \nu \Exp{\norm{h^{t+1} - \nabla f(x^{t+1})}^2} \\
        &\leq \Exp{f(x^t)} - \frac{\gamma}{2}\Exp{\norm{\nabla f(x^t)}^2} \\
        &\quad - \left(\frac{1}{2\gamma} - \frac{L}{2} - \frac{2 \kappa \omega(1 - p) \left(\frac{L_{\sigma}^2}{B} + \widehat{L}^2\right)}{n} - 2 \eta \omega(1 - p) \left(\frac{L_{\sigma}^2}{B} + \widehat{L}^2\right) - \frac{\nu (1 - p) L_\sigma^2}{n B} \right) \Exp{\norm{x^{t+1} - x^t}^2} \\
        &\quad + \left(\gamma + \kappa (1 - p)(1 - a)^2\right) \Exp{\norm{g^t - h^{t}}^2} \\
        &\quad + \left(\frac{2 \kappa a^2 \omega (1 - p)}{n} + \eta (1 - p)\left(2 a^2 \omega + \left(1 - a\right)^2\right)\right)\Exp{\frac{1}{n}\sum_{i=1}^n\norm{g^{t}_i - h^{t}_i}^2} \\
        &\quad + \left(\gamma + \nu (1 - p)\right)\Exp{\norm{h^t - \nabla f(x^t)}^2} \\
        &\quad + \frac{\nu p \sigma^2}{n B'}.
    \end{align*}
    Let us take $\nu = \frac{2\gamma}{p}, $ $\kappa = \frac{2\gamma}{a},$ $a = \frac{1}{2\omega + 1},$ and $\eta = \frac{8\gamma \omega}{n}.$ Thus $\gamma + \kappa (1 - p)(1 - a)^2 \leq \left(1 - \frac{a}{2}\right)\kappa, $ $\gamma + \nu (1 - p) = \left(1 - \frac{p}{2}\right)\nu, $ $\frac{2 \kappa a^2 \omega (1 - p)}{n} + \eta (1 - p)\left(2 a^2 \omega + \left(1 - a\right)^2\right) \leq \left(1 - \frac{a}{2}\right)\eta,$ and  
    \begin{align*}
        &\Exp{f(x^{t + 1})} + 2\gamma (2\omega + 1) \Exp{\norm{g^{t+1} - h^{t+1}}^2} + \frac{8\gamma \omega}{n} \Exp{\frac{1}{n}\sum_{i=1}^n\norm{g^{t+1}_i - h^{t+1}_i}^2}\\
        &\quad  + \frac{2\gamma}{p} \Exp{\norm{h^{t+1} - \nabla f(x^{t+1})}^2} \\
        &\leq \Exp{f(x^t)} - \frac{\gamma}{2}\Exp{\norm{\nabla f(x^t)}^2} \\
        &\quad - \left(\frac{1}{2\gamma} - \frac{L}{2} - \frac{4 \gamma \omega (2\omega + 1) (1 - p) \left(\frac{L_{\sigma}^2}{B} + \widehat{L}^2\right)}{n} - \frac{16\gamma \omega^2 (1 - p) \left(\frac{L_{\sigma}^2}{B} + \widehat{L}^2\right)}{n}  - \frac{2 \gamma (1 - p) L_\sigma^2}{p n B} \right) \Exp{\norm{x^{t+1} - x^t}^2} \\
        &\quad + \left(1 - \frac{a}{2}\right)2 \gamma (2\omega + 1) \Exp{\norm{g^t - h^{t}}^2} + \left(1 - \frac{a}{2}\right)\frac{8\gamma \omega}{n}\Exp{\frac{1}{n}\sum_{i=1}^n\norm{g^{t}_i - h^{t}_i}^2} \\
        &\quad + \left(1 - \frac{p}{2}\right)\frac{2\gamma}{p}\Exp{\norm{h^t - \nabla f(x^t)}^2} \\
        &\quad + \frac{2\gamma \sigma^2}{n B'} \\
        &\leq \Exp{f(x^t)} - \frac{\gamma}{2}\Exp{\norm{\nabla f(x^t)}^2} \\
        &\quad - \left(\frac{1}{2\gamma} - \frac{L}{2} - \frac{20 \gamma \omega (2\omega + 1) (1 - p) \left(\frac{L_{\sigma}^2}{B} + \widehat{L}^2\right)}{n} - \frac{2\gamma (1 - p) L_\sigma^2}{p n B} \right) \Exp{\norm{x^{t+1} - x^t}^2} \\
        &\quad + \left(1 - \frac{a}{2}\right)2 \gamma (2\omega + 1) \Exp{\norm{g^t - h^{t}}^2} + \left(1 - \frac{a}{2}\right)\frac{8\gamma \omega}{n}\Exp{\frac{1}{n}\sum_{i=1}^n\norm{g^{t}_i - h^{t}_i}^2} \\
        &\quad + \left(1 - \frac{p}{2}\right)\frac{2\gamma}{p}\Exp{\norm{h^t - \nabla f(x^t)}^2} \\
        &\quad + \frac{2\gamma \sigma^2}{n B'}.
    \end{align*}
    In the view of the choice of $\gamma$ and Lemma~\ref{lemma:gamma}, one can show that $\frac{1}{2\gamma} - \frac{L}{2} - \frac{40 \gamma \omega (2\omega + 1) (1 - p) \left(\frac{L_{\sigma}^2}{B} + \widehat{L}^2\right)}{n} - \frac{4\gamma (1 - p) L_\sigma^2}{p n B} \geq 0,$ $1 - \frac{a}{2} \leq 1 - \gamma \mu,$ and $1 - \frac{p}{2} \leq 1 - \gamma \mu,$ thus 
    \begin{align*}
        &\Exp{f(x^{t + 1})} + 2\gamma (2\omega + 1) \Exp{\norm{g^{t+1} - h^{t+1}}^2} + \frac{8\gamma \omega}{n} \Exp{\frac{1}{n}\sum_{i=1}^n\norm{g^{t+1}_i - h^{t+1}_i}^2}\\
        &\quad  + \frac{2\gamma}{p} \Exp{\norm{h^{t+1} - \nabla f(x^{t+1})}^2} \\
        &\leq \Exp{f(x^t)} - \frac{\gamma}{2}\Exp{\norm{\nabla f(x^t)}^2} \\
        &\quad + \left(1 - \gamma \mu\right)2 \gamma (2\omega + 1) \Exp{\norm{g^t - h^{t}}^2} + \left(1 - \gamma \mu\right)\frac{8\gamma \omega}{n}\Exp{\frac{1}{n}\sum_{i=1}^n\norm{g^{t}_i - h^{t}_i}^2} \\
        &\quad + \left(1 - \gamma \mu\right)\frac{2\gamma}{p}\Exp{\norm{h^t - \nabla f(x^t)}^2} \\
        &\quad + \frac{2\gamma \sigma^2}{n B'}.
    \end{align*}
    In the view of Lemma~\ref{lemma:good_recursion_pl} with
    \begin{eqnarray*}
        \Psi^t &=& 2 (2\omega + 1) \Exp{\norm{g^{t} - h^t}^2} + \frac{8 \omega}{n} \Exp{\frac{1}{n} \sum_{i=1}^n\norm{g^t_i - h^{t}_i}^2}\\
        &\quad +& \frac{2}{p} \Exp{\norm{h^{t} - \nabla f(x^{t})}^2}
    \end{eqnarray*}
    and $C = \frac{2 \sigma^2}{n B'},$ we can conclude the proof.
\end{proof}

\begin{corollary}
    \label{cor:sync_stochastic_pl}
    Suppose that assumptions from Theorem~\ref{theorem:sync_stochastic_pl} hold, probability $p = \min \left\{\ \frac{\zeta_{\cC}}{d}, \frac{\mu n \varepsilon B}{\sigma^2}\right\},$ batch size 
    $B' = \Theta\left(\frac{\sigma^2}{\mu n \varepsilon}\right),$ and $h^{0}_i = g^{0}_i = 0$ for all $i \in [n],$
    then \algname{\algorithmname-SYNC-MVR}
    needs
    \begin{align}
        \label{eq:corollary:sync_stochastic_pl}
        T \eqdef \widetilde{\cO}\left(\omega + \frac{d}{\zeta_{\cC}} + \frac{\sigma^2}{\mu n \varepsilon B} + \frac{L}{\mu} + \frac{\omega \widehat{L}}{\mu \sqrt{n}} + \left(\frac{\omega}{\sqrt{n}} + \sqrt{\frac{d}{\zeta_{\cC} n}} + \frac{\sigma}{n \sqrt{B \mu \varepsilon}}\right)\frac{L_\sigma}{\mu\sqrt{B}}\right)
    \end{align}
    communication rounds to get an $\varepsilon$-solution, the communication complexity is equal to $\cO\left(\zeta_{\cC} T\right),$ and the number of stochastic gradient calculations per node equals $\cO(BT),$ where $\zeta_{\cC}$ is the expected density from Definition~\ref{def:expected_density}.
\end{corollary}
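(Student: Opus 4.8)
The plan is to specialize Theorem~\ref{theorem:sync_stochastic_pl} to the stated parameters, following the same bookkeeping as in the general-case twin Corollary~\ref{cor:sync_stochastic}, but now exploiting the \emph{linear} recursion of Lemma~\ref{lemma:good_recursion_pl} rather than the sublinear one. Theorem~\ref{theorem:sync_stochastic_pl} yields a bound of the shape $\Exp{f(x^T)-f^*}\le(1-\gamma\mu)^T D_0+\frac{2\sigma^2}{n\mu B'}$, where $D_0$ collects the initialization errors involving $\norm{g^0-h^0}^2$, $\frac1n\sum_i\norm{g^0_i-h^0_i}^2$ and $\norm{h^0-\nabla f(x^0)}^2$. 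First I would eliminate the additive variance floor: choosing $B'=\Theta(\sigma^2/(\mu n\varepsilon))$ makes $\frac{2\sigma^2}{n\mu B'}=\cO(\varepsilon)$, say at most $\varepsilon/2$. Then, since $0<\gamma\mu<1$, forcing $(1-\gamma\mu)^T D_0\le\varepsilon/2$ needs only $T=\widetilde{\cO}(1/(\gamma\mu))$ rounds, with the factor $\log(D_0/\varepsilon)$ swallowed by $\widetilde{\cO}(\cdot)$. This is exactly why we may take $h^0_i=g^0_i=0$: with zero initialization $D_0$ is finite and enters only under the logarithm, so no initial mega-batch is required.

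The heart of the computation is to evaluate $1/(\gamma\mu)$ for the stated step size. The constraint in Theorem~\ref{theorem:sync_stochastic_pl} sets $1/\gamma$ to the maximum of three reciprocals, so $\frac{1}{\gamma\mu}$ is the maximum of (i) $\frac1\mu\bigl(L+\sqrt{\tfrac{40\omega(2\omega+1)(1-p)}{n}(\tfrac{L_\sigma^2}{B}+\widehat L^2)+\tfrac{4(1-p)L_\sigma^2}{pnB}}\bigr)$, (ii) $2(2\omega+1)=\cO(\omega)$, and (iii) $2/p$. Term (ii) produces the $\omega$ summand. Using $1/p=\max\{d/\zeta_{\cC},\,\sigma^2/(\mu n\varepsilon B)\}$, term (iii) produces the $\frac{d}{\zeta_{\cC}}$ and $\frac{\sigma^2}{\mu n\varepsilon B}$ summands. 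For term (i) I would apply $\sqrt{x+y}\le\sqrt x+\sqrt y$, drop the harmless factor $(1-p)\le1$, use $2\omega+1=\Theta(\omega)$ together with $\sqrt{a^2+b^2}\le a+b$, to reach $\frac{L}{\mu}+\frac{\omega}{\mu\sqrt n}(\widehat L+\tfrac{L_\sigma}{\sqrt B})+\frac{L_\sigma}{\mu\sqrt{pnB}}$. The last piece splits along the two branches of $1/p$: since $\sqrt{1/p}=\max\{\sqrt{d/\zeta_{\cC}},\,\sigma/\sqrt{\mu n\varepsilon B}\}$, it yields exactly $\sqrt{\tfrac{d}{\zeta_{\cC}n}}\tfrac{L_\sigma}{\mu\sqrt B}$ and $\tfrac{\sigma}{n\sqrt{B\mu\varepsilon}}\tfrac{L_\sigma}{\mu\sqrt B}$. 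Collecting the surviving terms reproduces the bound~\eqref{eq:corollary:sync_stochastic_pl}.

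The communication and oracle counts then follow verbatim from the argument in Corollary~\ref{cor:sync_stochastic}. On a synchronization step ($c^{t+1}=1$) a node broadcasts $d$ coordinates and computes $B'$ gradients, otherwise it sends $\zeta_{\cC}$ coordinates and computes $2B$ gradients; averaging, the per-round cost is $pd+(1-p)\zeta_{\cC}\le 2\zeta_{\cC}$ coordinates (because $p\le\zeta_{\cC}/d$) and $pB'+(1-p)2B=\cO(B)$ gradients (because $p\le\mu n\varepsilon B/\sigma^2$ and $B'=\Theta(\sigma^2/(\mu n\varepsilon))$ give $pB'\le B$). Hence the communication complexity is $\cO(\zeta_{\cC}T)$ and the oracle complexity is $\cO(BT)$; the $d$ and $B_{\textnormal{init}}$ terms appearing in Corollary~\ref{cor:sync_stochastic} disappear because the zero initialization needs neither an initial full-gradient broadcast nor an initial mega-batch.

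I expect the main obstacle to be the bookkeeping in term (i): correctly threading the $\min/\max$ defining $p$ through the nested square root so that the single quantity $\frac{L_\sigma}{\mu\sqrt{pnB}}$ separates cleanly into the two advertised summands, while checking that the discarded $(1-p)$ factors and the $2\omega+1$ versus $\omega$ discrepancies are all benign constants absorbed into $\cO(\cdot)$ and $\widetilde{\cO}(\cdot)$.
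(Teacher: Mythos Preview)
Your proposal is correct and follows essentially the same approach as the paper: kill the additive variance floor via the choice of $B'$, read off $T=\widetilde{\cO}(1/(\gamma\mu))$ from the linear contraction, expand $1/\gamma$ as the maximum of the three step-size constraints (with $1/p$ split along its two branches), and reuse the per-round communication/oracle accounting from Corollary~\ref{cor:sync_stochastic}. The paper's proof is terser but makes exactly the same moves, including the remark that zero initialization is harmless because $D_0$ lands under the logarithm.
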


\begin{proof}
    Considering the choice of $B',$ we have $\frac{2 \sigma^2}{n \mu B'} = \cO\left(\varepsilon\right).$ Therefore, 
    is it enough to take the number of communication rounds equals \eqref{eq:corollary:sync_stochastic_pl} to get an $\varepsilon$-solution. 

    It is left to estimate the communication and oracle complexity. On average, in Algorithm~\ref{alg:main_algorithm_mvr_sync}, at each communication round the number of coordinates that each node sends equals $p d + (1 - p)\zeta_{\cC} \leq \frac{\zeta_{\cC}}{d} d + \left(1 - \frac{\zeta_{\cC}}{d}\right)\zeta_{\cC} \leq 2 \zeta_{\cC}.$ Therefore, the communication complexity is equal to $\cO\left(\zeta_{\cC} T\right).$ Considering the fact that we use a mini-batch of stochastic gradients, on average, the number of stochastic gradients that each node calculates at each communication round equals $p B' + (1 - p) 2 B = \cO\left(\frac{\mu n \varepsilon B}{\sigma^2}\cdot\frac{\sigma^2}{\mu n \varepsilon}\right) + 2 B = \cO\left(B\right),$ thus the number of stochastic gradients that each node calculates equals $\cO(BT).$ Unlike Corollary~\ref{cor:sync_stochastic}, in this corollary, we can initialize $h^0_i$ and $g^0_i$, for instance, with zeros because the corresponding initialization error $\Psi^0$ from the proof of Theorem~\ref{theorem:sync_stochastic_pl} would be under the logarithm.
\end{proof}

\section{Extra Experiments}
\label{sec:extra_experiments}
\algname{\algorithmname-MVR} improves \algname{VR-MARINA (online)} when $\varepsilon$ is small (see Tables \ref{table:general_case} and \ref{table:pl_case} and experiments in Section~\ref{sec:experiments}). However, our analysis shows that \algname{\algorithmname-MVR} gets a term $B\omega\sqrt{\frac{\sigma^2}{\varepsilon n B}}$ in the oracle complexity and a term $\omega\sqrt{\frac{\sigma^2}{\mu \varepsilon n B}}$ in the number of communication rounds in general nonconvex and P\L\, settings accordingly. Both terms can be a bottleneck in some regimes; now, we verify this dependence in the P\L\,setting.

We take a synthetically generated stochastic quadratic optimization problem with one node ($n = 1$):
\begin{align*}
    \min_{x \in \R^d}\left\{f(x;\xi) \eqdef x^\top \left(\mA + \xi \mI\right) x - b^\top x \right\},
\end{align*}
where $\mA \in \R^{d \times d},$ $b \in \R^d,$ $\mA = \mA^\top \succ 0,$ and $\xi \sim \textnormal{Normal}\left(0, \sigma^2\right).$

We generate $\mA$ in such way, that $\mu \approx 1.0 \leq L \approx 2.0,$ take $d = 10^4$, $\sigma^2 = 1.0,$ Rand$K$ with $K = 1$ ($\omega \approx d$), batch size $B = 1,$ and $\frac{\sigma^2}{\mu \varepsilon n B} = 10^4.$ With this particular choice of parameters, $\omega\sqrt{\frac{\sigma^2}{\mu \varepsilon n B}}$ would dominate in the number of communication rounds $T = \omega + \omega\sqrt{\frac{\sigma^2}{\mu \varepsilon n B}} + \frac{L(1 + \nicefrac{\omega}{\sqrt{n}})}{\mu} + \frac{\sigma^2}{\mu \varepsilon n B} + \frac{L \sigma}{\mu^{\nicefrac{3}{2}}\sqrt{\varepsilon}n B}.$

Results are provided in Figure~\ref{fig:tridiagonal_strong_main}. We consider \algname{\algorithmname-MVR} with a momentum $b$ from Corollary~\ref{cor:stochastic_pl} and $b = \min \left\{\ \frac{1}{\omega}, \frac{\mu n \varepsilon B}{\sigma^2}\right\}.$ With the latter choice of momentum $b$, \algname{\algorithmname-MVR} converges at the same rate as \algname{\algorithmname-SYNC-MVR} or \algname{VR-MARINA (online)} but to an $\varepsilon$-solution with a smaller $\varepsilon$. On the other hand, the former choice of momentum $b$ guarantees the convergence to the correct $\varepsilon$-solution, but with a slower rate. Overall, the experiment provides the pieces of evidence that our choice of $b$ is correct and that our analysis in Theorem~\ref{theorem:stochastic_pl} is tight.

If we decrease $\omega$ from $10^4$ to $10^3$ (see Figure~\ref{fig:tridiagonal_strong_main:K_10}), or $\sigma^2$ from $1.0$ to $0.1$ (see Figure~\ref{fig:tridiagonal_strong_main:sigma_0_1}), or $\mu$ from $1.0$ to $0.1$ (see Figure~\ref{fig:tridiagonal_strong_main:mu_0_1}), then the gap between algorithms closes.

\begin{figure}[h!]
    \centering
    \includegraphics[width=0.8\linewidth]{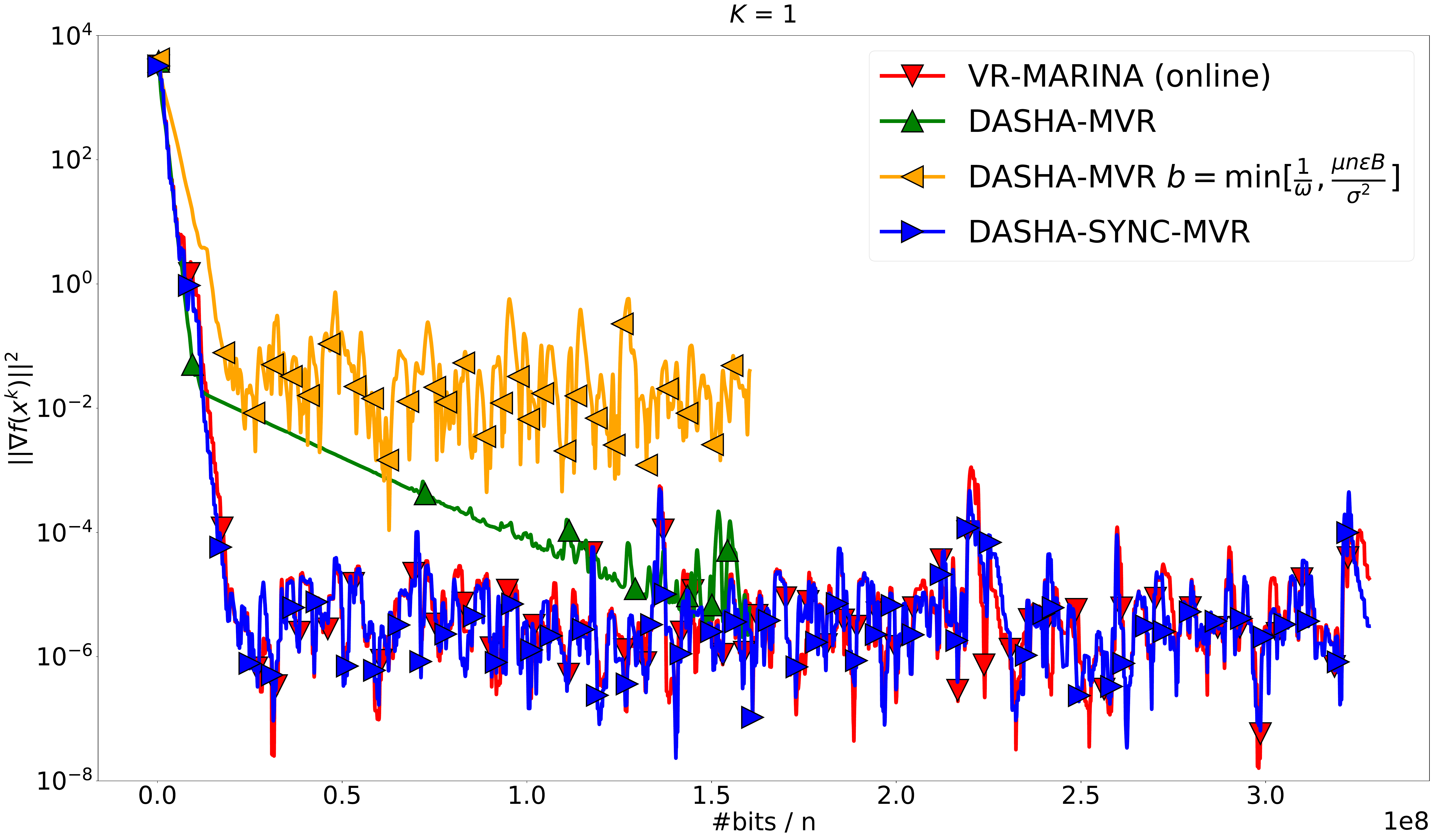}
    \caption{Comparison of algorithms on a synthetic stochastic quadratic optimization task}
    \label{fig:tridiagonal_strong_main}
\end{figure}

\begin{figure}[h!]
    \centering
    \includegraphics[width=0.8\linewidth]{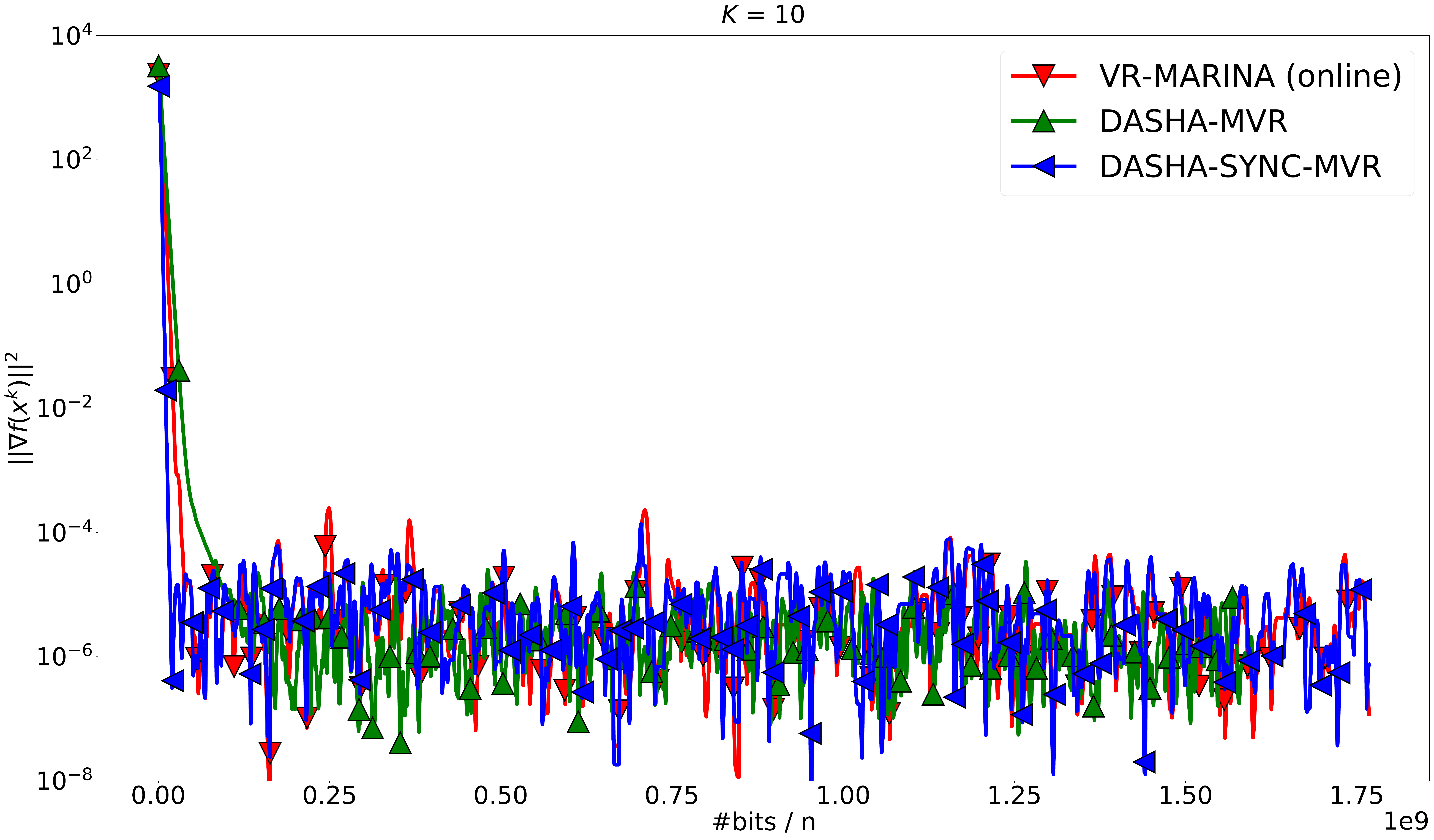}
    \caption{Comparison of algorithms on a synthetic stochastic quadratic optimization task with $K = 10$}
    \label{fig:tridiagonal_strong_main:K_10}
\end{figure}

\begin{figure}[h!]
    \centering
    \includegraphics[width=0.8\linewidth]{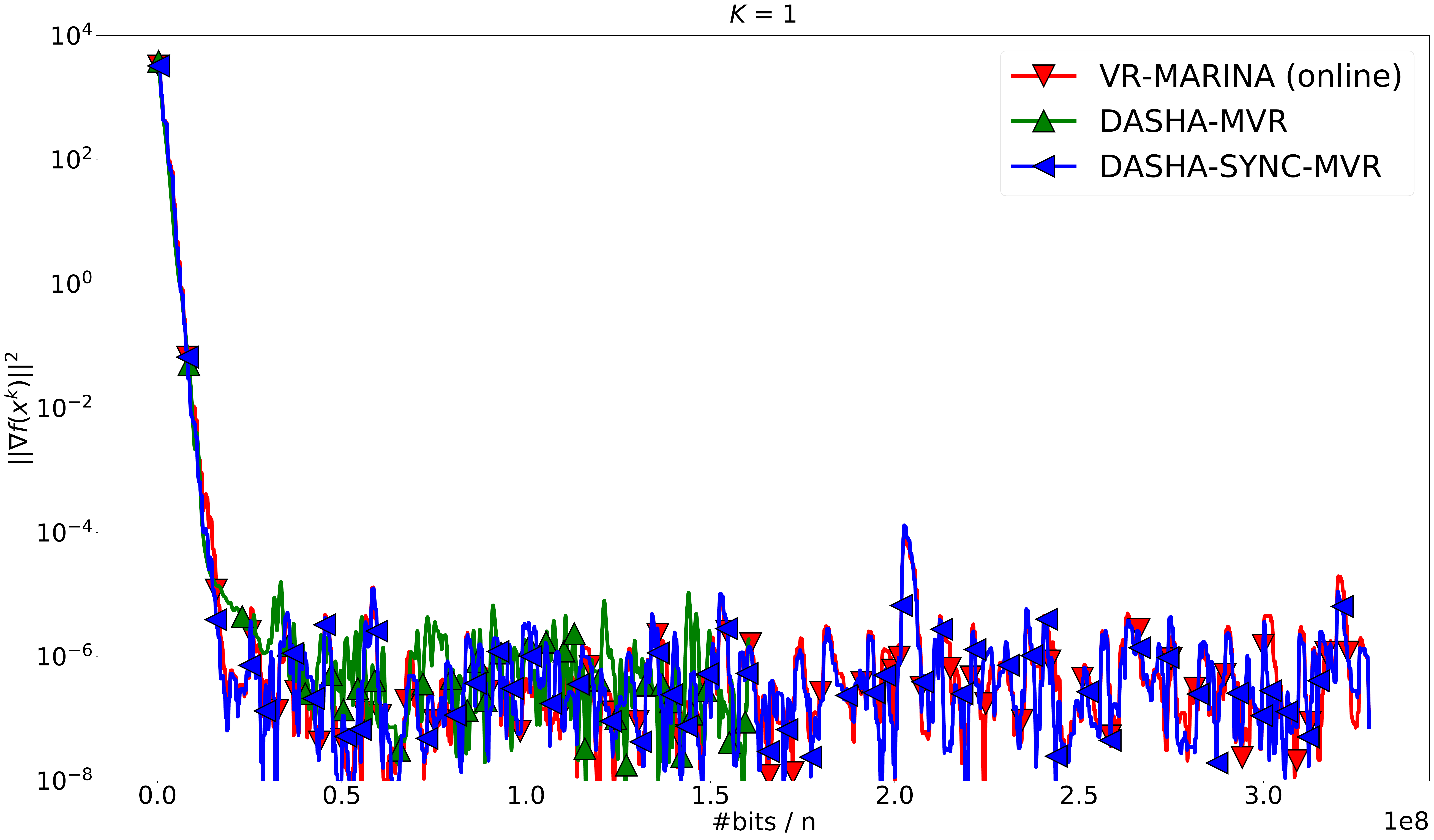}
    \caption{Comparison of algorithms on a synthetic stochastic quadratic optimization task with $\sigma^2 = 0.1$}
    \label{fig:tridiagonal_strong_main:sigma_0_1}
\end{figure}

\begin{figure}[h!]
    \centering
    \includegraphics[width=0.8\linewidth]{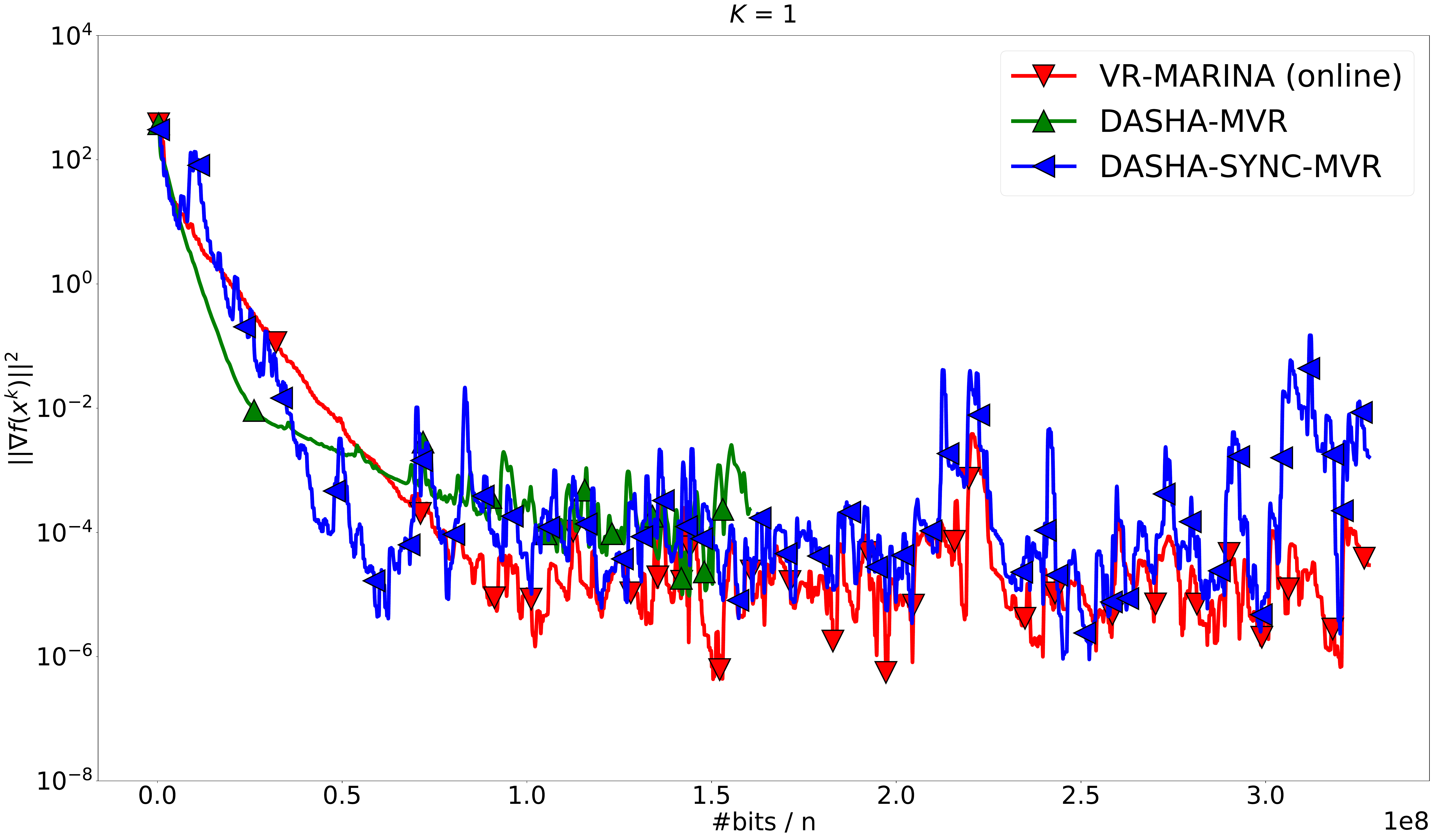}
    \caption{Comparison of algorithms on a synthetic stochastic quadratic optimization task with $\mu = 0.1$}
    \label{fig:tridiagonal_strong_main:mu_0_1}
\end{figure}

}

\end{document}